\renewcommand*{\backrefalt}[4]{%
    \ifcase #1 \footnotesize{(Not cited.)}%
    \or        \footnotesize{(Cited on page~#2.)}%
    \else      \footnotesize{(Cited on pages~#2.)}%
    \fi}
\newtheorem{remark}{Remark}
\newtheorem{definition}{Definition}
\newtheorem{assumption}{Assumption}
\newtheorem{theorem}{Theorem}
\newtheorem{lemma}{Lemma}
\newtheorem{proposition}{Proposition}
\newcommand{\pmin}{p_{\mrm{\min}}}
\newcommand{\pmax}{p_{\max}}
\newcommand{\noise}{v}
\newcommand{\order}{\mc{O}}
\newcommand{\mfk}{\mathfrak}
\newcommand{\set}[1]{\mc{#1}}
\newcommand{\ball}{\mathbb{B}}
\newcommand{\X}{\set{X}}
\newcommand{\indicator}{\mbf 1}
\newcommand{\snorm}[1]{\Vert #1 \Vert}
\newcommand{\sinfnorm}[1]{\snorm{#1}_\infty}
\newcommand{\lone}{{L}^1}
\newcommand{\ltwo}{{L}^2}
\newcommand{\sumn}[1][i]{\sum_{#1=1}^n}
\newcommand{\sumnout}[1][i]{\sum_{#1=1}^{\nout}}
\newcommand{\sless}[1]{\stackrel{#1}{\leq}}
\newcommand{\sgrt}[1]{\stackrel{#1}{\geq}}
\newcommand{\seq}[1]{\stackrel{#1}{=}}
\newcommand{\x}{x}
\newcommand{\z}{z}
\newcommand{\axi}[1][i]{\x_{#1}}
\newcommand{\kernel}{\mbf{k}}
\newcommand{\kersplit}{\kernel_{\trm{split}}}
\newcommand{\rkhs}{\mc{H}}
\renewcommand{\k}{\kernel}
\newcommand{\mkernel}{\mbf{K}}
\newcommand{\hnorm}[1]{\Vert{#1}\Vert_{\rkhs}}
\newcommand{\knorm}[1]{\Vert{#1}\Vert_{\kernel}}
\DeclareMathOperator{\mmd}{MMD}
\newcommand{\eps}{\epsilon}
\def\Matern{Mat\'ern\xspace}
\newcommand{\proofref}[1]{\noindent\text{See \cref{#1} for the proof.\\}}
\newcommand{\ncref}[1]{\cref{#1}: \nameref*{#1}} %
\newcommand{\pcref}[1]{Proof of \ncref{#1}} %
\newcommand{\vareps}{\varepsilon}
\newcommand{\Linf}{L^\infty}
\newcommand{\cover}{\mc C}
\newcommand{\tail}[1][\kernel]{\tau_{#1}}
\newcommand{\radius}{R}
\newcommand{\sgparam}[1][i]{\sigma_{#1}}
\newcommand{\vmax}[1][i]{\mathfrak{b}_{#1}}
\newcommand{\eventnotag}{\mc{E}}
\newcommand{\event}[1][]{\eventnotag_{#1}}
\newcommand{\wtil}[1]{\widetilde{#1}}
\newcommand{\cnew}[1][i]{\mfk{a}_{#1}}
\newcommand{\pseqxn}[1][n]{(\axi[i])_{i\geq 1}} %
\newcommand{\pseqxnn}[1][n]{(\axi[i])_{i=1}^n} %
\newcommand{\coveringnumber}{\mc N}
\newcommand{\coreset}[1][j]{\mathcal{S}^{(#1)}}
\newcommand{\brackets}[1]{\left[ #1 \right]}
\newcommand{\parenth}[1]{\left( #1 \right)}
\newcommand{\sbraces}[1]{\{ #1  \}}
\newcommand{\braces}[1]{\left\{ #1 \right \}}
\newcommand{\bigbraces}[1]{\big\{ #1 \big \}}
\newcommand{\biggbraces}[1]{\bigg\{ #1 \bigg \}}
\newcommand{\abss}[1]{\left| #1 \right |}
\newcommand{\inv}{^{-1}}
\newcommand{\real}{\ensuremath{\mathbb{R}}}
\newcommand{\Prob}{\ensuremath{{\mathbb{P}}}}
\def\balign#1\ealign{\begin{align}#1\end{align}}
\def\baligns#1\ealigns{\begin{align*}#1\end{align*}}
\def\balignat#1\ealign{\begin{alignat}#1\end{alignat}}
\def\balignats#1\ealigns{\begin{alignat*}#1\end{alignat*}}
\def\bitemize#1\eitemize{\begin{itemize}#1\end{itemize}}
\def\benumerate#1\eenumerate{\begin{enumerate}#1\end{enumerate}}
\newenvironment{talign*}
 {\csname align*\endcsname}
 {\endalign}
\newenvironment{talign}
 {\csname align\endcsname}
 {\endalign}
\def\balignst#1\ealignst{\begin{talign*}#1\end{talign*}}
\def\balignt#1\ealignt{\begin{talign}#1\end{talign}}
\newcommand{\qtext}[1]{\quad\text{#1}\quad}
\let\originalleft\left
\let\originalright\right
\renewcommand{\left}{\mathopen{}\mathclose\bgroup\originalleft}
\renewcommand{\right}{\aftergroup\egroup\originalright}
\def\Holder{H\"older\xspace}
\def\Nystrom{Nystr\"om\xspace}
\def\Matern{Mat\'ern\xspace}
\def\tinycitep*#1{{\tiny\citep*{#1}}}
\def\tinycitealt*#1{{\tiny\citealt*{#1}}}
\def\tinycite*#1{{\tiny\cite*{#1}}}
\def\smallcitep*#1{{\scriptsize\citep*{#1}}}
\def\smallcitealt*#1{{\scriptsize\citealt*{#1}}}
\def\smallcite*#1{{\scriptsize\cite*{#1}}}
\def\mbf#1{\mathbf{#1}}
\def\mbb#1{\mathbb{#1}}
\def\mc#1{\mathcal{#1}}
\def\mrm#1{\mathrm{#1}}
\def\trm#1{\textrm{#1}}
\def\tbf#1{\textbf{#1}}
\def\reals{\mathbb{R}} %
\def\R{\mathbb{R}}
\def\naturals{\mathbb{N}} %
\def\<{\left\langle} %
\def\>{\right\rangle}
\def\iff{\Leftrightarrow}
\def\implies{\quad\Longrightarrow\quad}
\def\defeq{\triangleq} %
\def\half{\frac{1}{2}}
\newcommand{\floor}[1]{\lfloor{#1}\rfloor}
\newcommand{\ceil}[1]{\lceil{#1}\rceil}
\newcommand{\boldone}{\mbf{1}} %
\newcommand{\ident}{\mbf{I}} %
\def\norm#1{\left\|{#1}\right\|} %
\newcommand{\twonorm}[1]{\norm{#1}_2}
\newcommand{\stwonorm}[1]{\snorm{#1}_2}%
\newcommand{\infnorm}[1]{\norm{#1}_{\infty}} %
\newcommand{\nnorm}[1]{\norm{#1}_n} %
\def\staticnorm#1{\|{#1}\|} %
\newcommand{\statictwonorm}[1]{\staticnorm{#1}_2} %
\newcommand{\staticnnorm}[1]{\staticnorm{#1}_n} %
\newcommand{\inner}[2]{\langle{#1},{#2}\rangle} %
\def\what#1{\widehat{#1}}
\def\indic#1{\mbb{I}\left[{#1}\right]} %
\def\E{\mbb{E}} %
\def\Earg#1{\E\left[{#1}\right]}
\def\Esubarg#1#2{\E_{#1}\left[{#2}\right]}
\def\bigO#1{\mathcal{O}(#1)} %
\def\P{\mbb{P}} %
\def\Parg#1{\P\left({#1}\right)}
\def\Psubarg#1#2{\P_{#1}\left[{#2}\right]}
\def\Var{\mrm{Var}} %
\newcommand{\Gsn}{\mathcal{N}}
\newcommand{\iid}{\textrm{i.i.d.}\xspace}
\newcommand{\dist}{\sim}
\newcommand{\distiid}{\overset{\textrm{\tiny\iid}}{\dist}}
\providecommand{\argmin}{\mathop\mathrm{arg min}}
\def\rank#1{\mathrm{rank}({#1})}
\newenvironment{proof-sketch}{\noindent\textbf{Proof Sketch}
  \hspace*{1em}}{\qed\bigskip\\}
\newenvironment{proof-idea}{\noindent\textbf{Proof Idea}
  \hspace*{1em}}{\qed\bigskip\\}
\newenvironment{proof-of-lemma}[1][{}]{\noindent\textbf{Proof of Lemma {#1}}
  \hspace*{1em}}{\qed\\}
\newenvironment{proof-of-theorem}[1][{}]{\noindent\textbf{Proof of Theorem {#1}}
  \hspace*{1em}}{\qed\\}
\newenvironment{proof-attempt}{\noindent\textbf{Proof Attempt}
  \hspace*{1em}}{\qed\bigskip\\}
\newcommand{\cset}{\mc{S}}
\newcommand{\inputcoreset}{\cset_{\mrm{in}}}
\newcommand{\outputcoreset}{\cset_{\mrm{out}}}
\newcommand{\ktcoreset}{\cset_{\mrm{KT}}}
\newcommand{\err}{\mathfrak{M}}%
\newcommand{\kt}{\textsc{KT}\xspace}
\newcommand{\ktsplit}{\hyperref[algo:ktsplit]{\color{black}{\textsc{kt-split}}}\xspace}
\newcommand{\ktswap}{\hyperref[algo:ktswap]{\color{black}{\textsc{kt-swap}}}\xspace}
\newcommand{\Wendland}[1][\wendparam]{{Wendland$(#1)$}\xspace}
\newcommand{\nout}{n_{\mrm{out}}}
\newcommand{\ininfnorm}[1]{\norm{#1}_{\infty,\mrm{in}}}
\newcommand{\ktcomplexity}{\eta_{n,\kernel}}
\newcommand{\ktcomplexityconstant}{\mfk a}
\newcommand{\Rin}{\mfk R_{\mrm{in}}}
\newcommand{\nw}{\widehat f }
\newcommand{\nwkt}{\widehat f_{\mrm{KT}}}
\newcommand{\numer}{\widehat A}
\newcommand{\denom}{\widehat p}
\newcommand{\numerkt}{\widehat A_{\mrm{KT}}}
\newcommand{\denomkt}{\widehat p_{\mrm{KT}}}
\newcommand{\ymax}{Y_{\mrm{max}}}
\newcommand{\kernelnw}{\kernel_{\mrm{NW}}}
\newcommand{\modelclass}{\mathcal{F}}
\newcommand{\fstar}{f^{\star}}
\newcommand{\Y}{\mathcal{Y}}
\newcommand{\fullnormbound}{\mc E_{\trm{full}, n}}
\newcommand{\ktnormbound}{\mc E_{\trm{KT}, n}}
\newcommand{\deltakt}{\widehat \Delta_{\mrm{KT}}}
\newcommand{\deltafull}{\widehat \Delta_{\mrm{full}}}
\newcommand{\kernelrr}{\kernel_{\mrm{RR}}}
\newcommand{\krr}{\widehat{f}_{\mrm{full},\lambda}}
\newcommand{\krrkt}{\widehat{f}_{\mrm{KT},\lambda^\prime}}
\newcommand{\krrktname}{\textsc{KT-KRR}\xspace}
\newcommand{\krrstname}{\textsc{ST-KRR}\xspace}
\newcommand{\krrfullname}{\textsc{Full-KRR}\xspace}
\newcommand{\nwfullname}{\textsc{Full-NW}\xspace}
\newcommand{\nwstname}{\textsc{ST-NW}\xspace}
\newcommand{\nwktname}{\textsc{KT-NW}\xspace}
\newcommand{\noutnorm}[1]{\norm{#1}_{\nout}}
\newcommand{\snoutnorm}[1]{\snorm{#1}_{\nout}}
\newcommand{\staticnoutnorm}[1]{\staticnorm{#1}_{\nout}} \newcommand{\snnorm}[1]{\snorm{#1}_{n}}
\newcommand{\ag}[1]{}
\newcommand{\kc}[1]{}
\newcommand{\wip}[1]{}
\newcommand{\todo}[1]{}
\newcommand{\tocheck}[1]{}
\newcommand{\kalg}{\kernel_{\textsc{Alg}}}
\newcommand{\kalgnorm}[1]{\norm{#1}_{\kalg}}
\newcommand{\rounds}{\beta_n}
\newcommand{\Pin}{\mbb P_{\textup{in}}}
\newcommand{\Qout}{\mbb Q_{\textup{out}}}
\newcommand{\nin}{n}
\newcommand{\osname}{compression level\xspace}
\newcommand{\ossymb}{\mfk g}
\newcommand{\compress}{\textsc{Compress}\xspace}
\newcommand{\compresssub}{\textsc{C}}
\newcommand{\compressppsub}{\textsc{C++}\xspace}
\newcommand{\subg}{\nu}
\newcommand{\vspone}[1][1]{\vspace{-#1mm}}
 \crefname{appendix}{App.}{App.}
\crefname{equation}{}{}
\crefname{lemma}{Lem.}{Lem.}
\crefname{theorem}{Thm.}{Thm.}
\crefname{Corollary}{Cor.}{Cors.}
\crefname{algorithm}{Alg.}{Algs.}
\crefname{section}{Sec.}{Sec.}
\crefname{table}{Tab.}{Tab.}
\crefname{remark}{Rem.}{Rem.}
\crefname{definition}{Def.}{Def.}
\crefname{Proposition}{Prop.}{Prop.}
\crefname{myremark}{Rem.}{Rem.}
\crefname{mylemma}{Lem.}{Lem.}
\crefname{mydefinition}{Def.}{Defs.}
\crefname{myproposition}{Prop.}{Prop.}
\crefname{mycorollary}{Cor.}{Cors.}
\crefname{myassumption}{Assum.}{Assum.}
\crefname{figure}{Fig.}{Fig.}
\crefname{enumi}{}{}
\crefname{name}{}{} %
\title{Supervised Kernel Thinning}
\author{
\textbf{Albert Gong \quad Kyuseong Choi \quad Raaz Dwivedi} \\[2mm]
Cornell Tech, Cornell University\\[2mm]
\texttt{agong,kc728,dwivedi@cornell.edu}
}
\date{\today}
\begin{document}

\sloppy
\maketitle

\begin{abstract}
The kernel thinning algorithm of \citep{dwivedi2024kernel} provides a better-than-i.i.d. compression of a generic set of points. By generating high-fidelity coresets of size significantly smaller than the input points, KT is known to speed up unsupervised tasks like Monte Carlo integration, uncertainty quantification, and non-parametric hypothesis testing, with minimal loss in statistical accuracy. In this work, we generalize the KT algorithm to speed up supervised learning problems involving kernel methods. 
Specifically, we combine two classical algorithms---Nadaraya-Watson (NW) regression or kernel smoothing, and kernel ridge regression (KRR)---with KT to provide a \emph{quadratic} speed-up in both training and inference times. We show how distribution compression with KT in each setting reduces to constructing an appropriate kernel, and introduce the Kernel-Thinned NW and Kernel-Thinned KRR estimators. We prove that KT-based regression estimators enjoy significantly superior computational efficiency over the full-data estimators and improved statistical efficiency over i.i.d. subsampling of the training data. En route, we also provide a novel multiplicative error guarantee for compressing with KT.  We validate our design choices with both simulations and real data experiments.

\end{abstract}

\section{Introduction}

In supervised learning, the goal of coreset methods is to find a representative set of points on which to perform model training and inference. 
On the other hand, coreset methods in \textit{unsupervised} learning have the goal of finding a representative set of points, which can then be utilized for a broad class of downstream tasks---from integration \citep{dwivedi2024kernel,dwivedi2022generalized} to non-parametric hypothesis testing \citep{domingo2023compress}. 
This work aims to bridge these two research threads. 

Leveraging recent advancements from compression in the unsupervised setting, we tackle the problem of non-parametric regression (formally defined in \cref{sec:problem setup}). Given a dataset of $n$ \iid samples, $(x_i, y_i)_{i=1}^n$, we want to learn a function $f$ such that $f(x_i) \approx y_i$.
The set of allowable functions is determined by the kernel function, which is a powerful building block for capturing complex, non-linear relationships. Due to its powerful performance in practice and closed-form analysis, non-parametric regression methods based on kernels (a.k.a ``kernel methods'') have  become a popular choice for a wide range of supervised learning tasks \citep{huang2022robust,radhakrishnan2022feature,singh2021sequential}.

There are two popular approaches to non-parametric kernel regression.
First, perhaps a more classical approach, is kernel smoothing, also referred to as Nadaraya-Watson (NW) regression. The NW estimator at a point $x$ is effectively a smoothing of labels $y_i$ such that $x_i$ is close to $x$. These weights are computed using the kernel function (see \cref{sec:problem setup} for formal definitions).
Importantly, the NW estimator takes $\Theta(n)$ pre-processing time (to simply store the data) and $\Theta(n)$ inference time for each test point $x$ ($n$ kernel evaluations and $n$ simple operations).

Another popular approach is kernel ridge regression (KRR), which solves a non-parametric least squares subject to the regression function lying in the reproducing kernel Hilbert space (RKHS) of a specified reproducing kernel function. 
Remarkably, KRR admits a closed-form solution via inverting the associated kernel matrix, and takes $\bigO{n^3}$ training time and $\Theta(n)$ inference time for each test point $x$.

Our goal is to overcome the computational bottlenecks of kernel methods, while retaining their favorable statistical properties. Previous attempts at using coreset methods include the work of
\citet{boutsidis2013near, zheng2017coresets, phillips2017coresets}, which depend on a projection type compression, having similar spirit to the celebrated Johnson–Lindenstrauss lemma, a metric preserving projection result. So accuracy and running depend unfavorably on the desired statistical error rate. 
\citet{kpotufe2009fast} propose an algorithm to reduce the query time of the NW estimator to $\bigO{\log n}$, but the algorithm requires super-linear preprocessing time. 

 Other lines of work exploit the structure of kernels more directly, especially in the KRR literature. A slew of techniques from numerical analysis have been developed, including work on \Nystrom subsampling by \citet{el2014fast,avron2017faster,diaz2023robust}.
\citet{camoriano2016nytro} and \citet{rudi2017falkon} combine early stopping with \Nystrom subsampling.
Though more distant from our approach, we also note the approach of \citet{rahimi2007random} using random features, \citet{zhang2015divide} using Divide-and-Conquer, and \citet{tu2016large} using block coordinate descent. 

\paragraph{Our contributions.}
In this work, we show how coreset methods can be used to speed up both training and inference in non-parametric regression for a large class of function classes/kernels. At the heart of these algorithms is a general procedure called kernel thinning \cite{dwivedi2024kernel, dwivedi2022generalized}, which provides a worst-case bound on integration error (suited for problems in the original context of unsupervised learning and MCMC simulations). In \cref{sec:speeding-up}, we introduce a meta-algorithm that recovers our two thinned non-parametric regression methods each based on NW and KRR. We introduce the \emph{kernel-thinned Nadaraya-Watson estimator} (\nwktname) and the \emph{kernel-thinned kernel ridge regression} estimator (\krrktname). 

We show that \nwktname requires $\bigO{n \log^3 n}$ time during training and $\bigO{\sqrt n}$ time at inference, while achieving a mean square error (MSE) rate of $n^{-\frac{\beta}{\beta+d}}$ (\cref{thm:nadaraya-kt})---a strict improvement over uniform subsampling of the original input points. We show that \krrktname requires $\bigO{n^{3/2}}$ time during training and and $\bigO{\sqrt n}$ time during inference, while achieving an near-minimax optimal rate of $\frac{m \log n}{n}$ when the kernel has finite dimension (\cref{thm:kt-krr-finite}). We show how our \krrktname guarantees can also be extended to the infinite-dimension setting (\cref{cor:krr-kt}). 
In \cref{sec:results}, we apply our proposed methods to both simulated and real-world data. In line with our theory, \nwktname and \krrktname outperform standard thinning baselines in terms of accuracy while retaining favorable runtimes.

\section{Problem setup}
\label{sec:problem setup}

We now formally describe the non-parametric regression problem.
Let $x_1,\ldots,x_n$ be i.i.d. samples from the data distribution $\P$ (with density $p$) over the domain $\X\subset \reals^d$ and $w_1,\ldots,w_n$ be i.i.d. samples from $\Gsn(0,1)$. In the sequel, we use $\snorm{\cdot}$ to denote the Euclidean norm unless otherwise stated. Then define the response variables $y_1,\ldots,y_n$ by the follow data generating process:
\begin{align}\label{eq:process}
    y_i \defeq \fstar(x_i)+\noise_i \qtext{for} i=1,2,\ldots,n,
\end{align}
where $\fstar:\X\to \Y\subset \reals$ is the \emph{regression function} and $\noise_i \defeq \sigma w_i$ for some noise level $\sigma>0$. Our task is to build an estimate for $\fstar$ given the $n$ observed points, denoted by
\begin{align}
    \inputcoreset \defeq \parenth{(x_1,y_1),\ldots,(x_n,y_n)}.
\end{align}

\paragraph{Nadaraya-Watson (NW) estimator.} 
A classical approach to estimate the function $\fstar$ is kernel smoothing, where one estimates the function value at a point $z$ using a weighted average of the observed outcomes. The weight for outcome $y_i$ depends on how close $x_i$ is to the point $z$; let $\kappa:\real^d \to \real$ denote this weighting function such that the weight for $x_i$ is proportional to $\kappa\parenth{\norm{x_i-z}/h}$ for some bandwidth parameter $h>0$. Let $\kernel:\real^d \times \real \to \real$ denote a shift-invariant kernel defined as 
\begin{align}\label{eq:shift-invariant}
    \kernel(x_1,x_2)=\kappa\parenth{\norm{x_1-x_2}/h}.
\end{align}
Then this smoothing estimator, also known as Nadaraya-Watson (NW) estimator, can be expressed as
\begin{align}\label{eq:nw}
    \nw(\cdot) \defeq  \frac{\sum_{(x,y)\in \inputcoreset} \kernel (\cdot,x) y}{ \sum_{x\in \inputcoreset} \kernel(\cdot,x)}
\end{align}
whenever the denominator in the above display is non-zero. In the case the denominator in \cref{eq:nw} is zero, we can make a default choice, which for simplicity here we choose as zero.
We refer to the estimator~\cref{eq:nw}  as \nwfullname estimator hereafter. One can easily note that \nwfullname requires $\bigO{n}$ storage for the input points and $\bigO{n}$ kernel queries for inference at each point.

\paragraph{Kernel ridge regression (KRR) estimator.} Another popular approach to estimate $\fstar$ is that of non-parametric (regularized) least squares. The solution in this approach, often called as the kernel ridge regression (KRR), is obtained by solving a least squares objective where the fitted function is posited to lie in the RKHS $\rkhs$ of a reproducing kernel $\kernel$, and a regularization term is added to the objective to avoid overfitting.\footnote{We note that while KRR approach~\cref{eq:kernel-krr} does require $\kernel$ to be reproducing, the NW approach~\cref{eq:nw} in full generality is valid even when $\kernel$ is a not a valid reproducing kernel.} Overall, the KRR estimate is the solution to the following regularized least-squares objective, where $\lambda>0$ denotes a regularization hyperparameter:
\begin{align}\label{eq:krr-objective}
    \min_{f\in\rkhs} L_{\inputcoreset} + \lambda \knorm{f}^2,\qtext{where} L_{\inputcoreset} \defeq \frac{1}{n} \sum_{(x,y)\in \inputcoreset} \parenth{f(x) - y}^2.
\end{align}
Like NW, an advantage of KRR is the existence of a closed-form solution
\begin{align}
    \krr(\cdot) &\defeq \sum_{i=1}^{n} \alpha_i \kernel(\cdot, x_i)
    \qtext{where} \label{eq:krr} \\
    \boldsymbol \alpha &\defeq (\mbf K+n\lambda \ident_n)\inv \begin{bmatrix}y_1 \\  \vdots \\  y_n\end{bmatrix} \in \reals^n \qtext{and} 
    \mkernel \defeq [\kernel(x_i,x_j)]_{i,j=1}^n \in \R^{n\times n}
    \label{eq:mkernel}.
\end{align}
Notably, the estimate $\krr$, which we refer to as the \krrfullname estimator, can also be seen as yet another instance of weighted average of the observed outcomes. Notably, NW estimator imposes that the weights across the points sum to $1$ (and are also non-negative whenever $\kernel$ is), KRR allows for generic weights that need not be positive (even when $\kernel$ is) and need not sum to $1$.
We note that naïvely solving $\krr$ requires $\order(n^2)$ kernel evaluations to compute the kernel matrix, $\order(n^3)$ to compute a matrix inverse, and $\order(n)$ kernel queries for inference at each point. One of our primary goals in this work is to tackle this high computational cost of \krrfullname.

\section{Speeding up non-parametric regression}\label{sec:speeding-up}

\newcommand{\critthresh}{\xi_n}
\newcommand{\lambdakt}{\lambda'}
\newcommand{\lambdafull}{\lambda}
We begin with a general approach to speed up regression by thinning the input datasets. While computationally superior, a generic approach suffers from a loss of statistical accuracy motivating the need for a strategic thinning approach. To that end, we briefly review kernel thinning and finally introduced our supervised kernel thinning approach.
\subsection{Thinned regression estimators: Computational and statistical tradeoffs}
Our generic approach comprises two main steps. First, we compress the input data by choosing a coreset $\outputcoreset \subset \inputcoreset$ of size $\nout \defeq | \outputcoreset |$. Second, we apply our off-the-shelf non-parametric regression methods from \cref{sec:problem setup} to the compressed data. By setting $\nout \ll n$, we can obtain notable speed-ups over the \textsc{Full} versions of NW and KRR.

Before we introduce the thinned versions of NW and KRR, let us define the following notation. Given an input sequence $\inputcoreset$ and output sequence  $\outputcoreset$, define the empirical probability measures
\begin{align}\label{eq:Pin-Qout}
    \Pin \defeq \frac{1}{n}\sum_{(x,y)\in \inputcoreset} \delta_{(x,y)} \qtext{and} \Qout \defeq \frac{1}{\nout}\sum_{(x,y)\in \outputcoreset} \delta_{(x,y)}.
\end{align}

\paragraph{Thinned NW estimator.}
The thinned NW estimator is the analog of Full-NW except that $\inputcoreset$ is replaced by  $\outputcoreset$ in \cref{eq:nw} so that the \textit{thinned-NW estimator} is given by
\begin{align}\label{eq:nadaraya-kt}
    \what f_{\outputcoreset}(\cdot) \defeq \frac{\sum_{(x,y)\in \outputcoreset} \kernel(\cdot,x) y}{ \sum_{x\in \outputcoreset} \kernel(\cdot, x)}
    = \frac{\Qout(y\kernel)}{\Qout \kernel}
\end{align}
whenever the denominator in the display is not zero; and $0$ otherwise. When compared to the \nwfullname estimator, we can easily deduce the computational advantage of this estimator: more efficient $\order(\nout)$ storage as well as the faster $\order(\nout)$ computation for inference at each point.

\paragraph{Thinned KRR estimator.}
Similarly, we can define the \textit{thinned KRR estimator} as
\begin{align}\label{eq:thinned-krr-solution}
    \what f_{\outputcoreset,\lambdakt}(\cdot) &= \sum_{i=1}^{\nout} \alpha_i' \kernel(\cdot,x_i'),\qtext{where} \\
    \boldsymbol{\alpha}' &\defeq \parenth{\mbf K'+\nout \lambda' \ident_{\nout}}\inv \begin{bmatrix}y_1' \\  \vdots \\  y_{\nout}'\end{bmatrix} \in \reals^{\nout} \qtext{and} \mkernel' \defeq [\kernel(x_i',x_j')]_{i,j=1}^{\nout}\in \R^{\nout \times \nout}
\end{align}
given some regularization parameter $\lambdakt >0$. When compared to \krrfullname, $\what f_{\outputcoreset,\lambdakt}$ has training time $\bigO{\nout^3}$ and prediction time $\bigO{\nout}$. 

A baseline approach is standard thinning, whereby we let $\outputcoreset$ be an i.i.d. sample of $\nout=\sqrt n$ points from $\inputcoreset$.
For NW, let us call the resulting $\what f_{\outputcoreset}$ \cref{eq:nadaraya-kt} the standard-thinned Nadaraya-Watson (\nwstname) estimator. When $\nout = \sqrt n$, \nwstname achieves an excess risk rate of $\bigO{ n^{-\frac{\beta}{2\beta+d}} }$ compared to the \nwfullname rate of $\bigO{n^{-\frac{2\beta}{2\beta+d}}}$.
For KRR, let us call the resulting $\what f_{\outputcoreset,\lambdakt}$ \cref{eq:thinned-krr-solution} the standard-thinned KRR (\krrstname) estimator. When $\nout = \sqrt n$, \krrstname achieves an excess risk rate of $\bigO{\frac{m}{\nout}}$ compared to the \krrfullname rate of $\bigO{\frac{m}{n}}$.
Our goal is to provide good computational benefits without trading off statistical error. Moreover, we may be able to do better by leveraging the underlying geometry of the input points and summarize of the input distribution more succinctly than i.i.d. sampling.

\subsection{Background on kernel thinning}
\label{sub:kt}
\newcommand{\ktcompresspp}{\hyperref[app:ktcompresspp]{\color{black}{\textsc{KT-Compress++}}}\xspace}
\newcommand{\ktcompress}{\hyperref[algo:ktcompress]{\color{black}{\textsc{KT-Compress}}}\xspace}
\newcommand{\errmmd}{\mfk W}

A subroutine central to our approach is kernel thinning (KT) from \citet[Alg.~1]{dwivedi2024kernel}. We use a variant called \ktcompresspp from \citet[Ex.~6]{shetty2022distribution} (see full details in \cref{app:ktcompresspp}), which provides similar approximation quality as the original KT algorithm of \citet[Alg.~1]{dwivedi2024kernel}, while reducing the runtime from $\bigO{n^2}$ to $\bigO{n \log^3 n}$.\footnote{In the sequel, we use ``KT'' and ``\ktcompresspp'' interchangeably since the underlying algorithm (kernel halving \cite[Alg.~1a]{dwivedi2024kernel}) and associated approximation guarantees are the same up to small constant factors.}
Given an input kernel $\kalg$ and input points $\inputcoreset$, \ktcompresspp outputs a coreset $\ktcoreset \subset \inputcoreset$ with size $\nout \defeq \sqrt n \ll n$. In this work, we leverage two guarantees of \ktcompresspp. Informally, $\ktcoreset$ satisfies (with high probability):
\begin{align}
    (\Linf~\text{bound})~\quad
    \infnorm{ (\Pin - \Qout) \kalg } &\leq C_1 \frac{\sqrt d \log \nout}{\nout} \label{eq:ktcompresspp-Linf-informal} \\
    (\mmd~\text{bound})~
    \sup_{\kalgnorm{h} \leq 1}  \abss{ (\Pin-\Qout) h} &\leq C_2 \frac{\sqrt{\log \nout \cdot \log \coveringnumber_{\kalg} (\ball_2(\Rin),1/\nout)}}{\nout} , \label{eq:ktcompresspp-mmd-informal}
\end{align}
where $C_1,C_2>0$ are constants that depend on the properties of the input kernel $\kalg$ and the chosen failure probability of \ktcompresspp, $\Rin$ characterizes the radius of $\braces{x_i}_{i=1}^n$, and $\coveringnumber_{\kalg} (\ball_2(\Rin), 1/\nout)$ denotes the kernel covering number of $\rkhs(\kalg)$ over the ball $\ball_2(\Rin)\subset \reals^d$ at a specified tolerance (see \cref{sec:kt-krr} for formal definitions).

At its highest level, KT provides good approximation of function averages. The bound \cref{eq:ktcompresspp-Linf-informal} (formally stated in \cref{lem:ktcompresspp-Linf}) controls the worst-case point-wise error, and is near-minimax optimal by \citet[Thm.~3.1]{phillips2018improved}. In the sequel, we leverage this type of result to derive generalization bounds for the kernel smoothing problem. 
The bound \cref{eq:ktcompresspp-mmd-informal} (formally stated in \cref{lem:ktcompresspp-mmd}) controls the integration error of functions in $\rkhs(\kalg)$ and is near-minimax optimal by \citet[Thm.~1,~6]{tolstikhin2017minimax}. In the sequel, we leverage this type of result to derive generalization bounds for the KRR problem.

\subsection{Supervised kernel thinning}
We show how the approximation results from kernel thinning can be extended to the regression setting. We construct two meta-kernels, the Nadaraya-Watson meta-kernel $\kernelnw$ and the ridge-regression meta-kernel $\kernelrr$, which take in a \textit{base kernel} $\kernel$ (defined over $\X$ only) and return a new kernel (defined over $\X\times \Y$). When running KT, we set this new kernel as $\kalg$.

\subsubsection{Kernel-thinned Nadaraya-Watson regression (\textsc{KT-NW})} 

A tempting choice of kernel for KT-NW is the kernel $\kernel$ itself. 
That is, we can thin the input points using the kernel
\begin{align}
    \kalg((x_1,y_1),(x_2,y_2)) &\defeq \mathbf{k}(x_1,x_2) \label{eq:baseline_1}.
\end{align}

This choice is sub-optimal since it ignores any information in the response variable $y$. For our supervised learning set-up, perhaps another intuitive choice would be to use KT with
\begin{align}
    \kalg((x_1,y_1),(x_2,y_2)) &\defeq \mathbf{k}((x_1, y_1),(x_2,y_2)), \label{eq:baseline_2}
\end{align}
where $(x,y)$ denotes the concatenation of $x$ and $y$. While this helps improve performance, there remains a better option as we illustrate next.

In fact, a simple but critical observation immediately reveals a superior choice of the kernel to be used in KT for NW estimator. We can directly observe that the NW estimator is a ratio of the averages of two functions: 
\begin{align}
    f_{\mrm{numer}}(x,y)(\cdot) &\defeq \kernel(x, \cdot) \inner{y}{1}_{\R} \\ \qtext{and} f_{\mrm{denom}}(x,y)(\cdot) &\defeq \kernel(x,\cdot),
\end{align}
over the empirical distribution $\Pin$ \cref{eq:Pin-Qout}. Recall that KT provides a good approximation of sample means of functions in an RKHS, so it suffices to specify a ``correct'' choice of the RKHS (or equivalently the ``correct'' choice of the reproducing kernel).
We can verify that $f_{\mrm{denom}}$ lies in the RKHS associated with kernel $\kernel(x_1,x_2)$ and $f_{\mrm{numer}}$ lies in the RKHS associated with kernel $\kernel(x_1,x_2)\cdot y_1 y_2$. This motivates our definition for the Nadaraya-Watson kernel:
\begin{align}\label{eq:kernel-nw}
    \kernelnw((x_1,y_1),(x_2,y_2)) \defeq \kernel(x_1,x_2) + \kernel(x_1,x_2) \cdot y_1 y_2
\end{align}
since then we do have $f_{\mrm{denom}}, f_{\mrm{numer}} \in \rkhs(\kernelnw)$. Intuitively, thinning with $\kernelrr$ should simultaneously provide good approximation of averages of $f_{\mrm{denom}}$ and $f_{\mrm{numer}}$ over $\Pin$ (see the formal argument in \cref{sec:kt-nw}). When $\outputcoreset = \ktcompresspp(\inputcoreset, \kernelnw, \delta)$, we call the resulting solution to \cref{eq:nadaraya-kt} the {kernel-thinned Nadaraya-Watson (\textsc{KT-NW}) estimator}, denoted by $\nwkt$. 

As we show in \cref{fig:ablations}(a), this theoretically principled choice does provide practical benefits in MSE performance across sample sizes.

\begin{figure}[ht]
    \centering
    \begin{tabular}{cc}
        \includegraphics[width=0.45\textwidth]{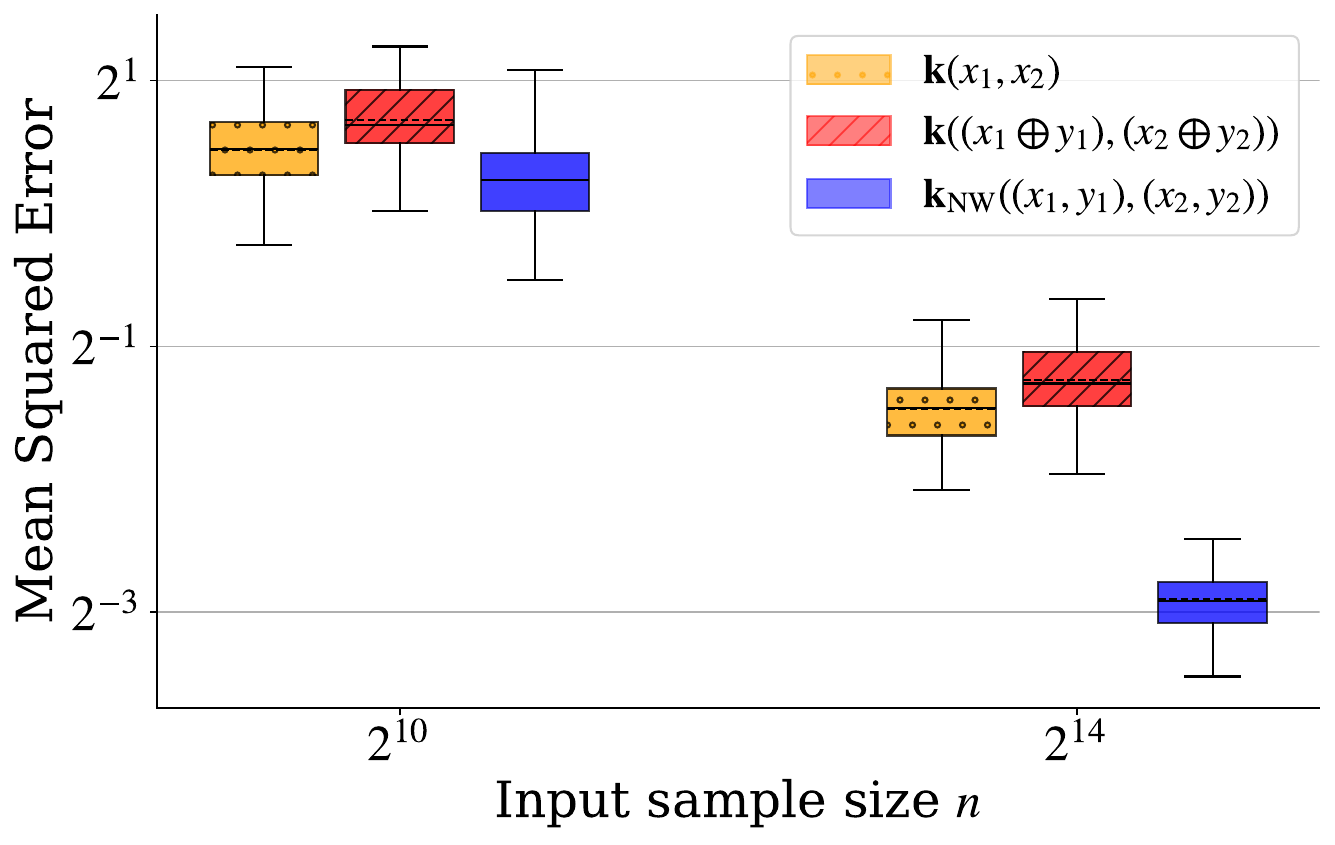} &  
        \includegraphics[width=0.45\textwidth]{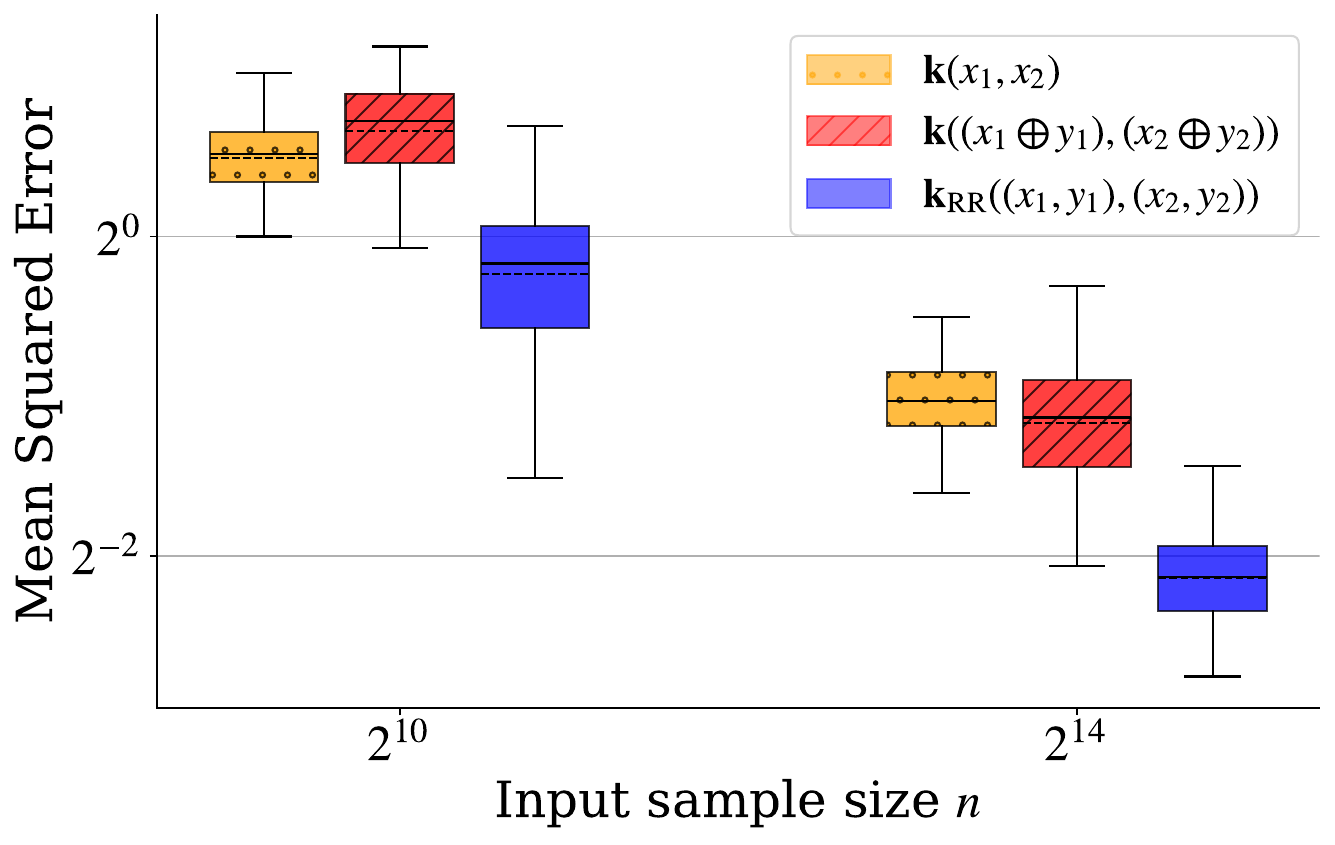} \\
        (a) NW ablation with \Wendland[0] base kernel &
        (b) KRR ablation with Gaussian base kernel
    \end{tabular}
    \caption{\tbf{MSE vs choice of kernels}. For exact settings and further discussion see \cref{sec:simulation}.
    }
    \label{fig:ablations}
\end{figure}

\subsubsection{Kernel-thinned kernel ridge regression (KT-KRR)}  
While with NW estimator, the closed-form expression was a ratio of averages, the KRR estimate~\cref{eq:krr} can not be expressed as an easy function of averages. However, notice that $L_{\inputcoreset}$ in \cref{eq:krr-objective} is an average of the function $\ell_f : \X \times \Y \to \reals$ defined as
\begin{align}
    \ell_f(x,y) \defeq f^2(x) -2 f(x) y + y^2 \qtext{for} f\in \rkhs(\kernel).
\end{align}
Thus, there may be hope of deriving a KT-powered KRR estimator by thinning $L_{\inputcoreset}$ with the appropriate kernel.
Assuming $f\in \rkhs(\kernel)$, we can verify that $f^2$ lies in the RKHS associated with kernel $\kernel^2(x_1,x_2)$ and that $-2 f(x) y$ lies in the RKHS associated with kernel $\kernel(x_1,x_2) \cdot y_1 y_2$. 
We now define the ridge regression kernel by
\begin{align}\label{eq:kernel-krr}
    \kernelrr((x_1,y_1),(x_2,y_2)) \defeq \kernel^2 (x_1,x_2) + \kernel(x_1,x_2) \cdot y_1 y_2
\end{align}
and we can verify that $f^2(x) - 2f(x) y$ lies in the RKHS $\rkhs(\kernelrr)$.\footnote{One might expect the ridge regression kernel to include a term that accounts for $y^2$. However, the generalization bounds turn out to be essentially the same regardless of whether we include this term when defining $\kernelrr$.}
When $\outputcoreset \defeq \ktcompresspp(\inputcoreset, \kernelrr, \delta)$, we call the resulting solution to \cref{eq:thinned-krr-solution} the {kernel-thinned KRR (\krrktname) estimator} with regularization parameter $\lambda'>0$, denoted $\krrkt$. 
We note that the kernel $\kernelrr$ also appears in \cite[Lem.~4]{grunewalder2022compressed}, except our subsequent analysis comes with generalization bounds for the \krrktname estimator. Like for NW, in \cref{fig:ablations}(b) we do a comparison for KRR-MSE across many kernel choices and conclude that the choice \cref{eq:kernel-krr} is indeed a superior choice compared to the base kernel $\kernel$ and the concatenated kernel~\cref{eq:baseline_2}.

\section{Main results}
We derive generalization bounds of our two proposed estimators. In particular, we bound the mean squared error (MSE) defined by $\twonorm{f-\fstar}^2 = \Esubarg{X}{(f(X) - \fstar(X))^2}$.
Our first assumption is that of a well-behaved density on the covariate space. This assumption mainly simplifies our analysis of Nadaraya-Watson and kernel ridge regression, but can in principle be relaxed.
\begin{assumption}[Compact support]\label{assum:compact-support}
Suppose that $\X \subset \ball_2(\Rin) \subset \reals^d$ for some $\Rin> 0$ and that the density $p$ satisfies $0< \pmin \leq p(x)\leq \pmax$ for all $x \in \X$.
\end{assumption}

\subsection{KT-NW}\label{sec:kt-nw}

For the analysis of the NW estimator, we define function complexity in terms of \Holder smoothness following prior work \cite{tsybakov2009nonparametric}.
\begin{definition}\label{def:L-beta-smooth}
For $L>0$ and $\beta \in (0, 1]$, a function $f:\X\to\real$ is $(\beta,L)$-\Holder if for all $x_1, x_2 \in \X$, $\abss{f(x_1)-f(x_2)} \leq L\norm{x_1-x_2}^\beta$.
\end{definition}

Our next assumption is that on the kernel. Whereas typically the NW estimator does not require a reproducing kernel, our KT-NW estimator requires that $\k$ be reproducing to allow for valid analysis. 

\begin{assumption}[Shift-invariant kernel] \label{assum:nw-kernel}
$\k$ is a reproducing kernel function (i.e., symmetric and positive semidefinite) defined by $\k(x_1, x_2) \defeq \kappa( \| x_1 - x_2\| / h)$, where $h>0$ and $\kappa:\R\to\R$ is bounded by 1, $L_{\kappa}$-Lipschitz, square-integrable, and satisfies:
\begin{align}
    \log(L_\kappa\cdot \kappa^\dagger(1/n)) = \bigO{\log n}, \qtext{where} \kappa^\dagger(u) \defeq \sup\braces{r: \kappa(r)\geq u}; \label{eq:kappa-dagger}
\end{align}
Additionally, there must exist constants $c_1,c_2>0$ such that
\begin{align}
    &2^{j\beta} \sup_{x\in \X} \int_{\snorm{z}\in [(2^{j-1}-\half)h,(2^j+\half)h]} \k(x,x-z) dz \leq c_1 \cdot \sup_{x\in \X} \int_{\snorm{z}\in [0,\half h]} \k(x,x-z) dz \qtext{and}\label{eq:nw-kernel-cond1}\\
    &(2^j + \half)^d 2^{j\beta} \kappa(2^{j-1} - 1) \leq c_2 \cdot \int_0^{\half} \kappa(u) u^{d-1} du
    \qtext{for all} j=1,2,\ldots.\label{eq:nw-kernel-cond2}
\end{align}
\end{assumption}
When $\k$ is a compact kernel, such as Wendland, Sinc, and B-spline, \cref{assum:nw-kernel} is easily satisfied. In \cref{app:nw-kernel-examples}, we prove that Gaussian and \Matern (with $\nu> d/2+1$) kernels also satisfy \cref{assum:nw-kernel}.
We now present our main result for the \textsc{KT-NW} estimator. \proofref{proof:nadaraya-kt}

\begin{theorem}[\textsc{KT-NW}] \label{thm:nadaraya-kt}
Suppose that \cref{assum:compact-support,assum:nw-kernel} hold and that $\fstar \in \Sigma(\beta,L_f)$ with $\beta\in (0,1]$. Then for any fixed $\delta \in (0, 1]$, the \textsc{KT-NW} estimator \cref{eq:nadaraya-kt} with $\nout=\sqrt n$ and bandwidth $h=n^{-\frac{1}{2\beta+2 d}}$ satisfies
\begin{talign}\label{eq:nw-kt}
    \statictwonorm{\nwkt - \fstar}^2  \leq C n^{ -\frac{\beta}{\beta + d}} \log^2 n,
\end{talign}
with probability at least $1-\delta$, for some positive constant $C$ that does not depend on $n$.
\end{theorem}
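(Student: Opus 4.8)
The plan is to triangulate through the full‑data estimator $\nw$ of \cref{eq:nw}: since $\statictwonorm{\nwkt-\fstar}^2 \le 2\statictwonorm{\nwkt-\nw}^2 + 2\statictwonorm{\nw-\fstar}^2$, it suffices to bound a \emph{compression error} $\statictwonorm{\nwkt-\nw}^2$ and the \emph{classical NW risk} $\statictwonorm{\nw-\fstar}^2$ separately. Writing $\nw(z) = \Pin(y\kernel(z,\cdot))\,/\,\Pin\kernel(z,\cdot)$ and $\nwkt(z)=\Qout(y\kernel(z,\cdot))\,/\,\Qout\kernel(z,\cdot)$ with $\outputcoreset=\ktcompresspp(\inputcoreset,\kernelnw,\delta)$, the identity $\tfrac{a'}{b'}-\tfrac{a}{b} = \tfrac{a'-a}{b'}-\tfrac{a}{b}\cdot\tfrac{b'-b}{b'}$ gives, whenever $\Qout\kernel(z,\cdot)\neq 0$,
\[
\nwkt(z)-\nw(z) \;=\; \frac{\Delta_A(z)-\nw(z)\,\Delta_p(z)}{\Qout\kernel(z,\cdot)}, \qquad \Delta_A(z)\defeq(\Qout-\Pin)\!\left(y\kernel(z,\cdot)\right),\quad \Delta_p(z)\defeq(\Qout-\Pin)\kernel(z,\cdot).
\]
So the compression error is under control once I (i) bound $|\Delta_A(z)|,|\Delta_p(z)|$ uniformly in $z\in\X$, (ii) lower‑bound $\Qout\kernel(z,\cdot)$ uniformly in $z$, and (iii) bound $|\nw(z)|$ uniformly.

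For (i) I use the $\Linf$ guarantee \cref{lem:ktcompresspp-Linf} behind \cref{eq:ktcompresspp-Linf-informal}, run with $\kalg=\kernelnw$. First condition on $\mathcal E_1\defeq\{\max_i|y_i|\le\ybnd\}$ with $\ybnd\asymp\sigma\sqrt{\log(n/\delta)}+\sinfnorm{\fstar}$ (finite since $\fstar$ is \Holder on the bounded set $\X$), which has probability $\ge 1-\delta/5$ by a Gaussian maximal inequality; on $\mathcal E_1$ the data lie in $\X\times[-\ybnd,\ybnd]$, on which $\kernelnw$ has diagonal at most $1+\ybnd^2$. Applying \cref{lem:ktcompresspp-Linf} yields, on a further event $\mathcal E_2$ of probability $\ge 1-\delta/5$, $\infnorm{(\Pin-\Qout)\kernelnw}\le\vareps_n$ over $\X\times[-\ybnd,\ybnd]$ with $\vareps_n=\mathcal O(\mathrm{polylog}(n)/\sqrt n)$; this is exactly where \cref{assum:nw-kernel}, and specifically the regularity condition \cref{eq:kappa-dagger}, enters, keeping $\coveringnumber_{\kernelnw}(\ball_2(\Rin),1/\nout)$ — hence the constant in $\vareps_n$ — polylogarithmic in $n$ even though the bandwidth $h=h_n$ shrinks. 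Evaluating $(\Pin-\Qout)\kernelnw$ at $(z,0)$ and $(z,1)\in\X\times[-\ybnd,\ybnd]$ and differencing recovers $|\Delta_p(z)|\le\vareps_n$ and $|\Delta_A(z)|\le 2\vareps_n$ for every $z\in\X$.

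For (ii)–(iii), using $p\ge\pmin$ on $\X$ one has $\E[\Pin\kernel(z,\cdot)]=\int\kappa(\snorm{z-x}/h)p(x)\,dx\gtrsim\pmin h^d$ uniformly in $z$, and a Bernstein inequality with a union bound over a $(h/n)$‑net of $\X$ (the map $z\mapsto\Pin\kernel(z,\cdot)$ is $(L_\kappa/h)$‑Lipschitz) shows that on an event $\mathcal E_3$ of probability $\ge 1-\delta/5$, $\Pin\kernel(z,\cdot)\ge c\,h^d$ for all $z\in\X$, since $nh^d\gg\log n$ for $h=n^{-1/(2\beta+2d)}$; as $\vareps_n=o(h^d)$ (again because $\beta>0$), on $\mathcal E_1\cap\mathcal E_2\cap\mathcal E_3$ also $\Qout\kernel(z,\cdot)\ge\tfrac{c}{2}h^d$, and $|\nw(z)|\le\sinfnorm{\fstar}+\mathcal O(\sqrt{\log n/(nh^d)})=\mathcal O(1)$ uniformly (split the weighted average of $y_i$ into its $\fstar$‑part, bounded by $\sinfnorm{\fstar}$, and its mean‑zero Gaussian noise part of conditional variance $\lesssim 1/(nh^d)$). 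Feeding these into the displayed identity gives $\statictwonorm{\nwkt-\nw}^2\le\sup_{z\in\X}|\nwkt(z)-\nw(z)|^2\lesssim\vareps_n^2/h^{2d}$. For the classical term I would use the standard decomposition $\nw(z)-\fstar(z)=[\Pin\kernel(z,\cdot)]^{-1}\{\Pin((\fstar(x)-\fstar(z))\kernel(z,\cdot))+\Pin(v\kernel(z,\cdot))\}$: the bias numerator is at most $L_f\,\Pin(\snorm{x-z}^\beta\kernel(z,\cdot))$, which I control by cutting $\X$ into dyadic shells $\{x:\snorm{x-z}\in[(2^{j-1}-\tfrac12)h,(2^j+\tfrac12)h]\}$ and invoking \cref{eq:nw-kernel-cond1,eq:nw-kernel-cond2} to bound $\sum_j 2^{j\beta}\times(\text{shell mass})$ by the bulk mass $\asymp h^d$ (so the bias is $\lesssim h^\beta$ after dividing), while the noise numerator is conditionally mean‑zero Gaussian of variance $\le\sigma^2 n^{-1}\Pin\kernel(z,\cdot)$, contributing $\lesssim\log n/(nh^d)$ after a net bound; hence $\statictwonorm{\nw-\fstar}^2\lesssim h^{2\beta}+\log n/(nh^d)$.

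Finally, substituting $h=n^{-1/(2\beta+2d)}$ and $\nout=\sqrt n$: $h^{2\beta}=n^{-\beta/(\beta+d)}$, $\log n/(nh^d)=n^{-(2\beta+d)/(2\beta+2d)}\log n\le n^{-\beta/(\beta+d)}\log n$, and $\vareps_n^2/h^{2d}=\mathcal O(\mathrm{polylog}(n))\cdot n^{-1}\cdot n^{d/(\beta+d)}=\mathcal O(\mathrm{polylog}(n))\cdot n^{-\beta/(\beta+d)}$; collecting these on $\mathcal E_1\cap\mathcal E_2\cap\mathcal E_3\cap(\text{classical-risk event})$, which has probability at least $1-\delta$, and bookkeeping the $\mathrm{polylog}$ factors (from $\log\nout$ in the KT bound, the net arguments, and the $\ybnd\asymp\sqrt{\log n}$ truncation) yields $\mathcal O(n^{-\beta/(\beta+d)}\log^2 n)$. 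I expect the main obstacle to be the bias step for \emph{non‑compactly supported} shift‑invariant kernels — showing the tails of $\kappa$ inflate the NW bias by only a constant, with merely $\mathcal O(\log(\Rin/h))$ shells relevant — which is precisely the role of \cref{eq:nw-kernel-cond1,eq:nw-kernel-cond2} and requires the kernel‑by‑kernel verification (Gaussian, \Matern) deferred to \cref{app:nw-kernel-examples}; a secondary delicate point is extracting a \emph{polylogarithmic} rather than polynomial constant in the KT $\Linf$ bound for the bandwidth‑shrinking, data‑range‑dependent kernel $\kernelnw$, which leans on \cref{eq:kappa-dagger}.
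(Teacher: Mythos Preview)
Your proposal follows essentially the same route as the paper: triangulate through $\nw$, control the compression error via the ratio identity together with the $L^\infty$ guarantee of \cref{lem:ktcompresspp-Linf} applied to $\kernelnw$, lower-bound the denominator by $\Theta(h^d)$, and handle the full-NW risk by a bias--variance split with dyadic shells invoking \cref{eq:nw-kernel-cond1,eq:nw-kernel-cond2}.

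Two small remarks on the execution. First, your trick of evaluating $(\Pin-\Qout)\kernelnw$ at $(z,0)$ and $(z,1)$ and differencing to isolate $\Delta_p$ and $\Delta_A$ is a clean substitute for the paper's direct-sum argument in \cref{lem:l-infty}; both yield the same $O(\sqrt d\,\log n/\sqrt n)$ bound. Second, your bias claim ``$\lesssim h^\beta$'' is slightly optimistic for non-compact $\kappa$: \cref{eq:nw-kernel-cond1,eq:nw-kernel-cond2} bound \emph{each} shell's weighted contribution by a constant times the bulk, so summing over the $T=O(\log(\Rin/h))=O(\log n)$ shells you yourself flag gives bias $\lesssim h^\beta\log n$, exactly as in \cref{lem:nadaraya-bias} (the non-compact case). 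This does not alter the final rate, since your KT term $\vareps_n^2/h^{2d}$ already carries the $\log^2 n$, but the paper's $\log^2 n$ in \cref{eq:nw-kt} in fact comes from \emph{both} the bias and the KT term after balancing. Lastly, your explicit truncation event $\mathcal E_1=\{\max_i|y_i|\le \ybnd\}$ with $\ybnd\asymp\sqrt{\log n}$ is more careful than the paper, which simply writes $\ymax$ throughout as if the responses were bounded.
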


\citet{tsybakov2009nonparametric,belkin2019does} show that \textsc{Full-NW} achieves a rate of 
$\bigO{n^{ -\frac{2 \beta}{2\beta + d}}}$, which is minimax optimal for the $(\beta,L)$-\Holder function class.
Compared to the \textsc{ST-NW} rate of $n^{-\frac{\beta}{2\beta + d}}$, \nwktname achieves strictly better rates for all $\beta>0$ and $d>0$, while retaining \textsc{ST-NW}'s fast query time of $\bigO{\sqrt n}$. Note that our method \nwktname has a training time of $\bigO{n \log^3 n}$, which is not much more than simply storing the input points.

\subsection{KT-KRR}
\label{sec:kt-krr}

We present our main result for \krrktname using finite-rank kernels. This class of RKHS includes linear functions and polynomial function classes.

\begin{theorem}[KT-KRR for finite-dimensional RKHS] \label{thm:kt-krr-finite}
Assume $\fstar\in \rkhs(\kernel)$, \cref{assum:compact-support} is satisfied, and $\kernel$ has rank $m\in \naturals$. 
Let $\krrkt$ denote the \krrktname estimator with regularization parameter $\lambda' = \bigO{\frac{m \log \nout}{n\wedge \nout^2}}$.
Then with probability at least $1-2\delta - 2 e^{-\frac{\knorm{\fstar}^2}{c_1 (\knorm{\fstar}^2+\sigma^2) }}$, the following holds:
\begin{align}\label{eq:krr-kt-L2-bound-finite-simplified}
    \statictwonorm{\krrkt - \fstar}^2 &\leq  \frac{C m \cdot \log \nout}{\min(n, \nout^2)} \brackets{\knorm{\fstar} + 1}^2
\end{align}
for some constant $C$ that does not depend on $n$ or $\nout$.
\end{theorem}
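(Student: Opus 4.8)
The plan is to treat $\krrkt$ as ordinary kernel ridge regression fit on the coreset measure $\Qout$, run a bias--variance analysis against $\fstar$, and transfer every term from $\Qout$ to the full empirical measure $\Pin$ (and thence to $\P$) using the compression guarantees for the meta-kernel $\kernelrr$. Concretely, the first-order optimality conditions for the thinned objective give $\krrkt = (\widehat\Sigma_\Qout + \lambda' I)^{-1}\widehat b_\Qout$, with covariance operator $\widehat\Sigma_\Qout \defeq \Qout[\kernel(x,\cdot)\otimes\kernel(x,\cdot)]$ and cross term $\widehat b_\Qout \defeq \Qout[y\,\kernel(x,\cdot)]$; substituting $y = \fstar(x)+v$ gives the identity $\krrkt - \fstar = (\widehat\Sigma_\Qout + \lambda' I)^{-1}\Qout[v\,\kernel(x,\cdot)] - \lambda'(\widehat\Sigma_\Qout + \lambda' I)^{-1}\fstar$, a noise/fluctuation piece plus a ridge-bias piece, which I would bound in $L^2(\P)$, writing $\norm{g}_2 = \knorm{\Sigma^{1/2}g}$ for $\Sigma$ the population covariance operator.

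The heart of the argument is the transfer from $\Qout$ to $\Pin$, and this is exactly where the design $\kernelrr = \kernel^2 + \kernel\cdot y_1 y_2$ pays off. For any $f\in\rkhs(\kernel)$ one has $f^2\in\rkhs(\kernel^2)$ with $\norm{f^2}_{\kernel^2}\le\knorm{f}^2$ and $(x,y)\mapsto f(x)y\in\rkhs(\kernel\cdot y_1 y_2)$ with norm $\knorm{f}$; hence all the discrepancies appearing in the decomposition --- $(\widehat\Sigma_\Pin - \widehat\Sigma_\Qout)$ tested against products $g\cdot h$ with $g,h\in\rkhs(\kernel)$, the cross-term gap $(\widehat b_\Pin - \widehat b_\Qout)$ tested against $g\cdot y$, and $(\Pin - \Qout)\ell_f$ itself --- can be written as $(\Pin - \Qout)\psi$ for $\psi$ in a bounded ball of $\rkhs(\kernelrr)$, and are therefore controlled by the MMD guarantee \cref{lem:ktcompresspp-mmd} for the kernel $\kernelrr$. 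Since $\kernel$ has rank $m$, both $\rkhs(\kernel^2)$ and $\rkhs(\kernel\cdot y_1 y_2)$ are finite-dimensional, so $\log\coveringnumber_{\kernelrr}(\ball_2(\Rin),1/\nout) = \order(m^2\log\nout)$ and \cref{eq:ktcompresspp-mmd-informal} yields a compression error of order $m\log\nout/\nout$; combined with the choice of $\lambda'$ this is what produces the $1/\min(n,\nout^2)$ factor in the final bound.

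For the terms that remain on $\Pin$ I would use standard finite-rank concentration: matrix Bernstein to show $\widehat\Sigma_\Pin \asymp \Sigma$ up to a $\sqrt{m\log(m/\delta)/n}$ relative error (so that the resolvent $(\widehat\Sigma_\Qout + \lambda' I)^{-1}$ behaves like $(\Sigma + \lambda' I)^{-1}$), and a Gaussian quadratic-form (Hanson--Wright) bound giving $\knorm{\widehat\Sigma_\Pin^{-1/2}\Pin[v\,\kernel(x,\cdot)]}^2 = \order(\sigma^2 m\log(1/\delta)/n)$, which feeds the variance term at the rate $\sigma^2 m/n$. The ridge-bias piece contributes the textbook bound of order $\lambda'\knorm{\fstar}^2$, via $\sup_{\mu\ge 0}\mu(\lambda')^2/(\mu+\lambda')^2 \le \lambda'/4$. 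A separate Gaussian tail event --- bounding $\max_i|v_i|$, hence $\Qout[y^2]$, and through the crude a priori estimate $\knorm{\krrkt}^2\le\Qout[y^2]/\lambda'$ the size of $\krrkt$ --- is what generates the extra $2e^{-\knorm{\fstar}^2/(c_1(\knorm{\fstar}^2+\sigma^2))}$ term; the $2\delta$ collects the \ktcompresspp failure probability together with the finite-rank concentration failures. Finally, balancing the bias term $\lambda'\knorm{\fstar}^2$, the noise term $\sigma^2 m/n$, and the compression term under the choice $\lambda' \asymp m\log\nout/(n\wedge\nout^2)$ gives \cref{eq:krr-kt-L2-bound-finite-simplified}.

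The step I expect to be the main obstacle is controlling the compression-induced terms at the advertised rate rather than a far larger one. A naive treatment pays a factor $1/\sqrt{\lambda'}$ (or worse) when inverting $\widehat\Sigma_\Qout + \lambda' I$ against a perturbation of size $\asymp m\log\nout/\nout$; because $\lambda'$ is taken as small as $\asymp m\log\nout/\nout^2$, such a bound is inflated by a factor of order $\nout^2$ and does not close. Fixing this requires a localization / self-bounding argument: one has to show the relevant perturbations --- especially $(\widehat\Sigma_\Pin - \widehat\Sigma_\Qout)$ acting on the bias direction $(\Sigma+\lambda' I)^{-1}\fstar$, and the cross-term gap $\widehat b_\Pin - \widehat b_\Qout$ --- are not merely small in $\rkhs(\kernel)$-norm but (approximately) lie in the range of $\Sigma^{1/2}$, so that the resolvent is applied to a \emph{spectrally smooth} input, and to bootstrap the a priori control of $\knorm{\krrkt - \fstar}$ through the basic inequality for the thinned objective rather than the crude estimate $\knorm{\krrkt}^2\le\Qout[y^2]/\lambda'$. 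Getting this spectral alignment right, while keeping the finite-rank ($m$) and ambient-dimension ($d$) dependencies consistent through the covering-number term in \cref{eq:ktcompresspp-mmd-informal}, is where the real work lies; the remaining steps are bookkeeping with off-the-shelf concentration inequalities.
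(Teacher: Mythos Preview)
Your route is genuinely different from the paper's. You work through the operator/resolvent identity $\krrkt - \fstar = (\widehat\Sigma_\Qout + \lambda' I)^{-1}\Qout[v\,\kernel(x,\cdot)] - \lambda'(\widehat\Sigma_\Qout + \lambda' I)^{-1}\fstar$ and propose to transfer $\Qout \to \Pin \to \P$ term by term. The paper instead runs a localized-complexity argument in function space: it starts from the basic inequality for the thinned objective, controls the empirical process $\frac{1}{\nout}\sum_i \noise_i' g(x_i')$ via critical radii $\vareps_n,\delta_n$ (as in \citet[Ch.~13]{wainwright2019high}), and transfers back to $\Pin$ using a \emph{multiplicative} norm-equivalence lemma, $(1-4C_m\ktcomplexity)\snnorm{g} \le \snoutnorm{g} \le (1+4C_m\ktcomplexity)\snnorm{g}$ uniformly over $g$ with $\twonorm{g}>\delta_n$, where $C_m=1/\mu_m$. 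This multiplicative guarantee is the paper's key technical device, and it is exactly what circumvents the $1/\sqrt{\lambda'}$ blow-up you identify as the main obstacle.

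Your diagnosis of the obstacle is correct, but your proposed fix --- that the perturbations ``(approximately) lie in the range of $\Sigma^{1/2}$'' --- stops short of the concrete mechanism. What makes this work, and what your proposal does not name, is the finite-rank inequality $\knorm{g}^2 \le \twonorm{g}^2/\mu_m$: since the MMD bound on $(\Pin-\Qout)(g^2)$ scales with $\knorm{g}^2$, this converts an additive error $\knorm{g}^2\cdot\ktcomplexity$ on $\snnorm{g}^2$ into a \emph{relative} error $C_m\ktcomplexity$ on $\snnorm{g}$ itself (and, in your language, gives $\norm{\Sigma^{-1/2}(\widehat\Sigma_\Pin-\widehat\Sigma_\Qout)\Sigma^{-1/2}}_{\mrm{op}} \lesssim C_m\ktcomplexity$). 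Without this step the spectral-alignment claim has no teeth and the argument does not close at the advertised rate. Two smaller points: (i) the paper reduces the covering number of $\kernelrr$ back to that of $\kernel$ (rank $m$), obtaining $\ktcomplexity \asymp \sqrt{m\log\nout}/\nout$, so the compression contribution enters as $C_m\ktcomplexity^2 \asymp m\log\nout/\nout^2$; your $m^2$ covering bound for $\kernel^2$ is looser and, as written, feeds an $m\log\nout/\nout$ term that does not match the $1/\nout^2$ scaling; (ii) the constant $C$ in the final bound hides $C_m = 1/\mu_m$, which your decomposition should also expect to incur.
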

\proofref{proof:kt-krr-finite}
Under the same assumptions, \citet[Ex.~13.19]{wainwright2019high} showed that the Full-KRR estimator $\krr$ achieves the minimax optimal rate of $O(m/n)$ in  $O(n^3)$ runtime. When $\nout = \sqrt{n} \log ^c n$, the KT-KRR error rates from  \cref{thm:kt-krr-finite} match this minimax rate in $\wtil O(n^{3/2})$ time, a (near)  quadratic improvement over the Full-KRR. On the other hand, standard thinning-KRR with similar-sized output achieves a quadratically poor MSE of order $\frac{m}{\sqrt n}$.

Our method and theory also extend to the setting of infinte-dimensional kernels. To formalize this, we first introduce the notion of kernel covering number.

\begin{definition}[Covering number]\label{def:cover}
For a kernel $\kernel:\mc Z\times \mc Z \to \R$ with $\ball_{\kernel} \defeq \braces{f\in \rkhs:\hnorm{f}\leq 1}$, a set $\mc A \subset \mc Z$ and $\eps>0$, the covering number $\coveringnumber_{\kernel}(\mc A, \eps)$ is the minimum cardinality of all sets $\mc C\subset \ball_{\kernel}$ satisfying $\mc \ball_{\kernel} \subset \bigcup_{h\in \mc C} \braces{ g\in\mc \ball_{\kernel}:\sup_{x\in \mc A} \abss{h(x)-g(x)}\leq \eps }$.
\end{definition}

We consider two general classes of kernels.
\begin{assumption}\label{assum:alpha-beta-kernel}
For some $\mathfrak C_d>0$, all $r>0$ and $\eps\in (0,1)$, and $\ball_2(r)=\braces{x\in \R^d: \twonorm{x}\leq r}$, a kernel $\kernel$ is 
\begin{talign}
    & \textsc{LogGrowth}(\alpha,\beta) \qtext{when} \log \coveringnumber_{\kernel}(\ball_2(r),\eps)\leq \mfk C_d \log(e/\eps)^\alpha (r+1)^\beta\qtext{with} \alpha,\beta>0\qtext{and}\\
    &\textsc{PolyGrowth}(\alpha,\beta) \qtext{when} \log \coveringnumber_{\kernel}(\ball_2(r),\eps)\leq \mfk C_d(1/\eps)^\alpha (r+1)^\beta \qtext{with} ~\alpha < 2.
\end{talign}
\end{assumption}
We highlight that the definitions above cover several popular kernels: \textsc{LogGrowth} kernels include finite-rank kernels and analytic kernels, like Gaussian, inverse multiquadratic (IMQ), and sinc \citep[Prop.~2]{dwivedi2022generalized}, while \textsc{PolyGrowth} kernels includes finitely-many continuously differentiable kernels, like \Matern and B-spline \citep[Prop.~3]{dwivedi2022generalized}. For clarity, here we present our guarantee for \textsc{LogGrowth} kernels and defer the other case to \cref{app:proof-krr-explicit-rates}.

\begin{theorem}[\krrktname guarantee for infinite-dimensional RKHS] \label{cor:krr-kt}
Suppose \cref{assum:compact-support} is satisfied and $\kernel$ is $\textsc{LogGrowth}(\alpha,\beta)$ (\cref{assum:alpha-beta-kernel}).
Then $\krrkt$ with $\lambda'=\bigO{1/\nout}$ satisfies the following bound with probability at least $1-2\delta - 2 e^{-\frac{\knorm{\fstar}^2 \log^\alpha n}{c_1 (\knorm{\fstar}^2+\sigma^2) }}$:
\begin{talign}\label{eq:krr-kt-log}
    \statictwonorm{\krrkt - \fstar}^2 \leq C
    \parenth{\frac{\log^{\alpha}n}{n} + \frac{\sqrt{\log^{\alpha}\nout}}{\nout}}
    \cdot \brackets{\knorm{\fstar} + 1}^2.
\end{talign}
for some constant $C$ that does not depend on $n$ or $\nout$.
\end{theorem}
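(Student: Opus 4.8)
The plan is to derive \cref{cor:krr-kt} from a general-kernel mean-squared-error bound for the \krrktname estimator (the finite-rank case being \cref{thm:kt-krr-finite}; the \textsc{PolyGrowth} counterpart and full details are deferred to \cref{app:proof-krr-explicit-rates}), by feeding in the \textsc{LogGrowth} covering-number estimate. At a high level this general bound controls $\statictwonorm{\krrkt - \fstar}^2$ by three pieces: a regularization bias of order $\lambda'\knorm{\fstar}^2$; an \emph{intrinsic full-sample estimation error} governed by the localized complexity of $\rkhs(\kernel)$; and a \emph{compression error} $\mathfrak W \defeq \sup_{\norm{h}_{\kernelrr}\le 1}\abss{(\Pin - \Qout)h}$, paid at a rate linear in the relevant RKHS norms. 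I treat $\sigma$, $d$, $\Rin$ and $\mfk C_d$ as fixed constants throughout. Step~1 is a noise truncation: since $\kappa\le 1$ forces $\infnorm{\fstar}\le\knorm{\fstar}$ and $y_i=\fstar(x_i)+\sigma w_i$ with $w_i\sim\Gsn(0,1)$, a union bound places us, with probability at least $1-2e^{-\knorm{\fstar}^2\log^\alpha n/(c_1(\knorm{\fstar}^2+\sigma^2))}$, on the event $\max_i\abss{y_i}\le\ymax$ with $\ymax\asymp\knorm{\fstar}+\sigma\sqrt{\log^\alpha n}$. Conditioning on this event lets us replace the a priori unbounded label space by the compact interval $[-\ymax,\ymax]$, which is exactly what makes $\kernelrr$ a bounded kernel on the compact domain $\ball_2(\Rin)\times[-\ymax,\ymax]$ and licenses invoking \cref{eq:ktcompresspp-mmd-informal} with $\kalg=\kernelrr$.

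Step~2 bounds $\mathfrak W$. Since $\kernelrr=\kernel^2+\kernel\cdot y_1y_2$ is a sum of kernels, $\log\coveringnumber_{\kernelrr}$ is at most (up to constants) the sum of the log covering numbers of the pointwise-product kernel $\kernel^2$ and of the kernel $(x_1,y_1,x_2,y_2)\mapsto\kernel(x_1,x_2)y_1y_2$; the first inherits $\textsc{LogGrowth}(\alpha,\beta)$ from $\kernel$ up to constants, while the second, restricted to $y\in[-\ymax,\ymax]$, is covered at scale $\eps$ by covering $\ball_{\kernel}$ at scale $\eps/\ymax$. Combining gives $\log\coveringnumber_{\kernelrr}(\ball_2(\Rin)\times[-\ymax,\ymax],1/\nout)\lesssim\log(e\ymax\nout)^\alpha(\Rin+1)^\beta\lesssim\log^\alpha\nout$, the doubly-logarithmic $\ymax$-dependence being absorbed into the constant. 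Plugging this into \cref{eq:ktcompresspp-mmd-informal} yields $\mathfrak W\lesssim\sqrt{\log^\alpha\nout}\,/\,\nout$ with high probability over KT's internal randomness.

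Step~3 is the KRR error decomposition with a $\Qout\!\to\!\Pin\!\to\!\P$ transfer. From the basic inequality $L_{\outputcoreset}(\krrkt)+\lambda'\knorm{\krrkt}^2\le L_{\outputcoreset}(\fstar)+\lambda'\knorm{\fstar}^2$ and $v_i\defeq y_i-\fstar(x_i)$ one gets $\norm{\krrkt-\fstar}_{\Qout}^2+\lambda'\knorm{\krrkt}^2\le 2\Qout((\krrkt-\fstar)v)+\lambda'\knorm{\fstar}^2$. The structural crux is that every function here lies in $\rkhs(\kernelrr)$ with $\kernelrr$-norm \emph{linear} in $\knorm{\krrkt-\fstar}$: $(\krrkt-\fstar)^2\in\rkhs(\kernel^2)$ with norm $\le\knorm{\krrkt-\fstar}^2$, and $(\krrkt-\fstar)(x)v$ splits as $(\krrkt-\fstar)(x)\,y-(\krrkt-\fstar)(x)\fstar(x)$, the first summand in $\rkhs(\kernel\cdot y_1y_2)$ and the second in $\rkhs(\kernel^2)$, with total norm $\lesssim\knorm{\krrkt-\fstar}(1+\knorm{\fstar})$. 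Hence every empirical-measure quantity may be swapped $\Qout\!\leftrightarrow\!\Pin$ at additive cost $\mathfrak W\times(\text{norm})$, after which $\Pin\!\leftrightarrow\!\P$ and the term $\Pin((\krrkt-\fstar)v)$ are controlled by the standard Full-KRR machinery --- localized Gaussian/Rademacher complexity of $\rkhs(\kernel)$, whose critical radius for a $\textsc{LogGrowth}(\alpha,\beta)$ kernel is of order $\sqrt{\log^\alpha n/n}$. A short peeling/AM--GM argument --- a crude $\knorm{\krrkt}=\bigO{\sqrt{\nout}}$ bound from the regularizer, then a refined bound on $\norm{\krrkt-\fstar}_2$ in which that crude bound does not leak into the rate --- gives $\statictwonorm{\krrkt-\fstar}^2\lesssim\lambda'\knorm{\fstar}^2+\log^\alpha n/n+\mathfrak W(1+\knorm{\fstar})^2$. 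Setting $\lambda'=\bigO{1/\nout}$ and inserting $\mathfrak W\lesssim\sqrt{\log^\alpha\nout}/\nout$ folds the bias into the compression term, and absorbing $\sigma$ into $C$ yields \cref{eq:krr-kt-log}.

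The main obstacle is the noise term in Step~3. Because KT selects $\outputcoreset$ using the label-dependent kernel $\kernelrr$, the coreset is \emph{not} independent of the noise $(w_i)$, so $\Qout((\krrkt-\fstar)v)$ cannot be treated as a clean Gaussian average; the resolution --- observing that $(\krrkt-\fstar)v\in\rkhs(\kernelrr)$ so that the whole cross term transfers to $\Pin$ through $\mathfrak W$, where localized complexity applies --- must be executed so that the $\mathfrak W$-price stays linear in $\knorm{\krrkt-\fstar}$ and so that the crude $\bigO{\sqrt{\nout}}$ bound on $\knorm{\krrkt}$ never enters the final rate, which is precisely what forces the peeling step. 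A secondary point is making the truncation quantitatively consistent: the level $\ymax\asymp\sqrt{\log^\alpha n}$ must be chosen so that the stated failure probability emerges while $\ymax$ enters $\log\coveringnumber_{\kernelrr}$ only through an absorbable $\log\ymax$ factor.
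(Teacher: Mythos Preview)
Your proposal is essentially correct and follows the same skeleton as the paper: show the loss–relevant functions $(\krrkt-\fstar)^2$ and $(\krrkt-\fstar)v$ live in $\rkhs(\kernelrr)$, transfer $\Qout\to\Pin$ via the KT MMD guarantee, bound $\log\coveringnumber_{\kernelrr}$ in terms of $\log\coveringnumber_{\kernel}$ (your Step~2 mirrors the paper's \cref{eq:kalg-coveringnumber-bound}), and finish with the localized Gaussian/Rademacher critical radius $\vareps_n^2,\delta_n^2\asymp\log^\alpha n/n$ for \textsc{LogGrowth} kernels.

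The main organizational difference is how the potentially huge $\knorm{\krrkt}$ is prevented from contaminating the rate. You work directly from the $\Qout$–basic inequality and propose a peeling/AM--GM step; the paper instead introduces the \emph{full-sample} KRR estimate $\krr$ as an intermediate and bounds $L_n(\krrkt)-L_n(\krr)$. After one KT transfer plus the $\Qout$–basic inequality this yields
\[
L_n(\krrkt)-L_n(\krr)\ \le\ 2\ktcomplexity+\lambda'\knorm{\krr}^2+(\ktcomplexity-\lambda')\knorm{\krrkt}^2,
\]
so requiring $\lambda'\ge 2\ktcomplexity$ makes the last term nonpositive and the unknown $\knorm{\krrkt}$ disappears in one stroke, leaving only $\knorm{\krr}$, which is controlled separately by \cref{lem:krr-h-norm}. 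Your peeling route should ultimately enforce the same constraint $\lambda'\gtrsim\mathfrak W$; the paper's cancellation is just a cleaner way to see why that choice suffices. Conversely, your explicit noise–truncation Step~1 (to obtain a finite $\ymax$ before invoking the MMD bound) is a detail the paper treats $\ymax$ as a fixed constant and does not spell out.
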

\proofref{app:proof-krr-explicit-rates}
When $\nout = \sqrt n$, \krrstname achieves an excess risk rate of $n^{-1/2} \log^\alpha n$ for $\kernel$ satisfying $\textsc{LogGrowth}(\alpha,\beta)$. While \krrktname does not achieve a strictly better excess risk rate bound over \krrstname, we see that in practice, \krrktname still obtains an empirical advantage. Obtaining a sharper error rate for the infinite-dimensional kernel setting is an exciting venue for future work.

\section{Experimental results}\label{sec:results}

We now present experiments on simulated and real-world data. On real-world data, we compare our \krrktname estimator with several state-of-the-art KRR methods, including \Nystrom subsampling-based methods and KRR pre-conditioning methods.
All our experiments were run on a machine with 8 CPU cores and 100 GB RAM. 
Our code can be found at \url{https://github.com/ag2435/npr}. 

\subsection{Simulation studies}
\label{sec:simulation}

\begin{wrapfigure}{R}{0.3\textwidth}
  \begin{center}
    \includegraphics[width=0.3\textwidth]{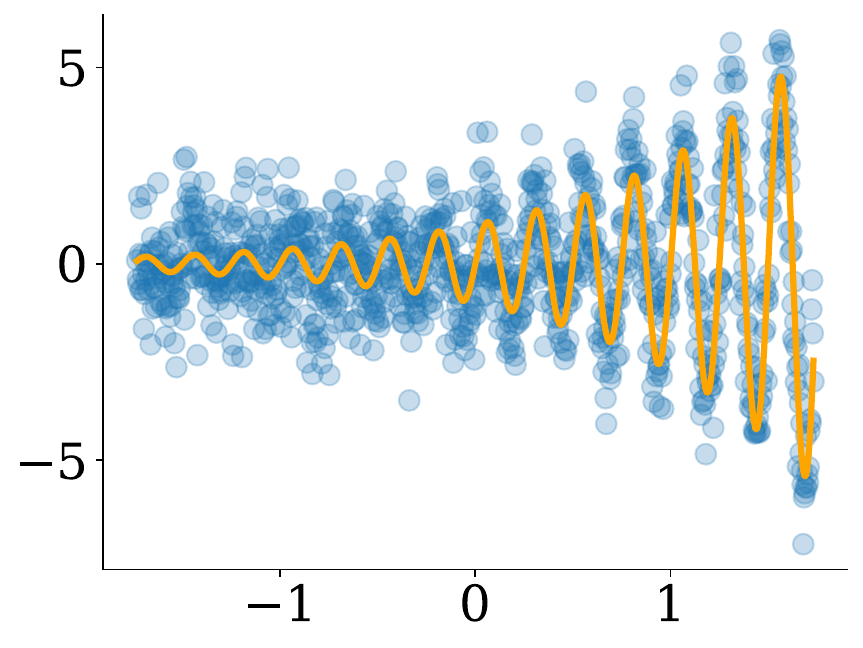}
  \end{center}
  \caption{\tbf{Simulated data.}}
  \label{fig:sinexp}
\end{wrapfigure}

We begin with some simulation experiments. For simplicity, let $\X = \R$ and $\P = \text{Unif}[-\sqrt{3}, \sqrt 3]$ so that $\Var[X]=1$. We set
\begin{talign}\label{eq:fstar}
    \fstar(x) = 8\sin(8\pi x) \exp(x) \qtext{and} \sigma=1
\end{talign}
and follow \cref{eq:process} to generate $\parenth{y_i}_{i=1}^n$ (see \cref{fig:sinexp}). We let the input sample size $n$ vary between $2^{8},2^{10},2^{12},2^{14}$ and set the output coreset size to be $\nout = \sqrt n$ in all cases. For NW, we use the \Wendland[0] kernel defined by 
\begin{talign}\label{eq:wendland-kernel}
    \kernel(x_1,x_2)\defeq (1-\frac{\twonorm{x_1-x_2}}{h})_{+}\qtext{for} h>0. 
\end{talign}
For KRR, we use the Gaussian kernel defined by
\begin{talign}\label{eq:gaussian-kernel}
    \kernel(x_1,x_2) \defeq \exp(-\frac{\twonorm{x_1-x_2}^2}{2 h^2}) \qtext{for} h>0.
\end{talign}
We select the bandwidth $h$ and regularization parameter $\lambdakt$ (for KRR) using grid search. Specifically, we use a held-out validation set of size $10^4$ and run each parameter configuration $100$ times to estimate the validation MSE since \krrktname and \krrstname are random.

\paragraph{Ablation study.}
In \cref{fig:ablations}, we compare thinning with our proposed meta-kernel $\kalg=\kernelnw$ to thinning with the baseline meta-kernels \cref{eq:baseline_1} and \cref{eq:baseline_2}. For our particular regression function \cref{eq:fstar}, thinning with \cref{eq:baseline_1} outperforms thinning with \cref{eq:baseline_2}. We hypothesize that the latter kernel is not robust to the scaling of the response variables. By inspecting \cref{eq:wendland-kernel}, we see that $\twonorm{(x_1, y_1) - (x_2, y_2)}$ is heavily determined by the $y_i$ values when they are large compared to the values of $x_i$---as is the case on the right side of \cref{fig:sinexp} (when $X>0$). Since $\P$ is a uniform distribution, thinning with \cref{eq:baseline_1} evenly subsamples points along the input domain $\X$, even though accuractely learning the left side of \cref{fig:sinexp} (when $X<0$) is not needed for effective prediction since it is primarily noise. 
Validating our theory from \cref{thm:nadaraya-kt}, the best performance is obtained when thinning with $\kernelnw$ \cref{eq:kernel-nw}, which avoids evenly subsampling points along the input domain and correctly exploits the dependence between $X$ and $Y$.

In \cref{fig:ablations}, we perform a similar ablation for KRR. Again we observe that thinning with $\kalg((x_1,y_1),(x_2,y_2)) = \kernel(x_1,x_2)$ outperforms thinning with $\kalg((x_1,y_1),(x_2,y_2)) = \kernel((x_1,y_1), (x_2,y_2))$, while thinning with $\kalg = \kernelrr$ achieves the best performance.

\paragraph{Comparison with \textsc{Full}, \textsc{ST}, \textsc{RPCholesky}.}
In \cref{fig:mse_runtime}(a), we compare the MSE of \nwktname to \nwfullname, \nwstname (a.k.a ``Subsample''), and \textsc{RPCholesky-NW} across four values of $n$.
This last method uses the pivot points from \textsc{RPCholesky} as the output coreset $\outputcoreset$.
At all $n$ we evaluated, \textsc{KT-NW} achieves lower MSE than \textsc{ST-NW} and \textsc{RPCholesky-NW}. \textsc{Full-NW} achieves the lowest MSE across the board, but it suffers from significantly worse run times, especially at test time. Owing to its $\bigO{n \log^3 n}$ runtime, \textsc{KT-NW} is significantly faster than \textsc{RPCholesky-NW} for training and nearly matches \textsc{ST-NW} in both training and testing time.
We hypothesize that \textsc{RPCholesky}---while it provides a good low-rank approximation of the kernel matrix---is not designed to preserve averages.

In \cref{fig:mse_runtime}(b), we compare the MSE of \krrktname to \krrfullname, \krrstname (a.k.a ``Subsample''), and the \textsc{RPCholesky-KRR} method from \citet[Sec.~4.2.2]{chen2022randomly}, which uses \textsc{RPCholesky} to select landmark points for the restricted KRR problem. We observe that \krrktname achieves lower MSE than \krrstname, but higher MSE than  \textsc{RPCholesky-KRR} and \krrfullname. In \cref{fig:mse_runtime}(b), we also observe that \krrktname is orders of magnitude faster than \krrfullname across the board, with runtime comparable to \krrstname and \textsc{RPCholesky-KRR} in both training and testing. 

\begin{figure*}[t]
    \centering
    \begin{tabular}{cc}
        \includegraphics[width=0.45\textwidth]{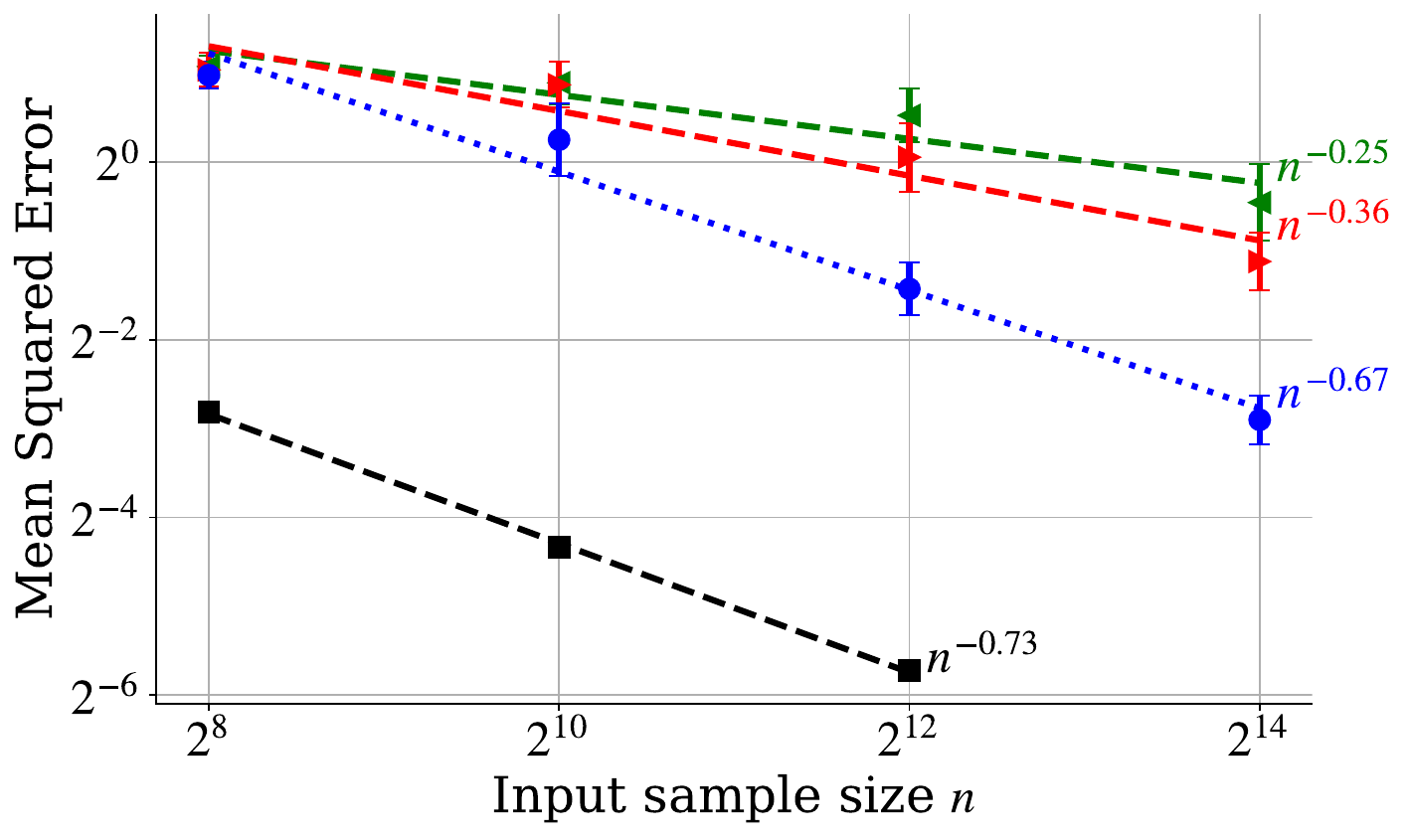} &
        \includegraphics[width=0.45\textwidth]{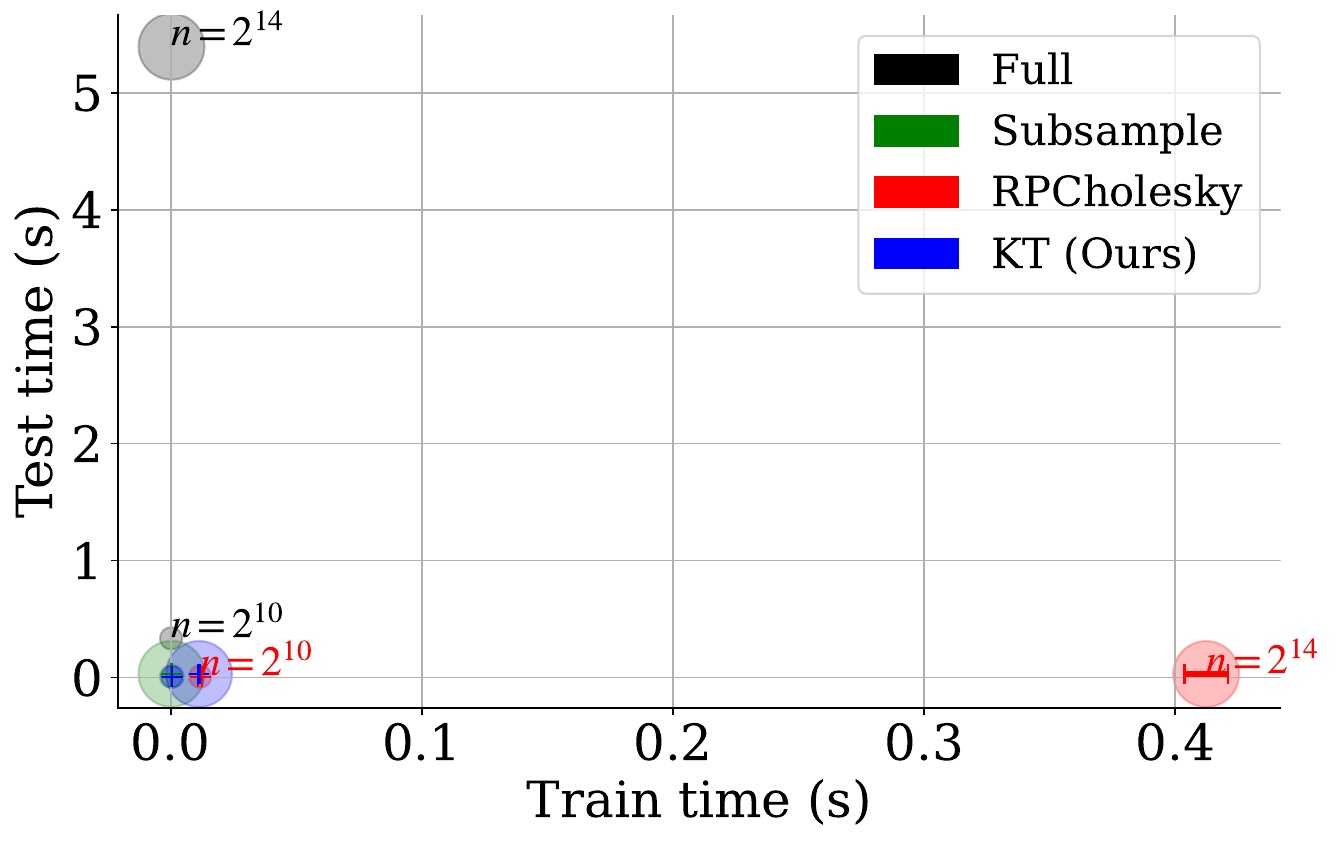}\\[-1mm]
    \end{tabular}
    \\[2mm]
    {(a) Nadaraya-Watson estimator with \Wendland[0] kernel \cref{eq:wendland-kernel}.}\\
    \rule{\textwidth}{0.5pt}\vspace{1em}
    \begin{tabular}{cc}
        \includegraphics[width=0.45\textwidth] {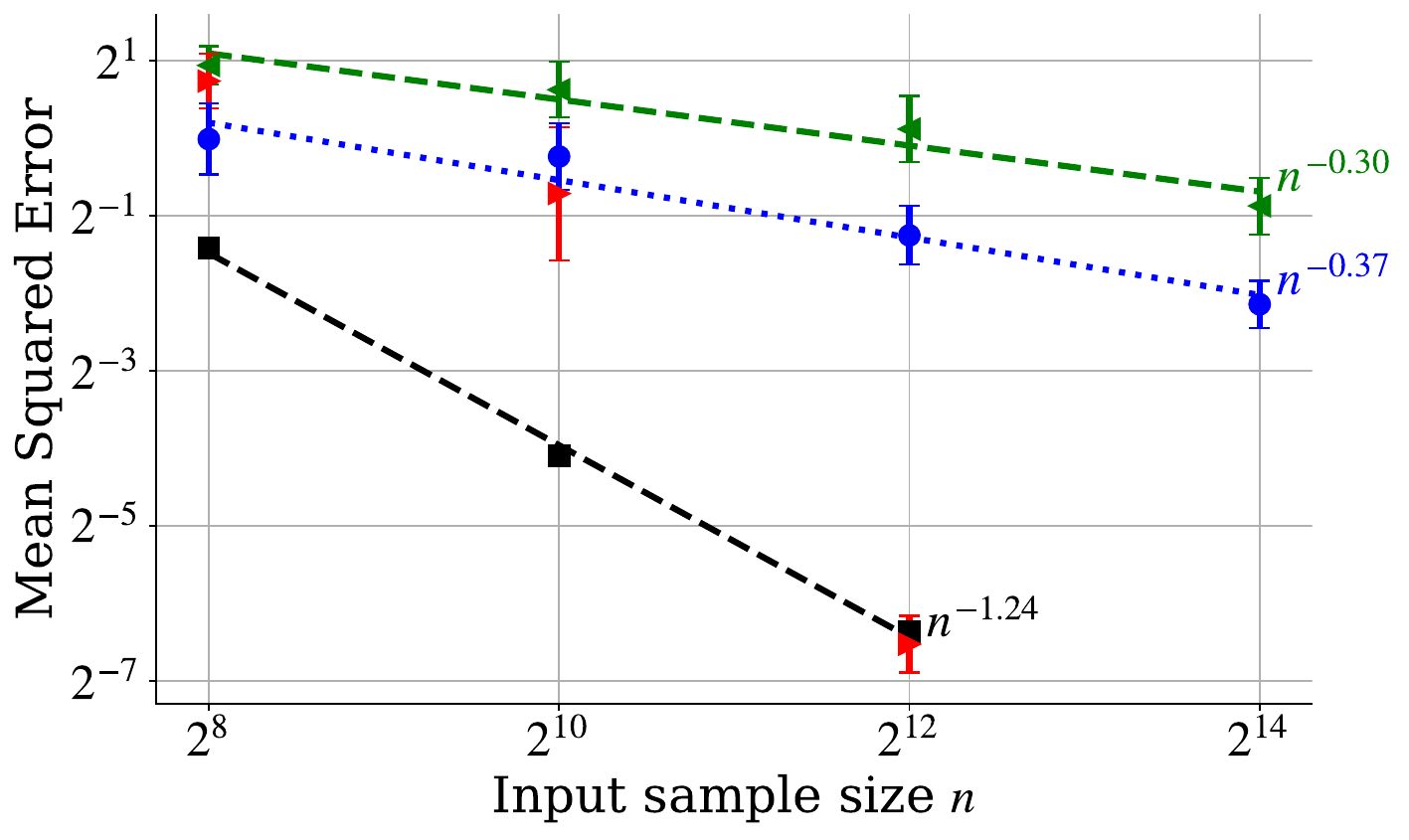} &
        \includegraphics[width=0.45\textwidth]{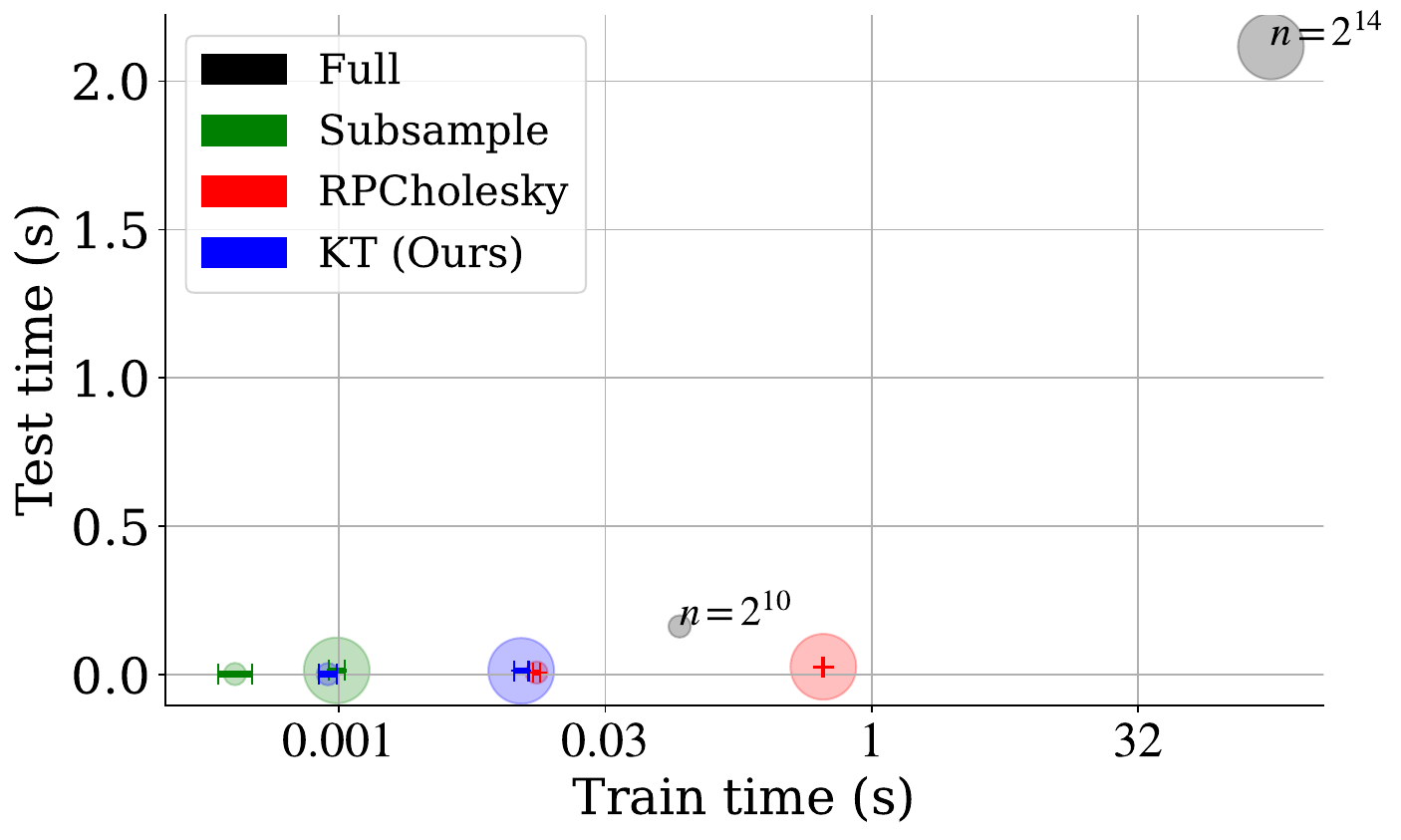} \\
    \end{tabular}
    \\[2mm]
    {(b) Kernel ridge regression estimator with Gaussian kernel \cref{eq:gaussian-kernel}.}
    
    \caption{\tbf{MSE and runtime comparison on simulated data.} Each point plots the mean and standard deviation across 100 trials (after parameter grid search).}
    \label{fig:mse_runtime}
\end{figure*}

\subsection{Real data experiments}\label{sec:real-data}

We now present experiments on real-world data using two popular datasets: the California Housing regression dataset from \citet{pace1997sparse} (\url{https://scikit-learn.org/1.5/datasets/real_world.html\#california-housing-dataset}; BSD-3-Clause license) and the SUSY binary classification dataset from \citet{baldi2014searching} (\url{https://archive.ics.uci.edu/dataset/279/susy}; CC-BY-4.0 license).

\paragraph{California Housing dataset ($d=8,N=2\times 10^4$).} 
\cref{tab:real}(a) compares the test MSE, train times, and test times.
We normalize the input features by subtracting the mean and dividing by the standard deviation and use a 80-20 train-test split. For all methods, we use the Gaussian kernel \cref{eq:gaussian-kernel} with bandwidth $h=10$. We use $\lambda=\lambdakt=10^{-3}$ for \krrfullname, \krrstname, and \krrktname and $\lambda=10^{-5}$ for \textsc{RPCholesky-KRR}. 
On this dataset, \krrktname lies between ST-KRR and \textsc{RPCholesky-KRR} in terms of test MSE. When $\nout=\sqrt n$, \textsc{RPCholesky} pivot selection takes $\bigO{n^2}$ time by \citet[Alg.~2]{chen2022randomly}, compared to \ktcompresspp, which compresses the input points in only $\bigO{n \log^3 n}$ time. This difference in big-O runtime is reflected in our empirical results, where we see \krrktname take 0.0153s versus 0.3237s for \textsc{RPCholesky-KRR}.

\paragraph{SUSY dataset ($d=18, N=5\times 10^6$).} 
\cref{tab:real}(b) compares our proposed method \krrktname (with $h=10,\lambdakt=10^{-1}$) to several large-scale kernel methods, namely RPCholesky preconditioning \citep{diaz2023robust}, FALKON \citep{rudi2017falkon}, and Conjugate Gradient (all with $h=10,\lambda=10^{-3}$) in terms of test classification error and training times. For the baseline methods, we use the Matlab implementation provided by \citet{diaz2023robust} (\url{https://github.com/eepperly/Robust-randomized-preconditioning-for-kernel-ridge-regression}).
In our experiment, we use $4\times 10^6$ points for training and the remaining $10^6$ points for testing. As is common practice for classification tasks, we use the Laplace kernel defined by  $ \kernel(x_1,x_2) \defeq \exp(-{\twonorm{x_1-x_2}}/{h}).$
All parameters are chosen with cross-validation.

We observe that \krrktname achieves test MSE between \krrstname and \textsc{RPCholesky} preconditioning with training time almost half that of \textsc{RPCholesky} preconditioning. Notably, our Cython implementation of \ktcompresspp thinned the four million training samples in only 1.7 seconds on a single CPU core---with further speed-ups to be gained from parallelizing on a GPU in the future.

\begin{table*}[t]
    \centering
    \begin{tabular}{c ccc}
        Method  &  MSE (\%) & Training time (s) & Prediction time (s) \\
        \hline
        \hline
        Full & $0.4137$ & $11.1095$ & $0.7024$\\
        \hline
        \krrstname & $0.5736 \pm 0.0018$ & $0.0018 \pm 0.0005$ & $0.0092 \pm 0.0006$ \\
        \hline
        \textsc{RPCholesky} & $0.3503 \pm 0.0001$ & $0.3237 \pm 0.0094$ & $0.0060 \pm 0.0008$ \\
        \hline
        \krrktname (Ours) & $0.5580 \pm 0.0015$ & $0.0153 \pm 0.0013$ & $0.0083 \pm 0.0003$ \\
        \hline 
    \end{tabular}
    \\[2mm]
    {(a) California Housing regression dataset.}\\
    \rule{\textwidth}{0.5pt}\vspace{1em}
    \begin{tabular}{c ccc}
        Method  &  Test Error (\%) & Training Time (s) &  \\
        \hline
        \hline
        RPCholesky & $19.99 \pm 0.00$ & $3.46 \pm 0.03$ & \\
        \hline
        FALKON & $19.99 \pm 0.00$ & $5.06 \pm 0.02$ & \\
        \hline
        CG & $20.35 \pm 0.00$ & $6.16 \pm 0.03$ &  \\
        \hline
        \krrstname & $22.71 \pm 0.30$ & $0.09 \pm 0.00$ & \\
        \hline
        \krrktname (Ours) & $22.00 \pm 0.21$ & $1.79 \pm 0.00$ \\
        \hline
    \end{tabular}
    \\[2mm]
    {(b) SUSY dataset.}
    
    \caption{\tbf{Accuracy and runtime comparison on real-world data.} Each cell represents mean ± standard error across 100 trials.}
    \label{tab:real}
\end{table*}

\section{Conclusions}\label{sec:conclusions}

In this work, we introduce a meta-algorithm for speeding up two estimators from non-parametric regression, namely the Nadaraya-Watson and Kernel Ridge Regression estimators. Our method inherits the favorable computational efficiency of the underlying Kernel Thinning algorithm and stands to benefit from further advancements in unsupervised learning compression methods. 

The KT guarantees provided in this work apply only when $\fstar\in \rkhs(\kernel)$ for some base kernel $\kernel$.
In practice, choosing a good kernel $\mathbf{k}$ is indeed a challenge common to all prior work.
Our framework is friendly to recent developments in kernel selection to handle this problem:
\citet[Cor.~1]{dwivedi2022generalized} provide integration-error guarantees for KT when $f^\star \notin \mathcal{H}(\mathbf{k})$.
Moreover, there are recent results on finding the best kernel (e.g., for hypothesis testing \cite[Sec.~4.2]{domingo2023compress}).
\citet{radhakrishnan2022feature} introduce the Recursive Feature Machine, which uses a parameterized kernel $\mathbf{k} _{M}(x_1,x_2) \defeq \exp(-{(x_1-x_2)^\top M (x_1-x_2)}/{(2h^2)})$, and propose an efficient method to learn the matrix parameter $M$ via the average gradient outer product estimator. An exciting future direction would be to combine these parameterized (or "learned") kernels with our proposed KT methods for non-parametric regression.

\section{Acknowledgements}

AG is supported with funding from the NewYork-Presbyterian Hospital.

\bibliographystyle{plainnat} %
\bibliography{refs}

\newpage\clearpage
\appendix
\section{Background on \ktcompresspp}
\label{app:ktcompresspp}

This section details the \ktcompresspp algorithm of \citet[Ex.~6]{shetty2022distribution}.
In a nutshell, \ktcompress (\cref{algo:ktcompress}) takes as input a point sequence of size $n$, a compression level $\ossymb$, a (reproducing) kernel functions $\kalg$, and a failure probability $\delta$. 
\ktcompresspp first runs the $\ktcompress(\ossymb)$ algorithm of \citet[Ex.~4]{shetty2022distribution} to produce an intermediate coreset of size $2^{\ossymb} \sqrt n$. Next, the KT algorithm is run on the intermediate coreset to produce a final output of size $\sqrt n$.

\ktcompress proceeds by calling the recursive procedure \compress, which uses KT with kernels $\kalg$ as an intermediate halving algorithm. 
The KT algorithm itself consists of two subroutines: (1) \ktsplit (\cref{algo:ktsplit}), which splits a given input point sequence into two equal halves with small approximation error in the $\kalg$ reproducing kernel Hilbert space and (2) \ktswap (\cref{algo:ktswap}), which selects the best approximation amongst the \ktsplit coresets and a baseline coreset (that simply selects every other point in the sequence) and then iteratively refines the selected coreset by swapping out each element in turn for the non-coreset point that most improves $\mmd_{\kalg}$ error.
As in \citet[Rem.~3]{shetty2022distribution}, we symmetrize the output of KT by returning either the KT coreset or its complement with equal probability.

Following \citet[Ex.~6]{shetty2022distribution}, we always default to $\ossymb = \ceil{\log_2 \log n + 3.1}$ so that \ktcompresspp has an overall runtime of $\bigO{n \log^3 n}$. For the sake of simplicity, we drop any dependence on $\ossymb$ in the main paper.

\begin{algorithm2e}[ht!]
\caption{{\ktcompresspp\ --\ } Identify coreset of size $\sqrt n$}
\label{alg:compresspp}
\SetAlgoLined
  \DontPrintSemicolon
\small
  \KwIn{point sequence $ \inputcoreset $ of size $n$, \osname $\ossymb$, kernel $\kalg$, failure probability $\delta$ } 
  $\cset_{\compresssub} \ \quad\gets\quad \ktcompress(\ossymb,  \inputcoreset, \delta) $  \qquad\quad // coreset of size $2^{\ossymb}\sqrt{n}$ \\
  $\cset_{\compressppsub}  \ \ \gets \quad \kt(\cset_{\compresssub},\kalg, \frac{\ossymb}{\ossymb + 2^{\ossymb}(\rounds+1)} \delta)$ \ \ \quad\ \  // coreset of size $\sqrt{n}$ \\
  \KwRet{ $\cset_{\compressppsub}$}
\end{algorithm2e}

\begin{algorithm2e}[h!]
\SetKwFunction{proc}{\textnormal{\textsc{\compress}}}
\SetKwFunction{proctwo}{\textnormal{KT}}
\caption{{\ktcompress\ --\ } Identify coreset of size $2^\ossymb\sqrt{n}$}
\label{algo:ktcompress}
\SetAlgoLined
    \DontPrintSemicolon
    \small{
      \KwIn{point sequence $ \inputcoreset$ of size $n$, \osname~$\ossymb$, kernel $\kalg$, failure probability $\delta$} 
      \BlankLine
        \KwRet{ \textup{  $\compress(\inputcoreset, \ossymb, \kalg, \frac{\delta}{n 4^{\ossymb+1} (\log_4 n - \ossymb)})$}}
            \\
        \hrulefill\\
        \SetKwProg{myproc}{function}{}{}
         \myproc{\proc{$\cset,\ossymb,\kalg,\delta$}:}{
         \lIf{  $|\cset| = 4^{\ossymb}$ }{
            \KwRet{ $\cset $ }{}
        }  
        Partition $\cset$ into four arbitrary subsequences $ \{ \cset_i \}_{i=1}^4 $ each of size $n/4$ \\ 
        \For{$i=1, 2, 3, 4$}
        {
            $ \wtil{ \cset_i } \leftarrow \compress( \cset_i, \ossymb, \kalg,  \delta)$   // run \compress recursively to return coresets of size $2^{\ossymb} \cdot \sqrt{\frac{|\cset|}{4}}$
        } 
        $ \wtil{\cset} \gets\textsc{Concatenate}( \wtil{\cset}_1, \wtil{\cset}_2,\wtil{\cset}_3, \wtil{\cset}_4)$ \quad  // combine the coresets to obtain a coreset of size   $2 \cdot 2^{\ossymb} \cdot \sqrt{|\cset|}$ \\
        }\KwRet{ \textup{  $\kt(\wtil{\cset}, \kalg, |\wtil{\cset}|^2\delta )$ 
        \qquad \ 
        // halve the coreset to size $2^{\ossymb} \sqrt{|\cset|}$}} via symmetrized kernel thinning\;
    \hrulefill\\
\myproc{\proctwo{$\cset,\kalg,\delta$}:}{
    // Identify kernel thinning coreset containing  $\floor{|\cset|/2}$ input points\\
    $\ktcoreset \gets \ktswap(\kalg,\ktsplit(\kalg,\cset, \delta))$
    }
    \KwRet{\textup{$\ktcoreset$ with probability $\half$ and the complementary coreset $\cset\setminus\ktcoreset$ otherwise}}\;
}
\end{algorithm2e}
\setcounter{algocf}{2}
\renewcommand{\thealgocf}{\arabic{algocf}a}
\begin{algorithm2e}[h!]
  \SetKwFunction{proctwo}{\texttt{get\_swap\_params}}
\caption{{\large\textsc{kt-split}\ --\ } Divide points into candidate coresets of size $\floor{n/2}$} 
  \label{algo:ktsplit}
  \SetAlgoLined\DontPrintSemicolon
  \small
  {
  \KwIn{kernel $\kersplit$, point sequence $\inputcoreset = (x_i)_{i = 1}^n$, failure probability $\delta$}
\BlankLine
  {$\coreset[1], \coreset[2] \gets \braces{}$}\quad // Initialize empty coresets: $\coreset[1],\coreset[2]$ have size $i$ after round $i$ \\ 
  {$\sgparam[] \gets 0$}\qquad\quad\ \ \ \quad // Initialize swapping parameter \\
  \For{$i=1, 2, \ldots, \floor{n/2}$}
    {%
    // Consider two points at a time \\
    $(\x, \x') \gets (\x_{2i-1}, \x_{2i})$ \\
	 \BlankLine
     // Compute swapping threshold $\cnew$ \\ 
     $ \cnew, \sigma \gets $\proctwo{$\sigma, \vmax[], \frac{\delta}{n}$}
     with\ $\vmax[]^2 
      \!=\! \kersplit(\x,\x)\!+\!\kersplit(\x',\x')\!-\!2\kersplit(x,x')$
    \BlankLine
    // Assign one point to each coreset after probabilistic swapping \\
    $\theta\gets  \sum_{j=1}^{2i-2}(\kersplit
	 (\x_j, \x)-\kersplit(\x_j,\x')) 
	 - 2\sum_{\z\in\coreset[1]}(\kersplit(\z, \x)-\kersplit(\z,\x'))$ \\[2pt]
    $(x, x') \gets (x', x)$ \textit{ with probability }
		    $\min(1, \half (1-\frac{\theta}{\cnew})_+)$\\[2pt]
     $\coreset[1]\texttt{.append}(\x); 
		        \quad \coreset[2]\texttt{.append}(\x')$
    }
    \KwRet{$(\coreset[1],\coreset[2])$\textup{, candidate coresets of size $\floor{n/2}$}}\\
    \hrulefill\\
    \SetKwProg{myproc}{function}{}{}
     \myproc{\proctwo{$\sigma, \vmax[], \delta$}:}{
     $
			\cnew 
			    \gets \max(\vmax[] \sigma\sqrt{\smash[b]{2\log(2/\delta)}}, \vmax[]^2)$ \\
     $\sigma^2 \gets \sigma^2
	        \!+\! \vmax[]^2(1 \!+\! ({\vmax[]^2}{}\! - \!2\cnew){\sigma^2}{/\cnew^2})_+$\\
     }
     \KwRet{$(\cnew, \sigma)$}\;
  }
\end{algorithm2e} 

\setcounter{algocf}{2}
\renewcommand{\thealgocf}{\arabic{algocf}b}
\begin{algorithm2e}[h!]
\caption{{\large\textsc{kt-swap}\ --\ } Identify and refine the best candidate coreset} 
  \label{algo:ktswap}
 \SetAlgoLined\DontPrintSemicolon
\small
{
    \KwIn{kernel $\kalg$, point sequence $\inputcoreset = (x_i)_{i = 1}^n$, candidate coresets $(\coreset[1],\coreset[2])$}
        \BlankLine
    $\coreset[0] 
	    \!\gets\! \texttt{baseline\_coreset}(\inputcoreset, \texttt{size}\!=\!\floor{n/2})
	 $ \qquad \quad \ // Compare to baseline (e.g., standard thinning)
	 \BlankLine
	 $\ktcoreset \!\gets\! \coreset[ \ell^\star]
	 \text{ for }
	 \ell^\star 
	    \!\gets\! \argmin_{\ell \in \braces{0, 1, 2}} \mmd_{\kalg}(\inputcoreset, \coreset[\ell])$ 
	    \ \ // \textup{Select best coreset} \\
	    \BlankLine
	    // Swap out each point in $\ktcoreset$ for best alternative in $\inputcoreset$ while ensuring no point is repeated in $\ktcoreset$
     \\[1pt]
	   \For{$i=1, \ldots, \floor{n/2}$}{
	   \BlankLine
	    $\ktcoreset[i] \gets \argmin_{z\in \sbraces{\ktcoreset[i]} \cup (\inputcoreset\backslash \ktcoreset) }\mmd_{\kalg}(\inputcoreset, \ktcoreset \text{ with } \ktcoreset[i] = z)$
	   }
    \KwRet{$\ktcoreset$\textup{, refined coreset of size $\floor{n/2}$}}
}
\end{algorithm2e}
\renewcommand{\thealgocf}{\arabic{algocf}}

\newcommand{\ktgoodgen}{\mc E_{\trm{KT}, \delta}}
Define the event
\begin{talign}\label{eq:ktgoodgen}
    \ktgoodgen \defeq \braces{\ktcompresspp ~\text{succeeds}}.
\end{talign}
\citet[Thm.~1,~Rmk.~4]{dwivedi2024kernel} show that
\begin{talign}
    \Parg{\ktgoodgen} \geq 1-\delta.
\end{talign}

We restate \citep[Thm.~4]{dwivedi2024kernel} in our notation:
\begin{lemma}[$\Linf$ guarantee for \ktcompresspp ]\label{lem:ktcompresspp-Linf}
Let $\mc Z \subset \R^d$ and consider a reproducing kernel $\kalg:\mc Z\times \mc Z \to \reals$.
Assume $n/\nout \in 2^{\naturals}$. Let $\ktcoreset \defeq \ktcompresspp(\inputcoreset,\ossymb,\kalg,\delta)$ and define $\Pin \defeq \frac{1}{\nin}\sum_{z\in \inputcoreset} \delta_z$ and $\Qout \defeq \frac{1}{\nout} \sum_{z\in \ktcoreset} \delta_z$. Then on event $\ktgoodgen$, the following bound holds:
\begin{talign}\label{eq:ktcompress-Linf}
    \infnorm{(\Pin - \Qout) \kalg} &\leq c \frac{\ininfnorm{\kalg}}{\nout} \err_{\kalg}(n,\nout, d,\delta, R) , \qtext{where} \\
    \err_{\kalg}(n,\nout, d,\delta, R) &\defeq \sqrt{\log\parenth{\frac{\nout \log_2(n/\nout)}{\delta}}} \times \label{eq:inflation-factor} \\
    &\qquad \brackets{ \sqrt{\log\parenth{\frac{1}{\delta}}} + \sqrt{d \log\parenth{1 + \frac{L_{\kalg}}{\infnorm{\kalg}}(R_{\kalg,n} + R)}}}, \\
    L_{\kalg} &\defeq \sup_{z_1,z_2,z_3 \in \mc Z} \frac{\abss{\kalg(z_1,z_2) - \kalg(z_1,z_3)}}{\stwonorm{z_2 - z_3}}, \qtext{and} \label{eq:L_kalg} \\
    R_{\kalg,n} &\defeq \inf\braces{r : \sup_{\substack{z_1,z_2 \in \mc Z \\ \stwonorm{z_1-z_2} \geq r}} \abss{\kalg(z_1,z_2)} \leq \frac{\infnorm{\kalg}}{n}},  \label{eq:R_kalg}
\end{talign}
for some universal positive constant $c$.
\end{lemma}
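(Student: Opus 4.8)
\textbf{Proof proposal for \cref{lem:ktcompresspp-Linf}.}

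The plan is to deduce the stated $\Linf$ guarantee from the already-established single-function $\mmd$ / kernel-thinning guarantees of \citet[Thm.~4]{dwivedi2024kernel} (equivalently \citet{shetty2022distribution} for \compresspp), applied to the shifted-and-normalized kernel. The starting observation is the standard reproducing-kernel identity: for any fixed $z\in\mc Z$, the map $z'\mapsto \kalg(z',z)$ is an element of $\rkhs(\kalg)$ with $\rkhs$-norm $\sqrt{\kalg(z,z)}\le \sqrt{\ininfnorm{\kalg}}$, so that
\begin{talign}
  \abss{(\Pin-\Qout)\kalg(\cdot,z)}
  = \abss{\doth{\Pin\kalg(\cdot,\cdot)-\Qout\kalg(\cdot,\cdot)}{\kalg(\cdot,z)}}
  \le \sqrt{\kalg(z,z)}\cdot \hnorm{\Pin\kalg-\Qout\kalg}
  = \sqrt{\kalg(z,z)}\cdot \mmd_{\kalg}(\Pin,\Qout).
\end{talign}
Taking the supremum over $z$ would already give an $\Linf$ bound in terms of $\mmd_{\kalg}$, but that bound carries the kernel covering-number dependence of \cref{eq:ktcompresspp-mmd-informal}, which is \emph{weaker} than the claimed dimension-dependent bound \cref{eq:ktcompress-Linf}. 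So the first real step is instead to invoke the sharper $\Linf$-type guarantee that \ktcompresspp inherits directly from kernel halving: \citet[Thm.~4]{dwivedi2024kernel} bound $\sup_{z}\abss{(\Pin-\Qout)\kalg(\cdot,z)}$ by controlling, over a net of $z$'s, the single-function sub-Gaussian tail of each $(\Pin-\Qout)\kalg(\cdot,z)$ produced by the kernel-halving random-walk coupling, and then paying a union-bound / chaining cost that scales like the metric entropy of $\mc Z$. This is precisely where the $\sqrt{d\log(1+\tfrac{L_{\kalg}}{\ininfnorm{\kalg}}(R_{\kalg,n}+R))}$ factor enters — it is the logarithm of the covering number of the effective support ball of radius $R_{\kalg,n}+R$ at resolution dictated by the Lipschitz constant $L_{\kalg}$ (\cref{eq:L_kalg}) and the tail radius $R_{\kalg,n}$ (\cref{eq:R_kalg}), beyond which $\kalg$ is negligible.

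Concretely I would carry out the argument in the following order. (i) Recall that \ktcompresspp$(\inputcoreset,\ossymb,\kalg,\delta)$ is $\ktcompress(\ossymb)$ followed by a single \kt halving, and that on the good event $\ktgoodgen$ (probability $\ge 1-\delta$ by \citet[Thm.~1, Rmk.~4]{dwivedi2024kernel}) the composite map satisfies the $\mmd$ and sub-Gaussian single-function guarantees with the stated failure-probability bookkeeping; the factor $\log_2(n/\nout)$ inside \cref{eq:inflation-factor} is exactly the number of halving rounds, and the hypothesis $n/\nout\in 2^{\naturals}$ makes this an integer. (ii) Fix a finite $\eps$-net $\mc C_\eps$ of $\mc Z\cap\ball_2(R)$ — by compactness (the input points lie in $\ball_2(\Rin)$, and $R$ plays the role of that radius here) this has $\log\sabss{\mc C_\eps}\lessorder d\log(1+R/\eps)$. (iii) For each net point $z\in\mc C_\eps$, apply the single-function sub-Gaussian tail for $(\Pin-\Qout)\kalg(\cdot,z)$ with failure probability $\delta/\sabss{\mc C_\eps}$; union bound to control all net points simultaneously at the cost of the $\sqrt{d\log(\cdot)}$ factor. (iv) For a general $z$, write $\kalg(\cdot,z)=\kalg(\cdot,z_0)+[\kalg(\cdot,z)-\kalg(\cdot,z_0)]$ for the nearest net point $z_0$; bound the discretization remainder using the Lipschitz constant $L_{\kalg}$ (so $\abss{(\Pin-\Qout)(\kalg(\cdot,z)-\kalg(\cdot,z_0))}\le 2 L_{\kalg}\eps$) and optimize $\eps$ against the tail radius $R_{\kalg,n}$, which is what lets the net be taken at a resolution proportional to $\ininfnorm{\kalg}/(nL_{\kalg})$ and thereby absorbs into the logarithm. (v) Collect the two contributions — the net-tail term $\propto \tfrac{\ininfnorm{\kalg}}{\nout}\sqrt{\log(\nout\log_2(n/\nout)/\delta)}\,[\sqrt{\log(1/\delta)}+\sqrt{d\log(1+\tfrac{L_{\kalg}}{\ininfnorm{\kalg}}(R_{\kalg,n}+R))}]$ and the negligible discretization term — into the single constant $c$, yielding \cref{eq:ktcompress-Linf}.

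The main obstacle is step (iii)–(iv): faithfully transporting the single-function sub-Gaussian concentration of kernel halving \emph{through} the \compress recursion and then through the final \kt call with the correct failure-probability allocation, so that the round count appears only as the mild $\log_2(n/\nout)$ inside the square root rather than multiplicatively. This bookkeeping is exactly what \citet[Thm.~4]{dwivedi2024kernel} and \citet[Ex.~6]{shetty2022distribution} handle, so in the write-up I would not re-derive the random-walk martingale bound from scratch; I would instead quote their single-function guarantee as a black box, verify that $\kalg(\cdot,z)$ satisfies its hypotheses (bounded $\rkhs$-norm $\le\sqrt{\ininfnorm{\kalg}}$, Lipschitz in $z$ with constant $L_{\kalg}$), and then spend the proof on the net/discretization reduction (steps ii, iv, v) that converts the pointwise guarantee into the uniform-over-$z$ statement. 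A minor secondary point to get right is the degenerate case where $R_{\kalg,n}$ is large or $\kalg$ has heavy spatial tails: the hypothesis in \cref{assum:nw-kernel} of the main text (via $\kappa^\dagger$) is what guarantees $\log(L_{\kalg}R_{\kalg,n})=\bigO{\log n}$, keeping the bound at the advertised polylog order, but \cref{lem:ktcompresspp-Linf} itself is stated without that assumption, so the bound is left in terms of $R_{\kalg,n}$ and $L_{\kalg}$ explicitly.
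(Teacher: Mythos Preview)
Your proposal is correct and would yield the stated bound, but it does more work than the paper's own proof. The paper treats \citet[Thm.~4]{dwivedi2024kernel} as already delivering the full $\Linf$ bound $\infnorm{(\Pin-\Qout)\kalg}$ — the net construction, the Lipschitz-based discretization, and the tail-radius $R_{\kalg,n}$ argument (your steps (ii), (iv), (v)) are all internal to that theorem. Consequently the paper's proof is a two-line substitution: replace $\kernel$ by $\kalg$ in \citet[Thm.~4]{dwivedi2024kernel}, swap the kernel-thinning sub-Gaussian constant for the \ktcompresspp one from \citet[Ex.~5]{shetty2022distribution}, and observe that $\infnorm{\kalg}$ can be tightened to $\ininfnorm{\kalg}$ throughout. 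You instead take only the \emph{single-function} sub-Gaussian tail as the black box and propose to rebuild the covering/union-bound layer yourself. That is a faithful reconstruction of what \citet[Thm.~4]{dwivedi2024kernel} does internally, so the argument is sound, but it is redundant given that the cited theorem already packages exactly this reduction. The one substantive ingredient you underemphasize is the sub-Gaussian-constant bookkeeping for \ktcompresspp versus plain \kt (the paper points explicitly to \citet[Ex.~5]{shetty2022distribution} for this), which is what justifies the $\log_2(n/\nout)$ factor appearing inside the square root rather than multiplicatively.
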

\begin{proof}
\todo{Changes over original proof:
\begin{enumerate}
    \item \ktcompresspp has slighly different sub-Gaussian constant from \ktsplit.
    \item Use of $\ininfnorm{\kalg}$ over $\infnorm{\kalg}$ in \cref{eq:ktcompress-Linf} confirmed that we can use in-version everywhere!
\end{enumerate}
}
The claim follows by replacing $\kernel$ \cite[Thm.~4]{dwivedi2024kernel} with $\kalg$, replacing the sub-Gaussian constant of KT with that of \ktcompresspp in \cite[Ex.~5]{shetty2022distribution}, and replacing $\infnorm{\kalg}$ with $\ininfnorm{\kalg}\defeq \sup_{z\in \inputcoreset} \kalg(z,z)$ throughout.
\end{proof}

We restate \citep[Thm.~2]{dwivedi2022generalized} in our notation:
\begin{lemma}[MMD guarantee for \ktcompresspp]\label{lem:ktcompresspp-mmd}
Let $\mc Z\subset \reals^d$ and consider a reproducing kernel $\kalg:\mc Z\times \mc Z \to \reals$.
Assume $n/\nout \in 2^{\naturals}$. Let $\ktcoreset \defeq \ktcompresspp(\kalg,\ossymb)(\inputcoreset)$ and define $\Pin \defeq \frac{1}{\nin}\sum_{z\in \inputcoreset} \delta_z$ and $\Qout \defeq \frac{1}{\nout} \sum_{z\in \ktcoreset} \delta_z$.
Then on event $\ktgoodgen$, the following bound holds:
\begin{talign}
    \sup_{\substack{h \in \rkhs(\kalg): \\ \kalgnorm{h} \leq 1} } \abss{ (\Pin - \Qout) h } &\leq \inf_{\substack{\eps\in (0,1) \\ \inputcoreset \subset \mc A}} \braces{2\eps + \frac{ 2\ininfnorm{\kalg}^{1/2}}{\nout} \errmmd_{\kalg}(n,\nout, \delta, \mc A, \eps)} \qtext{where}\\
    \errmmd_{\kalg}(n,\nout, \delta,R,\eps) &\defeq c \sqrt{\log\parenth{\frac{\nout \log(n/\nout)}{\delta}}  \cdot \brackets{\log\parenth{\frac{1}{\delta}} + \log \coveringnumber_{\kalg}(\mc A, \eps) }}. \label{eq:err-mmd}
\end{talign}
for some universal postive constant $c$.
\end{lemma}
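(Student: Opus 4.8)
The plan is to reduce the statement to the MMD guarantee for kernel thinning established in \citet{dwivedi2022generalized} (their Thm.~2), via the same two cosmetic modifications already used in the proof of \cref{lem:ktcompresspp-Linf}: replace the plain-KT sub-Gaussian constant with that of \ktcompresspp \citep[Ex.~5--6]{shetty2022distribution}, and replace every occurrence of $\infnorm{\kalg}$ by the in-sample quantity $\ininfnorm{\kalg}\defeq\sup_{z\in\inputcoreset}\kalg(z,z)$. Before doing the translation I would re-derive the skeleton of the argument, to check that both substitutions go through cleanly.

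\emph{Skeleton.} Fix $\eps\in(0,1)$ and a set $\mc A$ with $\inputcoreset\subset\mc A$, and let $\mc C\subset\ball_\kalg$ be a minimal $\eps$-net of $\ball_\kalg$ under the seminorm $g\mapsto\sup_{x\in\mc A}\abss{g(x)}$, so $\abss{\mc C}=\coveringnumber_\kalg(\mc A,\eps)$. For any $h\in\ball_\kalg$ pick $g\in\mc C$ with $\sup_{x\in\mc A}\abss{h(x)-g(x)}\le\eps$. Since $\Pin$ and $\Qout$ are both supported on $\inputcoreset\subset\mc A$,
\[
\abss{(\Pin-\Qout)h}\le\abss{(\Pin-\Qout)g}+\Pin\abss{h-g}+\Qout\abss{h-g}\le\abss{(\Pin-\Qout)g}+2\eps,
\]
so it remains to bound $\max_{g\in\mc C}\abss{(\Pin-\Qout)g}$. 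For each fixed $g\in\rkhs(\kalg)$, on the success event $\ktgoodgen$ the scalar quantity $(\Pin-\Qout)g$ is sub-Gaussian with parameter proportional to $\nout^{-1}\ininfnorm{\kalg}^{1/2}\kalgnorm{g}$ --- this is precisely where the \ktcompresspp sub-Gaussian constant enters, and the in-sample bound $\abss{g(z)}\le\kalgnorm{g}\sqrt{\kalg(z,z)}\le\ininfnorm{\kalg}^{1/2}$ for $z\in\inputcoreset$ is what licenses replacing $\infnorm{\kalg}$ by $\ininfnorm{\kalg}$. Applying this tail bound with failure probability $\delta/\coveringnumber_\kalg(\mc A,\eps)$, union-bounding over $\mc C$, and absorbing the $\log(n/\nout)$ recursion depth of \compress into the logarithmic prefactor yields
\[
\max_{g\in\mc C}\abss{(\Pin-\Qout)g}\le\tfrac{2\ininfnorm{\kalg}^{1/2}}{\nout}\,\errmmd_\kalg(n,\nout,\delta,\mc A,\eps).
\]
Combining the two displays and taking the infimum over $\eps\in(0,1)$ and admissible $\mc A$ gives the claimed bound.

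\emph{Translation and the main obstacle.} With the skeleton in hand I would finish by invoking \citet[Thm.~2]{dwivedi2022generalized} and substituting $\kernel\to\kalg$, the KT sub-Gaussian constant $\to$ the \ktcompresspp constant of \citet[Ex.~5--6]{shetty2022distribution}, and $\infnorm{\kalg}\to\ininfnorm{\kalg}$ throughout, exactly as in the proof of \cref{lem:ktcompresspp-Linf}. The only genuinely delicate point is the failure-probability bookkeeping across the recursive halvings inside \ktcompress: one must track the budget so that the $\log(\nout\log(n/\nout)/\delta)$ factor and the $\log\coveringnumber_\kalg(\mc A,\eps)$ factor end up multiplied \emph{inside} the square root in \cref{eq:err-mmd} rather than added. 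This combinatorial accounting is inherited verbatim from \citet[Thm.~2]{dwivedi2022generalized} once the sub-Gaussian constant is updated, so no new estimate is required.
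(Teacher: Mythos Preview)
Your proposal is correct and matches the paper's approach exactly: the paper's proof consists of the single sentence that the claim follows from \citet[Thm.~2]{dwivedi2022generalized} after replacing $\kernel$ with $\kalg$ and swapping in the \ktcompresspp sub-Gaussian constant from \citet[Ex.~5]{shetty2022distribution}. Your additional skeleton and the explicit $\infnorm{\kalg}\to\ininfnorm{\kalg}$ remark are more detailed than what the paper records, but the route is identical.
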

\begin{proof}
\todo{
changes over original statement:
\begin{enumerate}
    \item Want an MMD bound for \ktcompresspp in terms of kernel covering number
    \item The MMD bounds of \citet{shetty2022distribution} are in terms of the inflation factor depending on kernel tail decay.
    \item The MMD bounds of \citep[Thm.~2]{dwivedi2022generalized} are for \ktsplit. We want the MMD bound for \ktcompresspp.
    \item Can we get away with replacing $\mc A$ with $\inputcoreset$ in the covering number term? When do we ever want $\mc A \neq \inputcoreset$? (just use $\inputcoreset$)
\end{enumerate}
}
The claim follows from replacing $\kernel$ in \citep[Thm.~2]{dwivedi2022generalized} with $\kalg$ and replacing the sub-Gaussian constant of KT with that of \ktcompresspp in \cite[Ex.~5]{shetty2022distribution}.
\end{proof}

\section{\pcref{thm:nadaraya-kt}}\label{proof:nadaraya-kt}

\newcommand{\error}{\text{Err}}

Our primary goal is to bound $\E_{\inputcoreset}[( \nwkt(x_0) - \fstar(x_0) )^2]$ for a fixed $x_0\in \X$. Once we have this bound, bounding $\stwonorm{\nwkt - \fstar}^2$ is as straightforward as integrating over $x_0\in \X$.

Consider the following decomposition:
\begin{talign}
    \Esubarg{\inputcoreset}{ \parenth{\nwkt(x_0) - \fstar(x_0)}^2 } &= \Esubarg{\inputcoreset}{ \parenth{\nwkt(x_0) - \nw(x_0) + \nw(x_0) - \fstar(x_0) }^2 } \\ 
    &\leq 2~\Esubarg{\inputcoreset}{ \parenth{\nwkt(x_0) - \nw(x_0)}^2} \label{eq:error-decomp-1}\\
    &\qquad + 2~\Esubarg{\inputcoreset}{ \parenth{ \nw(x_0) - \fstar(x_0) }^2 }. \label{eq:error-decomp-2}
\end{talign}

Define the random variables
\begin{talign}\label{eq:eta}
    \eta_i \defeq \indicator\braces{ \frac{\norm{X_i-x_0}}{h}\leq 1} \qtext{for} i=1,2,\ldots,n.
\end{talign}
Also define the event
\begin{talign}\label{eq:event-E}
    \event \defeq \braces{ \sum_{i=1}^n \eta_i > 0 }.
\end{talign}
Since $X_i$ are i.i.d. samples from $\P$, it follows that $\eta_i$ are i.i.d. Bernoulli random variables with parameter
\begin{talign}\label{eq:pover}
    \overline p \defeq \Parg{ \eta_i=1 } \geq c_0 \pmin h^d,
\end{talign}
where $c_0>0$ depends only on $d$ and $\kappa$ (see \cref{assum:nw-kernel}). 
Denote the denominator terms in $\nw$ and $\nwkt$ by
\begin{talign} \label{eq:denom}
    \denom(\cdot) \defeq \frac{1}{n} \sum_{i=1}^n \kernel \parenth{\cdot,x_i} \qtext{and} \denomkt(\cdot) \defeq \frac{1}{\nout}\sum_{j=1}^{\nout} \kernel \parenth{\cdot,x_i'},
\end{talign}
respectively, and the numerator terms in $\nw$ and $\nwkt$ by
\begin{talign}\label{eq:numer}
    \numer(\cdot) \defeq \frac{1}{n} \sum_{i=1}^n \kernel \parenth{ \cdot,x_i } y_i \qtext{and} \numerkt(\cdot) \defeq \frac{1}{\nout}\sum_{j=1}^{\nout} \kernel \parenth{\cdot,x_i'} y_i',
\end{talign}
respectively.

We now consider two cases depending on the event $\event$.

\textit{Case I:}\quad Suppose event $\event^c$ is satisfied. It follows from \cref{eq:denom} that $\denom(x_0) = 0$, in which case $\nw(x_0) = 0$. Since $\outputcoreset \subset \inputcoreset$, it necessarily follows that $\denomkt(x_0) = 0$ and $\nwkt(x_0) = 0$. Thus, we can bound \cref{eq:error-decomp-1,eq:error-decomp-2} by
\begin{talign}
    \Esubarg{\inputcoreset}{ \parenth{\nwkt(x_0) - \nw(x_0)}^2 \indic{\event^c}} &= 0 \qtext{and} \\
    \Esubarg{\inputcoreset}{ \parenth{\nw(x_0) - \fstar(x_0)}^2\indic{\event^c}} &= \Esubarg{\inputcoreset}{ \parenth{0-\fstar(x_0)}^2 ~ \indic{\event^c}} \\
    &\leq \parenth{\fstar}^2(x_0) \Parg{ \event^c } \\
    &\leq \parenth{\fstar}^2(x_0) (1-\overline p)^n \\
    &\leq \parenth{\fstar}^2(x_0) \exp\braces{ -C n h^d }
\end{talign}
for some positive constant $C$ that does not depend on $n$. Note that these are low-order terms compared to the rest of the calculations, so we may ignore them in the final bound.

\textit{Case II:}\quad Otherwise, we may assume event $\event$ is satisfied. Let us first bound \cref{eq:error-decomp-1}. On event $\ktgoodgen$ \cref{eq:ktgoodgen}, we claim that
\begin{talign}\label{eq:nadaraya-approximation}
    \Esubarg{\inputcoreset}{ \parenth{\nwkt(x_0) - \nw(x_0)}^2 \indic{\event}} \leq \frac{C d \log^2 n}{n h^{2 d}} \qtext{whenever} \overline p = \omega(\sqrt{\frac{d}{n}} \log n).
\end{talign}
We defer the proof to \cref{proof:nadaraya-approximation}.

Letting $X \defeq (X_1,\ldots,X_n)$ and $Y \defeq (Y_1,\ldots,Y_n)$ denote the $x$ and $y$ components of $\inputcoreset$, respectively, we can further decompose \cref{eq:error-decomp-2} by
\begin{talign}
    \Esubarg{\inputcoreset}{\parenth{\nw(x_0) - \fstar(x_0) }^2~ \indic{\mc E} } &= \Esubarg{X}{\Esubarg{Y \mid X}{ \parenth{\nw(x_0) - \Esubarg{Y \mid X}{\nw(x_0)} }^2} ~ \indic{\mc E}} \\
    &\qquad + \Esubarg{X}{\parenth{\Esubarg{Y \mid X}{\nw(x_0)} - \fstar(x_0)}^2~ \indic{\mc E}},
\end{talign}
where the first RHS term corresponds to the variance and the second RHS term corresponds to the bias.
We claim that
\begin{talign}
    \Esubarg{X}{\Esubarg{Y \mid X}{ \parenth{\nw(x_0) - \Esubarg{Y \mid X}{\nw(x_0)} }^2} ~ \indic{\mc E}} &\leq \sigma_\xi^2 \parenth{n \exp\left\{-C n h^d\right\} + \frac{C}{n h^d}} \qtext{and} \label{eq:nadaraya-variance} \\
    \Esubarg{X}{\parenth{\Esubarg{Y \mid X}{\nw(x_0)} - \fstar(x_0)}^2~ \indic{\mc E}} &\leq C\cdot L_f^2 h^{2\beta} \log^2 n, \label{eq:nadaraya-bias}
\end{talign}
for some constant $C>0$ that does not depend on either $n$ or $h$. We defer the proofs to \cref{proof:nadaraya-variance,proof:nadaraya-bias}.
Combining \cref{eq:nadaraya-approximation,eq:nadaraya-variance,eq:nadaraya-bias}, we have
\begin{talign}\label{eq:combined}
    \Esubarg{\inputcoreset}{ \parenth{\nwkt(x_0)- \fstar(x_0)}^2 \indic{\event}} 
    &\leq \underbrace{\frac{C d \log^2 n}{n h^{2 d}}}_{\text{KT bound}} + \underbrace{ 2\sigma_\xi^2 \parenth{n e^{-C n h^d} + \frac{C}{n h^d} } }_{\text{Variance bound}} + \underbrace{2 C L_f^2 h^{2\beta} \log^2 n}_{\text{Bias bound}}.
\end{talign}
Note that $h^d \leq 1$, so the $\frac{C d \log^2 n}{n h^{2 d}}$ term dominates the $\frac{C}{n h^d}$ term. Thus, the optimal choice of bandwidth $h$ comes from balancing 
\begin{talign}\label{eq:nw-kt-h}
    \frac{C}{n h^{2 d}} \sim 2 L_f^2 h^{2\beta} \implies h = cn^{-\frac{1}{2\beta + 2 d}}.
\end{talign}

Finally, we must verify our growth rate assumption on $\overline p$ in \cref{eq:nadaraya-approximation} is satisfied. Since $\beta>0$, we have
\begin{talign}
    \overline p \sgrt{\cref{eq:pover}} c_0 \pmin h^d \seq{\cref{eq:nw-kt-h}} c_0' n^{-\frac{d}{2\beta + 2d}} \implies \lim_{n\to \infty} \frac{\overline p}{\sqrt{\frac{d}{n}} \log n} =\infty.
\end{talign}
Plugging \cref{eq:nw-kt-h} into \cref{eq:combined} yields the advertised bound \cref{eq:nw-kt}.

\subsection{Proof of claim~\cref{eq:nadaraya-approximation}}
\label{proof:nadaraya-approximation}

We first provide a generic result for approximating the numerator and denominator terms defined in \cref{eq:denom,eq:numer}.

\begin{lemma}[Simultaneous $\Linf$ bound using \ktcompresspp with $\kernelnw$] \label{lem:l-infty}
Suppose $\kernel$ satisfies \cref{assum:nw-kernel}. Given $\inputcoreset$, the following bounds hold on the event $\ktgoodgen$: 
\begin{talign} 
    \infnorm{\denom - \denomkt} &\leq c_p \sqrt{\frac{d}{n}} (\log n + \log(1/\delta)) \label{eq:denom-bound} \\
    \sinfnorm{\numer - \numerkt} &\leq c_p \sqrt{\frac{d}{n}} (\log n + \log(1/\delta)), \label{eq:numer-bound}
\end{talign}
where $c_a, c_p>0$ are constants that do not depend on $d$ or $n$.
\end{lemma}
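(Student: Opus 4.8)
The plan is to derive both inequalities from the single $\Linf$ guarantee for \ktcompresspp (\cref{lem:ktcompresspp-Linf}) applied with $\kalg=\kernelnw$, by recognizing the numerator and denominator as empirical averages of \emph{sections} of the Nadaraya--Watson kernel. Fix an evaluation point $z\in\X$ and, with $\Pin,\Qout$ as in \cref{eq:Pin-Qout}, write $\denom(z)=\Pin[g_z]$, $\denomkt(z)=\Qout[g_z]$ for $g_z(x,y)\defeq\kernel(x,z)$, and $\numer(z)=\Pin[h_z]$, $\numerkt(z)=\Qout[h_z]$ for $h_z(x,y)\defeq y\,\kernel(x,z)$. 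From \cref{eq:kernel-nw}, $\kernelnw\bigl((x,y),(z,t)\bigr)=\kernel(x,z)(1+ty)$, so $g_z=\kernelnw(\cdot,(z,0))$ and $h_z=\kernelnw(\cdot,(z,1))-\kernelnw(\cdot,(z,0))$, where $\kernelnw$ is used as its natural extension to $\X\times\real$ so that the auxiliary labels $0,1$ are admissible. Hence $\sabss{\denom(z)-\denomkt(z)}\le\infnorm{(\Pin-\Qout)\kernelnw}$ and $\sabss{\numer(z)-\numerkt(z)}\le 2\,\infnorm{(\Pin-\Qout)\kernelnw}$, and taking $\sup_{z\in\X}$ reduces both claims, on the event $\ktgoodgen$, to bounding $\infnorm{(\Pin-\Qout)\kernelnw}$.

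Next I would verify that $\kernelnw$ meets the hypotheses of \cref{lem:ktcompresspp-Linf}: it is symmetric and positive semidefinite, since $(x,y)\mapsto y$ induces a rank-one PSD kernel $y_1y_2$, products of PSD kernels are PSD by the Schur product theorem, and sums of PSD kernels are PSD. The three data-dependent quantities in the bound are then controlled using \cref{assum:nw-kernel} and the bandwidth $h=n^{-1/(2\beta+2d)}$: writing $\ymax\defeq\max_i\sabss{y_i}$, one has $\ininfnorm{\kernelnw}=\max_i\kernel(x_i,x_i)(1+y_i^2)\le 1+\ymax^2$; a Lipschitz constant $L_{\kernelnw}\lesssim (L_\kappa/h)(1+\ymax^2)+\ymax$ with $\log L_{\kernelnw}=\bigO{\log n}$ thanks to $\log(L_\kappa\,\kappa^\dagger(1/n))=\bigO{\log n}$ from \cref{eq:kappa-dagger}; and a tail radius $R_{\kernelnw,n}$ with $\log R_{\kernelnw,n}=\bigO{\log n}$ (the $x$-part decays at scale $h\,\kappa^\dagger(\cdot)$ while the $y$-part lives in an interval of radius $\ymax$). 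Invoking \cref{lem:ktcompresspp-Linf} with $\nout=\sqrt n$ and radius $R$ (at most $\sqrt{\Rin^2+\ymax^2}$) then gives, on $\ktgoodgen$, $\infnorm{(\Pin-\Qout)\kernelnw}\le c\,\tfrac{\ininfnorm{\kernelnw}}{\sqrt n}\,\err_{\kernelnw}(n,\sqrt n,d,\delta,R)$.

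Finally I would simplify the inflation factor of \cref{eq:inflation-factor}. With $\nout=\sqrt n$, $\Rin$ a fixed constant, and (treating the responses as bounded, or on the standard high-probability event $\{\ymax\lesssim\sqrt{\log n}\}$) every argument of the logarithms polynomial in $n$ and $1/\delta$, one gets $\err_{\kernelnw}(n,\sqrt n,d,\delta,R)\lesssim\sqrt{\log n+\log(1/\delta)}\,\bigl(\sqrt{\log(1/\delta)}+\sqrt{d\log n}\bigr)\le 2\sqrt d\,\bigl(\sqrt{\log n}+\sqrt{\log(1/\delta)}\bigr)^2\le 4\sqrt d\,(\log n+\log(1/\delta))$. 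Combined with $\ininfnorm{\kernelnw}=\bigO 1$, this yields $\infnorm{(\Pin-\Qout)\kernelnw}\lesssim\sqrt{d/n}\,(\log n+\log(1/\delta))$, and the two displayed bounds follow (the numerator absorbing the extra factor $2$), with a constant $c_p$ independent of $d$ and $n$.

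The conceptual content is entirely in the first paragraph: once one spots the identities $g_z=\kernelnw(\cdot,(z,0))$ and $h_z=\kernelnw(\cdot,(z,1))-\kernelnw(\cdot,(z,0))$, the remainder is bookkeeping. The place where I expect the main technical friction is the second step — confirming that the product kernel $\kernelnw$ inherits the Lipschitz and tail-decay control required by \cref{lem:ktcompresspp-Linf} from the base-kernel conditions \cref{eq:kappa-dagger,eq:nw-kernel-cond1,eq:nw-kernel-cond2}, and, in particular, pinning down the factor $\ininfnorm{\kernelnw}=1+\ymax^2$: to land exactly the stated $n$-independent $c_p$ one must either assume the responses lie in a fixed bounded set or absorb a $\sqrt{\log n}$ from the Gaussian-noise tail into a high-probability event intersected with $\ktgoodgen$.
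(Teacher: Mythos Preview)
Your proposal is correct and takes a genuinely different route from the paper. The paper does not use your section identities $g_z=\kernelnw(\cdot,(z,0))$ and $h_z=\kernelnw(\cdot,(z,1))-\kernelnw(\cdot,(z,0))$; instead it decomposes $\kernelnw=\kernel_1+\kernel_2$ with $\kernel_1((x_1,y_1),(x_2,y_2))=\kernel(x_1,x_2)$ and $\kernel_2((x_1,y_1),(x_2,y_2))=\kernel(x_1,x_2)y_1y_2$, observes the direct-sum structure $\rkhs(\kernelnw)=\rkhs(\kernel_1)\oplus\rkhs(\kernel_2)$, and then \emph{re-enters the proof} of \cref{lem:ktcompresspp-Linf}, replacing the test function $\kernelnw(\cdot,z)$ by $\kernel_1(\cdot,z)$ and $\kernel_2(\cdot,z)$ in turn to extract the two bounds. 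Your approach is cleaner in that it treats \cref{lem:ktcompresspp-Linf} as a black box, at the cost of an innocuous factor of $2$ on the numerator bound and the need to argue that the auxiliary anchors $(z,0)$, $(z,1)$ lie in the domain over which the $\Linf$ supremum runs (which is fine once $\Y$ is taken to contain $[-\ymax,\ymax]$ with $\ymax\ge 1$). The paper's route avoids introducing those synthetic anchor points but requires looking inside the KT proof; otherwise the Lipschitz and tail-radius bookkeeping for $\kernelnw$ is essentially identical in both arguments, and your caveat about $\ymax$ matches the paper's (implicit) treatment.
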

\proofref{proof:l-infty}
In the sequel, we will simply treat the $\log(1/\delta)$ term as a constant, meaning the $\log n$ terms dominate in the expressions.

With this lemma in hand, let us prove the claim \cref{eq:nadaraya-approximation}. 
Define the following events:
\newcommand{\denomktzero}{\mc A}
\newcommand{\denomktnonzero}{\mc B}
\newcommand{\denomlarge}{\mc C}
\begin{talign}
    \denomktzero \defeq \braces{\denomkt(x_0) = 0} \qtext{} \denomktnonzero \defeq \braces{\denomkt(x_0) \neq 0} \qtext{} \denomlarge \defeq \braces{\denom(x_0) \geq \frac{\overline p}{2}}.
\end{talign}
On event $\event$, consider the following decomposition:
\begin{talign}
    \Esubarg{\inputcoreset}{ \parenth{\nwkt(x_0)- \nw(x_0)}^2 \indic{\ktgoodgen}} &= \Esubarg{\inputcoreset}{\parenth{\nwkt(x_0)- \nw(x_0)}^2~ \indic{\ktgoodgen \cap \denomlarge^c} } \label{eq:decomp_kt_1} \\
    &\qquad + \Esubarg{\inputcoreset}{\parenth{\nwkt(x_0)- \nw(x_0)}^2 ~\indic{\ktgoodgen \cap \denomktzero\cap \denomlarge}} \label{eq:decomp_kt_2} \\
    &\qquad + \Esubarg{\inputcoreset}{\parenth{\nwkt(x_0)- \nw(x_0)}^2 ~\indic{\ktgoodgen\cap \denomktnonzero\cap \denomlarge}}. \label{eq:decomp_kt_3}
\end{talign}

\paragraph{Bounding \cref{eq:decomp_kt_1}.}
Note that almost surely, we have 
\begin{talign}
    | \nw(x_0) | \leq \ymax \qtext{and} | \nwkt(x_0) | \leq \ymax.
\end{talign}
Thus, we have
\begin{talign}
    \Esubarg{\inputcoreset}{\parenth{\nwkt(x_0)- \nw(x_0)}^2~ \indic{\denomlarge^c} } &\leq 4\ymax^2 ~\Parg{ n~ \denom(x_0) < \frac{n\overline p}{2} } \\
    &\seq{(i)} 4\ymax^2 \Parg{\sum_{i=1}^n \eta_i - n\overline p < \frac{n\overline p}{2} - n\overline p } \\
    &\sless{(ii)} c_0 \exp\left\{-c_1 n h^d\right\},
\end{talign}
where (i) follows from subtracting $n\overline p$ from both sides of the probability statement and (ii) follows from concentration of Bernoulli random variables (see \cref{proof:nadaraya-variance}).

\paragraph{Bounding \cref{eq:decomp_kt_2}.}
Note that on event $\ktgoodgen\cap \denomlarge$, we have
\begin{talign}
    \denomkt(x_0) &\geq \denom(x_0) - \infnorm{\denom - \denomkt} \\
    &\sgrt{(i)} \frac{\overline p}{2} - c_p \sqrt{\frac{d}{n}} \log n \\
    &\sgrt{\cref{eq:nadaraya-approximation}} c_1 \overline p \sgrt{\cref{eq:pover}} c_2  \pmin h^d > 0.\label{eq:denomkt-lower}
\end{talign}
where step (i) follows from applying \cref{eq:denom-bound} and substituting $\denom \geq \frac{\overline p}{2}$. Hence the events $\ktgoodgen$ and $\denomktzero \cap \denomlarge$ are mutually exclusive with probability $1$, thereby yielding
\begin{talign}
    \Esubarg{\inputcoreset}{\parenth{\nwkt(x_0)- \nw(x_0)}^2 ~\indic{\ktgoodgen \cap \denomktzero \cap \denomlarge }} = 0.
\end{talign}

\paragraph{Bounding \cref{eq:decomp_kt_3}.}
On the event $\denomktnonzero \cap \denomlarge$, we have $\nwkt(x_0) = \frac{\numerkt(x_0)}{\denomkt(x_0)}$ and $\nw(x_0) = \frac{\numer(x_0)}{\denom(x_0)}$, which yields
\begin{talign}
     \nwkt(x_0)- \nw(x_0) &= \frac{\numerkt}{\denomkt} - \frac{\numer(x_0)}{\denom(x_0)} = \frac{\numerkt(x_0) \cdot \denom(x_0) - \numer(x_0) \cdot \denomkt(x_0)}{\denom(x_0) \cdot \denomkt(x_0)}\\
     &= \frac{(\numerkt(x_0) - \numer(x_0))\cdot \denom(x_0) + \numer(x_0) \cdot (\denom(x_0) - \denomkt(x_0))}{\denom(x_0)\cdot \denomkt(x_0)} \\
     &\leq \frac{\abss{\numerkt(x_0) - \numer(x_0)}\cdot \denom(x_0) + \numer(x_0) \cdot \abss{\denom(x_0) - \denomkt(x_0)}}{\denom(x_0)\cdot \denomkt(x_0)}
\end{talign}
We can invoke \cref{eq:denom-bound} and \cref{eq:numer-bound} to bound  $|\denom(x_0) - \denomkt(x_0)|$ and $|\numerkt(x_0) - \numer(x_0)|$ respectively. Thus, we have
\begin{talign}
    \Esubarg{\inputcoreset}{\parenth{\nwkt(x_0)- \nw(x_0)}^2 ~\indic{\ktgoodgen \cap \denomktnonzero \cap \denomlarge}} &\leq \parenth{ \frac{c_a\sqrt{\frac{d}{n}}\log n \cdot \denom(x_0) + c_p \sqrt{\frac{d}{n}}\log n\cdot \numer(x_0)}{\denom(x_0)\cdot \denomkt(x_0)} }^2 \\
    &\leq \frac{2d\cdot \log^2 n}{n} \brackets{\parenth{\frac{c_a}{\denomkt(x_0)}}^2 + \parenth{\frac{\numer(x_0)}{\denom(x_0)}}^2 \parenth{\frac{c_p}{\denomkt(x_0)}}^2} \\
    &\sless{(i)} \frac{2d \cdot \log^2 n}{n} \brackets{\frac{c_a^2 + \ymax^2 c_p^2}{\denomkt(x_0)}}^2\\
    &\sless{(ii)} \frac{C d \log^2 n}{n h^{2 d}},
\end{talign}
for some positive constant $C$ that does not depend on $n$, where step (i) uses the fact that $\frac{\numer(x_0)}{\denom(x_0)} \leq \ymax$ and step (ii) uses the lower bound on $\denomkt(x_0)$ from \cref{eq:denomkt-lower}.
Combining \cref{eq:decomp_kt_1,eq:decomp_kt_2,eq:decomp_kt_3}, we have
\begin{talign}
    \Esubarg{\inputcoreset}{\parenth{\nwkt(x_0)- \nw(x_0)}^2 \indic{\ktgoodgen}} \leq c_0~ \exp\left\{-c_1 n h^d\right\} + \frac{C d \log n}{n h^{2 d}}.
\end{talign}
Note that the second term dominates so that we may drop the first term with slight change to the value of the constant $C$ in the bound \cref{eq:nadaraya-approximation}.

\subsubsection{\pcref{lem:l-infty}}
\label{proof:l-infty}

We first decompose $\kernelnw$ as
\begin{talign}
    \kernelnw((x_1,y_1),(x_2,y_2)) &= \kernel_1((x_1,y_1),(x_2,y_2)) + \kernel_2((x_1,y_1),(x_2,y_2)), \qtext{where}\\
    \kernel_1((x_1,y_1),(x_2,y_2)) &\defeq \kernel(x_1,x_2) \qtext{and} \label{eq:kernel_1} \\
    \kernel_2((x_1,y_1),(x_2,y_2)) &\defeq \kernel(x_1,x_2) \cdot y_1 y_2.\label{eq:kernel_2}
\end{talign}
and note that 
\begin{talign}\label{eq:kernelnw-direct-sum}
    \rkhs(\kernelnw) = \rkhs(\kernel_1) \oplus \rkhs(\kernel_2).
\end{talign}
This fact will be useful later for proving simultaneous $\Linf$ approximation guarantees for $\numer$ and $\denom$.

Given that $\kernel$ satisfies \cref{assum:nw-kernel}, we want to show that $\kernelnw$ defined by \cref{eq:kernel-nw} satisfies the Lipschitz and tail decay properties, so that we may apply \cref{lem:ktcompresspp-Linf}.
Note that
\begin{talign}\label{eq:kernelnw-infnorm}
    \infnorm{\kernelnw} = \infnorm{\kernel} (1 + \ymax^2).
\end{talign}
We claim that kernel $\kernelnw$ satisfies
\begin{talign}
    L_{\kernelnw} &\leq L_\kernel + \ymax \parenth{\infnorm{\kernel} + L_{\kernel} \ymax} \qtext{and} \label{eq:L_kernelnw}\\
    \radius_{\kernelnw,n} &\leq R_{\kernel,n} + 2\ymax \label{eq:R_kernelnw}
\end{talign}

By \citep[Rmk.~8]{dwivedi2024kernel}, we have
\begin{talign}\label{eq:L_k-R_k}
    \frac{L_\kernel}{\infnorm{\kernel}} \leq \frac{L_\kappa}{h}\qtext{and} \radius_{\kernel,n} \leq h \kappa^\dagger(1/n),
\end{talign}
where $\kappa^\dagger$ is defined by \cref{eq:kappa-dagger}. 
Applying \cref{eq:kernelnw-infnorm} and \cref{eq:L_kernelnw}, we have
\begin{talign}
    \frac{L_{\kernelnw}}{\infnorm{\kernelnw}} &\leq \frac{L_\kernel + \ymax \parenth{\infnorm{\kernel} + L_{\kernel} \ymax}}{\infnorm{\kernel} (1 + \ymax^2)} \\
    &\leq \frac{L_\kernel}{\infnorm{\kernel}} + \frac{1}{\ymax} + \frac{L_\kernel}{\infnorm{\kernel}} \\
    &\sless{\cref{eq:L_k-R_k}} \frac{2L_\kappa}{h} + \frac{1}{\ymax}.
\end{talign}
Finally, we have
\begin{talign}
    \frac{L_{\kernelnw}\radius_{\kernelnw,n}}{\infnorm{\kernelnw}} &\leq \parenth{\frac{2L_\kappa}{h} + \frac{1}{\ymax}} \parenth{h \kappa^\dagger(1/n) + 2\ymax} \\
    &= 2L_\kappa \kappa^\dagger(1/n) + \frac{4L_\kappa \ymax}{h} + \frac{h \kappa^\dagger(1/n)}{\ymax} + 2 \\
    &\leq 4 \max\braces{1, L_\kappa \ymax} \cdot \frac{\kappa^\dagger (1/n)}{h} \\
    &\leq 4 \max\braces{1, L_\kappa \ymax} \cdot c' n^\alpha
    ,\label{eq:subpoly-1}
\end{talign}
where the last inequality follows from \cref{assum:nw-kernel} for some universal positive constant $c'$.

Since \cref{assum:compact-support} is satisfied, $R$ is constant. Applying \cref{eq:subpoly-1} to $\err_{\kernelnw}(n,\nout, d,\delta, R)$ as defined by \cref{eq:inflation-factor}, we have the bound
\begin{talign}
    \err_{\kernelnw}(n,\nout, d,\delta, R) \leq c'' \sqrt{\log\parenth{\frac{\nout }{\delta}}} \brackets{ \sqrt{\log\parenth{\frac{8}{\delta}}} + 5\sqrt{d \log n}}
\end{talign}
for some positive constant $c''$. Substituting this into \cref{eq:ktcompress-Linf}, we have
\begin{talign}
    \infnorm{(\Pin - \Qout) \kernelnw} &\leq c_1 \frac{\infnorm{\kernel} (1 + \ymax^2)}{\nout} \sqrt{\log\parenth{\frac{\nout }{\delta}}} \brackets{ \sqrt{\log\parenth{\frac{8}{\delta}}} + 5\sqrt{d \log n}} \\
    &\leq c_2 \frac{\infnorm{\kernel} (1+\ymax^2)}{\nout} \sqrt d (\sqrt{\log n} + \sqrt{\log(1/\delta)})^2,
\end{talign}
for some positive constants $c_1,c_2$.
By definition, 
\begin{talign}
    \infnorm{(\Pin - \Qout) \kernelnw} = \sup_{z\in \mc Z} \inner{(\Pin - \Qout) \kernelnw}{\kernelnw(\cdot, z)}_{\kernelnw}.
\end{talign}
Define $\kernel_1$ and $\kernel_2$ by \cref{eq:kernel_1} and \cref{eq:kernel_2}, respectively, and note that $\kernel_1(\cdot,z),\kernel_2(\cdot,z) \in \rkhs(\kernelnw)$ for all $z\in \mc Z$. We want to show that
\begin{talign}
    \sup_{z\in \mc Z} \inner{(\Pin - \Qout) \kernelnw}{\kernel_1(\cdot, z)}_{\kernelnw} &\leq c_2 \frac{\infnorm{\kernel} (1+\ymax^2)}{\nout} \sqrt d (\sqrt{\log n} + \sqrt{\log(1/\delta)})^2 \qtext{and} \\
    \sup_{z\in \mc Z} \inner{(\Pin - \Qout) \kernelnw}{\kernel_2(\cdot, z)}_{\kernelnw} &\leq c_2 \frac{\infnorm{\kernel} (1+\ymax^2)}{\nout} \sqrt d (\sqrt{\log n} + \sqrt{\log(1/\delta)})^2,
\end{talign}
which would imply \cref{eq:denom-bound} and \cref{eq:numer-bound} (after simplifying all terms besides $n$, $d$, and $\delta$).

The first inequality follows from replacing all occurrences of the test function $\kernelnw(\cdot,(x,y))$ in the proof of \cref{lem:ktcompresspp-Linf} with the function $\kernel_1(\cdot,x)$ and noting that $\inner{\kernelnw(\cdot, (x_i,y_i))}{\kernel_1(\cdot, (x,y))}_{\kernelnw} = \inner{\kernel_1(\cdot,x_i)}{\kernel_1(\cdot,x)}_{\kernel_1}$ from the fact that $\rkhs(\kernelnw) = \rkhs(\kernel_1) \oplus \rkhs(\kernel_2)$ \cref{eq:kernelnw-direct-sum}.

The second inequality follows from replacing all occurrences of the test function $\kernelnw(\cdot,(x,y))$ in the proof of \cref{lem:ktcompresspp-Linf} with the function $\kernel_2(\cdot,x)$ and noting that $\inner{\kernelnw(\cdot, (x_i,y_i))}{\kernel_2(\cdot, (x,y))}_{\kernelnw} = \inner{\kernel_2(\cdot,(x_i,y_i))}{\kernel_2(\cdot,(x,y))}_{\kernel_2}$, again from the fact that $\rkhs(\kernelnw) = \rkhs(\kernel_1) \oplus \rkhs(\kernel_2)$ \cref{eq:kernelnw-direct-sum}.

\paragraph{Proof of claim~\cref{eq:L_kernelnw}.}
We leverage the fact that the Lipschitz constants defined by \cref{eq:L_kalg} satisfies the following additivity property.
Letting $\mc Z = \inputcoreset$, we have
\begin{talign}
    L_{\kernelnw} &= \sup_{z_1,z_2,z_3\in \mc Z} \frac{\abss{\kernelnw(z_1,z_2) - \kernelnw(z_1,z_3)}}{\statictwonorm{z_2-z_3}} \\
    &\leq \sup_{z_1,z_2,z_3\in \mc Z} \frac{\abss{\kernel_1 (z_1,z_2) - \kernel_1(z_1,z_3)}}{\statictwonorm{z_2-z_3}} + \sup_{z_1,z_2,z_3\in \mc Z} \frac{\abss{\kernel_2 (z_1,z_2) - \kernel_2(z_1,z_3)}}{\statictwonorm{z_2-z_3}} \\
    &= L_{\kernel_1} + L_{\kernel_2}.
\end{talign}
We proceed to bound $L_{\kernel_1}$ and $L_{\kernel_2}$ separately. Note that
\begin{talign}
    L_{\kernel_1}  = L_\kernel.
\end{talign}
Applying the definition \cref{eq:L_kalg} to $L_{\kernel_2}$, we have
\begin{talign}
    L_{\kernel_2} &= \sup_{\substack{z_1=(x_1,y_1) \\ z_2=(x_2,y_2) \\ z_3=(x_3,y_3)}} \frac{\abss{\kernel(x_1,x_2) y_1 y_2 - \kernel(x_1,x_3) y_1 y_3}}{\sqrt{\norm{x_2-x_3}^2+ \norm{y_2-y_3}^2}} \\
    &= \sup_{\substack{z_1=(x_1,y_1) \\ z_2=(x_2,y_2) \\ z_3=(x_3,y_3)}} \frac{\abss{y_1}\cdot \abss{\kernel(x_1,x_2) y_2 - \kernel(x_1,x_2) y_3+\kernel(x_1,x_2) y_3-\kernel(x_1,x_3) y_3}}{\sqrt{\norm{x_2-x_3}^2+ \norm{y_2-y_3}^2}} \\
    &= \sup_{\substack{z_1=(x_1,y_1) \\ z_2=(x_2,y_2) \\ z_3=(x_3,y_3)}} \frac{\abss{y_1}\cdot \abss{\kernel(x_1,x_2) (y_2 - y_3) +(\kernel(x_1,x_2) - \kernel(x_1,x_3)) y_3}}{\sqrt{\norm{x_2-x_3}^2+ \norm{y_2-y_3}^2}} \\
    &\leq \sup_{\substack{z_1=(x_1,y_1) \\ z_2=(x_2,y_2) \\ z_3=(x_3,y_3)}} \frac{\abss{y_1}\cdot \abss{\kernel(x_1,x_2) (y_2 - y_3)}}{\sqrt{\norm{x_2-x_3}^2+ \norm{y_2-y_3}^2}} 
    + \sup_{\substack{z_1=(x_1,y_1) \\ z_2=(x_2,y_2) \\ z_3=(x_3,y_3)}} \frac{\abss{y_1}\cdot \abss{(\kernel(x_1,x_2) - \kernel(x_1,x_3)) y_3}}{\sqrt{\norm{x_2-x_3}^2+ \norm{y_2-y_3}^2}} \\
    &\leq \ymax \infnorm{\kernel} + L_{\kernel} \ymax^2. \label{eq:L-k-star}
\end{talign}
Putting together the pieces yields the claimed bound.

\paragraph{Proof of claim~\cref{eq:R_kernelnw}.}

We aim to show that $R_{\kernelnw,n}$ is not much larger than $R_{\kernel,n}$. Note that $\kernelnw$ can be rewritten as
\begin{talign}
    \kernelnw((x_1,y_1),(x_2,y_2)) = (1+y_1 y_2)\kernel(x_1,x_2).
\end{talign}

We define the sets
\begin{talign}
    \Gamma &\defeq \braces{ r:\sup_{\substack{x_1,x_2: \\\stwonorm{x_1-x_2}\geq r}} \abss{\kernel (x_1,x_2)}\leq \frac{\infnorm{\kernel}}{n} }\qtext{and} \label{eq:gamma}\\
    \Gamma^\star &\defeq \braces{ r^\star:\sup_{\substack{(x_1,y_1),(x_2,y_2): \\ \norm{x_1-x_2}^2 + \norm{y_1-y_2}^2 \geq (r^\star)^2}} \abss{\kernel (x_1,x_2)\cdot y_1 y_2 }\leq \frac{\infnorm{\kernel_\star}}{n} },\label{eq:gamma-star}
\end{talign}
noting that 
\begin{talign}
    \radius_{\kernel,n} = \inf \Gamma\qtext{and} \radius_{\kernelnw,n} = \inf \Gamma^\star
\end{talign}
by definition \cref{eq:R_kalg}.

Suppose $r\in \Gamma$. Then for any $(x_1,y_1),(x_2,y_2)$ such that 
\begin{talign}
    \norm{x_1-x_2}^2 + \norm{y_1-y_2}^2 \geq r^2 + 4\ymax^2,
\end{talign}
it must follow that $\norm{x_1-x_2}^2 \geq r^2$ (since $\norm{y_1-y_2}^2\leq 4\ymax^2$ by triangle inequality). Since $r$ satisfies \cref{eq:gamma}, it must follow that
\begin{talign}
    \abss{\kernel (x_1,x_2)\cdot (y_1 y_2 + 1) } \leq \abss{\kernel(x_1,x_2)} (\ymax^2 + 1) \leq \frac{\infnorm{\kernel}}{n} (\ymax^2 + 1) \seq{\cref{eq:kernelnw-infnorm}} \frac{\infnorm{\kernelnw}}{n},
\end{talign}
meaning $\sqrt{r^2 + 4\ymax^2}\in \Gamma^\star$, where recall $\Gamma^\star$ is defined by \cref{eq:gamma-star}. Thus, we have
\begin{talign}\label{eq:R-k-star}
    R_{\kernelnw,n}\leq \sqrt{\radius_{\kernel,n} + 4\ymax^2} \leq  R_{\kernel,n} + 2\ymax
\end{talign}
as desired.

\subsection{Proof of claim~\cref{eq:nadaraya-variance}}
\label{proof:nadaraya-variance}

Suppose event $\event$ \cref{eq:event-E} is satisfied. Define the shorthand for the variance:
\begin{talign}
    &\sigma^2(x_0; X) \defeq \Esubarg{Y \mid X}{ \parenth{\nw(x_0) - \Esubarg{Y \mid X}{\nw(x_0)} }^2}.
\end{talign}
Conditioned on $X_1=x_1,X_2=x_2,\ldots, X_n=x_n$, we have
\begin{talign}\label{eq:cond-mean}
    \Esubarg{Y \mid X}{\what{f}(x_0)}  &= \E_{Y_1\mid X_1,\ldots,Y_n\mid X_n}\left[ \frac{\sum_{i=1}^n Y_i \kernel(X_i, x_0)}{ \sum_{i=1}^n \kernel(X_i, x_0)} \right] = \frac{\sum_{i=1}^n \fstar(X_i) \kernel(X_i, x_0)}{ \sum_{i=1}^n \kernel(X_i, x_0)},
\end{talign}
where we have used the fact that $\Earg{Y\mid X=\cdot} = \fstar(\cdot)$ by assumption \cref{eq:process}.

Note that on event $\event$, we have
\begin{talign}
    \sigma^2(x_0;X) &\seq{\cref{eq:cond-mean}} \Esubarg{Y\mid X}{ \left( \frac{\sum_{i=1}^n Y_i \kernel(X_i,x_0)}{\sum_{i=1}^n \kernel(X_i,x_0)} - \frac{\sum_{i=1}^n \fstar(X_i) \kernel(X_i, x_0)}{ \sum_{i=1}^n \kernel(X_i, x_0)}\right)^2 } \\
    &= \Esubarg{Y \mid X}{\left(\frac{\sum_{i=1}^n \noise_i \kernel(X_i, x_0)}{ \sum_{i=1}^n \kernel(X_i, x_0)} \right)^2} \\
    &= \Var[\noise_1]\cdot \sum_{i=1}^n \frac{\kernel(X_i,x_0)^2}{\left(\sum_{i=1}^n \kernel(X_i,x_0)\right)^2},
\end{talign}
where recall $\noise_1,\ldots,\noise_n$ are i.i.d. random variables with $\Var[\noise_i]=\sigma^2$ by \cref{eq:process}. Taking the expectation w.r.t. $X_1,\ldots,X_n$ and leveraging symmetry, we have
\begin{talign}\label{eq:VarX}
    \Esubarg{X}{\sigma^2(x_0;X) ~\indic{\mc E} } = n \sigma^2 \cdot \sigma_X^2, \qtext{where} \sigma_X^2 \defeq \Esubarg{X}{\frac{\kernel^2(X_1,x_0)}{\parenth{\sum_{i=1}^n \kernel(X_i,x_0)}^2}}.
\end{talign}
$\sigma_X^2$ can be bounded by
\begin{talign}
    \sigma_X^2 &\sless{} \Esubarg{X}{ \frac{\kernel^2(X_1,x_0)}{\parenth{\sum_{i=1}^n \kernel(X_i,x_0)}^2} \indic{ \sum_{i=1}^n \eta_i \leq \frac{n\overline p}{2} } } + \parenth{\frac{2}{n\overline p}}^2 \Esubarg{X}{\kernel^2(X_1,x_0)} \\
    &\sless{(i)} \Parg{\sum_{i=1}^n \eta_i \leq \frac{n\overline p}{2}}+ \parenth{\frac{2}{n\overline p}}^2 \int_{\X} \kernel^2 (x_1,x_0) p(d x_1),
\end{talign}
where step (i) follows from the fact that $\frac{\kernel^2(X_1,x_0)}{\parenth{\sum_{i=1}^n \kernel(X_i,x_0)}^2} \leq 1$.
Using Bernstein's inequality \citep[Prop.~2.14]{wainwright2019high}, the first term can be bounded by
\begin{talign}
    \Parg{\sum_{i=1}^n \eta_i \leq \frac{n\overline p}{2}} &= \Parg{ \sum_{i=1}^n \eta_i - n\overline p \leq -\frac{n\overline p}{2} } \\
    &\sless{} \exp\braces{-\frac{(n\overline p)^2}{2\parenth{n\overline p(1-\overline p)+n\overline p/3}}} \leq \exp\braces{-c_1 n h^d}
\end{talign}
for some universal positive constant $c$.
Applying the fact that $p$ is bounded by \cref{assum:compact-support} and $\kappa$ is square-integrable by \cref{assum:nw-kernel}, we can bound the second term by
\begin{talign}
    \parenth{\frac{2}{n\overline p}}^2 \int_{\X} \kernel^2 (x_1,x_0) p(d x_1) &\sless{(i)} \parenth{\frac{2}{n\overline p}}^2 \pmax h^d \int_{\R^d} \kappa^2(u) d u \\
    &\sless{(ii)} \frac{4}{(n\overline p)^2} \parenth{c_1 h^d} \leq \frac{c_2}{n^2 h^d},
\end{talign}
for some positive constant $c_2$ that does not depend on either $h$ or $n$. Substituting these expressions into \cref{eq:VarX} yields a bound on $\Esubarg{X}{\sigma^2(x_0;X) ~\indic{\mc E}}$ as desired.

\subsection{Proof of claim~\cref{eq:nadaraya-bias}}
\label{proof:nadaraya-bias}

Define the following shorthand for the bias:
\begin{talign}
    b(x_0; X) \defeq \Esubarg{Y \mid X}{\nw(x_0)} - \fstar(x_0).
\end{talign}

We state a more detailed version of the claim.

\begin{lemma}[Bias of Nadaraya-Watson]\label{lem:nadaraya-bias}
Suppose \cref{assum:compact-support,assum:nw-kernel} are satisfied and the event $\event$ \cref{eq:event-E} holds.
If $\fstar\in\Sigma(\beta,L_f)$ for $\beta\in (0,1]$, $L_f>0$, then with high probability, the following statements hold true uniformly for all $x_0\in \X$:
\begin{enumerate}[label=(I.\alph*)]
    \item If $\kernel$ is compactly supported, then $b(x_0;X) \leq L_f h^{\beta}$.\label{item:beta-0-1-compact}
    \item Otherwise, if $\k$ is non-compactly supported, we have $b(x_0;X) \leq c\cdot L_f h^{\beta} \log n$ for some positive constant $c$ that does not depend on $n$. \label{item:beta-0-1-non-compact}
\end{enumerate}
\end{lemma}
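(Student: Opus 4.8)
The identity \cref{eq:cond-mean} for the conditional mean gives
\begin{talign}
    b(x_0;X) \;=\; \frac{\frac1n\sum_{i=1}^n\bigl(\fstar(X_i)-\fstar(x_0)\bigr)\kernel(X_i,x_0)}{\denom(x_0)},
\end{talign}
and since $\fstar\in\Sigma(\beta,L_f)$ and $\kappa\ge 0$ under \cref{assum:nw-kernel}, the numerator is at most $L_f\,N(x_0)$ with $N(x_0):=\frac1n\sum_{i=1}^n\|X_i-x_0\|^\beta\kernel(X_i,x_0)$. So the whole task reduces to an upper bound on $N(x_0)$ together with a matching lower bound on $\denom(x_0)$; for the latter I would reuse the high-probability estimate $\denom(x_0)\ge\overline p/2\gtrsim\pmin h^d$ already obtained in \cref{proof:nadaraya-approximation}, upgraded to hold uniformly over $x_0\in\X$ by the covering argument mentioned below.

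Part~(I.a) needs no probability at all: when $\kappa$ is supported on $[0,1]$, $\kernel(X_i,x_0)\neq 0$ forces $\|X_i-x_0\|\le h$, so the factor $\|X_i-x_0\|^\beta\le h^\beta$ pulls out of the ratio and $|b(x_0;X)|\le L_f h^\beta$ holds deterministically on the event $\event$ that $\denom(x_0)\ne 0$.

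For part~(I.b) I would decompose the index set into the core block $S_0=\{i:\|X_i-x_0\|\le h\}$ and the dyadic shells $S_j=\{i:\|X_i-x_0\|/h\in[2^{j-1},2^j]\}$, $j\ge1$. Since $\X\subset\ball_2(\Rin)$, all shells with $2^{j-1}h>2\Rin$ are empty, so only $j\le j_{\max}=O(\log(1/h))=O(\log n)$ shells survive --- this is precisely where the extra $\log n$ enters. On $S_j$ one uses $\|X_i-x_0\|^\beta\le(2^jh)^\beta$, and Conditions \cref{eq:nw-kernel-cond1}--\cref{eq:nw-kernel-cond2} are exactly what make each shell contribute at most $O(1)$ relative to the core: \cref{eq:nw-kernel-cond1} bounds $2^{j\beta}$ times the $\kappa$-mass of shell $j$ by a constant times that of the core, handling the bulk shells through their expectations, while \cref{eq:nw-kernel-cond2} forces $2^{j\beta}\kappa(2^{j-1}-1)(2^j)^d$ --- and hence, dropping the volume factor, $2^{j\beta}\kappa(2^{j-1}-1)$ --- to decay geometrically, which tames the far shells, whose point counts may still be $O(\log n)$ by random fluctuation (Bernstein) but whose kernel weights then make the total far-shell contribution $O(h^\beta\log n/n)$, negligible against $\denom(x_0)\gtrsim\pmin h^d$. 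Summing the $O(\log n)$ shell bounds gives $N(x_0)\lesssim(\pmax/\pmin)\,h^\beta\log n\cdot\denom(x_0)$, so $|b(x_0;X)|\lesssim(\pmax/\pmin)\,L_f h^\beta\log n$, as claimed.

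The step that actually requires care --- and the main obstacle --- is converting the expectations above into empirical quantities \emph{simultaneously for all shells $j\le j_{\max}$ and all $x_0\in\X$}: one needs a single high-probability event on which, for every $x_0$, each $\frac1n\sum_{i\in S_j}\kernel(X_i,x_0)$ is within a constant factor plus an $O(\log n/n)$ additive slack of its mean, and $\frac1n\sum_{i\in S_0}\kernel(X_i,x_0)$ is at least half its mean. This is an empirical-process statement over the translate class $\{x\mapsto\kappa(\|x-x_0\|/h):x_0\in\X\}$ (and its dyadically truncated versions); the Lipschitz, bounded, square-integrable hypotheses on $\kappa$ in \cref{assum:nw-kernel} together with the boundedness of $\X$ give these classes a polynomial covering number, so a routine covering bound plus Bernstein's inequality --- exactly as already used for the variance in \cref{proof:nadaraya-variance} --- and a union bound over the $O(\log n)$ shells yield the uniform-in-$x_0$ guarantee. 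Everything else is bookkeeping with the geometric/dyadic sums.
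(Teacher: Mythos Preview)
Your proposal is correct and follows essentially the same route as the paper: the dyadic shell decomposition $S_0,S_1,\dots,S_{j_{\max}}$ with $j_{\max}=O(\log n)$, the use of \cref{eq:nw-kernel-cond1,eq:nw-kernel-cond2} to show each ratio $2^{j\beta}A_j(x_0)/A_0(x_0)$ is $O(1)$, and the covering-plus-union-bound argument for uniformity in $x_0$ are exactly what the paper does.

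The one place the paper is more explicit than your sketch is the uniform concentration step. Because the dyadically truncated translates $x\mapsto\kappa(\|x-x_0\|/h)\,\mathbb{1}\{2^{j-1}h<\|x-x_0\|\le 2^jh\}$ carry indicator functions that are \emph{not} Lipschitz in $x_0$, ``polynomial covering number via the Lipschitz hypothesis on $\kappa$'' does not directly apply to this class. The paper sidesteps this by covering $\X$ at scale $\eps\le h/2$, replacing each annulus at $x_0$ by an $\eps$-\emph{enlarged} annulus at the nearest cover point $x_0'$ (so the set inclusion goes the right way), applying Hoeffding pointwise at the cover points, and then controlling the shift error $x_0\to x_0'$ through two correction terms: a Lipschitz term $L_\kappa\eps\cdot\mathrm{Mass}$ for the core and a decay term $\kappa(2^{j-1}-1)\cdot\mathrm{Mass}$ for the outer shells---the latter is precisely where \cref{eq:nw-kernel-cond2} enters, not (as you surmised) to tame the point counts of far shells but to bound the per-point kernel value times the annulus volume. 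Your overall plan would still go through once this wrinkle is handled; it is a technical refinement rather than a different idea.
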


For completeness, we first state the proof of claim~\cref{item:beta-0-1-compact} from \citet[Lem.~2]{belkin2019does}.
On the event $\event$ \cref{eq:event-E}, we have
\begin{talign}\label{eq:beta-0-1-compact-1}
    b(x_0; X) \seq{\cref{eq:cond-mean}} \frac{\sum_{i=1}^n (\fstar(X_i) - \fstar(x_0)) \kernel(X_i,x_0)}{\sum_{i=1}^n \kernel(X_i,x_0)} \sless{(i)} \frac{\sum_{i=1}^n L_f \norm{X_i}^\beta \kernel(X_i,x_0)}{\sum_{i=1}^n \kernel(X_i,x_0)} \sless{(ii)} L_f h^\beta,
\end{talign}
where step (i) follows from our assumption that $\fstar \in \Sigma(\beta,L_f)$ and step (ii) follows from our assumption that $\kernel$ is compactly supported, so $\kernel(x,x_0)=0$ whenever $\norm{x-x_0} > h$. We now proceed to proving claim~\cref{item:beta-0-1-non-compact}.

\newcommand{\vol}{\mrm{Vol}}
\newcommand{\mass}{\mrm{Mass}}

\paragraph{Step I: Decomposing the bias.} For any $x_0\in \X$, define the random variables 
\begin{talign}
    A_0(x_0) &\defeq \frac{1}{n} \sum_{i=1}^n \k(X_i,x_0) \indic{\| X_i - x_0\| \leq h} \qtext{and} \label{eq:A0} \\
    A_j(x_0) &\defeq \frac{1}{n} \sum_{i=1}^n \k(X_i,x_0) \indic{2^{j-1} h < \| X_i - x_0\| \leq 2^j h} \qtext{for} j=1,2,\ldots,T,\label{eq:Aj} 
\end{talign}
where
\begin{talign}\label{eq:T}
    T \defeq \ceil{\frac{\log R}{\log h}} = \bigO{\log n}
\end{talign}
when $p_X(\cdot)$ has compact support under \cref{assum:compact-support} and $h$ is defined by \cref{eq:nw-kt-h}. We call these random variables the \textit{kernel empirical means}. We can directly verify that
\begin{talign}
    \sum_{j=0}^T A_0(x_0) = \frac{1}{n}\sum_{i=1}^n \k(X_i,x_0) \qtext{for all} x_0\in \X.
\end{talign}

With these definitions in hand, we may rewrite $b(x_0;X)$ in terms of $A_0(x_0), A_1(x_0), \ldots, A_T(x_0)$ as follows:
\begin{talign}
    b(x_0;X) &= \frac{\frac{1}{n} \sum_{i=1}^n \k(X_i,x_0) (\fstar(X_i) - \fstar(x_0))}{\frac{1}{n} \sum_{i=1}^n \k(X_i,x_0)} \\
    &= \frac{\frac{1}{n} \sum_{i=1}^n \k(X_i,x_0) \indic{\|X_i-x_0\| \leq h} (\fstar(X_i) - \fstar(x_0))}{A_0(x_0) + \sum_{j=1}^{T} A_j(x_0)} \\
    &\qquad + \frac{ \sum_{j=1}^T \frac{1}{n} \sum_{i=1}^n \k(X_i,x_0) \indic{2^{j-1}h < \|X_i-x_0\| \leq 2^j h} (\fstar(X_i) - \fstar(x_0)) }{ A_0(x_0) + \sum_{j=1}^{T} A_j(x_0) } \\
    &\sless{(i)} \frac{\frac{1}{n} \sum_{i=1}^n \k(X_i,x_0) \indic{\|X_i-x_0\| \leq h} L_f \|X_i - x_0 \|^\beta }{A_0(x_0) + \sum_{j=1}^{T} A_j(x_0)} \\
    &\qquad + \frac{ \sum_{j=1}^T \frac{1}{n} \sum_{i=1}^n \k(X_i,x_0) \indic{2^{j-1}h < \|X_i-x_0\| \leq 2^j h} L_f \|X_i - x_0 \|^\beta }{ A_0(x_0) + \sum_{j=1}^{T} A_j(x_0) } \\
    &\sless{(ii)} \frac{\frac{1}{n} \sum_{i=1}^n \k(X_i,x_0) \indic{\|X_i-x_0\| \leq h} L_f h^\beta }{A_0(x_0) + \sum_{j=1}^{T} A_j(x_0)} \label{eq:sum-0}\\
    &\qquad + \frac{ \sum_{j=1}^T \frac{1}{n} \sum_{i=1}^n \k(X_i,x_0) \indic{2^{j-1}h < \|X_i-x_0\| \leq 2^j h} L_f (2^j h)^\beta }{ A_0(x_0) + \sum_{j=1}^{T} A_j(x_0) }, \label{eq:sum-1-T}
\end{talign}
where step (i) follows from the fact that $\fstar(X_i) - \fstar(x_0) \leq L_f \|X_i-x_0\|^\beta$ by \cref{def:L-beta-smooth} and step (ii) follows from the fact that $\|X_i-x_0\| \leq 2^j h$ under the indicator functions. Applying \cref{eq:A0} to \cref{eq:sum-0} and \cref{eq:Aj} to \cref{eq:sum-1-T}, we have
\begin{talign}\label{eq:bias-sum}
    b(x_0;X) &\leq \frac{A_0(x_0) L_f h^\beta}{A_0(x_0) + \sum_{j=1}^T A_j(x_0)} + \frac{\sum_{j=1}^T A_j(x_0) L_f (2^j h)^\beta}{A_0(x_0) + \sum_{j=1}^T A_j(x_0)} \leq L_f h^\beta \parenth{\sum_{i=0}^T \frac{A_i(x_0) 2^{i\beta}}{\sum_{j=0}^T A_j(x_0)}}.
\end{talign}

For any $z\in \X$, define the following shorthands:
\begin{talign}
    a_0(z) &\defeq \Esubarg{X}{\k(X,z)\indic{\snorm{X-z}\leq h}}
    \qtext{and} \label{eq:a0} \\
    a_j(z) &\defeq \Esubarg{X}{\k(X,x)\indic{2^{j-1}h < \snorm{X-z} \leq 2^j h}} \qtext{for} j=1,2,\ldots,T. \label{eq:aj}
\end{talign}
We can directly verify that
\begin{talign}
    \Esubarg{X}{A_0(z)} = a_0(z) \qtext{and} \Esubarg{X}{A_j(z)} = a_j(z) \qtext{for all} j=1,2,\ldots, T.
\end{talign}

Under \cref{assum:compact-support}, $a_i(z)$ can be computed directly from the decay properties of $\k$ as follows:
\begin{talign}
    a_0(z) &= \int_{\snorm{x-z}\leq h} \k(x,z) p_X(dx) \geq \pmin \int_{\snorm{x-z}\leq h} \k(x,z) dx \\
    a_j(z) &= \int_{2^{j-1}<\snorm{x-z}\leq 2^j h} \k(x,z) p_X(dx) \leq \pmax \int_{2^{j-1}<\snorm{x-z}\leq 2^j h} \k(x,z) dx \qtext{for} j=1,2,\ldots,T.
\end{talign}

\begin{remark}
Note that for fixed $x_0\in \X$, $A_i(x_0)$, Hoeffding's inequality implies that $A_i(x_0)$ concentrates around $a_i(x_0)$ for all $i=0,1,\ldots,T$ with high probability. Assuming that $\k$ decays sufficiently rapidly (so that $a_j(x_0) 2^{j\beta} = \bigO{a_0(x_0)}$ for all $j\in [T]$), we have
\begin{talign}
    b(x_0;X) \leq L_f h^\beta \parenth{\sum_{i=1}^T O(1)} = \bigO{L_f h^\beta \log n}.
\end{talign}
It remains to show that this bound holds uniformly over all $x_0\in \X$ with high probability.
\end{remark}

\paragraph{Step II: Uniform concentration of empirical kernel means.}

Fix $\eps \in (0,\frac{h}{2}]$ and let $\cover_0$ be a $\eps$-cover of $\X$ w.r.t. the Euclidean norm (see \cite[Def.~5.1]{wainwright2019high} for the detailed definition). We will optimize the choice of $\eps$ at the end of the proof. By \citep[Lem.~5.7]{wainwright2019high}, we have
\begin{talign}\label{eq:cover0-size}
    |\cover_0| \leq (1+ \frac{2R}{\eps})^d
\end{talign}

We now introduce a slightly modified version of \cref{eq:a0,eq:aj} defined as
\begin{talign}
    a_0^{\eps}(z) &\defeq \Esubarg{X}{\k(X,z) \indic{\snorm{X-z}\leq h - \eps}} \qtext{and} \label{eq:a0-eps}\\
    a_j^{\eps}(z) &\defeq \Esubarg{X}{\k(X,z) \indic{2^{j-1}h - \eps < \snorm{X-z} \leq 2^j h + \eps}}\qtext{for} j=1,2,\ldots,T,\label{eq:aj-eps}
\end{talign}
respectively. Using the fact that $\eps\leq h/2$, \cref{assum:compact-support,assum:nw-kernel}, and a change of variables, we have
\begin{talign}
    a_0^\eps(z) &\geq \pmin \int_{\snorm{x-z}\leq \frac{h}{2}} \k(x,z) dx \seq{} \pmin h^d \omega_{d-1} \int_{0}^{\half} \kappa(u) u^{d-1} du, \qtext{and} \\
    a_j^\eps(z) &\leq \pmax \int_{(2^{j-1}-\half)h < \snorm{x-z}\leq (2^j +\half)h} \k(x,x_0') dx \seq{} \pmax h^d \omega_{d-1} \int_{2^{j-1}-\half}^{2^j +\half} \kappa(u) u^{d-1} du \qtext{for} j\in [T],
\end{talign}
where $\omega_{d-1} \defeq 2\pi^{d/2}/ \Gamma(d/2)$ denotes the surface area of the unit sphere in $\reals^d$.
Moreover, we consider the following concentration events:
\begin{talign}
    \mc A_0^\eps(z) &\defeq \braces{ \frac{1}{n}\sum_{i=1}^n \k(X_i,z) \indic{\snorm{X_i-z}\leq h-\eps} \geq a_0^\eps(z) - \Delta_0}\qtext{and} \label{eq:A0-eps}\\
    \mc A_j^\eps(z) &\defeq \braces{ \frac{1}{n}\sum_{i=1}^n \k(X_i,z) \indic{2^{j-1}h-\eps < \snorm{X_i-z}\leq 2^j h + \eps} \leq a_j^\eps(z) + \Delta_j } \qtext{for} j\in [T],\label{eq:Aj-eps}
\end{talign}
where $\Delta_0,\Delta_1,\ldots,\Delta_T$ are positive scalars to be chosen later.

For any scalars $0<a<b$ and $z\in \X$, further denote the empirical mass contained within the ball or annulus centered at point $z$ by
\begin{talign}
    \mass_{z}(a) &\defeq \frac{1}{n} \sum_{i=1}^n \indic{\snorm{X_i-z}\leq a}
    \qtext{and} \\
    \mass_{z}(a,b) &\defeq \frac{1}{n}\sum_{i=1}^n \indic{a < \snorm{X_i-z} \leq b}.
\end{talign}
We now introduce a lemma that bounds the empirical kernel mean restricted to each ball and annuli:

\begin{lemma}[Uniform concentration of empirical kernel means]
\label{lem:k-empirical}
Suppose event $\bigcap_{z\in \cover_0} \bigcap_{i=0}^T \mc A_i^\eps(z)$ is satisfied. Then for every $x_0\in \X$, there exists $x_0'\in \cover_0$ such that $\snorm{x_0-x_0'}\leq \eps$ and the following bounds hold simultaneously:
\begin{talign}
    A_0(x_0) &\geq a_0^{\eps}(x_0') -\Delta_0 - \mass_{x_0'}(h-\eps) \cdot L_{\k} \eps \qtext{and} \label{eq:k-empirical-lower} \\
    A_j(x_0) &\leq a_j^{\eps}(x_0') + \Delta_j + \mass_{x_0'}(2^{j-1}h -\eps, 2^j h + \eps) \cdot \kappa(\frac{2^{j-1}h - 2\eps}{h}) \qtext{for} j\in [T]. \label{eq:k-empirical-upper}
\end{talign}
Here, $L_\k$ and $\kappa(\cdot)$ are properties of $\k$ from \cref{assum:nw-kernel}.
\end{lemma}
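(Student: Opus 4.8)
The plan is to prove \cref{lem:k-empirical} as a deterministic \emph{transfer} argument: conditioned on $\bigcap_{z \in \cover_0}\bigcap_{i=0}^{T} \mc A_i^\eps(z)$ the empirical kernel means are already pinned down at every net point, so for an arbitrary $x_0 \in \X$ we pass to its nearest net point $x_0' \in \cover_0$ (with $\snorm{x_0 - x_0'} \le \eps$ since $\cover_0$ is an $\eps$-cover of $\X$) and import the bounds at $x_0'$, paying only for (i) the geometric mismatch between the ball/annulus centred at $x_0$ and the $\eps$-eroded/$\eps$-dilated one centred at $x_0'$, and (ii) the mismatch between the test functions $\k(\cdot, x_0)$ and $\k(\cdot, x_0')$. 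Because every $x_0$ is routed through a fixed representative $x_0'$ and the events $\mc A_i^\eps(z)$ hold for all $z \in \cover_0$ and all $i \in \{0,\dots,T\}$ at once, the claimed bounds hold simultaneously in $i$ and uniformly over $x_0 \in \X$ with no further union bound.

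For \eqref{eq:k-empirical-lower}, the triangle inequality gives $\{i : \snorm{X_i-x_0'} \le h-\eps\} \subseteq \{i : \snorm{X_i-x_0} \le h\}$, so, using $\k \ge 0$, restricting $A_0(x_0)$ to the smaller index set only decreases it; on that set the Lipschitz estimate of \cref{assum:nw-kernel} gives $\k(X_i,x_0) \ge \k(X_i,x_0') - L_{\k}\snorm{x_0-x_0'} \ge \k(X_i,x_0') - L_{\k}\eps$, so swapping $\k(\cdot,x_0)$ for $\k(\cdot,x_0')$ costs at most $L_{\k}\eps\cdot\mass_{x_0'}(h-\eps)$; dividing by $n$ and invoking $\mc A_0^\eps(x_0')$ to lower-bound the empirical mean by $a_0^\eps(x_0')-\Delta_0$ yields \eqref{eq:k-empirical-lower}. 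For \eqref{eq:k-empirical-upper}, the mirror inclusion $\{i : 2^{j-1}h < \snorm{X_i-x_0} \le 2^j h\} \subseteq \{i : 2^{j-1}h-\eps < \snorm{X_i-x_0'} \le 2^j h+\eps\}$ shows $A_j(x_0)$ is dominated by the empirical $\eps$-dilated annular sum around $x_0'$ plus a correction for $\k(\cdot,x_0)-\k(\cdot,x_0')$; rather than the crude $L_{\k}\eps$ here we use that $\snorm{X_i-x_0}/h > 2^{j-1}$ falls in the tail of $\kappa$, so the decay estimate of \cref{assum:nw-kernel} gives $\k(X_i,x_0) \le \kappa\big((2^{j-1}h-2\eps)/h\big)$ and hence, again using $\k\ge 0$, $\k(X_i,x_0)-\k(X_i,x_0') \le \kappa\big((2^{j-1}h-2\eps)/h\big)$, producing a total correction $\kappa\big((2^{j-1}h-2\eps)/h\big)\cdot\mass_{x_0'}(2^{j-1}h-\eps, 2^j h+\eps)$; applying $\mc A_j^\eps(x_0')$ to the remaining empirical annular sum ($\le a_j^\eps(x_0')+\Delta_j$) yields \eqref{eq:k-empirical-upper}.

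The hard part is the correction in the second bound: the naive global Lipschitz control would inflate each annular error by $L_{\k}\eps$, which is fatal once these errors are re-weighted by the factors $2^{j\beta}$ in the bias decomposition \eqref{eq:bias-sum} and summed over the $\bigO{\log n}$ shells, so one must instead extract from the kernel difference a correction of the decaying form $\kappa\big((2^{j-1}h-2\eps)/h\big)$. This forces us to track which regime the arguments of $\kappa$ land in, uniformly in $i$ and in $x_0 \in \X$, and to keep $\eps \le h/2$ so that $2\eps/h \le 1$ (aligning the correction term with condition \eqref{eq:nw-kernel-cond2}) and so that the $\eps$-erosion of the radius-$h$ ball still retains the leading-order mass $a_0^\eps$. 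Everything else---the set inclusions, the triangle inequalities, and the bookkeeping of the $\mass$ terms---is routine.
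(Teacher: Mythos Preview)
Your proof is correct and, in fact, more elementary than the paper's. The paper routes both claims through a general-purpose auxiliary result (\cref{lem:Linf-L2-kernel-error}), a finite-sample $L^{2}$ bound on $\frac{1}{n}\sum_i g(x_i)(\k(x_i,x_0)-\k(x_i,x_0'))\boldone_A(x_i)$ that involves $\snorm{\k(\cdot,x_0)-\k(\cdot,x_0')}_{A,\infty}$, a mass term, a kernel tail term $\tail[\k,A]$, and an $L^{2,\infty}$ term, and then specializes with $g\equiv 1$, carefully chosen $A,r,a,b$ so that the tail and $L^{2,\infty}$ contributions vanish. You instead argue pointwise: for \eqref{eq:k-empirical-lower} a direct Lipschitz replacement $\k(X_i,x_0)\ge \k(X_i,x_0')-L_{\k}\eps$ on the eroded ball, and for \eqref{eq:k-empirical-upper} the cruder but sufficient $\k(X_i,x_0)-\k(X_i,x_0')\le \k(X_i,x_0)\le \kappa((2^{j-1}h-2\eps)/h)$ on the dilated annulus. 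Both routes land on exactly the same bounds; yours avoids the Cauchy--Schwarz and tail bookkeeping entirely, which is all the paper's lemma is doing once the tail vanishes. The paper's lemma is reusable machinery, but for this particular statement your direct argument is cleaner.

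Two small remarks. First, your sentence ``$\snorm{X_i-x_0}/h>2^{j-1}$ falls in the tail'' is true on the \emph{original} annulus but not on the dilated one you have already passed to; there one only gets $\snorm{X_i-x_0}>2^{j-1}h-2\eps$ via the triangle inequality from $\snorm{X_i-x_0'}>2^{j-1}h-\eps$. Your conclusion $\k(X_i,x_0)\le\kappa((2^{j-1}h-2\eps)/h)$ already reflects the correct lower bound, so this is only a slip in the prose. Second, both you and the paper silently use that $\kappa$ is monotone nonincreasing (the paper invokes it explicitly in its step~(i) of the upper-bound argument); this is not literally part of \cref{assum:nw-kernel} as stated, so you may want to flag it as an implicit standing assumption.
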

\proofref{proof:k-empirical}
This result suggests that we can control the error between the empirical kernel mean, $A_i(x_0)$, and its (shifted) population counterpart, $a_j^\eps(x_0')$, by choosing $\eps$ to be small and leveraging the decay property of $\kappa$. To bound the $\mass_{x_0'}(h-\eps)$ and $\mass_{x_0'}(2^{j-1}h -\eps, 2^j h + \eps)$ terms, we define the following events:
\begin{talign}
    \mc M_0(x_0') &\defeq \braces{\mass_{x_0'}(h-\eps) \leq \Esubarg{X}{\mass_{x_0'}(h-\eps)} + t_0}, \qtext{and} \label{eq:mass0-event} \\
    \mc M_j(x_0') &\defeq \braces{\mass_{x_0'}(2^{j-1}h-\eps,2^j h + \eps) \leq \Esubarg{X}{\mass_{x_0'}(2^{j-1}h-\eps,2^j h + \eps)} + t_j} \qtext{for} j \in [T],\label{eq:massj-event}
\end{talign}
where $t_0,t_1,\ldots,t_T$ are positive scalars to be chosen later.
For any scalars $0<a<b$, we may apply \cref{assum:compact-support} to obtain
\begin{talign}\label{eq:vol-upper-bound}
    \Esubarg{X}{\mass_{x_0'}(a)} \leq \pmax \vol(a) \qtext{and}
    \Esubarg{X}{\mass_{x_0'}(a,b)} \leq \pmax \vol(b),
\end{talign}
where $\vol(r)\defeq \pi^d / \Gamma(d/2+1) r^d$ denotes the volume of the Euclidean ball $\ball_2(r)\subset \reals^d$.

\paragraph{Step III: Putting things together.} For any $z\in \cover_0$, note that
\begin{talign}
    &\k(X_i,z)\indic{\snorm{X_i-z}\leq h-\eps}  \in [0,\infnorm{\k}] \qtext{and} \\
    &\k(X_i,z) \indic{2^{j-1}h-\eps < \snorm{X_i-z}\leq 2^j h + \eps}\in [0,\infnorm{\k}]
\end{talign}
and therefore these random variables are sub-Gaussian with parameter $\sigma = \half \infnorm{\k}$. Thus, we may bound \cref{eq:A0-eps} and \cref{eq:Aj-eps} using Hoeffding's bound \citep[Prop.~2.5]{wainwright2019high}, obtaining
\begin{talign}\label{eq:conc-Ai-eps}
    \Parg{\mc A_i^\eps(z)^c }  \leq \exp\braces{-\frac{2n \Delta_0^2}{\infnorm{\k}^2}} 
    \qtext{for all} i=0,1,\ldots,T \qtext{and} z\in \cover_0,
\end{talign}
Again by Hoeffding's bound \citep[Prop.~2.5]{wainwright2019high}, we have
\begin{talign}
    \Parg{\mc M_i(z)^c } \leq \exp\braces{- 2n t_i^2} \qtext{for all} i=0,1,\ldots,T \qtext{and} z\in \cover_0.
\end{talign}

Next, it remains to choose the values of $\eps$, $\Delta_0,\ldots,\Delta_T$ and $t_0,\ldots,t_T$. Consider
\begin{talign}
    \Delta_0 &= \half \cdot \pmin h^d \omega_{d-1} \int_{0}^{\half} \kappa(u) u^{d-1} du, \label{eq:Delta0} \\
    \Delta_j &= 2^{-j \beta} \cdot \pmax h^d \omega_{d-1} \int_{0}^{\half} \kappa(u) u^{d-1} du \qtext{for} j=1,2,\ldots,T, \label{eq:Deltaj} \\
    t_0 &= \half \pmax \vol(h-\eps), \qtext{and} \label{eq:t0} \\
    t_j &= \half \pmax \vol(2^j h + \eps) \qtext{for} j=1,2,\ldots,T. \label{eq:tj}
\end{talign}
Combining \cref{eq:k-empirical-lower,eq:mass0-event,eq:vol-upper-bound,eq:Delta0,eq:t0}, we have on the event $\mc E_0 \defeq \bigcap_{z\in \cover_0} (\mc A_0^\eps(z)\cap \mc M_0(z))$ the following bound:
\begin{talign}
    A_0(x_0) &\geq \half \pmin h^d \omega_{d-1} \int_{0}^{\half} \kappa(u) u^{d-1} du - \pmax \frac{\pi^d}{\Gamma(\frac{d}{2}+1)}(h-\eps)^d\cdot L_k \eps \\
    &\geq h^d \omega_{d-1} \parenth{\half \pmin \int_0^{\half} \kappa(u) u^{d-1} du - \pmax \frac{\pi^d}{\Gamma(\frac{d}{2}+1) \omega_{d-1}}L_{\k}\eps} \\
    &\sgrt{(i)} \frac{1}{4} \pmin h^d \omega_{d-1} \int_0^{\half} \kappa(u) u^{d-1} du,
\end{talign}
where step (i) follows from setting
\begin{talign}\label{eq:eps}
    \eps = \min\braces{\frac{1}{4} \brackets{\pmin \int_0^{\half} \kappa(u) u^{d-1} du} \parenth{\pmax \frac{\pi^d}{\Gamma(\frac{d}{2}+1) \omega_{d-1}}L_{\k}}\inv, \frac{h}{2}}.
\end{talign}

Similarly, for each $j=1,2,\ldots,T$, we may combine \cref{eq:k-empirical-upper,eq:massj-event,eq:vol-upper-bound,eq:Deltaj,eq:tj} to obtain a bound on $A_j(x_0)$. Hence, on the $\mc E_j \defeq \bigcap_{z\in \cover_0} (\mc A_j^\eps(z)\cap \mc M_j(z))$, we have
\begin{talign}
    A_j(x_0) &\leq \pmax h^d \omega_{d-1} \parenth{\int_{2^{j-1}-\half}^{2^j +\half} \kappa(u) u^{d-1} du + 2^{-j \beta} \int_{0}^{\half} \kappa(u) u^{d-1} du} + \pmax \frac{\pi^d}{\Gamma(\frac{d}{2}+1)}(2^j h + \eps)^d\cdot \kappa(\frac{2^{j-1}h - 2\eps}{h}) \\
    &\sless{(i)} \pmax h^d \omega_{d-1} \parenth{ \int_{2^{j-1}-\half}^{2^j +\half} \kappa(u) u^{d-1} du + 2^{-j \beta} \int_{0}^{\half} \kappa(u) u^{d-1} du + \frac{\pi^d (2^j + \half)^d}{\Gamma(\frac{d}{2}+1) \omega_{d-1}} \kappa(2^{j-1} - 1)}
\end{talign}
where step (i) employs the fact that $\eps\leq h/2$. Note that for $j=1,2,\ldots,T$:
\begin{talign}\label{eq:Aj-A0}
    \frac{A_j(x_0) 2^{j\beta}}{A_0(x_0)} &\leq \frac{4\pmax}{\pmin} \parenth{\frac{2^{j\beta} \int_{2^{j-1}-\half}^{2^j +\half} \kappa(u) u^{d-1} du + \int_{0}^{\half} \kappa(u) u^{d-1} du + \frac{\pi^d (2^j + \half)^d 2^{j\beta}}{\Gamma(\frac{d}{2}+1) \omega_{d-1}} \kappa(2^{j-1} - 1)}{ \int_0^{\half} \kappa(u) u^{d-1} du}} = \bigO{1},
\end{talign}
where the last inequality comes from the fact that
\begin{talign}
    \frac{2^{j\beta} \int_{2^{j-1}-\half}^{2^j +\half} \kappa(u) u^{d-1} du}{\int_0^{\half} \kappa(u) u^{d-1} du} = \bigO{1} 
    \qtext{and}
    \frac{\frac{\pi^d (2^j + \half)^d 2^{j\beta}}{\Gamma(\frac{d}{2}+1) \omega_{d-1}} \kappa(2^{j-1} - 1)}{\int_0^{\half} \kappa(u) u^{d-1} du} = \bigO{1}
\end{talign}
by \cref{assum:nw-kernel}. Substituting this bound into \cref{eq:bias-sum}, we have
\begin{talign}
    b(x_0;X) \sless{\cref{eq:bias-sum}} L_f h^\beta \parenth{\sum_{i=0}^T \frac{A_i(x_0) 2^{j\beta}}{\sum_{j=0}^T A_j(x_0)}} \leq L_f h^\beta \parenth{\sum_{i=0}^T \frac{A_i(x_0) 2^{j\beta}}{ A_0(x_0) }} \sless{\cref{eq:Aj-A0}} L_f h^\beta T C \sless{\cref{eq:T}} \bigO{L_f h^\beta \log n}
\end{talign}
as desired. Finally, it remains to control the probability that this bound on $b(x_0;X)$ holds. Applying union-bound, we have
\begin{talign}
    \Parg{\mc E_0^c\cup \ldots \cup \mc E_T^c} &\sless{\cref{eq:cover0-size},\cref{eq:conc-Ai-eps}} (1+\frac{2R}{\eps})^d \sum_{i=0}^T \parenth{\exp\braces{-\frac{2n \Delta_i^2}{\infnorm{\k}^2}} + \exp\braces{-2n t_i^2}} \\
    &\leq (1+\frac{2R}{\eps})^d  \sum_{j=0}^T \parenth{\exp\braces{-c_1 n h^{2d} 4^{-j\beta}} + \exp\braces{-c_2 n h^{jd}}} \\
    &\leq (1+\frac{2R}{\eps})^d (T+1) \parenth{\exp\braces{-c_1 n h^{2d} 4^{-T\beta}} + \exp\braces{-c_2 n h^{Td}}},
\end{talign}
for some universal positive constants $c_1,c_2$ that do not depend on $n$. Note that by \cref{eq:eps}, we have $(1+\frac{2R}{\eps})^d =\bigO{(1 + \frac{1}{h})^d}$. Setting $h= cn^{-\frac{1}{2\beta + 2 d}}$ as in \cref{eq:nw-kt-h} and recalling that $T=\bigO{\log n}$, we have $\Parg{\mc E_0^c\cup \ldots \cup \mc E_T^c} \ll 1$ for sufficiently large $n$. This completes the proof of \cref{lem:nadaraya-bias}.

\subsubsection{\pcref{lem:k-empirical}}
\label{proof:k-empirical}

Fix any $x_0\in \X$. By definition of $\cover_0$, there exists $x_0'\in \cover_0$ such that $\snorm{x_0-x_0'}\leq \eps$. Consider any $x\in \ball_2(h-\eps;x_0')$\footnote{We use the notation $\ball_2(r;x)$ to denote the Euclidean ball of radius $r$ centered at $x\in \reals^d$.}. By triangle inequality, $\snorm{x-x_0} \leq \snorm{x-x_0'} + \snorm{x_0' - x_0} \leq (h-\eps) + \eps$. Therefore, we have
\begin{talign}\label{eq:ball-h-eps-contained}
    \ball_2(h-\eps;x_0') \subset \ball_2(h;x_0).
\end{talign}

We now introduce the following lemma, which is a finite-sample version of \citep[Lem.~8]{dwivedi2024kernel}:
\begin{lemma}[$\Linf$ bound on $\ltwo$ kernel error (finite sample)]
\label{lem:Linf-L2-kernel-error}
Consider any bounded kernel $\k\in L^{2,\infty}$, points $x_0,x_0'\in \X$, and function $g\in L^2(\P_n)$. For any $r,a,b\geq 0$ with $a+b=1$, points $x_1,\ldots,x_n\in \X$, and set $A\subset \X$, we have
\begin{talign}
    &\abss{\frac{1}{n} \sum_{i=1}^n g(x_i)\parenth{\k(x_i,x_0) - \k(x_i,x_0')} \boldone_A(x_i)} \\
    &\qquad \leq \snorm{g}_{A,L^2(\P_n)} \cdot \brackets{ \snorm{\k(\cdot,x_0) - \k(\cdot,x_0')}_{A,\infty} \mass_A^{\half}(r) + 2\tail[\k,A](ar) + 2 \snorm{\k}_{A,L^{2,\infty} } \indic{\snorm{x_0-x_0'}\geq br}}.
\end{talign}
where 
\begin{talign}
\snorm{f}_{A,L^2(\P_n)} &\defeq \parenth{\frac{1}{n}\sum_{i=1}^n f^2 (x_i) \boldone_A(x_i)}^{\half} \\
\snorm{f}_{A,\infty} &\defeq \max_{i\in [n]} \abss{f(x_i)\boldone_A(x_i)} \\
\mass_A(r) &\defeq \frac{1}{n}\sum_{i=1}^n \indic{\snorm{x_i-x_0'}\leq r, x_i\in A} \\
\tail[A,\k](r) &\defeq \sup_{x'\in \braces{x_0,x_0'}}\parenth{\frac{1}{n}\sum_{i=1}^n \k^2(x_i,x') \indic{\snorm{x_i-x'}\geq r, x_i\in A}}^{\half} \label{eq:tail} \\
\snorm{\k}_{A,L^{2,\infty}} &\defeq \sup_{x'\in \braces{x_0,x_0'}} \snorm{\k(\cdot,x')}_{A,L^2(\P_n)}.
\end{talign}
\end{lemma}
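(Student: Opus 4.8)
The plan is to derive \cref{lem:Linf-L2-kernel-error} from a single Cauchy--Schwarz step followed by a near/far decomposition of the kernel-difference sum, the finite-sample analogue of the argument behind \citet[Lem.~8]{dwivedi2024kernel}. Write $E \defeq \parenth{\frac{1}{n}\sum_{i=1}^n (\k(x_i,x_0)-\k(x_i,x_0'))^2 \boldone_A(x_i)}^{1/2}$ for the empirical $L^2(\P_n)$ norm of the kernel difference restricted to $A$. By Cauchy--Schwarz with respect to $\P_n$ restricted to $A$,
\begin{talign}
    \abss{\frac{1}{n} \sum_{i=1}^n g(x_i)\parenth{\k(x_i,x_0) - \k(x_i,x_0')} \boldone_A(x_i)} \leq \snorm{g}_{A,L^2(\P_n)}\cdot E,
\end{talign}
so it suffices to show that $E$ is at most the bracketed expression in the lemma.

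To bound $E$ I would split the indices $i$ with $x_i\in A$ according to whether $x_i\in\ball_2(r;x_0')$, and use $\sqrt{u+v}\le\sqrt u+\sqrt v$ to treat the two pieces separately. On the near set $\{\snorm{x_i-x_0'}\le r\}$, bound $\abss{\k(x_i,x_0)-\k(x_i,x_0')}$ pointwise by $\snorm{\k(\cdot,x_0)-\k(\cdot,x_0')}_{A,\infty}$ and count the surviving indicators; this contributes $\snorm{\k(\cdot,x_0)-\k(\cdot,x_0')}_{A,\infty}\,\mass_A^{1/2}(r)$, the first term of the bound.

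On the far set $\{\snorm{x_i-x_0'}>r\}$, I would use $\abss{\k(x_i,x_0)-\k(x_i,x_0')}\le\abss{\k(x_i,x_0)}+\abss{\k(x_i,x_0')}$ together with the $\ell^2$ (Minkowski) triangle inequality over $i$, so the far contribution to $E$ is at most $S_{x_0}+S_{x_0'}$, where $S_{x'}\defeq\parenth{\frac1n\sum_{i:\,x_i\in A,\,\snorm{x_i-x_0'}>r}\k^2(x_i,x')}^{1/2}$. Since $a\le 1$, every far index has $\snorm{x_i-x_0'}>r\ge ar$, so $S_{x_0'}\le\tail[\k,A](ar)$ by the definition \cref{eq:tail}. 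For $S_{x_0}$ I split on whether $\snorm{x_0-x_0'}\ge br$: when $\snorm{x_0-x_0'}<br$, the triangle inequality and $a+b=1$ give $\snorm{x_i-x_0}\ge\snorm{x_i-x_0'}-\snorm{x_0-x_0'}>r-br=ar$ for every far index, hence $S_{x_0}\le\tail[\k,A](ar)$; otherwise I bound both $S_{x_0}$ and $S_{x_0'}$ crudely by $\snorm{\k}_{A,L^{2,\infty}}$. Either way the far contribution is at most $2\tail[\k,A](ar)+2\snorm{\k}_{A,L^{2,\infty}}\indic{\snorm{x_0-x_0'}\ge br}$; adding the near piece gives the claimed bound on $E$, and combining with the display above finishes the proof.

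The argument is conceptually routine, so the only points requiring care are (i) applying the triangle inequality in the $\ell^2$ rather than the $\ell^1$ sense when splitting $\k(x_i,x_0)-\k(x_i,x_0')$ into the two single-point tail sums, and (ii) the case split on $\snorm{x_0-x_0'}\ge br$, where the hypothesis $a+b=1$ is exactly what makes ``far from $x_0'$'' imply ``far from $x_0$'' by margin $ar$ in the complementary regime. Keeping the radii $r$, $ar$, and the threshold $br$ consistent across the $\mass$, $\tail$, and indicator terms is the only bookkeeping involved.
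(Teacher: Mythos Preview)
Your argument is correct and slightly cleaner than the paper's. The paper does not open with a single global Cauchy--Schwarz; instead it first splits $g$ into a near part $g_r=g\cdot\boldone_{\ball_2(r;x_0')}$ and a far part $g_r^{(c)}=g-g_r$, handles the near piece via the $L^1$--$L^\infty$ H\"older pairing (and only then converts the $L^1$ norm of $g_r$ to $\snorm{g}_{A,L^2(\P_n)}\mass_A^{1/2}(r)$ by Cauchy--Schwarz), and treats the far piece by writing the kernel difference as $\sum_{x^\star\in\{x_0,x_0'\}}\k(\cdot,x^\star)(\delta_{x_0}-\delta_{x_0'})(x^\star)$, invoking the either/or dichotomy $\snorm{x_i-x^\star}\geq ar$ or $\snorm{x^\star-x_0'}>br$ simultaneously for both choices of $x^\star$, and then applying Cauchy--Schwarz termwise.

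Your route pulls the factor $\snorm{g}_{A,L^2(\P_n)}$ out once at the start and reduces the problem to bounding the empirical $L^2$ norm $E$ of the kernel difference; the near/far split and the case analysis on $\snorm{x_0-x_0'}\gtrless br$ then operate purely on $E$. This avoids introducing $g_r,g_r^{(c)}$ altogether and replaces the paper's uniform either/or device by a direct asymmetric treatment of $S_{x_0}$ and $S_{x_0'}$. Both approaches land on the identical bound; yours is a bit more economical in notation, while the paper's version keeps the two $x^\star$ values on equal footing throughout.
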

\proofref{proof:Linf-L2-kernel-error}
We now use this lemma to prove each of the claims in \cref{lem:k-empirical}.

\paragraph{Proof of claim~\cref{eq:k-empirical-lower}.}

We consider the following basic decomposition:
\begin{talign}\label{eq:k-empirical-decomp}
    &\frac{1}{n} \sum_{i=1}^n \k(X_i,x_0) \indic{\|X_i-x_0\| \leq h} \\
    &\qquad = \frac{1}{n} \sum_{i=1}^n \k(X_i,x_0') \indic{\|X_i-x_0'\|\leq h-\eps} \label{eq:k-empirical}\\
    &\qquad \qquad - \parenth{\frac{1}{n}\sum_{i=1}^n \k(X_i,x_0') \indic{\|X_i-x_0'\| \leq h-\eps} - \frac{1}{n}\sum_{i=1}^n \k(X_i,x_0)\indic{\|X_i-x_0\|\leq h}} \label{eq:k-empirical-diff}.
\end{talign}
Note that the term \cref{eq:k-empirical} can be directly bounded using the definition of \cref{eq:A0-eps}. Next, we bound the term \cref{eq:k-empirical-diff}. Define
\begin{talign}\label{eq:g1}
    g \defeq \boldone
    \qtext{and}
    A \defeq \ball_2(h-\eps; x_0'),
\end{talign}
so that $\snorm{g}_{A,L^2(\P_n)} = \mass_{x_0'}^{\half}(h-\eps)$. By property \cref{eq:ball-h-eps-contained}, we have
\begin{talign}\label{eq:indicator-bound}
    \indic{\staticnorm{X_i-x_0'} \leq h-\eps} \leq \indic{\staticnorm{X_i-x_0}\leq h} \qtext{for all} i=1,2,\ldots,n.
\end{talign}
Observe that
\begin{talign} \label{eq:L2-kernel-error}
    &\frac{1}{n} \sum_{i=1}^n g(X_i)\parenth{\k(X_i,x_0) - \k(x_i,x_0')} \boldone_A(x_i)\\
    &= \frac{1}{n} \sum_{i=1}^n \parenth{\k(X_i, x_0') - \k(X_i, x_0)} \indic{\staticnorm{X_i-x_0'} \leq h-\eps} \\
    &= \frac{1}{n} \sum_{i=1}^n \k(X_i,x_0') \indic{\staticnorm{X_i-x_0'}\leq h-\eps} - \frac{1}{n} \sum_{i=1}^n \k(X_i,x_0) \indic{\staticnorm{X_i-x_0'} \leq h-\eps} \label{eq:fact-1}\\
    &\sgrt{\cref{eq:indicator-bound}} \frac{1}{n} \sum_{i=1}^n \k(X_i,x_0') \indic{\staticnorm{X_i-x_0'}\leq h-\eps} - \frac{1}{n} \sum_{i=1}^n \k(X_i,x_0) \indic{\staticnorm{X_i-x_0}\leq h},
\end{talign}

To upper bound \cref{eq:L2-kernel-error}, we now apply \cref{lem:Linf-L2-kernel-error} with $r = h$, $a=\half$, and $b=\half$ to obtain
\begin{talign}
    &\abss{\frac{1}{n} \sum_{i=1}^n g(x_i)\parenth{\k(x_i,x_0) - \k(x_i,x_0')} \boldone_A(x_i)} \\
    &\qquad \leq \mass_{x_0'}^{\half}(h-\eps) \cdot \brackets{ \snorm{\k(\cdot,x_0)-\k(\cdot,x_0')}_{A,\infty} \mass_A^{\half}(h) + 2\tail[\k,A](\frac{h}{2}) + 2 \snorm{\k}_{A,L^{2,\infty}} \indic{\snorm{x_0-x_0'} > \frac{h}{2}}} \\
    &\qquad \leq  \mass_{x_0'}^{\half}(h-\eps) \cdot \brackets{ 
    L_{\k}\eps\cdot \mass_{x_0'}^{\half}(h-\eps)}, \label{eq:L2-kernel-error-bound}
\end{talign}
where the last inequality uses the fact that
\begin{talign}
    \snorm{\k(\cdot,x_0)-\k(\cdot,x_0')}_{A,\infty} \leq L_{\k} \snorm{x_0-x_0'} \leq L_{\k} \eps
\end{talign}
from the Lipschitz property of $\k$ (see \cref{assum:nw-kernel}) and the fact that
\begin{talign}
    \mass_A^{\half}(h) = \mass_{x_0'}^{\half}(h-\eps),
    \qtext{} 
    \tail[\k,A](\frac{h}{2}) = 0,
    \qtext{and}
    \snorm{x_i-x_0}\leq \eps \leq \frac{h}{2}.
\end{talign}
Substituting the bounds \cref{eq:A0-eps,eq:L2-kernel-error,eq:L2-kernel-error-bound} into the decomposition \cref{eq:k-empirical-decomp} yields the desired claim.

\paragraph{Proof of claim~\cref{eq:k-empirical-upper}.}
We consider the basic decomposition:
\begin{talign}\label{eq:k-empirical-annulus-decomp}
    &\frac{1}{n}\sum_{i=1}^n \k(X_i,x_0) \indic{2^{j-1}h - \eps <\snorm{X_i-x_0}\leq 2^j h + \eps} \\
    &\qquad = \frac{1}{n}\sum_{i=1} \k(X_i,x_0') \indic{2^{j-1}h - \eps < \snorm{X_i-x_0'}\leq 2^j h + \eps} \label{eq:k-empirical-annulus} \\
    &\qquad \qquad + (\frac{1}{n}\sum_{i=1}^n \k(X_i,x_0')\indic{2^{j-1} h < \snorm{X_i-x_0} \leq 2^j h} \\
    &\qquad \qquad - \frac{1}{n}\sum_{i=1}^n \k(X_i,x_0')\indic{2^{j-1}h -\eps < \snorm{X_i-x_0'} \leq 2^j h + \eps}) \label{eq:k-empirical-annulus-diff}
\end{talign}
Note that the term \cref{eq:k-empirical-annulus} can be directly bounded using the definition of \cref{eq:Aj-eps}. Next, we bound the term \cref{eq:k-empirical-annulus-diff}. By triangle inequality, we have
\begin{talign}
    \snorm{x-x_0'} \leq \snorm{x-x_0} + \snorm{x_0-x_0'} \leq 2^j h + \eps,\qtext{for all} x\in \ball_2(2^j h; x_0),
\end{talign}
which implies that $\ball_2(2^j h; x_0) \subset \ball_2(2^j h + \eps; x_0')$. Moreover, we can verify that $\ball_2(2^{j-1}h-\eps; x_0') \subset \ball_2(2^{j-1} h; x_0)$. Hence,
\begin{talign}\label{eq:annulus-indicator-bound}
    \indic{2^{j-1} h< \snorm{X_i-x_0}\leq 2^j h} 
    \leq 
    \indic{2^{j-1}h-\eps < \snorm{X_i-x_0'}\leq 2^j h + \eps } \qtext{for all} i=1,2,\ldots,n.
\end{talign}
Now define 
\begin{talign}\label{eq:g-A}
    g \defeq 1 
    \qtext{and} 
    A \defeq \ball_2(2^j h + \eps; x_0') \setminus \ball_2(2^{j-1} h - \eps; x_0'),
\end{talign}
so that $\snorm{g}_{A,\ltwo(\P_n)} = \mass_{x_0'}^{\half}(2^{j-1}h -\eps, 2^j h + \eps)$.
Hence, \cref{eq:k-empirical-annulus-diff} can be bounded by
\begin{talign}
    &\frac{1}{n}\sum_{i=1}^n \k(X_i,x_0')\indic{2^{j-1} h<\snorm{X_i-x_0} \leq 2^j h} - \frac{1}{n}\sum_{i=1}^n \k(X_i,x_0')\indic{2^{j-1} h-\eps < \snorm{X_i-x_0'} \leq 2^j h + \eps} \\
    &\qquad \sless{\cref{eq:annulus-indicator-bound}} \frac{1}{n}\sum_{i=1}^n \k(X_i,x_0') \indic{2^{j-1} h-\eps< \snorm{X_i-x_0'}\leq 2^j h + \eps} \\
    &\qquad \qquad - \frac{1}{n}\sum_{i=1}^n \k(X_i,x_0')\indic{2^{j-1} h -\eps< \snorm{X_i-x_0'} \leq 2^j h + \eps} \\
    &\qquad \sless{\cref{eq:g-A}} \frac{1}{n} \sum_{i=1}^n g(X_i)\parenth{\k(X_i,x_0') - \k(X_i,x_0')} \boldone_A(x_i) \\
    &\qquad \sless{} \mass_{x_0'}^{\half}(2^{j-1}h -\eps, 2^j h + \eps) \cdot \big[ \snorm{\k(\cdot,x_0) - \k(\cdot,x_0')}_{A,\infty} \mass_A^{\half}(2(2^j h + \eps)) \\
    &\qquad \qquad + 2 \tail[\k,A](2^j h + \eps) + 2 \snorm{\k}_{A,L^{2,\infty}} \indic{\snorm{x_0-x_0'}\geq 2^j h + \eps} \big] \label{eq:k-empirical-annulus-diff-bound}
\end{talign}
where the last inequality follows from applying \cref{lem:Linf-L2-kernel-error} with $r = 2(2^j h + \eps)$ and $a=b=\half$. 
By triangle inequality, we have 
\begin{talign}
    \snorm{X_i-x_0} \geq \snorm{X_i-x_0'} - \snorm{x_0'-x_0} \sgrt{(i)} (2^{j-1}h -\eps) - \eps \geq 2^{j-1}h - 2\eps,
\end{talign}
where step (i) follows from the definition of $A$ \cref{eq:g-A}. Since $\kappa$ is monotonically decreasing by \cref{assum:nw-kernel}, we have $\kappa(\frac{\snorm{X_i-x_0}}{h}) \leq \kappa(\frac{2^{j-1}h - 2\eps}{h})$. Thus, we have
\begin{talign}
    \snorm{\k(\cdot,x_0) - \k(\cdot,x_0')}_{A,\infty} 
    &\leq \max\braces{\snorm{\k(\cdot,x_0)}_{A,\infty}, \snorm{\k(\cdot,x_0')}_{A,\infty}} \\
    &\leq \max\braces{ \kappa(\frac{2^{j-1}h - 2\eps}{h}) , \kappa(\frac{2^{j-1}h-\eps}{h}) } \\
    &\sless{(i)} \kappa(\frac{2^{j-1}h - 2\eps}{h}), \label{eq:k-infty}
\end{talign}
where step (i) again uses the fact that $\kappa$ is monotonically decreasing. Moreover, note that
\begin{talign}
    \mass_A(2(2^j h + \eps)) &= \mass_{x_0'}(2^{j-1}h-\eps, 2^j h + \eps),\\
    \tail[A,\k](2^j h + \eps) &= 0, \qtext{and}\\
    \snorm{x_0-x_0'} &\leq \eps < 2^j h + \eps.
\end{talign}
Substituting the bounds \cref{eq:Aj-eps,eq:k-infty,eq:k-empirical-annulus-diff-bound} into the decomposition \cref{eq:k-empirical-annulus-decomp} yields the desired claim.

\subsubsection{\pcref{lem:Linf-L2-kernel-error}}
\label{proof:Linf-L2-kernel-error}

Define the restrictions
\begin{talign}
    g_r(x) = g(x) \boldone_{\ball_2(r;x_0')}(x),
    \qtext{}
    g_r^{(c)} = g - g_r,
    \qtext{}
    \k_r(x,z) \defeq \k(x,z)\cdot \boldone_{\ball_2(r;x_0')}(z), 
    \qtext{and} 
    \k_r^{(c)} \defeq \k -  \k_r,
\end{talign}
so that $\k(z, x_0) = \k_r(z, x_0) + \k_r^{(c)}(z,x_0)$ and $\k(z, x_0') = \k_r(z, x_0') + \k_r^{(c)}(z,x_0')$. We now apply the triangle inequality and \Holder's inequality to obtain
\begin{talign}
    &\abss{\frac{1}{n}\sum_{i=1}^n g(x_i) (\k(x_i,x_0) - \k(x_i,x_0')) \boldone_A(x_i)} \\
    &\qquad = \abss{\frac{1}{n}\sum_{i=1}^n g_r(x_i) (\k_r(x_i,x_0)-\k_r(x_i,x_0'))\boldone_A(x_i) + \frac{1}{n}\sum_{i=1}^n g_r^{(c)}(x_i) (\k_r^{(c)}(x_i,x_0)-\k_r^{(c)}(x_i,x_0')) \boldone_A(x_i)} \\
    &\qquad \leq \abss{\frac{1}{n}\sum_{i=1}^n g_r(x_i) (\k_r(x_i,x_0)-\k_r(x_i,x_0')) \boldone_A(x_i)} \\
    &\qquad \qquad + \abss{\frac{1}{n}\sum_{i=1}^n g_r^{(c)}(x_i) (\k_r^{(c)}(x_i,x_0)-\k_r^{(c)}(x_i,x_0')) \boldone_A(x_i)} \\
    &\qquad \leq \norm{g_r\cdot \boldone_A}_{\lone(\P_n)}\cdot \infnorm{(\k(\cdot,x_0) - \k(\cdot,x_0'))\boldone_A(\cdot)} \label{eq:Linf-L2-kernel-error-step-1a}\\
    &\qquad\qquad + \abss{\frac{1}{n}\sum_{i=1}^n g_r^{(c)}(x_i) (\k(x_i,x_0) - \k(x_i,x_0')) \boldone_A(x_i)}, \label{eq:Linf-L2-kernel-error-step-1b}
\end{talign}
where $\snorm{\cdot}_{\lone(\P_n)}$ is defined by $\snorm{f}_{\lone(\P_n)} \defeq \frac{1}{n}\sum_{i=1}^n \abss{f(x_i)}$.
To bound the term \cref{eq:Linf-L2-kernel-error-step-1a}, we apply Cauchy-Schwarz to $g_r\in L^1(\P_n) \cap L^2(\P_n)$ to obtain
\begin{talign}
    \snorm{g_r\cdot \boldone_A}_{L^1(\P_n)} \leq \snorm{g_r\cdot \boldone_A}_{L^2(\P_n)} \cdot \sqrt{\mass_A(r)} \leq \snorm{g}_{A,L^2(\P_n)} \cdot \sqrt{\mass_A(r)}.
\end{talign}
Next, we bound the term \cref{eq:Linf-L2-kernel-error-step-1b}. For any $x^\star \in \braces{x_0,x_0'}$ and $x_i$ satisfying $\norm{x_i-x_0'}\geq r$ and scalars $a,b\in [0,1]$ such that $a+b=1$, either $\norm{x^\star-x_i}\geq ar$ or $\norm{x^\star-x_0'} > br$. Hence,
\begin{talign}
    &\abss{\frac{1}{n}\sum_{i=1}^n g_r^{(c)} (\k(x_i,x_0) - \k(x_i,x_0')\boldone_A(x_i)} \\
    &\qquad \leq \abss{\frac{1}{n}\sum_{i=1}^n \indic{\snorm{x_i - x_0} \geq r} g_r^{(c)}(x_i) \sum_{x^\star\in \braces{x_0,x_0'}} \k(x_i,x^\star) \boldone_A(x_i) (\delta_{x_0}(x^\star) - \delta_{x_0'}(x^\star))} \\
    &\qquad \leq \underbrace{\abss{\frac{1}{n} \sum_{i=1}^n \sum_{x^\star\in \braces{x_0,x_0'}} \indic{\norm{x_i-x^\star}\geq ar} g_r^{(c)}(x_i) \k(x_i,x^\star) \boldone_A(x_i)(\delta_{x_0}(x^\star)-\delta_{x_0'}(x^\star))}}_{\defeq T_1} \label{eq:Linf-L2-kernel-error-step-2a}\\
    &\qquad \qquad+ \underbrace{\abss{ \frac{1}{n}\sum_{i=1}^n \sum_{x^\star\in \braces{x_0,x_0'}} \indic{\norm{x^\star-x_0} > br} g_r^{(c)}(x_i) \k(x_i,x^\star) \boldone_A(x_i) (\delta_{x_0}(x^\star)-\delta_{x_0'}(x^\star))}}_{\defeq T_2}. \label{eq:Linf-L2-kernel-error-step-2b},
\end{talign}
where we define $\delta_{x_0}(z) \defeq \indic{z=x_0}$ and $\delta_{x_0'}(z) \defeq \indic{z=x_0'}$.

\paragraph{Bounding \cref{eq:Linf-L2-kernel-error-step-2a}.}
Exchanging the order of the summations and applying Cauchy-Schwarz, we have
\begin{talign}
    T_1 &= \abss{\sum_{x^\star\in \braces{x_0,x_0'}} \frac{1}{n} \sum_{i=1}^n \indic{\snorm{x^\star-x_i} \geq ar} g_r^{(c)}(x_i) \k(x^\star, x_i) \boldone_A(x_i) (\delta_{x_0}(x^\star) - \delta_{x_0'}(x^\star))} \\
    &\leq \sum_{x^\star\in \braces{x_0,x_0'}} \abss{\frac{1}{n} \sum_{i=1}^n \indic{\snorm{x_i-x^\star} \geq ar} g_r^{(c)}(x_i) \k(x^\star, x_i) \boldone_A(x_i) (\delta_{x_0}(x^\star) - \delta_{x_0'}(x^\star))} \\
    &\leq \sum_{x^\star\in \braces{x_0,x_0'}} \parenth{\frac{1}{n}\sum_{i=1}^n \indic{\snorm{x_i-x^\star}\geq ar}g_r^{(c)}(x_i)^2 \boldone_A(x_i) }^{\half} \\
    &\qquad \cdot \parenth{\frac{1}{n}\sum_{i=1}^n \indic{\snorm{x_i-x^\star}\geq ar} \k^2(x_i, x^\star) \boldone_A(x_i)}^{\half} \cdot \abss{\delta_{x_0}(x^\star) - \delta_{x_0'}(x^\star)} \\
    &\leq \sum_{x^\star\in \braces{x_0,x_0'}} \parenth{\frac{1}{n}\sum_{i=1}^n \indic{\snorm{x_i-x^\star}\geq ar}g_r^{(c)}(x_i)^2 \boldone_A(x_i) }^{\half} \\
    &\qquad \cdot \sup_{x'\in \braces{x_0,x_0'}}\parenth{\frac{1}{n}\sum_{i=1}^n \indic{\snorm{x_i-x'}\geq ar} \k^2(x_i, x') \boldone_A(x_i)}^{\half} \cdot \abss{\delta_{x_0}(x^\star) - \delta_{x_0'}(x^\star)} \\
    &= \sum_{x^\star\in \braces{x_0,x_0'}} \snorm{g\cdot \boldone_A}_{L^2(\P_n)} \cdot \tail[\k](ar) \cdot \abss{\delta_{x_0}(x^\star) - \delta_{x_0'}(x^\star)} \\
    &=2 \snorm{g\cdot \boldone_A}_{L^2(\P_n)} \cdot \tail[\k](ar)
\end{talign}

\paragraph{Bounding \cref{eq:Linf-L2-kernel-error-step-2b}.}
Rearranging the terms and applying Cauchy-Schwarz, we have
\begin{talign}
    T_2 &= \abss{\sum_{x^\star \in \braces{x_0,x_0'}} \indic{\snorm{x^\star-x_0'} > br} \frac{1}{n}\sum_{i=1}^n g_r^{(c)}(x_i) \k(x_i,x^\star) \boldone_A(x_i) (\delta_{x_0}(x^\star) - \delta_{x_0'}(x^\star)) } \\
    &\leq \sum_{x^\star \in \braces{x_0,x_0'}} \indic{\snorm{x^\star-x_0'} > br} \cdot \abss{\frac{1}{n}\sum_{i=1}^n g_r^{(c)}(x_i) \k(x_i,x^\star) \boldone_A(x_i)} \cdot \abss{\delta_{x_0}(x^\star) - \delta_{x_0'}(x^\star)} \\
    &\leq \sum_{x^\star\in \braces{x_0,x_0'}} \indic{\snorm{x^\star-x_0'} > br} \cdot \snorm{g\cdot \boldone_A}_{L^2(\P_n)} \sup_{x'\in \braces{x_0,x_0'}} \snorm{\k(\cdot,x')\cdot \boldone_A}_{L^2(\P_n)}\\ &\qquad \cdot \abss{\delta_{x_0}(x^\star) - \delta_{x_0'}(x^\star)} \\
    &= \snorm{g\cdot \boldone_A}_{L^2(\P_n)} \sup_{x'\in \braces{x_0,x_0'}} \snorm{\k(\cdot,x')\cdot \boldone_A}_{L^2(\P_n)} \cdot \indic{\snorm{x_0-x_0'} > br}.
\end{talign}

\section{\pcref{thm:kt-krr-finite}}
\label{proof:kt-krr-finite}

We rely on the localized Gaussian/Rademacher analysis of KRR from prior work \citep{wainwright2019high}. 
Define the \emph{Gaussian critical radius} $\vareps_n > 0$ to be the smallest positive solution to the inequality
\begin{talign}\label{eq:critical-gaussian-cond}
    \widehat{\mc G}_n(\vareps;\ball_{\rkhs}(3)) \leq \frac{R}{2\sigma} \vareps^2, \qtext{where} \widehat{\mathcal G}_n(\vareps;\modelclass) \defeq \Esubarg{w}{ \sup_{\substack{ f\in \modelclass: \\ \norm{f}_n\leq \vareps}} \abss{ \frac{1}{n}\sum_{i=1}^n w_i f(x_i) }},
\end{talign}
$\ball_\rkhs(3)$ is the $\knorm{\cdot}$-ball of radius 3 and  $w_i\distiid \Gsn(0,1)$. 

\begin{assumption}\label{assum:kball}
Assume that $\knorm{\fstar} \in \ball_\kernel(R)$ and $\krrkt \in \ball_\kernel(c_{\dagger} R)$, for some constant $c_{\dagger} > 0$.
\end{assumption}
\newcommand{\unifb}{B}
Note that for any $g\in \ball_\kernel(c_{\dagger} R)$, we have
\begin{talign}\label{eq:unif-bounded}
    \infnorm{g} \leq \sup_{x\in \X} \inner{g}{\kernel(\cdot, x)}_\kernel \sless{(i)} \sup_{x\in \X} \knorm{g} \knorm{\kernel(\cdot,x)}\leq \knorm{g} \sqrt{\infnorm{\kernel}} \leq c_{\dagger} R \sqrt{\infnorm{\kernel}} \defeq \unifb,
\end{talign}
where step (i) follows from Cauchy-Schwarz. Thus, the function class $\ball_\kernel(c_{\dagger} R)$ is $B$-uniformly bounded.
Now define the \emph{Rademacher critical radius} $\delta_n>0$ to be the smallest positive solution to the inequality
\begin{talign}\label{eq:critical-rademacher-cond}
    \mc R_n(\delta;\rkhs)\leq \delta^2,\qtext{where} \mathcal R_n(\delta;\modelclass) \defeq \Esubarg{x,\nu}{ \sup_{\substack{f\in \modelclass: \\ \twonorm{f} \leq \delta}} \abss{ \frac{1}{n}\sum_{i=1}^n \nu_i f(x_i)} }
\end{talign}
and $\nu_i= \pm 1$ each with probability $1/2$.

Finally, we use the following shorthand to control the KT approximation error term, 
\begin{talign}\label{eq:kt-complexity}
    \ktcomplexity &\defeq \frac{\ktcomplexityconstant^2}{\nout} (2 + \errmmd_{\kernel}(n,\nout,\delta, \Rin, \frac{\ktcomplexityconstant}{\nout})), \qtext{where}  \\
    \mfk R_{\mrm{in}} &\defeq \max_{x\in \inputcoreset} \twonorm{x} \qtext{and} \ktcomplexityconstant \defeq \ininfnorm\kernel + \ymax^2\label{eq:radius}
\end{talign}
and $\errmmd_\kernel$ is an \textit{inflaction factor}
defined in \cref{eq:err-mmd} that scales with the covering number $\coveringnumber_{\kernel}$ (see \cref{def:cover}). 
With these definitions in place, we are ready to state a detailed version of \cref{thm:kt-krr-finite}:

\begin{theorem}[KT-KRR for finite-dimensional RKHS, detailed] \label{thm:kt-krr-finite-detailed}
Suppose the kernel operator associated with $\kernel$ and $\P$ has eigenvalues $\mu_1\geq \ldots \geq\mu_m>0$ (by Mercer's theorem). Define $C_m  \defeq 1/\mu_m$.
Let $\vareps_n$ and $\delta_n$ denote the solutions to \cref{eq:critical-rademacher-cond} and \cref{eq:critical-gaussian-cond}, respectively. Further assume \footnote{Note that when $\kernel$ is finite-rank, this condition is automatically satisfied.}
\begin{talign}
    n\delta_n^2 > \log ( 4 \log (1/\delta_n) ).
    \label{eq:delta_cond}
\end{talign}
Let $\krrkt$ denote the \krrktname estimator with regularization parameter 
\begin{talign}\label{eq:lambdakt}
    \lambdakt \geq 2 \critthresh^2 \qtext{where} \critthresh \defeq \vareps_n \vee \delta_n \vee 4 \sqrt C_m (\knorm{\fstar} +1)\ktcomplexity.
\end{talign}
Then with probability at least $1-2\delta - 2 e^{-\frac{n\delta_n^2}{c_1 (b^2+\sigma^2) }}$, we have
\begin{talign}\label{eq:krr-kt-L2-bound-finite}
    \statictwonorm{\krrkt - \fstar}^2 \leq c \braces{\critthresh^2 + \lambdakt} \knorm{\fstar}^2 + c\delta_n^2.
\end{talign}
where recall $\delta$ is the success probability of \ktcompresspp \cref{eq:ktgoodgen}.
\end{theorem}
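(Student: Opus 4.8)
The plan is to exploit the optimality of $\krrkt$ for the thinned regularized objective, use the MMD guarantee \cref{lem:ktcompresspp-mmd} applied to $\kernelrr$ to convert it into an \emph{approximate} optimality for the full-data KRR objective, and then invoke the localized Gaussian/Rademacher analysis of \citet[Ch.~13]{wainwright2019high} with the thinning error playing the role of an optimization-error slack.

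First I would rewrite the two objectives so that the thinning error is transparent. Since $L_{\inputcoreset}(f) = \Pin\ell_f$ with $\ell_f(x,y) = f^2(x) - 2 f(x) y + y^2$ and the $y^2$ summand is constant in $f$, the estimator $\krrkt$ is the minimizer of $\Qout\bigl(f^2 - 2 f y\bigr) + \lambdakt\knorm{f}^2$ over $f\in\rkhs(\kernel)$, while the full-data $\lambdakt$-regularized KRR solution is the minimizer with $\Qout$ replaced by $\Pin$. The key algebraic fact is that $g_f(x,y) \defeq f^2(x) - 2 f(x) y$ lies in $\rkhs(\kernelrr)$ with $\norm{g_f}_{\kernelrr}^2 = \norm{f^2}_{\kernel^2}^2 + 4\knorm{f}^2 \le \knorm{f}^4 + 4\knorm{f}^2$, using the direct-sum decomposition $\rkhs(\kernelrr) = \rkhs(\kernel^2)\oplus\rkhs(\kernel\cdot y_1 y_2)$ together with the standard contraction $\norm{f^2}_{\kernel^2}\le\knorm{f}^2$. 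Writing the basic inequality $\Qout g_{\krrkt} + \lambdakt\knorm{\krrkt}^2 \le \Qout g_{\fstar} + \lambdakt\knorm{\fstar}^2$, applying \cref{lem:ktcompresspp-mmd} twice (once at $g_{\krrkt}$ and once at $g_{\fstar}$), and bounding the $\rkhs(\kernel)$-norms via \cref{assum:kball} ($\knorm{\krrkt}\le c_\dagger R$ and $\knorm{\fstar}\le R$) yields the approximate-optimality statement $\Pin g_{\krrkt} + \lambdakt\knorm{\krrkt}^2 \le \Pin g_{\fstar} + \lambdakt\knorm{\fstar}^2 + c\,\ktcomplexity(\knorm{\fstar}+1)^2$; that is, $\krrkt$ is a near-minimizer of the \krrfullname objective with regularization $\lambdakt$ and slack of order $\ktcomplexity(\knorm{\fstar}+1)^2$. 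To match the form of $\ktcomplexity$ in \cref{eq:kt-complexity}, which is written through $\coveringnumber_\kernel$ rather than $\coveringnumber_{\kernelrr}$, I would also bound $\coveringnumber_{\kernelrr}$ over a product set containing $\inputcoreset$ by a fixed power of $\coveringnumber_\kernel(\ball_2(\Rin),c\eps)$, using the Lipschitzness of $t\mapsto t^2$ on the bounded range of the candidate functions (via \cref{eq:unif-bounded}) and the boundedness of the observed $y_i$.

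Second, I would feed this near-minimizer into the localized complexity machinery. The slack enters precisely where the optimization error appears in \citet[Thm.~13.17]{wainwright2019high}: on the high-probability event controlling the relevant empirical process --- which produces the $e^{-n\delta_n^2/(c_1(\unifb^2+\sigma^2))}$ factor, uses the uniform bound $\unifb$ of \cref{eq:unif-bounded}, and relies on condition \cref{eq:delta_cond} --- the empirical prediction error obeys $\staticnnorm{\krrkt-\fstar}^2 \lesssim \delta_n^2 + \lambdakt\knorm{\fstar}^2 + \mathrm{slack}$, and the Gaussian critical radius $\vareps_n$ together with the sub-Gaussian noise upgrades this to $\statictwonorm{\krrkt-\fstar}^2\lesssim \vareps_n^2 + \delta_n^2 + \lambdakt\knorm{\fstar}^2 + \mathrm{slack}$. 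The factor $C_m = 1/\mu_m$ appearing in $\critthresh$ enters when transferring from $\staticnnorm{\cdot}$ to $\statictwonorm{\cdot}$ on the rank-$m$ RKHS ball, where the two norms are comparable up to that constant. Since $\lambdakt\ge 2\critthresh^2\ge 2(\vareps_n^2\vee\delta_n^2)$ and $\critthresh^2\gtrsim C_m(\knorm{\fstar}+1)^2\ktcomplexity^2$ dominates the slack, the right-hand side collapses into $c\braces{\critthresh^2+\lambdakt}\knorm{\fstar}^2 + c\delta_n^2$, which is \cref{eq:krr-kt-L2-bound-finite}. A union bound over $\ktgoodgen$ (probability $\ge 1-\delta$, carrying both uses of \cref{lem:ktcompresspp-mmd}), the empirical-process event, and the finite-rank norm-comparison event produces the stated probability $1-2\delta-2e^{-n\delta_n^2/(c_1(\unifb^2+\sigma^2))}$.

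I expect the main obstacle to be the bookkeeping that keeps the two error sources cleanly separated: the thinning error is measured in the $\kernelrr$ geometry and is only controllable once $\knorm{\krrkt}$ is known to be bounded (hence the need for \cref{assum:kball}), while the localized analysis requires the slack to be inserted in precisely the right place of the basic inequality and requires the empirical-to-population norm transfer on a finite-rank kernel to carry the correct $1/\mu_m$ dependence so that the constant $4\sqrt{C_m}$ in $\critthresh$ lines up. The covering-number comparison between $\rkhs(\kernelrr)$ and $\rkhs(\kernel)$, and the verification that \cref{eq:delta_cond} is exactly what validates the $\delta_n$-concentration for the noise process, are the remaining fiddly points.
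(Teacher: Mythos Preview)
Your additive-slack approach is essentially the argument the paper uses for the \emph{infinite}-rank result (\cref{thm:krr-kt}), where the final bound carries a term $c\,\ktcomplexity(\knorm{\fstar}+1)^2$ linear in $\ktcomplexity$. But the finite-rank theorem you are asked to prove has the thinning error enter only through $\critthresh^2\ge 16C_m(\knorm{\fstar}+1)^2\ktcomplexity^2$, i.e.\ \emph{quadratically} in $\ktcomplexity$. Your collapsing step therefore fails: the slack $c\,\ktcomplexity(\knorm{\fstar}+1)^2$ is \emph{not} dominated by $\critthresh^2$, since that would require $C_m\ktcomplexity\gtrsim 1$, which breaks once $\nout$ is moderately large. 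Concretely, with $\nout=\sqrt n$ and $\lambdakt=2\critthresh^2$, your route leaves a thinning contribution of order $\sqrt{m\log n}/\sqrt n$, whereas \cref{eq:krr-kt-L2-bound-finite} claims $C_m m\log n/n$---a quadratic improvement that the additive argument cannot deliver.

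The paper obtains the sharper bound by working with the basic inequality directly on the thinned sample and proving a \emph{multiplicative} norm equivalence (\cref{prop:rel-guarantee}): on the good event, $(1-4C_m\ktcomplexity)\snnorm{g}\le\snoutnorm{g}\le(1+4C_m\ktcomplexity)\snnorm{g}$ for all $g\in\rkhs$ with $\stwonorm{g}>\delta_n$. This is where $C_m=1/\mu_m$ actually enters---through the finite-rank inequality $\knorm{g}^2\le C_m\stwonorm{g}^2$, which upgrades the raw KT bound $|\snnorm{g}^2-\snoutnorm{g}^2|\le\knorm{g}^2\ktcomplexity$ to a multiplicative one---and not in the $\snnorm{\cdot}\to\stwonorm{\cdot}$ transfer as you suggest (that step uses only the Rademacher concentration event $\unifconc$). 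The multiplicative guarantee lets the thinned noise process $|\frac{1}{\nout}\sum_i\noise_i'g(x_i')|$ be bounded by the full-data process plus an error of size $4\sqrt{C_m}(\knorm{\fstar}+1)\ktcomplexity\cdot\snoutnorm{g}$, proportional to the unknown $\snoutnorm{\deltakt}$; the definition of $\critthresh$ is then exactly what absorbs this term into the critical-radius condition, producing the $\ktcomplexity^2$ scaling. A secondary issue: you invoke \cref{assum:kball} to bound $\knorm{\krrkt}$, but the theorem does not assume it; the paper instead splits into the cases $\knorm{\krrkt}\le2$ and $\knorm{\krrkt}>2$ via separate bad events $\badeventA,\badeventB$, exploiting in the latter case the negative $-\lambdakt\knorm{\krrkt}^2$ term in the basic inequality.
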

\proofref{proof:kt-krr-finite-detailed}
We set $\lambdakt = 2\critthresh^2$, so that \cref{eq:krr-kt-L2-bound-finite} becomes
\begin{talign}\label{eq:krr-kt-L2-bound-finite-1}
    \statictwonorm{\krrkt - \fstar}^2 \leq 3c\critthresh^2 \knorm{\fstar}^2 + c\delta_n^2.
\end{talign}
It remains to bound the quantities $\vareps_n$ \cref{eq:critical-gaussian-cond}, $\delta_n$ \cref{eq:critical-rademacher-cond}, and $\ktcomplexity$ \cref{eq:kt-complexity}. We claim that
\begin{talign}
    \vareps_n &\leq c_0 \frac{\sigma}{R} \sqrt{\frac{m}{n}} \label{eq:gsn-crit-radius-finite} \\
    \delta_n &\leq c_1 b \sqrt{\frac{m}{n}} \label{eq:rad-crit-radius-finite} \\
    \ktcomplexity &\leq c_2 \frac{\sqrt{ m \cdot \log \nout \cdot \log(1/\delta)}}{\nout}. \label{eq:kt-complexity-finite} 
\end{talign}
for some universal positive constants $c_0,c_1, c_2$.
Now set
\begin{talign}
    R=\knorm{\fstar} \qtext{} \delta = e^{-1/R^4}.
\end{talign}
Thus, we have
\begin{talign}\label{eq:critthresh-explicit}
    \critthresh \leq c' (\frac{\sigma}{\knorm{\fstar}} \vee b \vee \frac{4 \sqrt{C_m}}{\knorm{\fstar}}) \frac{\sqrt m}{\sqrt n \wedge \nout}.
\end{talign}
for some universal positive constant $c'$. Substituting this into \cref{eq:krr-kt-L2-bound-finite-1} leads to the advertised bound \cref{eq:krr-kt-L2-bound-finite-simplified}.

\paragraph{Proof of claim~\cref{eq:gsn-crit-radius-finite}.}
For finite rank kernels, $\hat \mu_j = 0$ for $j>m$. Thus, we have $\sqrt{\frac{2}{n}}\sqrt{\sumn[j] \min\braces{\vareps^2, \hat \mu_j}} = \sqrt{\frac{2}{n}} \sqrt{m\vareps^2}$. From the critical radius condition \cref{eq:critical-gaussian-cond}, we want $\sqrt{\frac{2}{n}} \sqrt{m\vareps^2} \leq \frac{R}{4\sigma} \vareps^2$, so we may set $\vareps_n \simeq \frac{\sigma}{R} \sqrt{\frac{m}{n}}$.

\paragraph{Proof of claim~\cref{eq:rad-crit-radius-finite}.}
By similar logic as above, we have $\sqrt{\frac{2}{n}}\sqrt{\sumn[j] \min\braces{\delta^2, \mu_j}} = \sqrt{\frac{2}{n}} \sqrt{m \delta^2}$. From the critical radius condition \cref{eq:critical-rademacher-cond}, we want $\sqrt{\frac{2}{n}} \sqrt{m \delta^2} \leq \frac{1}{b} \delta^2$, so we may set $\delta_n \simeq b \sqrt 2 \sqrt{\frac{m}{n}}$.

\paragraph{Proof of claim~\cref{eq:kt-complexity-finite}.}
Consider the linear operator $T: \mc H \to \reals^m$ that maps a function to the coefficients in the vector space spanned by $\braces{\phi_i}_{i=1}^m$. Note that
\begin{talign}
    \norm{T} = \frac{\infnorm{Tf}}{\knorm{f}}
     \leq \sqrt{\sinfnorm{\kernel}}
\end{talign}
Since the image of $T$ has dimension $m$, we have $\rank{T}\leq m$. Moreover, $\sinfnorm{\kernel} \leq \mu_1 \cdot \Rin^2$.
Now we can invoke \citep[Eq.~14]{steinwart2021closer} with $\eps = \ktcomplexityconstant / \nout$ to obtain
\begin{talign}
    \coveringnumber_\kernel(\ball_2^d(\Rin),\ktcomplexityconstant / \nout) \leq \coveringnumber(T, \ktcomplexityconstant/\nout) \leq (1+ \mu_1 \Rin^2 \nout / \ktcomplexityconstant)^m.
\end{talign}
Taking the log on both sides and substituting this bound into \cref{eq:kt-complexity}, we have
\begin{talign}
    \ktcomplexity &= \frac{\ktcomplexityconstant^2}{\nout} (2 + \errmmd_{\kernel}(n,\nout,\delta, \Rin,\frac{\ktcomplexityconstant}{\nout})) \\
    &\leq \frac{\ktcomplexityconstant^2}{\nout} (2 + \sqrt{\log\parenth{\frac{\nout \log(n/\nout)}{\delta}}  \cdot \brackets{\log\parenth{\frac{1}{\delta}} + \log \coveringnumber_{\kernel}(\ball_2^d(\Rin), \frac{\ktcomplexityconstant}{\nout}) }}) \\
    &\leq \frac{\ktcomplexityconstant^2}{\nout} (2 + \sqrt{\log\parenth{\frac{\nout \log(n/\nout)}{\delta}}  \cdot \brackets{\log\parenth{\frac{1}{\delta}} + 
    m \log\parenth{1+\frac{2\norm{T} \nout}{\ktcomplexityconstant}}}}) \\
    &\leq c \frac{\sqrt{m \cdot \log \nout \cdot \log(1/\delta)}}{\nout}
\end{talign}
for some positive constant $c$ that doesn't depend on $m,\nout,\delta$.

\section{\pcref{thm:kt-krr-finite-detailed}}
\label{proof:kt-krr-finite-detailed}

\newcommand{\hadarkhs}{\rkhs\odot \rkhs}
\newcommand{\twonnorm}[1]{\Vert{#1}\Vert_{n, 2}}
\newcommand{\twonoutnorm}[1]{\Vert{#1}\Vert_{\nout, 2}}
\newcommand{\twooutnorm}[1]{\Vert{#1}\Vert_{\nout}}
\newcommand{\twonormlowerbound}{\mc E_{\trm{lower}}}
\newcommand{\unifconc}{\mc E_{\trm{conc}} }
\newcommand{\badeventA}{\mc A_{\mrm{KT}}}
\newcommand{\badeventB}{\mc B_{\mrm{KT}}}
\newcommand{\goodevent}{\mc E_{\trm{good}}}

We rescale our observation model \cref{eq:process} by $\knorm{\fstar}$, so that the noise variance is $\parenth{\sigma/\knorm{\fstar}}^2$ and our new regression function satisfies $\knorm{\fstar} = 1$. Our final prediction error should then be multiplied by $\knorm{\fstar}^2$ to recover a result for the original problem. For simplicity, denote 
\begin{talign}\label{eq:wtil-sigma}
    \widetilde \sigma = \sigma/\knorm{\fstar}.
\end{talign}
For notational convenience, define an event
\begin{talign}
    \mc E = \bigbraces{\stwonorm{ \krrkt - \fstar }^2 \leq c(\xi_n^2 + \lambda') },
\end{talign}
and our goal is to show that $\mc E$ occurs with high-probability in terms of $\Prob$, the probability regarding all the randomness.
For that end, we introduce several events that are used throughout, 
\begin{talign}
    \unifconc \defeq \biggbraces{ \sup_{g \in \rkhs} \big| \staticnnorm{g} - \twonorm{g} \big| \leq   \frac{\delta_n}{2} } \qtext{and} \twonormlowerbound \defeq \braces{\statictwonorm{\krrkt - \fstar} > \delta_n},
\label{eq:unif_conc_event}
\end{talign}
where $\delta_n$ is defined in \cref{eq:critical-rademacher-cond} and $\rkhs$ is the RKHS generated by $\kernel$ hence star-shaped. Further, we introduce two technical events $\badeventA(u), \badeventB$ defined in \cref{eq:control,eq:badeventB} respectively, which are proven to occur with small probability, and define a shorthand
\begin{talign}\label{eq:goodevent}
    \goodevent \defeq \badeventA^c(\xi_n) \cap \badeventB^c \cap \unifconc\cap \ktgoodgen.
\end{talign}
Equipped with these shorthands, observe the following inequality,
\begin{talign}
    \Prob( \mc E ) &= \Prob( \mc E \cap \twonormlowerbound ) + \Prob( \mc E \cap \twonormlowerbound^c )\\
    &\geq \Prob( \mc E \cap \twonormlowerbound) + \Prob( \twonormlowerbound^c )
    \label{eq:first_prob_chain}
\end{talign}
where the second inequality is because $\twonormlowerbound^c \subseteq \twonormlowerbound^c \cap \mc E$ due to the assumption $\lambda' \geq 2\xi_n^2 \geq 2\delta_n^2$.

If we are able to show the set inclusion $\sbraces{ \goodevent \cap \twonormlowerbound } \subseteq \sbraces{ \mc E\cap \twonormlowerbound }$ and that $\Prob( \goodevent^c )$ is small,
we are able refine \cref{eq:first_prob_chain} to the following
\begin{talign}
    \Prob( \mc E ) \geq \Prob( \goodevent \cap \twonormlowerbound ) + \Prob(\twonormlowerbound^c) \geq 1 - \Prob( \goodevent^c ) - \Prob(\twonormlowerbound^c) +\Prob(\twonormlowerbound) = 1 - \Prob(\goodevent^c),
\end{talign}
where the last quantity $1 - \Prob(\goodevent^c)$ would be large.

To complete this proof strategy, we claim the set inclusion
\begin{talign}\label{eq:claim_first_ing}
\sbraces{ \goodevent \cap \twonormlowerbound } \subseteq \sbraces{ \mc E\cap \twonormlowerbound }    
\end{talign}
to hold and prove it in \cref{proof:first_ingredient} and further claim 
\begin{talign}\label{eq:claim_second_ing}
    \Prob( \goodevent^c ) \leq c''\bigbraces{\delta + e^{-c' n \delta_n^2/( B_{\rkhs}^2 \wedge \widetilde \sigma^2 )} }
\end{talign}
which verify in \cref{proof:second_ingredient}.

Putting the pieces together, claims \cref{eq:claim_first_ing,eq:claim_second_ing} collectively implies
\begin{talign}
    \Prob( \mc E ) \geq 1 - c''\bigbraces{\delta + e^{-c' n \delta_n^2/( B_{\rkhs}^2 \wedge \widetilde \sigma^2 )} }
\end{talign}
as desired.

\subsection{Proof of claim \cref{eq:claim_first_ing}}\label{proof:first_ingredient}

There are several intermediary steps we take to show the set inclusion of interest \cref{eq:claim_first_ing}. We introduce the shorthand 
\begin{talign}
    \deltakt \defeq \krrkt - \fstar.
\end{talign}

By invoking Propositions and basic inequalities to come, we successively show the following chain of set inclusions 
\begin{talign}
    \goodevent \cap \twonormlowerbound &\subseteq \goodevent \cap \twonormlowerbound \cap \sbraces{ \snoutnorm{ \deltakt }^2 \leq c (\xi_n^2 + \lambda') } \label{step_a}\\
    &\subseteq \goodevent \cap \twonormlowerbound \cap \sbraces{ \snnorm{ \deltakt }^2 \leq c (\xi_n^2 + \lambda') } \label{step_b}\\
    &\subseteq \goodevent \cap \twonormlowerbound \cap \mc E \label{step_c}\\
    &\subseteq \twonormlowerbound \cap \mc E \label{step_d}.
\end{talign}
Note that step \cref{step_d} is achieved trivially by dropping $\goodevent$. Further note that \cref{step_b} is the crucial intermediary step after which we may apply uniform concentration across $n$ independent samples. Proof of \cref{step_b} leverages on the Proposition to come~(\cref{prop:rel-guarantee}) that allows $\snnorm{\cdot}$ and $\snoutnorm{\cdot}$ to be exchangeable for finite rank kernels. 

\paragraph{Recovering step \cref{step_a}} Since $\krrkt$ and $\fstar$ are optimal and feasible, respectively for the central optimization problem of interest
\begin{talign}
    \min_{f\in \rkhs(\kernel)} \frac{1}{\nout} \sum_{i=1}^{\nout} \parenth{y_i'- f (x_i')}^2 + \lambda' \knorm{f}^2,
\end{talign}
we have the basic inequality
\begin{talign}
    \frac{1}{\nout} \sum_{i=1}^{\nout} \parenth{y_i'-\krrkt (x_i')}^2 + \lambdakt \knorm{\krrkt}^2 &\leq \frac{1}{\nout} \sum_{i=1}^{\nout} \parenth{y_i'-\fstar (x_i')}^2 + \lambdakt \knorm{\fstar}^2,
    \label{eq:basic_ineq}
\end{talign}

With some algebra
, may refine \cref{eq:basic_ineq} to 
\begin{talign}
    \half \snoutnorm{\deltakt}^2 &\leq \Big|\frac{1}{\nout} \sum_{i=1}^{\nout} \noise_i' \deltakt(x_i')\Big| + \lambda \braces{ \knorm{\fstar}^2 - \knorm{\krrkt}^2 } .
    \label{eq:basic_ineq_root}
\end{talign}
where $\deltakt = \krrkt-\fstar$.Suppose that $\snoutnorm{\deltakt} < \critthresh$, then we trivially recover \cref{step_a} by adding $\lambdakt>0$. Thus, we assume that $\snoutnorm{\deltakt} \geq \critthresh$.

Under the assumption $\snoutnorm{\deltakt} \geq \critthresh$, which is without loss of generality, we utilize the basic inequality \cref{eq:basic_ineq_root} and control its stochastic component 
\begin{talign}
    \bigg|\frac{1}{\nout} \sum_{i=1}^{\nout} \noise_i' \deltakt(x_i')\bigg|,
\end{talign}
with a careful case work to follow, which is technical by nature. 

\underline{Case where $\knorm{ \krrkt } \leq 2$}: Under such case, we introduce a technical event
\begin{talign}
    \badeventA (u) &\defeq \bigg\{ \exists g \in \mc F \setminus \ball_{2}(\delta_n) \cap \{ \staticnoutnorm{g}  \geq u \}~\text{such that}~\Big| \frac{1}{\nout} \sum_{i = 1}^{\nout} \noise_i' g(x_i') \Big| \geq 3 \staticnoutnorm{g} u  \bigg\},
\end{talign}
for any star-shaped function class $\mc F\subset \mc H$. Since $\knorm{\fstar} =1$, triangle inequality implies $\knorm{\deltakt} \leq \knorm{\krrkt} + \knorm{\fstar} \leq 3$. Moreover, on the event $\twonormlowerbound$~($\subseteq \goodevent$), we have $\stwonorm{\deltakt} > \delta_n$. Thus, we may apply $\deltakt$ to the event $\badeventA^c(\critthresh)$ with $\mc F = \ball_\rkhs(3)$ (i.e., the $\rkhs$-ball of radius 3) to attain
\begin{talign}
    \Big|\frac{1}{\nout} \sum_{i=1}^{\nout} \noise_i' \deltakt(x_i')\Big| \leq c_0 \critthresh \snoutnorm{\deltakt}~ \text{on the event} ~ \badeventA^c(\critthresh) \cap \twonormlowerbound.
    \label{eq:first_stoch_comp_upper}
\end{talign} 
Upper bounding the stochastic component of the basic inequality \cref{eq:basic_ineq_root} by \cref{eq:first_stoch_comp_upper} and dropping the $-\knorm{\krrkt}^2$ term in \cref{eq:basic_ineq_root}, we have
\begin{talign}
    \half \snoutnorm{\deltakt}^2 \leq c_0 \critthresh \snoutnorm{\deltakt} + \lambdakt.
\end{talign}
As a last step under the case $\knorm{ \krrkt } \leq 2$, apply the quadratic formula (specifically, if $a,b\geq 0$ and $x^2 - ax - b\leq 0$, then $x\leq a^2 + b$) to obtain
\begin{talign}
    \snoutnorm{\deltakt}^2 \leq 4 c_0^2 \critthresh^2 + 2\lambdakt.
\end{talign}

\underline{Case where $\knorm{ \krrkt } > 2$}: Under such case, by assumption we have $\knorm{\krrkt} >2 >1 \geq \knorm{\fstar}$. Thus, we may derive the following
\begin{talign}
    \knorm{\fstar} - \knorm{\krrkt}  < 0 \qtext{and}\knorm{\fstar} + \knorm{\krrkt}  >1,
\end{talign}
which further implies the following inequality
\begin{talign}
    \knorm{\fstar}^2 - \knorm{\krrkt}^2 = \sbraces{ \knorm{\fstar} - \knorm{\krrkt} }  \sbraces{ \knorm{\fstar} +\knorm{\krrkt} } \leq \knorm{\fstar} - \knorm{\krrkt} .
    \label{eq:basic_sec_case}
\end{talign}
Further writing $\krrkt = \fstar + \deltakt$ and noting that $\knorm{\deltakt} - \knorm{\fstar} \leq \knorm{\krrkt}$ holds through triangle inequality, we may further refine \cref{eq:basic_sec_case} as
\begin{talign}
    \knorm{\fstar} - \knorm{\krrkt} \leq 2 \knorm{\fstar} - \knorm{\deltakt} \leq 2 - \knorm{\deltakt},
\end{talign}
so that the basic inequality in \cref{eq:basic_ineq_root} reduces to 
\begin{talign}\label{eq:mod-basic-ineq_good}
    \half \snoutnorm{\deltakt}^2 \leq \Big|\frac{1}{\nout} \sum_{i=1}^{\nout} \noise_i' \deltakt(x_i')\Big| + \lambdakt \sbraces{ 2 - \knorm{\deltakt} }.
\end{talign}
We again introduce a technical event that controls the stochastic component of \cref{eq:mod-basic-ineq_good}, which is
\begin{talign}\label{eq:badeventB}
    \badeventB \defeq \bigg\{ \exists g \in \mc F \setminus \ball_{2}(\delta_n) \cap \sbraces{\knorm{g} \geq 1} &: \\
    \Big| \frac{1}{\nout} \sum_{i = 1}^{\nout} \noise_i' g(x_i') \Big| &> 4 \xi_n \snoutnorm{g} + 2\xi_n^2 \knorm{g} + \frac{1}{4} \snoutnorm{ g }^2\bigg\},
\end{talign}
for a star-shaped function class $\mc F \subset \rkhs$.

By triangle inequality, we have $\knorm{\deltakt} \geq \knorm{\krrkt} - \knorm{\fstar}> 1$, and on event $\twonormlowerbound$~($\subset \goodevent$), we have $\stwonorm{\deltakt} > \delta_n$. Thus, we may apply $g = \deltakt$ to the event $\badeventB^c$, and the resulting refined basic inequality is
\begin{talign}
    \half \snoutnorm{\deltakt}^2 &\leq 4\xi_n \staticnoutnorm{\deltakt} +(2\critthresh^2 - \lambdakt) \knorm{\deltakt} + 2\lambdakt\\
    &\leq 4\xi_n \staticnoutnorm{\deltakt}+ 2\lambdakt \quad \text{on the event}~ \badeventB^c \cap \twonormlowerbound
\end{talign}
where the second inequality is due to the assumption that $\lambdakt \geq 2\critthresh^2$.
We apply the quadratic formula (specifically, if $a,b\geq 0$ and $x^2 - ax - b\leq 0$, then $x\leq a^2 + b$) to obtain
\begin{talign}
    \snoutnorm{\deltakt}^2 \leq 4 c_0^2 \critthresh^2 + 2\lambdakt.
\end{talign}
Putting the pieces together, we have shown
\begin{talign}
    \snoutnorm{\deltakt}^2 \leq c (\critthresh^2 +\lambda') \quad \text{on the event} ~ \badeventA^c(\critthresh) \cap \badeventB^c \cap \twonormlowerbound,
\end{talign}
which is sufficient to recover \cref{step_a}.

\paragraph{Recovering step \cref{step_b}}

We now upgrade events
\begin{talign}
    \sbraces{\snoutnorm{\deltakt}^2 \leq c (\critthresh^2 +\lambda')} \implies \sbraces{\snnorm{\deltakt}^2 \leq c' (\critthresh^2 +\lambda')}
\end{talign}
by exploiting the events $\unifconc \cap \ktgoodgen$~(subset of $\goodevent$) that were otherwise not used when recovering \cref{step_a}. For this end, the following result is a crucial ingredient, which shows that $\noutnorm{\cdot}$ and $\nnorm{\cdot}$ are essentially exchangeable with high-probability, 

\begin{proposition}[Multiplicative guarantee for \ktcompresspp with $\kernelrr$]\label{prop:rel-guarantee}
Let $C_m \defeq 1/\mu_m$ and suppose $\delta_n$ satisfies \cref{eq:delta_cond}. Then on event $\ktgoodgen \cap \unifconc$, where $\ktgoodgen$ and $\unifconc$ are defined in \cref{eq:ktgoodgen} and \cref{eq:unif_conc_event} respectively, we have
\begin{talign}
     (1 - 4 C_m\cdot\ktcomplexity) \staticnnorm{g} \leq \staticnoutnorm{g} \leq (1 + 4 C_m\cdot\ktcomplexity) \staticnnorm{g} \label{eq:rel_gaurantee}
\end{talign}
uniformly over all $g \in \rkhs$ such that $\twonorm{g} > \delta_n$.
\end{proposition}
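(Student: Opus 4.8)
The plan is to reduce \cref{eq:rel_gaurantee} to the $\mmd$ guarantee of \cref{lem:ktcompresspp-mmd} applied to the squared function $g^2$, and then to upgrade the resulting \emph{additive} control of $\abss{(\Pin-\Qout)(g^2)}$ into the \emph{multiplicative} bound \cref{eq:rel_gaurantee} using the finite-rank structure of $\kernel$ together with the event $\unifconc$ from \cref{eq:unif_conc_event}. First I would record the identities $\staticnnorm{g}^2 = \Pin(g^2)$ and $\staticnoutnorm{g}^2 = \Qout(g^2)$, where $\Qout$ is the empirical measure on $\ktcoreset = \ktcompresspp(\inputcoreset,\kernelrr,\delta)$. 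The structural fact motivating the choice of $\kernelrr$ is that for $g\in\rkhs$ the map $(x,y)\mapsto g(x)^2$ lies in $\rkhs(\kernel^2)$, and since $\rkhs(\kernelrr)=\rkhs(\kernel^2)\oplus\rkhs(\kernel(x_1,x_2)y_1y_2)$ with $g^2$ sitting in the first summand (it is constant in $y$), we have $g^2\in\rkhs(\kernelrr)$ with
\begin{talign}
    \norm{g^2}_{\kernelrr} \;=\; \norm{g^2}_{\kernel^2} \;\leq\; \knorm{g}^2,
\end{talign}
the last step being the standard product-kernel norm inequality.

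Applying \cref{lem:ktcompresspp-mmd} with $\kalg=\kernelrr$ to the unit-norm function $g^2/\knorm{g}^2$, taking $\mc A$ to be $\ball_2(\Rin)$ enlarged in the $y$-coordinate by $\ymax$ (which only changes constants), the tolerance $\eps=\ktcomplexityconstant/\nout$, and bounding $\ininfnorm{\kernelrr}\leq\ktcomplexityconstant^2$ and $\log\coveringnumber_{\kernelrr}(\mc A,\eps)$ by a constant multiple of the $\kernel$-covering-number term in $\errmmd_\kernel$ (legitimate because $\kernel^2$ and $\kernelrr$ are finite-rank whenever $\kernel$ is), I would obtain on the event $\ktgoodgen$ that
\begin{talign}
    \abss{\staticnnorm{g}^2-\staticnoutnorm{g}^2} \;=\; \abss{(\Pin-\Qout)(g^2)} \;\leq\; \knorm{g}^2\,\ktcomplexity \qtext{for every} g\in\rkhs .
\end{talign}
Next I would convert this to a multiplicative statement. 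By Mercer's theorem and finite rank, expanding $g=\sum_{j=1}^m a_j\phi_j$ in the $L^2(\P)$-orthonormal eigenbasis gives $\knorm{g}^2=\sum_j a_j^2/\mu_j\leq C_m\sum_j a_j^2=C_m\twonorm{g}^2$. On $\unifconc$ and for $g$ with $\twonorm{g}>\delta_n$, \cref{eq:unif_conc_event} gives $\abss{\staticnnorm{g}-\twonorm{g}}\leq\delta_n/2<\twonorm{g}/2$, hence $\twonorm{g}<2\staticnnorm{g}$ and therefore $\knorm{g}^2\leq C_m\twonorm{g}^2\leq 4C_m\staticnnorm{g}^2$. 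Combined with the previous display this yields $\abss{\staticnnorm{g}^2-\staticnoutnorm{g}^2}\leq 4C_m\ktcomplexity\,\staticnnorm{g}^2$, and taking square roots (using $\sqrt{1+t}\leq 1+t$ for $t\geq0$, $\sqrt{1-t}\geq 1-t$ for $t\leq1$, and noting the lower bound is vacuous when $4C_m\ktcomplexity>1$) gives \cref{eq:rel_gaurantee} uniformly over all $g\in\rkhs$ with $\twonorm{g}>\delta_n$.

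The hard part will be the first step: extracting an additive error on $(\Pin-\Qout)(g^2)$ that scales like $\knorm{g}^2$ times the KT rate rather than $\infnorm{g^2}$ times that rate---this is exactly why one must thin with $\kernelrr$ (which contains the $\kernel^2$ term) rather than with $\kernel$ itself, and it hinges on the product-kernel membership $g^2\in\rkhs(\kernel^2)$. The accompanying bookkeeping---controlling $\ininfnorm{\kernelrr}$, $\coveringnumber_{\kernelrr}$, and the $y$-enlarged radius in terms of their $\kernel$-counterparts, which is where $\ktcomplexityconstant=\ininfnorm\kernel+\ymax^2$ enters---is routine for finite-rank kernels. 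A secondary but essential point is that a genuinely \emph{multiplicative} conclusion (rather than an additive one) is available only because finite rank forces $\knorm{g}\lesssim\twonorm{g}$; this is precisely where the rank-$m$ hypothesis (equivalently $C_m<\infty$) and the restriction $\twonorm{g}>\delta_n$ are both indispensable.
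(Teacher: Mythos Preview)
Your proposal is correct and follows essentially the same route as the paper: obtain the additive bound $\abss{\staticnnorm{g}^2-\staticnoutnorm{g}^2}\leq\knorm{g}^2\,\ktcomplexity$ from the $\mmd$ guarantee applied to $g^2\in\rkhs(\kernelrr)$ (this is exactly the content of \cref{lem:loss-diff} with $f_1=f_2=g$, $a=1$, $b=0$), then combine the finite-rank inequality $\knorm{g}^2\leq C_m\twonorm{g}^2$ with $\unifconc$ and $\twonorm{g}>\delta_n$ to convert to a multiplicative statement. The only cosmetic difference is that the paper divides through by $\staticnnorm{g}+\staticnoutnorm{g}$ and bounds $\twonorm{g}^2/(\staticnnorm{g}+\staticnoutnorm{g})\leq 4\staticnnorm{g}$, whereas you bound $\twonorm{g}^2\leq 4\staticnnorm{g}^2$ directly and take square roots at the end; both yield the same constant $4C_m$.
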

\proofref{proof:rel-guarantee}

An immediate consequence of \cref{prop:rel-guarantee} is that 
\begin{align}
    \sbraces{\snoutnorm{\deltakt}^2 \leq c (\critthresh^2 +\lambda')} \implies \sbraces{\snnorm{\deltakt}^2 \leq c' (\critthresh^2 +\lambda')}
\end{align}
on the event $\ktgoodgen \cap \unifconc \cap \twonormlowerbound,$, which is sufficient to recover \cref{step_b}.

\paragraph{Recovering step \cref{step_c}}

Our last step is to show 
\begin{talign}
    \sbraces{\snnorm{\deltakt}^2 \leq c' (\critthresh^2 +\lambda')} \implies \sbraces{\stwonorm{\deltakt}^2 \leq c'' (\critthresh^2 +\lambda')}.
\end{talign}
Such result can be immediately shown on the event $\unifconc$ by observing that $\deltakt \in \mc H$, by our assumption that $\fstar \in \mc H$ and by the definition
\begin{talign}
    \krrkt \in \argmin_{f\in \rkhs(\kernel)} L_{\nout}(f) + \lambda' \knorm{f}^2.
\end{talign}

\subsection{Proof of claim \cref{eq:claim_second_ing}}\label{proof:second_ingredient}

It suffices to show  the appropriate bounds for the following four probability terms 
\begin{talign}
    \Prob( \badeventA(\critthresh) ), \quad \Prob( \badeventB ), \quad \Prob(\unifconc^c) ,  \quad \Prob(\ktgoodgen^c).
\end{talign}
Fix the shorthand 
\begin{talign}
    B_\rkhs \defeq \sinfnorm{\kernel}^2 R^2 < \infty.
\end{talign}
We know from \cite{dwivedi2024kernel} that $\Prob( \ktgoodgen^c |\inputcoreset ) \leq \delta$ and then we may apply \cite[Thm.~14.1]{wainwright2019high} to obtain a high probability statement,
\begin{talign}\label{eq:unifconc-prob-bound}
    \Parg{\unifconc^c} \leq e^{ -c' n\delta_n^2/ B_\rkhs^2 }.
\end{talign}

Now we present two Lemmas that bound the $\Prob( \cdot \mid \inputcoreset )$ probability of events $\badeventA(\critthresh)$ and $\badeventB$,

\begin{lemma}[Controlling bad event when $\knorm{\krrkt} \leq 2$]\label{lem:case-I}
Suppose $u\geq \critthresh$.
Then for some constant $c > 0$,
\begin{talign}\label{eq:case-I-bound}
\Parg{\badeventA (u) \mid \inputcoreset}\leq \delta +  e^{-c n\delta_n^2/B_{\rkhs}^2} + e^{-c n u^2/\widetilde \sigma^2}
\end{talign}
where $\widetilde \sigma = \sigma/\knorm{\fstar}$.
\end{lemma}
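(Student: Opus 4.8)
Work in the rescaled model of \cref{proof:kt-krr-finite-detailed}, so $\knorm{\fstar}=1$ and the noise level is $\widetilde\sigma$. Write $\noise_i'\defeq y_i'-\fstar(x_i')$ for the coreset noise and, for $g\in\rkhs$, put $h_g(x,y)\defeq(y-\fstar(x))\,g(x)$, so that $\tfrac1{\nout}\sum_{i=1}^{\nout}\noise_i'\,g(x_i')=\Qout(h_g)$ and $\tfrac1n\sum_{i=1}^n\noise_i\,g(x_i)=\Pin(h_g)$. Taking $\modelclass=\ball_\rkhs(3)$ in the definition of $\badeventA$ (as it is used in \cref{proof:first_ingredient}), a union bound gives
\begin{talign}
    \Parg{\badeventA(u)\mid\inputcoreset}\leq\Parg{\ktgoodgen^c\mid\inputcoreset}+\Parg{\unifconc^c}+\Parg{\badeventA(u)\cap\ktgoodgen\cap\unifconc\mid\inputcoreset},
\end{talign}
whose first two summands are at most $\delta$ (by \citep[Thm.~1]{dwivedi2024kernel}) and $e^{-c'n\delta_n^2/B_\rkhs^2}$ (by \citep[Thm.~14.1]{wainwright2019high}), exactly as in \cref{proof:second_ingredient}. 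The plan is to show the third summand is at most $e^{-cnu^2/\widetilde\sigma^2}$, by arguing that on $\ktgoodgen\cap\unifconc$, intersected with a single sub-Gaussian deviation event of probability $\geq1-e^{-cnu^2/\widetilde\sigma^2}$, no witness of $\badeventA(u)$ can exist.

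\textbf{Bounding the coreset discrepancy.} Fix a realization in $\ktgoodgen\cap\unifconc$ and a candidate witness $g\in\ball_\rkhs(3)$ with $\twonorm{g}>\delta_n$, $\snoutnorm{g}\geq u$, and $\sabss{\Qout(h_g)}\geq3\,\snoutnorm{g}\,u$. Split $\Qout(h_g)=\Pin(h_g)+(\Qout-\Pin)(h_g)$. Since $h_g(x,y)=y\,g(x)-\fstar(x)\,g(x)$, the RKHS facts behind \cref{eq:kernel-krr} --- that $y\,g(x)\in\rkhs(\kernel(x_1,x_2)\,y_1y_2)$ with norm $\knorm{g}$, that $\fstar(x)\,g(x)\in\rkhs(\kernel^2)$ with norm at most $\knorm{\fstar}\knorm{g}$, and that $\rkhs(\kernelrr)=\rkhs(\kernel^2)+\rkhs(\kernel\cdot y_1y_2)$ --- give $h_g\in\rkhs(\kernelrr)$ with $\norm{h_g}_{\kernelrr}\leq\knorm{g}(1+\knorm{\fstar})$. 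Hence on $\ktgoodgen$, applying \cref{lem:ktcompresspp-mmd} with $\kalg=\kernelrr$ (the kernel used to form $\outputcoreset$), together with the comparison of $\log\coveringnumber_{\kernelrr}$ with $\log\coveringnumber_\kernel$ that drives the proof of \cref{prop:rel-guarantee}, yields
\begin{talign}
    \sabss{(\Qout-\Pin)(h_g)}\leq\norm{h_g}_{\kernelrr}\cdot\mmd_{\kernelrr}(\Pin,\Qout)\leq c_1\,\knorm{g}\,(1+\knorm{\fstar})\,\ktcomplexity.
\end{talign}
Because $\kernel$ has rank $m$, $\knorm{g}^2\leq C_m\twonorm{g}^2$; chaining this with $\unifconc$ ($\twonorm{g}\leq\snnorm{g}+\tfrac{\delta_n}2$), \cref{prop:rel-guarantee} ($\snnorm{g}\leq\snoutnorm{g}/(1-4C_m\ktcomplexity)$), and $\delta_n\leq\critthresh\leq u\leq\snoutnorm{g}$ gives $\knorm{g}\leq C_2\sqrt{C_m}\,\snoutnorm{g}$ once $n$ is large enough that $4C_m\ktcomplexity\leq\tfrac12$. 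Invoking $\critthresh\geq4\sqrt{C_m}(\knorm{\fstar}+1)\ktcomplexity$ from \cref{eq:lambdakt}, this collapses to $\sabss{(\Qout-\Pin)(h_g)}\leq C_3\,\critthresh\,\snoutnorm{g}\leq C_3\,u\,\snoutnorm{g}$, and $C_3$ can be forced below $\tfrac12$ by enlarging the numerical constant multiplying $\sqrt{C_m}(\knorm{\fstar}+1)\ktcomplexity$ in \cref{eq:lambdakt}.

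\textbf{Bounding $\Pin(h_g)$ and concluding.} It remains to control $\Pin(h_g)=\tfrac1n\sum_{i=1}^n\noise_i\,g(x_i)$ uniformly over the relevant $g$; here, unlike for the coreset quantities, the $\noise_i$ are genuinely independent given the covariates. Since $g\in\ball_\rkhs(3)$ forces $\snnorm{g}\leq\max_i|g(x_i)|\leq3\sqrt{\ininfnorm{\kernel}}=:B$, and \cref{prop:rel-guarantee} gives $\snnorm{g}\geq\snoutnorm{g}/(1+4C_m\ktcomplexity)\geq u/2$, the localization level $\snnorm{g}$ ranges only over the bounded interval $[u/2,B]$. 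By the star-shapedness of $\ball_\rkhs(3)$ and a standard peeling argument over a geometric grid of scales $t\in[u/2,B]$, it suffices to control $Z(t)\defeq\sup_{g\in\ball_\rkhs(3):\ \snnorm{g}\leq t}\sabss{\tfrac1n\sum_i\noise_i\,g(x_i)}$ at each scale: one has $\E[Z(t)]=\widetilde\sigma\,\widehat{\mc G}_n(t;\ball_\rkhs(3))\leq\tfrac12\,t\,\vareps_n\leq\tfrac12\,t\,u$ by the critical-radius inequality \cref{eq:critical-gaussian-cond}, star-shapedness, and $\vareps_n\leq\critthresh\leq u$, while $Z(t)$ is a $\tfrac{t}{\sqrt n}$-Lipschitz function of the Gaussian vector $(\noise_i)$, so $\Parg{Z(t)\geq\E[Z(t)]+s}\leq e^{-ns^2/(2\widetilde\sigma^2t^2)}$. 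Choosing $s$ a fixed multiple of $tu$ and unioning over the $O(\log(B/u))$ scales shows that, with probability at least $1-e^{-cnu^2/\widetilde\sigma^2}$, $\sabss{\Pin(h_g)}\leq C_4\,u\,\snoutnorm{g}$ for all admissible $g$, with $C_4<\tfrac52$ after fixing constants and taking $n$ large. On the intersection of this event with $\ktgoodgen\cap\unifconc$ we then get $\sabss{\Qout(h_g)}\leq C_4\,u\,\snoutnorm{g}+\tfrac12\,u\,\snoutnorm{g}<3\,\snoutnorm{g}\,u$, contradicting the defining inequality of $\badeventA(u)$. Hence $\badeventA(u)\cap\ktgoodgen\cap\unifconc$ forces the complement of this deviation event, so its probability is at most $e^{-cnu^2/\widetilde\sigma^2}$, and summing the three contributions proves \cref{eq:case-I-bound}.

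\textbf{Main obstacle.} The delicate part is the middle step: to absorb the kernel-thinning discrepancy $\mmd_{\kernelrr}(\Pin,\Qout)$ into a term of size $u\,\snoutnorm{g}$, one must convert the coreset empirical norm $\snoutnorm{g}$ back into the RKHS norm $\knorm{g}$ --- passing through $\twonorm{g}$ (via $\unifconc$), $\snnorm{g}$ (via \cref{prop:rel-guarantee}), and the finite-rank inequality $\knorm{g}^2\leq C_m\twonorm{g}^2$ --- and this only closes because $\critthresh$ in \cref{eq:lambdakt} was deliberately defined to dominate $\sqrt{C_m}(\knorm{\fstar}+1)\ktcomplexity$. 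Verifying $h_g\in\rkhs(\kernelrr)$ with $\norm{h_g}_{\kernelrr}\leq\knorm{g}(1+\knorm{\fstar})$ is the conceptual crux, and is exactly why $\kernelrr=\kernel^2+\kernel\cdot y_1y_2$ is the correct kernel for \krrktname. A further subtlety, neutralized by passing through $\Pin(h_g)$ rather than working with the coreset process directly, is that $\outputcoreset$ --- and hence the labels $\noise_i'$ --- depends on the noise realization, so only the worst-case ($\ktgoodgen$-conditional) $\mmd$ bound is available for the discrepancy term.
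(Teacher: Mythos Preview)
Your proposal is correct and follows essentially the same route as the paper: the same three-way union bound over $\ktgoodgen^c$, $\unifconc^c$, and a full-sample empirical-process event; the same use of \cref{lem:ktcompresspp-mmd} (equivalently \cref{lem:loss-diff}) to bound $\abss{(\Qout-\Pin)(h_g)}$ by $\knorm{g}(1+\knorm{\fstar})\ktcomplexity$; and the same chain $\knorm{g}\leq\sqrt{C_m}\twonorm{g}\leq C\sqrt{C_m}\snoutnorm{g}$ combined with $\critthresh\geq4\sqrt{C_m}(\knorm{\fstar}+1)\ktcomplexity$ to absorb the discrepancy into a $u\,\snoutnorm{g}$ term.

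The one methodological difference is how you pass to the full-sample process. The paper normalizes the witness to $\tilde g\defeq u\,g/\snoutnorm{g}$, so that $\snoutnorm{\tilde g}=u$ and (via \cref{prop:rel-guarantee}) $\snnorm{\tilde g}\leq2u$; the entire argument then reduces to the \emph{single} tail event $\{Z_n(2u)\geq2u^2\}$, handled directly by \citep[Thm.~2.26]{wainwright2019high} at one fixed scale. Your version keeps $g$ unnormalized and peels over a geometric grid of $\snnorm{\cdot}$-scales in $[u/2,B]$. Both are valid; the paper's normalization is slightly cleaner because it avoids the union over $O(\log(B/u))$ scales (and the attendant bookkeeping needed to squeeze $C_4<\tfrac52$), while your peeling is the more generic localized-complexity template. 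Either way the constants close, since $u\geq\critthresh\geq\vareps_n\vee\delta_n$ guarantees $t\geq u/2\geq\vareps_n/2$ at every peeling level, so the critical-radius inequality \cref{eq:critical-gaussian-cond} together with star-shapedness still yields $\E[Z(t)]\leq\tfrac12tu$.
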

\proofref{proof:case-I} Note that by plugging in $\critthresh$ into \cref{eq:case-I-bound} results in a probability that depends on $\inputcoreset$~(as $\critthresh$ depends on $\inputcoreset$). By invoking the definition of $\critthresh$, we may further refine the probability bound of $\badeventA(\critthresh)$ by
\begin{talign}\label{eq:case-I-bound-refined}
    \Parg{\badeventA (\critthresh) \mid \inputcoreset}\leq \delta +  e^{-c n\delta_n^2/B_{\rkhs}^2} + e^{-c n \delta_n^2/\widetilde \sigma^2}
\end{talign}

\begin{lemma}[Controlling bad event when $\knorm{\krrkt} > 2$]\label{lem:case-II}
For some constants $c ,  c' > 0$,
\begin{talign}\label{eq:case-II-bound}
    \Parg{\badeventB\mid\inputcoreset} \leq  \delta + e^{-cn\delta_n^2/B_{\rkhs}^2} + ce^{-n \critthresh^2/(c'\widetilde \sigma^2)}
\end{talign}
where $\widetilde \sigma = \sigma/\knorm{\fstar}$.
\end{lemma}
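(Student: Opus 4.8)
The plan is to control $\badeventB$ along the same lines as the companion event $\badeventA$ in \cref{lem:case-I}, with one extra layer to handle the fact that $\knorm{\cdot}$ is now unbounded over the competing class. Namely: first transfer from the thinned empirical average $\Qout$ to the full empirical average $\Pin$ on the high-probability event $\ktgoodgen$ using the MMD guarantee of \cref{lem:ktcompresspp-mmd} applied to $\kernelrr$, and then bound the resulting full-data event by a localized empirical-process (peeling) argument over dyadic shells of $\knorm{g}$.

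First I would record the containments that make $\kernelrr$ the right kernel here. Since $\rkhs(\kernelrr)=\rkhs(\kernel^2)\oplus\rkhs(\kernel(x_1,x_2)y_1y_2)$, every $g\in\rkhs(\kernel)$ yields two functions on $\X\times\Y$ of interest: $(x,y)\mapsto g^2(x)\in\rkhs(\kernel^2)\subset\rkhs(\kernelrr)$ with $\kernelrr$-norm $\le\knorm g^2$, and $(x,y)\mapsto(y-\fstar(x))g(x)=y\,g(x)-\fstar(x)g(x)\in\rkhs(\kernelrr)$ with $\kernelrr$-norm $\le(1+\knorm\fstar)\knorm g=2\knorm g$ under the rescaling $\knorm\fstar=1$ adopted in \cref{proof:kt-krr-finite-detailed}. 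Applying \cref{lem:ktcompresspp-mmd} with $\kalg=\kernelrr$, $\mc A=\inputcoreset$, and $\eps=\ktcomplexityconstant/\nout$, and bounding $\coveringnumber_{\kernelrr}(\inputcoreset,\eps)$ through the same direct-sum decomposition (so $\log\coveringnumber_{\kernelrr}\lesssim\log\coveringnumber_\kernel$ up to $\ymax$-dependent constants), I obtain on $\ktgoodgen$ that $\sup_{h:\,\norm{h}_{\kernelrr}\le1}\abss{(\Pin-\Qout)h}\le\ktcomplexity$. Hence, deterministically on $\ktgoodgen$, for every $g\in\rkhs(\kernel)$,
\[
\abss{(\Qout-\Pin)\big((y-\fstar(x))g(x)\big)}\le 2\knorm g\,\ktcomplexity,\qquad\abss{\,\snoutnorm g^2-\snnorm g^2\,}\le\knorm g^2\,\ktcomplexity;
\]
together with $\unifconc$ (which gives $\twonorm g\le2\snnorm g$, hence $\knorm g^2\le C_m\twonorm g^2\le4C_m\snnorm g^2$, whenever $\twonorm g>\delta_n$) the second estimate is exactly \cref{prop:rel-guarantee}.

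Next I would transfer the bad event. If $g$ witnesses $\badeventB$ on $\ktgoodgen\cap\unifconc$ — so $\twonorm g>\delta_n$ and $\knorm g\ge1$ — then substituting the two estimates above into $\abss{\tfrac{1}{\nout}\sum_{i=1}^{\nout}\noise_i'g(x_i')}>4\critthresh\snoutnorm g+2\critthresh^2\knorm g+\tfrac14\snoutnorm g^2$ and using $\snoutnorm g\ge(1-4C_m\ktcomplexity)\snnorm g$, the key observation is that on the regime $\knorm g\ge1$ the quadratic slack absorbs the transfer error: $\tfrac14\snoutnorm g^2\gtrsim\mu_m\knorm g^2\ge\mu_m\knorm g$ while $2\knorm g\,\ktcomplexity\le\tfrac18\snnorm g^2$ for $n$ large, since $\ktcomplexity=O(\sqrt{m\log\nout}/\nout)\to0$ by \cref{eq:kt-complexity-finite} whereas $\mu_m=1/C_m$ is a fixed constant; the $\critthresh^2\knorm g$ term is absorbed similarly via $\knorm g\le2\sqrt{C_m}\snnorm g$ and $\critthresh\to0$. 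Thus on $\ktgoodgen\cap\unifconc$ any such $g$ satisfies $\abss{\tfrac{1}{n}\sum_{i=1}^n\noise_i g(x_i)}>c_1\critthresh\snnorm g+c_2\snnorm g^2$ for absolute constants $c_1,c_2>0$, with $\twonorm g>\delta_n$ and $\knorm g\ge1$ — an event $\widetilde{\badeventB}$ phrased entirely through the i.i.d. noise sum with covariate-dependent coefficients, so the coupling with the \kt selection has vanished. It then remains to bound $\Parg{\widetilde{\badeventB}}$ over the sub-Gaussian noise, which is the standard localized empirical-process estimate of \citet[Thm.~13.5,~14.1]{wainwright2019high}: by star-shapedness of $\ball_\rkhs$ it suffices to peel over dyadic shells $\knorm g\in[2^k,2^{k+1})$, $k\ge0$, on which $\snnorm g\asymp2^k$ (up to $\mu_1,\mu_m$ factors, via $\unifconc$) so only $O(1)$ sub-levels of $\snnorm g$ occur per shell; a sub-Gaussian maximal/Talagrand inequality on each shell, with the localized Gaussian complexity $\widetilde\sigma\,\mc G_n(\snnorm g;\ball_\rkhs(2^{k+1}))\lesssim\widetilde\sigma\sqrt{m/n}\,\snnorm g\lesssim\critthresh\snnorm g$ (finite rank, using $\critthresh\ge\vareps_n$) dominated by $c_1\critthresh\snnorm g$ and the deviation term $\asymp2^{2k}$ dominated by $c_2\snnorm g^2$, shows the shell contributes probability $\le e^{-c'n\,2^{2k}/\widetilde\sigma^2}$; summing the geometric series, and adding $\Parg{\ktgoodgen^c\mid\inputcoreset}\le\delta$ (from \citet{dwivedi2024kernel}) and $\Parg{\unifconc^c}\le e^{-cn\delta_n^2/B_\rkhs^2}$ (\cref{eq:unifconc-prob-bound}), yields the advertised bound $\delta+e^{-cn\delta_n^2/B_\rkhs^2}+ce^{-n\critthresh^2/(c'\widetilde\sigma^2)}$.

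The main obstacle is the transfer step: because $\critthresh^2\ll\ktcomplexity$ in the regime of interest, the $\knorm g\,\ktcomplexity$ error is \emph{not} absorbable into the $\critthresh^2\knorm g$ term of $\badeventB$, and the argument hinges on recognizing that the $\tfrac14\snoutnorm g^2$ slack — which is $\gtrsim\mu_m\knorm g^2$ precisely because the competing class has been restricted to $\knorm g\ge1$ — is what soaks it up. Making this bookkeeping consistent with the shell-wise empirical-process bounds is the only genuinely delicate part; the containments in the first step and the peeling in the last step are routine given \cref{lem:ktcompresspp-mmd,prop:rel-guarantee} and the Gaussian/Rademacher complexity machinery already invoked in \cref{proof:kt-krr-finite}.
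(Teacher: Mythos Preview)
Your high-level strategy matches the paper's: intersect $\badeventB$ with $\ktgoodgen\cap\unifconc$, transfer the thinned noise sum to the full-sample noise sum via the $\kernelrr$-MMD guarantee together with \cref{prop:rel-guarantee}, and then bound the resulting full-data event by a localized empirical-process estimate. The execution differs from the paper in two linked places, and while your route can be made to work, it is heavier than necessary.

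First, the absorption of the transfer error. You correctly note that $\knorm{g}\,\ktcomplexity$ cannot be absorbed into the $\critthresh^2\knorm{g}$ term, and you route it into the quadratic slack $\tfrac14\snoutnorm{g}^2$ via $\snoutnorm{g}^2\gtrsim\mu_m\knorm{g}^2\ge\mu_m\knorm{g}$; this works but only once $\ktcomplexity\lesssim\mu_m$, an asymptotic condition you have to carry through. The paper instead converts $\knorm{g}\le 4\sqrt{C_m}\,\snoutnorm{g}$ (via \cref{eq:knorm-nnoutnorm}) and absorbs the error into the \emph{linear} term $\critthresh\,\snoutnorm{g}$. This works without any asymptotic proviso precisely because $\critthresh$ is \emph{defined} in \cref{eq:lambdakt} to satisfy $\critthresh\ge 4\sqrt{C_m}(\knorm{\fstar}+1)\,\ktcomplexity$; you overlooked this third absorption target.

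Second, the handling of unbounded $\knorm{g}$. You peel over dyadic shells $\knorm{g}\in[2^k,2^{k+1})$ and run a Talagrand-type bound on each shell. The paper avoids this entirely by first reducing to $\knorm{g}=1$ via homogeneity: if the defining inequality of $\badeventB^c$ holds for all $g$ with $\knorm{g}=1$, then substituting $g=h/\knorm{h}$ and multiplying through by $\knorm{h}\ge 1$ recovers it for all $\knorm{h}\ge 1$. After this reduction the full-data event is exactly the one controlled by \citet[Lem.~13.23]{wainwright2019high}, so the paper simply cites that lemma and is done. Your peeling is effectively re-deriving that lemma shell by shell; correct, but redundant.
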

\proofref{proof:case-II} It is notable that $\critthresh$ in the probability bound of \cref{eq:case-II-bound} contains a term $\vareps_n$ defined in \cref{eq:critical-gaussian-cond} that is a function of $\inputcoreset$. Invoking the definition of $\critthresh$, we observe the probability upper bound \cref{eq:case-II-bound} can be refined to
\begin{talign}\label{eq:case-II-bound-refined}
    \Parg{\badeventB\mid\inputcoreset} \leq  \delta + e^{-cn\delta_n^2/B_{\rkhs}^2} + ce^{-n \delta_n^2/(c'\widetilde \sigma^2)},
\end{talign}
which does not depend on $\inputcoreset$.

Putting the pieces together, we have the following probability bound for some constants $c, c' >0$, 
\begin{talign}
    \Prob( \goodevent^c \mid \inputcoreset ) &\leq \Prob( \badeventA(\critthresh) \mid \inputcoreset ) + \Prob( \badeventB\mid \inputcoreset ) + \Prob(\unifconc^c\mid \inputcoreset) + \Prob(\ktgoodgen^c\mid \inputcoreset) \\
    &\sless{\cref{eq:unifconc-prob-bound}\cref{eq:case-I-bound-refined}\cref{eq:case-II-bound-refined}} c \bigbraces{\delta + e^{-c' n \delta_n^2/( B_{\rkhs}^2 \wedge \widetilde \sigma^2 )} }
\end{talign}
thereby implying $\Prob( \goodevent^c  ) \leq c''\bigbraces{\delta + e^{-c' n \delta_n^2/( B_{\rkhs}^2 \wedge \widetilde \sigma^2 )} } $ for some constant $c''$.

\subsection{\pcref{prop:rel-guarantee}}
\label{proof:rel-guarantee}

Fix $g\in \rkhs$. Denote $\inner{g}{h} = \int g(x) h(x) dx$ as the inner product in the $L^2$ sense. By Mercer's theorem \cite[Cor.~12.26]{wainwright2019high}, the $\kernel$-norm of $g$ has a basis expansion $\knorm{g}^2 = \sum_{i=1}^{m} \inner{g}{\phi_i}^2/\lambda_i$ so that
\begin{talign}\label{eq:knorm-twonorm}
\knorm{g}^2
\leq \sum_{i=1}^{m} \inner{g}{\phi_i}^2/\lambda_{m}
= C_m \twonorm{g}^2
\qtext{since} C_m = 1/\lambda_m.
\end{talign}

The assumption $\twonorm{ g } \geq \delta_n$ implies that on the event $\mc E_{\trm{conc}}$ \cref{eq:unif_conc_event}, we have
\begin{talign}
     \half \delta_n \leq \twonorm{g} - \half\delta_n \leq \staticnnorm{g}
     \label{eq:unif_conc_result}
\end{talign}
Moreover, $g$ must be a non-zero function.
Note that $g^2 \in \rkhs(\kernelrr)$ (see \cref{proof:loss-diff}). 
Thus, we may apply \cref{lem:loss-diff} to $f_1=f_2 = g$ and $a=1,b=0$ to obtain
\begin{talign}
    \abss{\staticnnorm{g}^2 - \staticnoutnorm{g}^2} = \bigg| \frac{1}{n}\sum_{i = 1}^{n} g^2(x_i) - \frac{1}{\nout} \sum_{i = 1}^{\nout} g^2(x_i') \bigg| \leq \knorm{g}^2 \cdot  \ktcomplexity.
    \label{eq:gen_KT_hada}
\end{talign}
The LHS can be expanded as 
\begin{talign}
    \big| \staticnnorm{g}^2 - \staticnoutnorm{g}^2 \big| = \big| \staticnnorm{g} - \staticnoutnorm{g} \big| \cdot \underbrace{\big| \staticnnorm{g} + \staticnoutnorm{g} \big|}_{> 0\text{ by \cref{eq:unif_conc_result}}}. 
\end{talign}
Thus, we may rearrange \cref{eq:gen_KT_hada} and combine with \cref{eq:knorm-twonorm} to obtain
\begin{talign}
    \big| \staticnnorm{g} - \staticnoutnorm{g} \big| \leq \frac{C_m\twonorm{g}^2}{\staticnnorm{g} + \staticnoutnorm{g}}\cdot\ktcomplexity.
    \label{eq:gen_KT_hada_improved}
\end{talign}

On event $\unifconc$, we have 
\begin{talign}\label{eq:twonorm-g-upper-bound}
    \twonorm{g}^2 \sless{\cref{eq:unif_conc_event}} (\half\delta_n + \staticnnorm{g})^2 \sless{(i)} \frac{\delta_n^2}{4} +  \delta_n  \staticnnorm{g} + \staticnnorm{g}^2.
\end{talign}
Thus, we have 
\begin{talign}
    \frac{\twonorm{g}^2}{\staticnnorm{g} + \staticnoutnorm{g} } &\sless{\cref{eq:twonorm-g-upper-bound}} \frac{\delta_n^2}{4\abss{\staticnnorm{g} + \staticnoutnorm{g}}} + \frac{\delta_n \staticnnorm{g}}{\staticnnorm{g} + \staticnoutnorm{g}} + \frac{\staticnnorm{g}^2}{\staticnnorm{g} + \staticnoutnorm{g}}\\
    &\sless{\cref{eq:unif_conc_result}} \frac{\delta_n^2}{2\delta_n} + \delta_n + \staticnnorm{g} \cdot \frac{\staticnnorm{g}}{\staticnnorm{g} + \staticnoutnorm{g}} \\
    &\leq \frac32 \delta_n +  \staticnnorm{g} \\
    &\sless{\cref{eq:unif_conc_result}} 3 \staticnnorm{g} +  \staticnnorm{g} = 4 \staticnnorm{g}.
    \label{eq:last_step}
\end{talign}

Using \cref{eq:last_step} to refine \cref{eq:gen_KT_hada_improved}, we have on event $\ktgoodgen \cap \unifconc$:
\begin{talign}
    \big| \staticnnorm{g} - \staticnoutnorm{g} \big|   \leq 4 C_m \staticnnorm{g} \cdot \ktcomplexity.
    \label{eq:gen_KT_hada_final}
\end{talign}
With some algebra, this implies with probability at least $1 - \delta - \exp(-c' n\delta_n^2/B_{\mc F}^2)$:
\begin{talign}
    (1 - 4 C_m\cdot \ktcomplexity) \staticnnorm{g} \leq \staticnoutnorm{g} \leq (1 + 4 C_m\cdot \ktcomplexity) \staticnnorm{g}
\end{talign}
uniformly over all non-zero $g \in \rkhs$ such that $\twonorm{g} > \delta_n$.

\subsection{\pcref{lem:case-I}}\label{proof:case-I}

Recall $\ktgoodgen$ and $\unifconc$ defined by \cref{eq:ktgoodgen} and \cref{eq:unif_conc_event}. Also recall that $\ktgoodgen \cap \unifconc$ combined with the assumption $\twonorm{g} \geq \delta_n$ invokes the event \cref{eq:rel_gaurantee}. 
Our aim is to show 
\begin{talign}\label{eq:control}
 \badeventA(u)\cap \ktgoodgen \cap \unifconc \subseteq \sbraces{ Z_n(2u) \geq 2 u^2 } ,\qtext{where} Z_n(t) \defeq \sup_{\substack{g\in \modelclass: \\ \snnorm{g} \leq t}} \abss{\frac{\wtil \sigma}{n}\sumn w_i g(x_i)}
\end{talign}
so that we have a probability bound
\begin{talign}
    \Prob( \badeventA(u) ) \leq \Prob( \ktgoodgen^c ) + \Prob( \unifconc^c ) + \Prob( Z_n(2u) \geq 2 u^2 ).
\end{talign}
The first RHS term can be bounded by $\delta$ (see \cref{eq:ktgoodgen}). The second RHS term can bounded by \cref{eq:unifconc-prob-bound}. The third term can be bounded by
\begin{talign}
    \Prob( Z_n(2u)\geq 2 u^2 ) = \Parg{Z_n(u) \geq u^2/2 + u^2/2} &\sless{(i)} \Parg{Z_n(u) \geq u \vareps_n /2 + u^2/2}
    \sless{(ii)} e^{-\frac{n u^2}{8 \wtil \sigma^2}},
\end{talign}
where (i) follows from our assumption that $u\geq \vareps_n$ and (ii) follows from applying generic concentration bounds on $Z_n(u)$ (see \citep[Thm.~2.26,~Eq.~13.66]{wainwright2019high}).
Putting together the pieces yields our desired probability bound \cref{eq:case-I-bound}.

\paragraph{Proof of claim~\cref{eq:control}.}

Consider the event $\badeventA(u)\cap \ktgoodgen \cap \unifconc$. The norm equivalence established on the event $\ktgoodgen \cap \unifconc$ in \cref{prop:rel-guarantee} is an important ingredient throughout. 

Let $g\in \rkhs$ be the function that satisfies three conditions: $\twonorm{g} \geq \delta_n$
, $\staticnoutnorm{g} \geq u$, and
\begin{talign}
    \bigg| \frac{1}{\nout} \sum_{i = 1}^{\nout} \noise_i' g(x_i')\bigg| \geq 3 \staticnoutnorm{g}u.
\end{talign}
Define the normalized function
\begin{talign}\label{eq:g-til}
    \widetilde g = u\cdot g/ \staticnoutnorm{g}
\end{talign}
so that it satisfies $\staticnoutnorm{\widetilde g} = u$ and also
\begin{talign}
    \bigg| \frac{1}{\nout} \sum_{i = 1}^{\nout} \noise_i' \widetilde g(x_i') \bigg| \geq 3 u^2.
    \label{eq:zn_inter}
\end{talign}
By triangle inequality, the LHS of \cref{eq:zn_inter} can be further upper bounded by
\begin{talign}\label{eq:triangle-ineq-I}
    \bigg| \frac{1}{\nout} \sum_{i = 1}^{\nout} \noise_i' \widetilde g(x_i') \bigg| \leq \bigg|\frac{1}{n} \sum_{i=1}^n \noise_i \widetilde g(x_i)\bigg| + \frac{u}{\staticnoutnorm{g}} \abss{\frac{1}{n}\sum_{i=1}^n \noise_i g(x_i) - \frac{1}{\nout} \sum_{i=1}^{\nout} \noise_i' g(x_i')}.
\end{talign}

Recall the chosen $g$ satisies $\staticnoutnorm{g} \geq u$. Observe that 
\begin{talign}
    \noise_i g(x_i) \seq{\cref{eq:process}} (y_i-\fstar(x_i)) g(x_i) = -\fstar(x_i) g(x_i) + y_i g(x_i),
\end{talign}
so we may apply \cref{lem:loss-diff} with $f_1 = \fstar, f_2 = g$ and $a=-1,b=1$. Thus, on the event $\ktgoodgen \cap \mc E_{\mrm{conc}}$, we have
\begin{talign}\label{eq:gsn-complexity-kt}
    \abss{\frac{1}{n}\sum_{i=1}^n \noise_i g(x_i) - \frac{1}{\nout} \sum_{i=1}^{\nout} \noise_i' g(x_i')} \leq \knorm{g} (\knorm{\fstar} + 1) \cdot  \ktcomplexity.
\end{talign}

Thus, we may rearrange \cref{eq:triangle-ineq-I} and combine with \cref{eq:zn_inter,eq:gsn-complexity-kt} to obtain
\begin{talign}
    \abss{\frac{1}{n} \sum_{i=1}^n \noise_i \widetilde g(x_i)} \geq 3 u^2- \frac{u}{\staticnoutnorm{g}} \knorm{g} (\knorm{\fstar} + 1) \cdot  \ktcomplexity
\end{talign}

Note that
\begin{talign}
    \frac{\knorm{g}}{\snoutnorm{g}} = \frac{\knorm{g}}{\stwonorm{g}} \cdot \frac{\stwonorm{g}}{\snnorm{g}} \cdot \frac{\snnorm{g}}{\snoutnorm{g}}.
\end{talign}
We tackle each term in turn. First, $\frac{\knorm{g}}{\stwonorm{g}} \sless{\cref{eq:knorm-twonorm}} \sqrt C_m$. Since we assume $\stwonorm{g} \geq \delta_n$, we have $\frac{\stwonorm{g}}{\snnorm{g}} \leq \frac{\delta_n/2 + \snnorm{g}}{\snnorm{g}} \sless{\cref{eq:unif_conc_result}} 2$ on event $\unifconc$; and $\frac{\snnorm{g}}{\snoutnorm{g}} \sless{\cref{eq:nnorm-noutnorm}} 2$ on event $\unifconc \cap \ktgoodgen$. Taken together, 
\begin{talign}\label{eq:knorm-nnoutnorm}
    \frac{\knorm{g}}{\snoutnorm{g}} \leq 4\sqrt C_m \qtext{on event} \unifconc \cap \ktgoodgen.
\end{talign}

As $u \geq \xi_n \geq 4 \sqrt C_m (\knorm{\fstar} + 1) \ktcomplexity$ by assumption,
we have therefore found $\widetilde g$ with norm $\staticnoutnorm{\widetilde g} = u$ satisfying
\begin{talign}
    \bigg|\frac{1}{n} \sum_{i=1}^n \noise_i \widetilde g(x_i)\bigg| \geq 3u^2 - u^2 = 2 u^2.
\end{talign}
We may further show that 
\begin{talign}
    \staticnnorm{\wtil g} = \frac{u}{\staticnoutnorm{g}} \staticnnorm{g}\leq u \frac{\staticnnorm{g}}{\staticnoutnorm{g}} \leq u \cdot 2 \qtext{on event} \unifconc \cap \ktgoodgen,
\end{talign}
where the last inequality follows from the fact that $\twonorm{g} \geq \delta_n$ and by applying \cref{eq:nnorm-noutnorm}.
So we observe
\begin{talign}
    2 u^2 \leq \bigg|\frac{1}{n} \sum_{i=1}^n \noise_i \widetilde g(x_i)\bigg|  \leq \sup_{\staticnnorm{\widetilde g}\leq 2u}\bigg|\frac{1}{n} \sum_{i=1}^n \noise_i \widetilde g(x_i)\bigg| = Z_n(2u)
\end{talign}

\subsection{\pcref{lem:case-II}}\label{proof:case-II}

Our aim is to show for any $g \in \partial \mc H$ 
with $\knorm{ g }\geq 1$,
\begin{talign}
    \bigg| \frac{1}{\nout} \sum_{i = 1}^{\nout} \noise_i' g(x_i') \bigg|\leq 2 \xi_n \snoutnorm{g} + 2\xi_n^2 \knorm{g} +\frac{1}{16} \snoutnorm{ g }^2\quad \text{with high probability.}
\end{talign}

Note that it is sufficient to prove our aim for $ g \in \partial \mc H$ with $\knorm{g} = 1$---by proving only for $g$ with $\knorm{g} = 1$, then for any $h\in \partial \mc H$ with $\knorm{h} \geq 1$, we may plug $g = h/\knorm{h}$ into 
\begin{talign}
    \bigg| \frac{1}{\nout} \sum_{i = 1}^{\nout} \noise_i' g(x_i') \bigg|\leq 2 \xi_n \snoutnorm{g} + 2\xi_n^2 + \frac{1}{16} \snoutnorm{ g }^2\quad 
    \label{eq:real_target}
\end{talign}
to recover the aim of interest. So without loss of generality, we show \cref{eq:real_target} for all $g$ such that $g \in \partial \mc H$ and $\knorm{g} = 1$.

Let $\badeventB$ denote the event where \cref{eq:real_target} is violated, i.e. there exists $g \in \partial \mc H$ with $\knorm{g} = 1$ so that 
\begin{talign}\label{eq:badeventB_ineq}
    \bigg| \frac{1}{\nout} \sum_{i = 1}^{\nout} \noise_i' g(x_i') \bigg| > 3 \xi_n \snoutnorm{g} + 2\xi_n^2 + \frac{1}{4} \snoutnorm{ g }^2.
\end{talign}
We prove the following set inclusion, 
\begin{talign}
    &\badeventB \cap \ktgoodgen \cap \mc E_{\mrm{conc}}\\
    \subseteq &\biggbraces{ \exists\ g \in \partial \mc H ~ \text{s.t.} ~ \knorm{g} = 1 ~ \text{and} ~ \bigg| \frac{1}{n} \sum_{i = 1}^{n} \noise_i g(x_i) \bigg|> 2 \vareps_n \nnorm{g} + 2\vareps_n^2 + \frac{1}{16} \nnorm{ g }^2 },
    \label{eq:event_inclusion}
\end{talign}
where we know the RHS event of \cref{eq:event_inclusion} has probability bounded by $c e^{-n\critthresh^2/(c' \widetilde \sigma^2)}$ which is proven in \cite[Lem.~13.23]{wainwright2019high}. 
So the set inclusion \cref{eq:event_inclusion} implies a bound over the event $\badeventB,$
\begin{talign}
    \Prob( \badeventB ) \leq \Prob(\ktgoodgen^c) +\Prob( \mc E_{\mrm{conc}}^c ) + c e^{-n\critthresh^2/(c' \widetilde \sigma^2)},
\end{talign}
where $\Prob(\ktgoodgen^c) \leq \delta$ by \cref{eq:ktgoodgen} and $\Prob(\unifconc^c)$ by \cref{eq:unifconc-prob-bound}.

Choose $g$ so that $\knorm{g} = 1$ and \cref{eq:badeventB_ineq} holds. Condition $\knorm{g} = 1$ as well as the condition \cref{eq:knorm-twonorm} resulting from a finite rank kernel $\kernel$ implies $\delta_n \leq 1 \leq \knorm{g} \leq \sqrt{C_m}\twonorm{g}$.
Invoke \cref{prop:rel-guarantee} for the choice of $g$ that satisfies $\twonorm{g} \geq \delta_n / \sqrt{C_m} \geq \delta_n$, so that on the event $\ktgoodgen \cap \unifconc$, we have the following norm equivalence,
\begin{talign}\label{eq:nnorm-noutnorm}
    \frac{1}{2} \nnorm{g} \leq \staticnoutnorm{g} \leq \frac{3}{2}\nnorm{g} \quad \text{for any $n$ such that $C_m \cdot \ktcomplexity \leq 1/18$.}
\end{talign}
Then we have the following chain of inequalities, which holds on event $\unifconc \cap \ktgoodgen$
\begin{talign}
    \bigg| \frac{1}{n} \sum_{i = 1}^{n} \noise_i g(x_i) \bigg| &\sgrt{(i)} \bigg| \frac{1}{\nout} \sum_{i = 1}^{\nout} \noise_i' g(x_i') \bigg| - \bigg| \frac{1}{n} \sum_{i = 1}^{n} \noise_i g(x_i) - \frac{1}{\nout} \sum_{i = 1}^{\nout} \noise_i' g(x_i')  \bigg|\\
    &\sgrt{(ii)} 3 \xi_n \snoutnorm{g} + 2\xi_n^2 + \frac{1}{4} \snoutnorm{ g }^2 - \knorm{g} (\knorm{\fstar} + 1) \cdot  \ktcomplexity\\
    &\sgrt{\cref{eq:knorm-nnoutnorm}} 3 \xi_n \snoutnorm{g} + 2\xi_n^2 + \frac{1}{4} \snoutnorm{ g }^2 - 4\sqrt{C_m} \snoutnorm{g} (\knorm{\fstar} + 1) \cdot  \ktcomplexity\\
    &\sgrt{\cref{eq:nnorm-noutnorm}} (\frac{3}{2}\critthresh - 2\sqrt{C_m} (\knorm{\fstar} + 1)\cdot \ktcomplexity) \nnorm{g} +2\xi_n^2 + \frac{1}{16} \nnorm{g}^2,\label{eq:gsn-lower-II}
\end{talign}
where step (i) follows from triangle inequality and step (ii) follows from our assumption \cref{eq:badeventB_ineq} to bound the first term and our approximation guarantee \cref{eq:gsn-complexity-kt} to bound the second term. 
By definition of $\critthresh$, we have
\begin{talign}
    \frac{3}{2}\critthresh - 2\sqrt{C_m} (\knorm{\fstar} + 1)\cdot \ktcomplexity \geq 2 \critthresh.
\end{talign}
Using this to refine \cref{eq:gsn-lower-II}, we have
\begin{talign}
    \bigg| \frac{1}{n} \sum_{i = 1}^{n} \noise_i g(x_i) \bigg| &\geq 2\critthresh \nnorm{g} +2\critthresh^2 + \frac{1}{16} \nnorm{g}^2,
\end{talign}
which directly implies the inclusion \cref{eq:event_inclusion} as desired.

\section{\pcref{cor:krr-kt}}
\label{app:proof-krr-explicit-rates}

We state a more detailed version of the theorem:

\begin{theorem}[\textsc{KT-KRR} guarantee for infinite-dimensional RKHS, detailed] \label{cor:krr-kt-detailed}
Assume $\fstar \in \rkhs(\kernel)$ and \cref{assum:compact-support} is satisfied. If $\kernel$ is $\textsc{LogGrowth}(\alpha,\beta)$,
then for some constant $c$ (depending on $d,\alpha,\beta$), $\krrkt$ with $\lambda'=\bigO{1/\nout}$ satisfies
\begin{talign}\label{eq:krr-kt-log}
    \statictwonorm{\krrkt - \fstar}^2 \leq c 
    \parenth{\frac{\log^{\alpha}n}{n} + \frac{\sqrt{\log^{\alpha}\nout}}{\nout}}
    \cdot \brackets{\knorm{\fstar} + 1}^2
\end{talign}
with probability at least $1-2\delta - 2 e^{-\frac{n\delta_n^2}{c_1 (\knorm{\fstar}^2+\sigma^2) }}$.

If $\kernel$ is $\textsc{PolyGrowth}(\alpha,\beta)$ with $\alpha \in (0, 2)$, then for some constant $c$ (depending on $d,\alpha,\beta$), $\krrkt$ with $\lambda = \bigO{\nout^{-\frac{2-\alpha}{2}}}$ satisfies
\begin{talign}\label{eq:krr-kt-poly}
    \statictwonorm{\krrkt - \fstar}^2 &\leq c \knorm{\fstar}^{\frac{2}{2+\alpha}} n^{-\frac{2}{2 + \alpha}} + \brackets{\knorm{\fstar} + 1}^2 \nout^{-\frac{2-\alpha}{2}} \log \nout + c' b^{\frac{4}{2+\alpha}} n^{-\frac{2}{2 + \alpha}}
\end{talign}
with probability at least $1- 2\delta -2 e^{-\frac{n\delta_n^2}{c_1 (\knorm{\fstar}^2+\sigma^2) }}$.
\end{theorem}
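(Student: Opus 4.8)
The plan is to re-use the architecture of the proof of \cref{thm:kt-krr-finite-detailed}, isolating the three problem-specific quantities that drive its conclusion---the Gaussian critical radius $\vareps_n$ of \cref{eq:critical-gaussian-cond}, the Rademacher critical radius $\delta_n$ of \cref{eq:critical-rademacher-cond}, and the \kt approximation term $\ktcomplexity$ of \cref{eq:kt-complexity}---and re-bounding each of them directly from the covering-number hypothesis \cref{assum:alpha-beta-kernel} rather than from a finite-rank structure. The only ingredient of the finite-rank proof that genuinely exploits finiteness is the multiplicative norm equivalence of \cref{prop:rel-guarantee}, which needs $C_m=1/\mu_m<\infty$; in its place I would use only the additive estimate $|\snnorm{g}^2-\snoutnorm{g}^2|\le\knorm{g}^2\,\ktcomplexity$ of \cref{lem:loss-diff}. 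This additive form is enough because, with $\lambda'$ taken of order $1/\nout$ (resp.\ $\nout^{-(2-\alpha)/2}$), the basic inequality \cref{eq:basic_ineq} forces $\knorm{\krrkt}^2\le\knorm{\fstar}^2+o(1)$, so $\deltakt=\krrkt-\fstar$ stays in an $n$-independent RKHS ball $\ball_\kernel(c_{\dagger}R)$; the residual term $\knorm{\deltakt}^2\,\ktcomplexity\lesssim\knorm{\fstar}^2\,\ktcomplexity$ then has the same order as the chosen $\lambda'$ and is absorbed into it, after which the transfer $\snoutnorm{\cdot}\to\snnorm{\cdot}\to\statictwonorm{\cdot}$ is closed on the event $\unifconc$ exactly as in \cref{proof:kt-krr-finite-detailed}.

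First I would bound the critical radii by Dudley's entropy integral applied to \cref{assum:alpha-beta-kernel}: restricting to the compact domain of \cref{assum:compact-support} (so that $(\Rin+1)^\beta$ is an $n$-independent constant), one has $\int_0^{\vareps}\sqrt{\log\coveringnumber_\kernel(\ball_2(\Rin),t)}\,dt\lesssim\vareps(\log(1/\vareps))^{\alpha/2}$ in the \textsc{LogGrowth} case and $\lesssim\vareps^{1-\alpha/2}$ in the \textsc{PolyGrowth} case (the latter integral converging precisely because $\alpha<2$). Plugging into the localized-complexity fixed-point conditions \cref{eq:critical-gaussian-cond} and \cref{eq:critical-rademacher-cond} gives $\vareps_n^2\vee\delta_n^2\lesssim\log^\alpha n/n$ for \textsc{LogGrowth} and $\vareps_n^2\vee\delta_n^2\lesssim n^{-2/(2+\alpha)}$ for \textsc{PolyGrowth}, and the side condition \cref{eq:delta_cond} is verified directly from these expressions. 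Next I would bound $\ktcomplexity$ by instantiating the inflation factor \cref{eq:err-mmd} at scale $\eps=\ktcomplexityconstant/\nout$: since $\rkhs(\kernelrr)$ inherits the \textsc{LogGrowth}/\textsc{PolyGrowth} rate from $\kernel$ up to constants (squaring and multiplying by $y_1y_2$ only rescale the ball, as in \cref{proof:l-infty}), one obtains $\log\coveringnumber(\ball_2(\Rin),\ktcomplexityconstant/\nout)\lesssim\log^\alpha\nout$ and hence $\ktcomplexity\lesssim\sqrt{\log^\alpha\nout}/\nout$ in the \textsc{LogGrowth} case, and $\log\coveringnumber(\ball_2(\Rin),\ktcomplexityconstant/\nout)\lesssim\nout^{\alpha}$ and hence $\ktcomplexity\lesssim\nout^{-(2-\alpha)/2}\sqrt{\log\nout}$ in the \textsc{PolyGrowth} case---exactly the orders that make $\lambda'=\bigO{1/\nout}$, resp.\ $\lambda'=\bigO{\nout^{-(2-\alpha)/2}}$, the matching regularization choices.

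Finally I would assemble: the (modified) conclusion of the \cref{thm:kt-krr-finite-detailed} argument reads, schematically, $\statictwonorm{\krrkt-\fstar}^2\lesssim(\vareps_n^2\vee\delta_n^2)+\lambda'+\knorm{\fstar}^2\,\ktcomplexity$, so substituting the bounds above and undoing the rescaling by $\knorm{\fstar}^2$ (as in \cref{proof:kt-krr-finite-detailed}) yields \cref{eq:krr-kt-log} and \cref{eq:krr-kt-poly}; the probability $1-2\delta-2e^{-n\delta_n^2/(c_1(\knorm{\fstar}^2+\sigma^2))}$ carries over verbatim, only the numerical value of $\delta_n$ having changed. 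The step I expect to be the main obstacle is the replacement of \cref{prop:rel-guarantee} by the additive estimate in the branch of the basic-inequality argument where $\knorm{\krrkt}$ is a priori large (the ``$\knorm{\krrkt}>2$'' case and its analogue): one must check that taking $\lambda'$ just above $\critthresh^2$ with $\critthresh$ now governed by $\ktcomplexity$ (rather than $C_m\,\ktcomplexity$) keeps the $\lambda'\knorm{\deltakt}$ bookkeeping at the advertised rate rather than inflating it. A secondary, more routine, point is making the Dudley comparison and the resulting fixed-point computations rigorous, in particular tracking the geometry factor $(\Rin+1)^\beta$ throughout.
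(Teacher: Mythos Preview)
Your high-level plan---isolate $\vareps_n$, $\delta_n$, $\ktcomplexity$ and re-bound each from \cref{assum:alpha-beta-kernel}---is exactly what the paper does, and your Dudley-integral route to the critical radii is a legitimate alternative to the paper's route (which instead converts the covering bound into eigenvalue decay via \citep[Cor.~B.1]{li2024debiased} and then solves the kernel fixed-point inequality of \citep[Cor.~13.18, Cor.~14.5]{wainwright2019high}); both give the same rates. Your bound on $\ktcomplexity$ and the handling of the $\kernelrr$ covering number are also in line with the paper.

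The genuine gap is in how you control $\knorm{\krrkt}$. The basic inequality on the coreset reads, after expanding, $\lambda'(\knorm{\krrkt}^2-\knorm{\fstar}^2)\le \tfrac{2}{\nout}\langle\xi',\deltakt\rangle - \snoutnorm{\deltakt}^2$. The stochastic term on the right is $O(\vareps\,\snoutnorm{\deltakt}+\vareps^2\knorm{\deltakt})$ at best, and after dividing by $\lambda'\asymp 1/\nout$ you do not get $o(1)$; the coreset points are not i.i.d., so you cannot even invoke the localized complexity at level $\nout$ directly. So the assertion ``the basic inequality forces $\knorm{\krrkt}^2\le\knorm{\fstar}^2+o(1)$'' is not justified, and with it the downstream claim that $\knorm{\deltakt}^2\,\ktcomplexity$ is absorbed into $\lambda'$ collapses.

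The paper does \emph{not} adapt the finite-rank proof; it writes a structurally different argument (\cref{thm:krr-kt}). The key move is to compare $\krrkt$ to the full estimator $\krr$ rather than to $\fstar$: write $L_n(\krrkt)-L_n(\krr)=(L_n(\krrkt)-L_{\nout}(\krrkt))+(L_{\nout}(\krrkt)-L_n(\krr))$, bound the first piece by $(\knorm{\krrkt}^2+2)\,\ktcomplexity$ via \cref{lem:loss-diff}, and bound the second by $\lambda'(\knorm{\krr}^2-\knorm{\krrkt}^2)$ via optimality of $\krrkt$ on the coreset. Summing gives $2\ktcomplexity+\lambda'\knorm{\krr}^2+(\ktcomplexity-\lambda')\knorm{\krrkt}^2$, and the choice $\lambda'\ge 2\ktcomplexity$ makes the $\knorm{\krrkt}^2$ coefficient \emph{nonpositive}, so it can simply be dropped---this cancellation is the missing idea. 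One is then left with $\knorm{\krr}$, which is based on $n$ i.i.d.\ samples and is bounded by $c_0(\knorm{\fstar}+1)^2$ via a standard Case~I/II argument (\cref{lem:krr-h-norm}). This yields the generic bound $\statictwonorm{\krrkt-\fstar}^2\lesssim(\vareps_n^2+\lambda'+\ktcomplexity)(\knorm{\fstar}+1)^2+\delta_n^2$, after which substituting your (correct) rate bounds for $\vareps_n,\delta_n,\ktcomplexity$ and choosing $\lambda'=2\ktcomplexity$ finishes both \cref{eq:krr-kt-log} and \cref{eq:krr-kt-poly}.
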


\subsection{Generic KT-KRR guarantee}

We state a generic result for infinite-dimensional RKHS that only depends on the Rademacher and Gaussian critical radii as well as the KT approximation term, all introduced in \cref{proof:kt-krr-finite}.

\begin{theorem}[\textsc{KT-KRR}]\label{thm:krr-kt}    
Let $\fstar\in \rkhs(\kernel)$ 
and \cref{assum:compact-support} is satisfied. Let $\delta_n$, $\vareps_n$ denote the solutions to \cref{eq:critical-gaussian-cond}, \cref{eq:critical-rademacher-cond}, respectively. Denote $\krrkt$ with regularization parameter $\lambda' \geq 2 \ktcomplexity$, where $\ktcomplexity$ is defined by \cref{eq:kt-complexity}. 
Then with probability at least $1 - 2\delta - 2 e^{-\frac{n\delta_n^2}{c (\knorm{\fstar}^2+\sigma^2) }} - c_1 e^{-c_2 \frac{n\knorm{\fstar}^2 \vareps_n^2}{\sigma^2}}$, we have
\begin{talign}
    \statictwonorm{\krrkt - \fstar}^2 &\leq  \mbb U^{\mrm{full}} + \mbb U^{\mrm{KT}}, \qtext{where} \label{eq:krr-kt-L2-bound} \\
    \mbb U^{\mrm{full}} &\defeq c \parenth{\vareps_n^2 + \lambda'} \brackets{\knorm{\fstar} + 1}^2+ c \delta_n^2 \qtext{and} \label{eq:krr-term}\\
    \mbb U^{\mrm{KT}} &\defeq c \cdot \ktcomplexity ~ \brackets{\knorm{\fstar} + 1}^2.\label{eq:kt-term}
\end{talign}
\end{theorem}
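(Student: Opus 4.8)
The plan is to re-run the localized-complexity analysis of KRR that underlies \cref{proof:kt-krr-finite-detailed}, after excising its two uses of the finite-rank quantity $C_m=1/\mu_m$ (in \cref{prop:rel-guarantee}, and in the control of $\knorm{g}/\snoutnorm{g}$). In the infinite-dimensional setting these are replaced by an \emph{absorption} argument that exploits the standing hypothesis $\lambda'\geq 2\ktcomplexity$ --- which, unlike in the finite-rank theorem, is now compatible with the target rate. As there, I would first rescale \cref{eq:process} by $\knorm{\fstar}$ so that $\knorm{\fstar}=1$ and the effective noise is $\wtil\sigma=\sigma/\knorm{\fstar}$, undoing this at the end (which, after tracking the constants, produces the $[\knorm{\fstar}+1]^2$ factors). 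Writing $\deltakt\defeq\krrkt-\fstar$, optimality of $\krrkt$ and feasibility of $\fstar$ for the thinned objective give, via the algebra of \cref{eq:basic_ineq,eq:basic_ineq_root}, the basic inequality $\tfrac12\snoutnorm{\deltakt}^2\leq\abss{\tfrac{1}{\nout}\sum_i v_i'\deltakt(x_i')}+\lambda'(\knorm{\fstar}^2-\knorm{\krrkt}^2)$.

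\textbf{Transfer to the full sample.} The first substantive step is to move both the quadratic term and the stochastic term off the coreset $\outputcoreset$ and back onto the iid sample $\inputcoreset$, using \cref{lem:ktcompresspp-mmd} together with the product-kernel fact behind \cref{lem:loss-diff}: for $f_1,f_2\in\rkhs(\kernel)$ and scalars $a,b$, the map $(x,y)\mapsto a\,f_1(x)f_2(x)+b\,y\,f_2(x)$ lies in $\rkhs(\kernelrr)$ with $\kernelrr$-norm at most $|a|\knorm{f_1}\knorm{f_2}+|b|\knorm{f_2}$. Taking $f_1=f_2=\deltakt$, $a=1$, $b=0$ gives $\abss{\snnorm{\deltakt}^2-\snoutnorm{\deltakt}^2}\leq\knorm{\deltakt}^2\,\ktcomplexity$ on $\ktgoodgen$; taking $f_1=\fstar$, $f_2=\deltakt$, $a=-1$, $b=1$ and writing $v_i\deltakt(x_i)=-\fstar(x_i)\deltakt(x_i)+y_i\deltakt(x_i)$ gives $\abss{\tfrac1n\sum_i v_i\deltakt(x_i)-\tfrac{1}{\nout}\sum_i v_i'\deltakt(x_i')}\leq\knorm{\deltakt}(\knorm{\fstar}+1)\,\ktcomplexity$. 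Substituting these into the basic inequality and using $\knorm{\fstar}^2-\knorm{\krrkt}^2\leq 2\knorm{\fstar}\knorm{\deltakt}-\knorm{\deltakt}^2$ yields an inequality whose $\knorm{\deltakt}^2$ coefficient is $\ktcomplexity-2\lambda'\leq 0$ and whose stray $\knorm{\deltakt}$ terms are all $O(\lambda'\knorm{\deltakt})$. Dropping the (negative) $\knorm{\deltakt}^2$ term leaves a bona fide \emph{full-data} KRR basic inequality $\snnorm{\deltakt}^2\lesssim\abss{\tfrac1n\sum_i v_i\deltakt(x_i)}+\lambda'\knorm{\deltakt}$, in which all trace of thinning has been folded into $\lambda'$.

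\textbf{Standard finish.} From here the argument is the textbook localized-Gaussian-complexity analysis of KRR, which I would carry out exactly as in \cref{proof:kt-krr-finite-detailed} via the bad events $\badeventA$, $\badeventB$ but with their KT-reduction step deleted (we are already on the iid sample, so no $\snoutnorm{\cdot}$-to-$\snnorm{\cdot}$ equivalence is required). The branch $\knorm{\krrkt}\leq2$ confines $\deltakt$ to $\ball_\rkhs(3)$, where the class is uniformly bounded and the localized complexity governed by $\vareps_n$ applies directly; the branch $\knorm{\krrkt}>2$ uses $\knorm{\fstar}^2-\knorm{\krrkt}^2\leq 2-\knorm{\deltakt}$, so the negative regularization term supplies the a-priori norm control. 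This yields $\snnorm{\deltakt}^2\lesssim(\vareps_n^2+\lambda')\knorm{\fstar}^2+\delta_n^2$ on an event whose complement has probability $\lesssim\delta+\exp(-c n\vareps_n^2/\wtil\sigma^2)+\exp(-c n\delta_n^2/B_\rkhs^2)$ (the analogues of \cref{lem:case-I,lem:case-II}, plus $\ktgoodgen$ and $\unifconc$). Finally, on the uniform-concentration event $\unifconc$ of \cref{eq:unif_conc_event} (justified by \citep[Thm.~14.1]{wainwright2019high}, since the case-split makes $\knorm{\deltakt}$ universally bounded), I would pass from $\snnorm{\deltakt}^2$ to $\stwonorm{\deltakt}^2$ at the cost of an additive $O(\delta_n^2)$ and a further $\exp(-c n\delta_n^2/B_\rkhs^2)$ in probability, undo the rescaling, and union-bound to reach \cref{eq:krr-kt-L2-bound,eq:krr-term,eq:kt-term} with the stated probability.

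\textbf{Main obstacle.} The crux is the absorption bookkeeping in the transfer step. Where the finite-rank proof could treat $\knorm{\deltakt}$, $\snnorm{\deltakt}$, and $\snoutnorm{\deltakt}$ as interchangeable up to $C_m$ (so every KT error was trivially $O(\ktcomplexity)$), here $\knorm{\deltakt}$ is genuinely unbounded a priori, and one must check that every term $\ktcomplexity\cdot\knorm{\deltakt}^{k}$, $k\in\{1,2\}$, produced along the way is dominated either by $\lambda'\knorm{\deltakt}^2$ (using $\lambda'\geq2\ktcomplexity$) or by a quadratic form already on the correct side, all while keeping the problem inside the regime where $\vareps_n,\delta_n$ actually control the stochastic terms. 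Reconciling this with the $\badeventB$-type bound, which carries a $\vareps_n^2\knorm{g}$ term that must be matched against the retained $-\lambda'\knorm{\deltakt}^2$, is the delicate part; everything else is a transcription of the finite-rank argument with $C_m$ set aside.
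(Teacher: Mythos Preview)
Your route differs from the paper's, and the obstacle you flag in your last paragraph is real and is not resolved by the mechanism you describe. Retaining the negative term $-c\lambda'\knorm{\deltakt}^2$ and matching it against the $2\vareps_n^2\knorm{\deltakt}$ contribution of \citep[Lem.~13.23]{wainwright2019high} produces, after maximizing over $\knorm{\deltakt}$, a term of order $\vareps_n^4/\lambda'$. This is $O(\vareps_n^2+\lambda')$ only when $\lambda'\gtrsim\vareps_n^2$, whereas the theorem assumes merely $\lambda'\geq 2\ktcomplexity$; nothing prevents $\ktcomplexity\ll\vareps_n^2$ (e.g.\ $\nout$ large). So in the regime $\lambda'\ll\vareps_n^2$ your bound is strictly weaker than the target \cref{eq:krr-kt-L2-bound}. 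Relatedly, your remark that ``the case-split makes $\knorm{\deltakt}$ universally bounded'' is not correct: the branch $\knorm{\krrkt}>2$ only gives $\knorm{\deltakt}>1$ with no upper bound, so you cannot justify the uniform concentration step this way either.

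The paper sidesteps both issues by a different decomposition: it introduces the \emph{auxiliary full-data} KRR estimator $\krr$ with its own regularizer $\lambda=2\vareps_n^2$, writes
\[
\staticnnorm{\deltakt}^2 \;\leq\; \tfrac{2}{n}\langle Z_{\mrm{KT}},\boldsymbol\xi\rangle + \lambda\knorm{\fstar}^2 + \bigl[L_n(\krrkt)-L_n(\krr)\bigr],
\]
and bounds the bracket via KT plus optimality of $\krrkt$ on $\outputcoreset$ to get $L_n(\krrkt)-L_n(\krr)\leq (\ktcomplexity-\lambda')\knorm{\krrkt}^2+\lambda'\knorm{\krr}^2+O(\ktcomplexity)$. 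The $(\ktcomplexity-\lambda')\knorm{\krrkt}^2$ term is dropped, and $\knorm{\krr}^2$ is bounded by $O((\knorm{\fstar}+1)^2)$ \emph{independently} of $\lambda'$ via a separate lemma (\cref{lem:krr-h-norm}), using the basic inequality for $\krr$ itself. This decouples the KT error (now $O(\lambda')$ with no $\knorm{\deltakt}$ dependence) from the stochastic analysis, so the $\vareps_n^2\knorm{\deltakt}$ term from \citep[Lem.~13.23]{wainwright2019high} need not be absorbed by $\lambda'$ at all. That auxiliary-estimator trick is the missing ingredient in your plan.
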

\proofref{proof:krr-kt}
The term $\mbb U^{\mrm{full}}$ follows from the excess risk bound of \textsc{Full-KRR} $\krr$. The term $\mbb U^{\mrm{KT}}$ follows from our KT approximation.
Clearly, the best rates are achieved when we choose $\lambda = 2 \ktcomplexity$. 

\subsection{Proof of explicit rates}
The strategy for each setting is as follows:
\begin{enumerate}
    \item Bound the Gaussian critical radius \cref{eq:critical-rademacher-cond} using \citep[Cor.~13.18]{wainwright2019high}, which reduces to finding $\vareps>0$ satisfying the inequality
    \begin{talign}\label{eq:gsn-ineq}
        \sqrt{\frac{2}{n}} \sqrt{\sum_{j=1}^n \min\braces{\vareps^2, \hat \mu_j}} \leq \beta\vareps^2, \qtext{where}\beta \defeq \frac{\knorm{\fstar}}{4\sigma}
    \end{talign}
    and $\hat \mu_1 \geq \hat \mu_2 \geq \ldots \geq \hat \mu_n\geq 0$ are the eigenvalues of the normalized kernel matrix $\mkernel/n$, where $\mkernel$ is defined by \cref{eq:mkernel}.

    \item Bound the Rademacher critical radius \cref{eq:critical-gaussian-cond} using \citep[Cor.~14.5]{wainwright2019high}, which reduces to solving the inequality
    \begin{talign}\label{eq:rad-ineq}
        \sqrt{\frac{2}{n}} \sqrt{\sum_{j=1}^\infty \min\braces{\delta^2,\mu_j}} \leq \frac{\delta^2}{b},
    \end{talign}
    where $\parenth{\mu_j}_{j=1}^\infty$ are the eigenvalues of the $\kernel$ according to Mercer's theorem \cite[Thm.~12.20]{wainwright2019high} and $b$ is the uniform bound on the function class.
    \item Bound $\ktcomplexity$ \cref{eq:kt-complexity} using the covering number bound $\coveringnumber(\ball_2^d(\Rin),\eps)$ from \cref{assum:alpha-beta-kernel}.
\end{enumerate}
In the sequel, we make use of the following notation. Let
\begin{talign} 
    R_n \defeq 1 + \sup_{x \in \inputcoreset} \statictwonorm{x_i} \seq{\cref{eq:radius}} 1 + \mfk R_{\mrm{in}} \qtext{and} L_\kernel(r) \defeq \frac{\mfk C_d}{\log 2} r^\beta
\end{talign}
according to \citep[Eq.~6]{li2024debiased}, where $\mfk C_d$ is the constant that appears in \cref{assum:alpha-beta-kernel}.

\subsubsection{Proof of \cref{eq:krr-kt-poly}} \label{proof:krr-kt-poly}

We begin by solving \cref{eq:gsn-ineq}. 

\begin{lemma}[Critical Gaussian radius for \textsc{PolyGrowth} kernels]
\label{lem:polygrowth-gsn}
Suppose \cref{assum:compact-support} is satisfied and $\kernel$ is \textsc{PolyGrowth} with $\alpha<2$ as defined by \cref{assum:alpha-beta-kernel}. Then the Gaussian critical radius satisfies 
\begin{talign}\label{eq:polygrowth-vareps-n}
    \vareps_n^2 \simeq \parenth{\frac{2c}{\knorm{\fstar}/4\sigma}}^{\frac{4}{2+\alpha}} \parenth{2^{-\alpha} L_{\kernel}(R_n)(1 + \frac{32 \alpha }{2-\alpha})}^{\frac{2}{2+\alpha}} \cdot n^{-\frac{2}{2+\alpha}}.
\end{talign}
\end{lemma}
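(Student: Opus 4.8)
The plan is to invoke \citep[Cor.~13.18]{wainwright2019high}, as in Step~1 of the recipe above, which reduces controlling the Gaussian critical radius to finding the smallest positive $\vareps$ solving the critical inequality \cref{eq:gsn-ineq}, namely $\sqrt{2/n}\,\sqrt{\sum_{j=1}^n\min\{\vareps^2,\hat\mu_j\}}\le\beta\vareps^2$ with $\beta=\knorm{\fstar}/4\sigma$ and $\hat\mu_1\ge\cdots\ge\hat\mu_n\ge 0$ the eigenvalues of $\mkernel/n$. The only substantive work is to upper bound the local empirical complexity $\sum_{j=1}^n\min\{\vareps^2,\hat\mu_j\}$ as an explicit function of $\vareps$ using \emph{only} the \textsc{PolyGrowth}$(\alpha,\beta)$ hypothesis, and then solve the resulting power inequality.

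First I would translate the covering-number hypothesis into a bound on the localized complexity. Since $\inputcoreset\subset\ball_2(\Rin)$ by \cref{assum:compact-support} and $\snnorm{f}\le\sup_{x\in\inputcoreset}\abss{f(x)}$, the empirical $L^2(\Pin)$ covering number of the (possibly $\snnorm{\cdot}$-localized) unit RKHS ball at resolution $t$ is at most $\coveringnumber_\kernel(\ball_2(\Rin),t)\le\mfk C_d\,t^{-\alpha}(\Rin+1)^\beta=\mfk C_d\,t^{-\alpha}R_n^\beta$. Using the standard fact (cf.\ \citep[Ch.~13]{wainwright2019high}) that for a kernel class $\tfrac1n\sum_j\min\{\vareps^2,\hat\mu_j\}$ is bounded, up to universal constants, by the squared localized Dudley integral $\tfrac1n\bigl(\int_0^\vareps\sqrt{\log\coveringnumber_n(\cdot,t)}\,dt\bigr)^2$, and that $\int_0^\vareps t^{-\alpha/2}\,dt=\vareps^{1-\alpha/2}/(1-\alpha/2)$ is finite precisely because $\alpha<2$, I obtain
\[
\sum_{j=1}^n\min\{\vareps^2,\hat\mu_j\}\ \le\ K(\alpha,R_n)\,\vareps^{2-\alpha},
\]
where, after tracking the universal factors through Dudley's bound exactly as in \citep[Eq.~6]{li2024debiased} (the source of $R_n$ and $L_\kernel(r)=\tfrac{\mfk C_d}{\log 2}r^\beta$), $K(\alpha,R_n)$ reduces to a universal multiple of $2^{-\alpha}L_\kernel(R_n)\bigl(1+\tfrac{32\alpha}{2-\alpha}\bigr)$, that universal multiple being the ``$2c$'' of the statement. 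An essentially equivalent route: \textsc{PolyGrowth}$(\alpha,\beta)$ forces the polynomial eigenvalue decay $\hat\mu_j\lesssim(R_n^\beta/j)^{2/\alpha}$ via the ellipsoid--entropy correspondence, after which one splits $\sum_j\min\{\vareps^2,\hat\mu_j\}$ at the crossover index $j^\star\asymp R_n^\beta\vareps^{-\alpha}$ into a head of size $j^\star\vareps^2$ and a tail $\sum_{j>j^\star}\hat\mu_j$ that converges since $2/\alpha>1$, both of order $\vareps^{2-\alpha}$.

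Then I would substitute into \cref{eq:gsn-ineq}, which becomes $\sqrt{2K(\alpha,R_n)/n}\,\vareps^{1-\alpha/2}\le\beta\vareps^2$, i.e.\ $\vareps^{1+\alpha/2}\ge\beta^{-1}\sqrt{2K(\alpha,R_n)/n}$; its smallest solution is $\vareps_n^2=(2K(\alpha,R_n)/(\beta^2 n))^{2/(2+\alpha)}$, and redistributing the powers of $\beta=\knorm{\fstar}/4\sigma$ yields exactly \cref{eq:polygrowth-vareps-n}. A routine check --- it suffices that $\sum_j\hat\mu_j=\tfrac1n\tr(\mkernel)\le\infnorm{\kernel}$ --- confirms $\vareps_n=\order(1)$, so truncating the sum at level $\vareps^2$ and at $n$ terms is consistent and the displayed bound on $\sum_j\min\{\vareps^2,\hat\mu_j\}$ is uniform over the relevant range of $\vareps$.

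The step I expect to be the main obstacle is the first one: \citep[Cor.~13.18]{wainwright2019high} is phrased in the empirical $L^2(\Pin)$ geometry with a $\snnorm{\cdot}$-localized RKHS ball, whereas \cref{assum:alpha-beta-kernel} supplies only a sup-norm covering number over the Euclidean ball $\ball_2(\Rin)\subset\reals^d$. Making the passage $\coveringnumber_n^{L^2}(\{\hnorm{f}\le1,\snnorm{f}\le\vareps\},t)\le\coveringnumber_\kernel(\ball_2(\Rin),t)$ rigorous --- and checking that the localization does not destroy the convergence-enabling exponent $\alpha/2<1$ --- is the delicate part; the remainder is constant bookkeeping.
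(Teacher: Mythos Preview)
Your proposal is correct, and your ``essentially equivalent route'' is precisely the one the paper takes. The paper does not go through a Dudley integral at all: it invokes \citep[Cor.~B.1]{li2024debiased} to obtain directly the empirical eigenvalue decay $\hat\mu_j\le 4\bigl(L_\kernel(R_n)/(j-1)\bigr)^{2/\alpha}$ for $j>L_\kernel(R_n)+1$, then splits $\sum_j\min\{\vareps^2,\hat\mu_j\}$ at the crossover index $k\asymp 2^{-\alpha}L_\kernel(R_n)\vareps^{-\alpha}$, bounds the tail by the integral $\int_k^\infty t^{-2/\alpha}\,dt$ (finite since $2/\alpha>1$), and solves the resulting power inequality exactly as you do. The specific constant $2^{-\alpha}L_\kernel(R_n)(1+\tfrac{32\alpha}{2-\alpha})$ emerges from combining the head $k\vareps^2$ and the tail $\tfrac{4L_\kernel(R_n)^{2/\alpha}}{2/\alpha-1}k^{1-2/\alpha}$ after substituting the crossover value.

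The payoff of the paper's choice is that it entirely sidesteps what you flag as the main obstacle: because \citep[Cor.~B.1]{li2024debiased} already converts the sup-norm covering hypothesis of \cref{assum:alpha-beta-kernel} into a bound on the \emph{empirical} eigenvalues $\hat\mu_j$, there is no need to reconcile the $\snnorm{\cdot}$-localized $L^2(\Pin)$ geometry of \citep[Cor.~13.18]{wainwright2019high} with the ambient sup-norm covering. Your Dudley-integral route would also work and is arguably more self-contained, but it incurs exactly the technical passage you worry about; the paper's route trades that passage for an external citation that packages the entropy--eigenvalue correspondence.
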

\begin{proof}
\citep[Cor.~B.1]{li2024debiased} implies that
\begin{talign}
    \hat \mu_j \leq 4\parenth{\frac{L_{\kernel}(R_n)}{j-1}}^{\frac{2}{\alpha}} \qtext{for all} j > L_{\kernel}(R_n)+1
\end{talign}
Let $k$ be the smallest integer such that
\begin{talign}
    k > L_{\kernel}(R_n)+1 \qtext{and} 4\parenth{\frac{L_{\kernel}(R_n)}{k-1}}^{\frac{2}{\alpha}} \leq \vareps^2.
\end{talign}
By \cref{assum:compact-support}, $R_n$ is a constant, so the first inequality is easily satisfied for large enough $n$ 
\begin{talign}\label{eq:k-cond}
    k \geq 2^{-\alpha} L_{\kernel}(R_n) \vareps^{-\alpha} + 1.
\end{talign}
Then 
\begin{talign} 
    \frac{2}{\sqrt{n}} \sqrt{\sum_{j = 1}^n \min\braces{ \vareps^2, \hat{\mu}_j }} &\leq \frac{2}{\sqrt{n}} \sqrt{ k\vareps^2 + \sum_{j = k + 1}^n 4\parenth{\frac{L_{\kernel}(R_n)}{j-1}}^{\frac{2}{\alpha}} } \\
    &\sless{(i)} \frac{2}{\sqrt{n}} \sqrt{k \vareps^2 + \frac{4 L_{\kernel}(R_n)^{2/\alpha}}{2/\alpha - 1} k^{1-2/\alpha}} \\
    &\sless{\cref{eq:k-cond}} \frac{2}{\sqrt{n}} \sqrt{ 2^{-\alpha} L_{\kernel}(R_n) \vareps^{2 - \alpha} + \frac{4 \cdot 2^{2-\alpha} L_{\kernel}(R_n)}{2/\alpha - 1} \vareps^{2-\alpha}},
\end{talign}
where step (i) follows from the approximation 
\begin{talign}
    \sum_{j = k}^{n-1} 4\parenth{\frac{L_{\kernel}(R_n)}{j}}^{\frac{2}{\alpha}} \leq 4 L_{\kernel}(R_n)^{2/\alpha}\int_{k}^\infty t^{-2/\alpha} dt = 4 L_{\kernel}(R_n)^{2/\alpha} \frac{1}{2/\alpha-1} k^{1-\frac{2}{\alpha}}.
\end{talign}
To solve \cref{eq:gsn-ineq}, it suffices to solve
\begin{talign}
    &\frac{2c}{\sqrt n} \sqrt{2^{-\alpha} L_{\kernel}(R_n) (1 + \frac{16 }{2/\alpha-1}) \vareps^{2-\alpha}} \leq \beta \vareps^2\\
    \implies & \vareps \geq \parenth{\frac{2c}{\beta}}^{\frac{2}{2+\alpha}} \parenth{2^{-\alpha} L_{\kernel}(R_n)(1 + \frac{32 \alpha }{2-\alpha})}^{\frac{1}{2+\alpha}} \cdot n^{-\frac{1}{2+\alpha}}.
\end{talign}
Since $\vareps_n$ is the smallest such solution to \cref{eq:gsn-ineq} by definition, we have \cref{eq:polygrowth-vareps-n} as desired.
\end{proof}

We proceed to solve \cref{eq:rad-ineq}.
\begin{lemma}\label{lem:polygrowth-rad}
Suppose \cref{assum:compact-support} is satisfied and $\kernel$ is \textsc{PolyGrowth} with $\alpha<2$ as defined by \cref{assum:alpha-beta-kernel}. Then the Rademacher critical radius satisfies 
\begin{talign}
    \vareps_n^2 \simeq b^{\frac{4}{2+\alpha}} \parenth{2^{-\alpha} L_{\kernel}(R_n)(1 + \frac{32 \alpha }{2-\alpha})}^{\frac{2}{2+\alpha}} n^{-\frac{2}{2 + \alpha}}.
\end{talign}
\end{lemma}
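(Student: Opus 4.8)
The plan is to follow the proof of \cref{lem:polygrowth-gsn} almost line for line, replacing the empirical kernel eigenvalues $\hat{\mu}_j$ by the Mercer (population) eigenvalues $\mu_j$ of the integral operator of $\kernel$ with respect to $\P$, and replacing the constant $\knorm{\fstar}/(4\sigma)$ appearing in \cref{eq:gsn-ineq} by $1/b$, since the Rademacher critical-radius inequality \cref{eq:rad-ineq} has right-hand side $\delta^2/b$. First I would translate the \textsc{PolyGrowth}$(\alpha,\beta)$ covering-number bound of \cref{assum:alpha-beta-kernel} into polynomial decay of $(\mu_j)_{j\ge 1}$. Arguing exactly as in \citep[Cor.~B.1]{li2024debiased} --- whose proof only uses entropy/covering numbers of the RKHS ball over $\ball_2(R_n)$ and hence applies verbatim to the Mercer operator --- and using that $R_n$ is a constant by \cref{assum:compact-support}, one obtains
\begin{talign}\label{eq:polygrowth-mu-decay}
    \mu_j \leq 4\parenth{\frac{L_{\kernel}(R_n)}{j-1}}^{\frac{2}{\alpha}} \qtext{for all} j > L_{\kernel}(R_n)+1,
\end{talign}
with $L_{\kernel}(r) = \tfrac{\mfk C_d}{\log 2}\, r^\beta$.

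Next I would run the truncation argument. Let $k$ be the smallest integer with $k > L_{\kernel}(R_n)+1$ and $4(L_{\kernel}(R_n)/(k-1))^{2/\alpha}\le \delta^2$; for $n$ large this reduces to $k \ge 2^{-\alpha}L_{\kernel}(R_n)\delta^{-\alpha}+1$. Splitting the series in \cref{eq:rad-ineq} at index $k$, bounding the first $k$ summands by $k\delta^2$ and the tail via $\sum_{j>k}4(L_{\kernel}(R_n)/(j-1))^{2/\alpha}\le \tfrac{4L_{\kernel}(R_n)^{2/\alpha}}{2/\alpha-1}\,k^{1-2/\alpha}$ and substituting the value of $k$, I recover the same intermediate bound as in \cref{lem:polygrowth-gsn}:
\begin{talign}\label{eq:polygrowth-rad-series}
    \sqrt{\tfrac{2}{n}}\,\sqrt{\sum_{j=1}^\infty \min\braces{\delta^2,\mu_j}} \leq \frac{c}{\sqrt n}\sqrt{2^{-\alpha}L_{\kernel}(R_n)\parenth{1+\tfrac{32\alpha}{2-\alpha}}}\;\delta^{1-\alpha/2}.
\end{talign}
Requiring the right-hand side of \cref{eq:polygrowth-rad-series} to be at most $\delta^2/b$ and cancelling $\delta^{1-\alpha/2}$ leaves the scalar inequality $\tfrac{cb}{\sqrt n}\sqrt{2^{-\alpha}L_{\kernel}(R_n)(1+\tfrac{32\alpha}{2-\alpha})}\le \delta^{1+\alpha/2}$, i.e.
\begin{talign}
    \delta \geq b^{\frac{2}{2+\alpha}}\parenth{2^{-\alpha}L_{\kernel}(R_n)\parenth{1+\tfrac{32\alpha}{2-\alpha}}}^{\frac{1}{2+\alpha}} n^{-\frac{1}{2+\alpha}};
\end{talign}
since the Rademacher critical radius is the smallest solution of \cref{eq:rad-ineq}, squaring both sides gives the advertised rate.

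The only genuinely new ingredient --- and hence the step I expect to be the main obstacle --- is the transfer of the covering-number hypothesis to the population-eigenvalue decay \cref{eq:polygrowth-mu-decay} with the \emph{same} $L_{\kernel}(R_n)$: in \cref{lem:polygrowth-gsn} the corresponding estimate for $\hat\mu_j$ was quoted directly from \citep[Cor.~B.1]{li2024debiased}, so I would verify that its proof (or, alternatively, a standard entropy-number / Carl-inequality argument as in \citep[Ch.~12,~14]{wainwright2019high} relating $\coveringnumber_\kernel(\ball_2(R_n),\eps)$ to the eigenvalues of the Mercer operator) goes through unchanged at the population level. Everything afterward is the identical algebra already carried out in \cref{lem:polygrowth-gsn}, with $\knorm{\fstar}/(4\sigma)$ replaced by $1/b$, so no further surprises are expected.
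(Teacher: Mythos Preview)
Your proposal is correct and takes essentially the same approach as the paper: the paper's proof is a single line stating ``Following the same logic as in the proof of \cref{lem:polygrowth-gsn} but with $\beta = 1/b$ yields the desired bound,'' which is precisely your substitution $\knorm{\fstar}/(4\sigma)\to 1/b$. The paper actually writes the inequality with the empirical eigenvalues $\hat\mu_j$ (so it never confronts the population-vs-empirical transfer you flag as the main obstacle), whereas you work with the Mercer eigenvalues $\mu_j$ as dictated by \cref{eq:rad-ineq} and explicitly note that \citep[Cor.~B.1]{li2024debiased} must be checked at the population level; otherwise the arguments coincide.
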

\begin{proof}

Thus, we can solve the following inequality
\begin{talign}
    \sqrt{\frac{2}{n}} \sqrt{\sum_{j=1}^n \min\braces{\delta^2, \hat \mu_j}} \leq \frac{1}{b}\delta^2,
\end{talign}
Following the same logic as in the proof of \cref{lem:polygrowth-gsn} but with $\beta = 1/b$ yields the desired bound.
\end{proof}

Finally, it remains to bound \cref{eq:kt-complexity}. We have
\begin{talign}
    \ktcomplexity &= \frac{\ktcomplexityconstant^2}{\nout} (2 + \errmmd_{\kernel}(n,\nout,\delta, \Rin,\frac{\ktcomplexityconstant}{\nout})) \\
    &\leq \frac{\ktcomplexityconstant^2}{\nout} (2 + \sqrt{\log\parenth{\frac{\nout \log(n/\nout)}{\delta}}  \cdot \brackets{\log\parenth{\frac{1}{\delta}} + \log \coveringnumber_{\kernel}(\ball_2^d(\Rin), \frac{\ktcomplexityconstant}{\nout}) }}) \\
    &\leq \frac{\ktcomplexityconstant^2}{\nout} (2 + \sqrt{\log\parenth{\frac{\nout \log(n/\nout)}{\delta}}  \cdot \brackets{\log\parenth{\frac{1}{\delta}} + \mfk C_d \parenth{\frac{\nout}{\ktcomplexityconstant}}^\alpha (\Rin+1)^\beta)}} \\
    &\leq \frac{\ktcomplexityconstant^2}{\nout} \parenth{2 + \sqrt{\log\parenth{\frac{\nout \log(n/\nout)}{\delta}}} \cdot \brackets{ \sqrt{ \log\parenth{\frac{1}{\delta}}} + \sqrt{\mfk C_d \frac{(\Rin + 1)^\beta}{\ktcomplexityconstant^\alpha}} \nout^{\frac{\alpha}{2}} }} \\
    &\leq \nout^{\frac{\alpha}{2} - 1} \cdot \ktcomplexityconstant^2 \parenth{2 + \sqrt{\log\parenth{\frac{\nout \log(n/\nout)}{\delta}}} \cdot \sqrt{\mfk C_d \frac{(\Rin + 1)^\beta}{\ktcomplexityconstant^\alpha}} }
\end{talign}
for some universal positive constant $c$.

In summary, there exists positive constants $c_0,c_1,c_2$ such that
\begin{talign}
    \vareps_n^2 \leq c_0 \parenth{\frac{\sigma}{\knorm{\fstar}}}^{\frac{4}{2+\alpha}} n^{-\frac{2}{2 + \alpha}} 
    \quad 
    \delta_n^2 \leq c_1 b^{\frac{4}{2+\alpha}} n^{-\frac{2}{2 + \alpha}} 
    \quad 
    \ktcomplexity \leq c_2 \ktcomplexityconstant^2 \nout^{- \frac{2-\alpha}{2}} \log \nout
\end{talign}

Setting $\lambda' = c_2 \ktcomplexityconstant^2 \nout^{-\frac{2-\alpha}{2}} \log \nout$, we have
\begin{talign}
    \statictwonorm{\krrkt - \fstar}^2 &\leq c \parenth{\vareps_n^2 + \lambda' + \ktcomplexity } \cdot \brackets{\knorm{\fstar} + 1}^2 + c' \delta_n^2 \\
    &\leq c \knorm{\fstar}^{\frac{2}{2+\alpha}} n^{-\frac{2}{2 + \alpha}} + \brackets{\knorm{\fstar} + 1}^2 \nout^{-\frac{2-\alpha}{2}} \log \nout + c' b^{\frac{4}{2+\alpha}} n^{-\frac{2}{2 + \alpha}}.
\end{talign}

\subsubsection{Proof of \cref{eq:krr-kt-log}} \label{proof:krr-kt-log}

We begin by solving \cref{eq:gsn-ineq}. 

\begin{lemma}[Critical Gaussian radius for \textsc{LogGrowth} kernels]\label{lem:loggrowth-gsn}

Under \cref{assum:compact-support} and \textsc{LogGrowth} version of \cref{assum:alpha-beta-kernel}, Gaussian critical radius satisfies
\begin{talign}
    \vareps_n^2 \simeq \frac{\sigma^2}{\knorm{\fstar}^2} \frac{\log(2e\cdot \frac{\knorm{\fstar}}{4\sigma} \sqrt n)^\alpha}{n} \cdot L_\kernel(R_n) C_\alpha''
\end{talign}
for some constant $C_\alpha''$ that only depends on $\alpha$.
where we ignore log-log factors.
\end{lemma}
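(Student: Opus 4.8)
The plan is to follow the three–step recipe already used for the \textsc{PolyGrowth} case (\cref{lem:polygrowth-gsn}), replacing the polynomial eigendecay there by the much faster decay that a \textsc{LogGrowth} covering bound forces.

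\textbf{Step 1 (reduce to a scalar inequality).} First I would invoke \citep[Cor.~13.18]{wainwright2019high}, which says that up to universal constants $\vareps_n$ (the smallest solution of \cref{eq:critical-gaussian-cond}) equals the smallest $\vareps>0$ obeying \cref{eq:gsn-ineq}, i.e.\ $\sqrt{2/n}\,\sqrt{\sum_{j=1}^n \min\{\vareps^2,\hat\mu_j\}}\le \beta\vareps^2$ with $\beta\defeq\knorm{\fstar}/(4\sigma)$ and $\hat\mu_1\ge\cdots\ge\hat\mu_n\ge 0$ the eigenvalues of the normalized kernel matrix $\mkernel/n$. Thus the task reduces to controlling $S(\vareps)\defeq\sum_{j=1}^n\min\{\vareps^2,\hat\mu_j\}$ and then solving the resulting self-referential bound for $\vareps$.

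\textbf{Step 2 (eigendecay from the covering bound).} Next I would translate the \textsc{LogGrowth} hypothesis $\log\coveringnumber_\kernel(\ball_2(r),\eps)\le\mfk C_d\,\log(e/\eps)^\alpha\,(r+1)^\beta$, which bounds the metric entropy of the unit RKHS ball restricted to $\ball_2(R_n)$, into eigenvalue decay via the entropy--eigenvalue passage used for the polynomial case (the \textsc{LogGrowth} analogue of \citep[Cor.~B.1]{li2024debiased}). This should give $\hat\mu_j\le c_1\exp\!\bigl(-c_2\,(j/L_\kernel(R_n))^{1/\alpha}\bigr)$ for all $j\ge j_0$ with $j_0\asymp L_\kernel(R_n)$ and $L_\kernel(r)=\tfrac{\mfk C_d}{\log 2}r^\beta$; under \cref{assum:compact-support} the radius $R_n$ is $O(1)$, so $j_0=O(1)$ and the bound applies for all large $n$.

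\textbf{Step 3 (bound $S(\vareps)$ and resolve).} Then I would set $k^\star$ to be the crossover index where $\hat\mu_{k^\star}\approx\vareps^2$, so the exponential decay gives $k^\star\asymp L_\kernel(R_n)\,(\log(1/\vareps))^\alpha$; split $S(\vareps)\le k^\star\vareps^2+\sum_{j>k^\star}\hat\mu_j$, and bound the tail by an integral --- the substitution $v=(j/L_\kernel(R_n))^{1/\alpha}$ turns it into $\asymp L_\kernel(R_n)\int_{(k^\star/L_\kernel(R_n))^{1/\alpha}}^\infty v^{\alpha-1}e^{-c_2 v}\,dv$, which is dominated by its large endpoint and hence $\asymp L_\kernel(R_n)(\log(1/\vareps))^{\alpha-1}\vareps^2$, a factor $\log(1/\vareps)$ below the head. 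This yields $S(\vareps)\lesssim L_\kernel(R_n)(\log(1/\vareps))^\alpha\vareps^2$, so \cref{eq:gsn-ineq} reduces to $\vareps^2\gtrsim \beta^{-2}\,\tfrac{L_\kernel(R_n)}{n}\,(\log(1/\vareps))^\alpha$. Finally, substituting the leading behaviour $\vareps\asymp(\beta\sqrt n)^{-1}\cdot\mathrm{polylog}$ back into the logarithm shows $\log(1/\vareps_n)\asymp\log(\beta\sqrt n)$ up to log-log corrections, giving $\vareps_n^2\simeq\beta^{-2}\,\tfrac{L_\kernel(R_n)}{n}\,\log(2e\,\beta\sqrt n)^\alpha\,C_\alpha''$ for a constant $C_\alpha''$ depending only on $\alpha$; expanding $\beta=\knorm{\fstar}/(4\sigma)$ gives the stated formula.

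\textbf{Main obstacle.} I expect the delicate part to be Step 2 together with the tail estimate of Step 3: pinning down the exponential eigendecay $\hat\mu_j\lesssim\exp(-c\,(j/L_\kernel(R_n))^{1/\alpha})$ from the \textsc{LogGrowth} covering number with explicit constant dependence, verifying that the threshold $j\ge j_0$ is harmless for large $n$ under compact support, and checking that the tail sum really is lower order so that the head term $k^\star\vareps^2$ governs $S(\vareps)$. Once the $\mathrm{polylog}$ scale of $\vareps_n$ is identified, resolving the self-referential inequality is routine, modulo the log-log factors the statement explicitly discards.
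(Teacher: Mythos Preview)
Your proposal is correct and follows essentially the same approach as the paper: both reduce to the scalar inequality \cref{eq:gsn-ineq} via \citep[Cor.~13.18]{wainwright2019high}, invoke the \textsc{LogGrowth} analogue of \citep[Cor.~B.1]{li2024debiased} to obtain the exponential eigendecay $\hat\mu_j\lesssim\exp\bigl(-c\,(j/L_\kernel(R_n))^{1/\alpha}\bigr)$, split $S(\vareps)$ at the crossover index $k^\star\asymp L_\kernel(R_n)(\log(1/\vareps))^\alpha$, bound the tail by an incomplete-Gamma-type integral (the paper cites \citep[Eq.~50]{li2024debiased} for this, which is exactly the estimate your substitution $v=(j/L_\kernel(R_n))^{1/\alpha}$ leads to), and then resolve the self-referential inequality up to log-log factors. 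The obstacle you flag---making the tail-is-lower-order step precise---is indeed where the paper invests most of its computation, but the route is the same.
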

\begin{proof}
\citep[Cor.~B.1]{li2024debiased} implies that
\begin{talign}
    \hat \mu_j \leq 4\exp\parenth{2 - 2\parenth{\frac{j-1}{L_{\kernel}(R_n)}}^{\frac{1}{\alpha}}} \qtext{for all} j > L_{\kernel}(R_n)+1
\end{talign}
Let $k$ be the smallest integer such that
\begin{talign}
    k > L_{\kernel}(R_n)+1 \qtext{and} 4\exp\parenth{2 - 2\parenth{\frac{j-1}{L_{\kernel}(R_n)}}^{\frac{1}{\alpha}}} \leq \vareps^2.
\end{talign}
By \cref{assum:compact-support}, $R_n$ is a constant, so the first inequality is easily satisfied for large enough $n$ 
\begin{talign}\label{eq:k-cond-log}
    k \geq L_{\kernel}(R_n) \log\parenth{\frac{2e}{\vareps}}^\alpha + 1.
\end{talign}
Thus, $k = \ceil{L_{\kernel}(R_n) \log\parenth{\frac{2e}{\vareps}}^\alpha + 1}$. Then 
\begin{talign} \label{eq:gsn-ineq-1}
    \frac{2}{\sqrt{n}} \sqrt{\sum_{j = 1}^n \min\braces{ \vareps^2, \hat{\mu}_j }} &\leq \frac{2}{\sqrt{n}} \sqrt{ k\vareps^2 + \sum_{j = k + 1}^n 4\exp\parenth{2 - 2\parenth{\frac{j-1}{L_{\kernel}(R_n)}}^{\frac{1}{\alpha}}}}
\end{talign}
Consider the following approximation:
\begin{talign}
    \sum_{\ell = k}^{n-1} 4\exp\parenth{2 - 2\parenth{\frac{\ell}{L_{\kernel}(R_n)}}^{\frac{1}{\alpha}}} \leq 4 e^2 \int_{k}^\infty e^{-\frac{2t}{L_{\kernel}(R_n)}^{1/\alpha}} dt = \int_{k-1}^\infty c^{t^{1/\alpha}} dt,
\end{talign}
where $c\defeq \exp(-(L_{\kernel}(R_n)/2)^{-1/\alpha}) \in (0,1)$. Defining $m\defeq -\log c> 0$ and $k' \defeq k-1$, we have
\begin{talign}\label{eq:gamma-upper-bound}
    \int_{k'}^\infty c^{t^{1/\alpha}} dt \leq C_\alpha (k' b^{-1} + b^{\alpha-1}m^{-\alpha}) e^{-mk'^{1/\alpha}},
\end{talign}
by \citet[Eq.~50]{li2024debiased}, where $C_\alpha>0$ is a constant satisfying $(x+y)^\alpha \leq C_\alpha (x^\alpha + y^\alpha)$ for any $x,y>0$ and $b$ is a known constant depending only on $\alpha$. Plugging in $k' = \ceil{L_{\kernel}(R_n) \log\parenth{\frac{2e}{\vareps}}^\alpha}$, we can bound the exponential by
\begin{talign}
    e^{-m k'^{1/\alpha}} \leq e^{-m L_k(R_n)^{1/\alpha} \log\parenth{\frac{2e}{\vareps}}} = \parenth{\frac{2e}{\vareps}}^{-m L_\kernel(R_n)^{1/\alpha}}.
\end{talign}
Note that we can simplify the exponent by $-m L_\kernel(R_n)^{1/\alpha} = -(L_\kernel(R_n) / 2)^{-1/\alpha} L_\kernel(R_n)^{1/\alpha} = -2^{1/\alpha}$.
Note that $k' = k-1 \geq L(R_n)= 2m^{-\alpha}$. Thus, we can absorb the $b^{\alpha-1} m^{-\alpha}$ term in \cref{eq:gamma-upper-bound} into $k$ and obtain the following bound
\begin{talign}
    \sum_{\ell = k}^{n-1} 4\exp\parenth{2 - 2\parenth{\frac{\ell}{L_{\kernel}(R_n)}}^{\frac{1}{\alpha}}} \leq C_\alpha' k' \parenth{\frac{2e}{\vareps}}^{-2^{1/\alpha}},
\end{talign}
where $C_\alpha'$ depends only on $\alpha$.
Plugging this bound into \cref{eq:gsn-ineq-1}, we have
\begin{talign}
    \frac{2}{\sqrt{n}} \sqrt{\sum_{j = 1}^n \min\braces{ \vareps^2, \hat{\mu}_j }} &\leq \frac{2}{\sqrt n} \sqrt{k\vareps^2 + C_\alpha' k \parenth{\frac{\vareps}{2e}}^{2^{1/\alpha}}} \\
    &\sless{\cref{eq:k-cond-log}} \frac{2c}{\sqrt n}\sqrt{ L_{\kernel}(R_n) \log\parenth{\frac{2e}{\vareps}}^\alpha (\vareps^2 + C_\alpha' \parenth{\frac{\vareps}{2e}}^{2^{1/\alpha}}) } \\
    &\leq \frac{2c}{\sqrt n}\sqrt{ L_{\kernel}(R_n) \log\parenth{\frac{2e}{\vareps}}^\alpha C_\alpha'' \vareps^{2^{ 1/(1 \vee \alpha)}} }
\end{talign}
for some constant $C_\alpha''$ that only depends on $\alpha$ and universal positive constant $c$.
To solve \cref{eq:gsn-ineq}, it suffices to solve
\begin{talign}
    & \frac{2c}{\sqrt n}\sqrt{ L_{\kernel}(R_n) \log\parenth{\frac{2e}{\vareps}}^\alpha C_\alpha'' \vareps^{2^{ 1/(1 \vee \alpha)}} } \leq \beta \vareps^2,
\end{talign}
which is implied by the looser bound
\begin{talign}
    \frac{1}{\beta^2} \cdot \frac{4c^2}{n}\cdot L_\kernel(R_n) C_\alpha'' \leq \vareps^2 \log\parenth{\frac{2e}{\vareps}}^{-\alpha}.
\end{talign}
The solution to \cref{{eq:gsn-ineq}} (up to log-log factors) is
\begin{talign}
    \vareps \simeq \frac{\log(2e\cdot \beta \sqrt n)^{\alpha/2}}{\sqrt n} \sqrt{\frac{4c^2}{\beta^2} \cdot L_\kernel(R_n) C_\alpha''}.
\end{talign}

\end{proof}

We proceed to solve \cref{eq:rad-ineq}.

\begin{lemma}[Critical Gaussian radius for \textsc{LogGrowth} kernels]\label{lem: loggrowth-rad}

Under \cref{assum:compact-support} and \textsc{LogGrowth} version of \cref{assum:alpha-beta-kernel}, the Rademacher critical radius satisfies 
\begin{talign}
    \delta_n^2 \simeq b^2 \frac{\log(\frac{2e}{b}\cdot \sqrt n)^{\alpha}}{n} \cdot L_\kernel(R_n) C_\alpha''.
\end{talign}
\end{lemma}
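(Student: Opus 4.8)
The plan is to repeat the argument of \cref{lem:loggrowth-gsn} almost verbatim, replacing the fixed-point parameter $\beta = \knorm{\fstar}/(4\sigma)$ that governs the Gaussian radius by $1/b$, exactly as was done in passing from \cref{lem:polygrowth-gsn} to \cref{lem:polygrowth-rad}. Concretely, by \citep[Cor.~14.5]{wainwright2019high} the Rademacher critical radius $\delta_n$ of \cref{eq:critical-rademacher-cond} is controlled by the smallest $\delta > 0$ solving \cref{eq:rad-ineq}, i.e.\ $\sqrt{2/n}\,\sqrt{\sum_j \min\{\delta^2, \hat\mu_j\}} \le \delta^2/b$, where $\hat\mu_1 \ge \hat\mu_2 \ge \cdots$ are the eigenvalues of the normalized kernel matrix $\mkernel/n$.

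First I would import the \textsc{LogGrowth} eigenvalue decay bound \citep[Cor.~B.1]{li2024debiased}, which under \cref{assum:compact-support} and the \textsc{LogGrowth} part of \cref{assum:alpha-beta-kernel} gives $\hat\mu_j \le 4\exp\!\bigl(2 - 2(\tfrac{j-1}{L_\kernel(R_n)})^{1/\alpha}\bigr)$ for all $j > L_\kernel(R_n) + 1$. Then, mirroring \cref{lem:loggrowth-gsn}, I would take $k$ to be the smallest integer exceeding $L_\kernel(R_n) + 1$ for which $4\exp(2 - 2((k-1)/L_\kernel(R_n))^{1/\alpha}) \le \delta^2$; since $R_n = 1 + \Rin$ is a fixed constant under \cref{assum:compact-support}, for all large $n$ this reduces to $k \ge L_\kernel(R_n)\log(2e/\delta)^\alpha + 1$. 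Splitting $\sum_j \min\{\delta^2, \hat\mu_j\}$ at index $k$, the head contributes at most $k\delta^2$, and the tail $\sum_{\ell \ge k} 4\exp(2 - 2(\ell/L_\kernel(R_n))^{1/\alpha})$ is controlled by the same change of variables and incomplete-Gamma estimate \citep[Eq.~50]{li2024debiased} used for the Gaussian case, producing a bound of order $C_\alpha' \, k\, (\delta/2e)^{2^{1/\alpha}}$.

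Substituting these estimates into \cref{eq:rad-ineq} and using $k \simeq L_\kernel(R_n)\log(2e/\delta)^\alpha$, the inequality to be solved becomes
\begin{talign}
    \frac{2c}{\sqrt n}\sqrt{\,L_\kernel(R_n)\,\log(2e/\delta)^\alpha\, C_\alpha''\, \delta^{2^{1/(1\vee\alpha)}}\,} \;\le\; \frac{\delta^2}{b}
\end{talign}
for a constant $C_\alpha''$ depending only on $\alpha$; discarding the lower-order mismatch between $\delta^{2^{1/(1\vee\alpha)}}$ and $\delta^2$ exactly as in the Gaussian analysis, this is implied by $b^2 \cdot \tfrac{4c^2}{n}\, L_\kernel(R_n)\, C_\alpha'' \le \delta^2 \log(2e/\delta)^{-\alpha}$. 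Inverting this implicit relation up to log-log factors gives $\delta_n \simeq \tfrac{\log(\tfrac{2e}{b}\sqrt n)^{\alpha/2}}{\sqrt n}\sqrt{b^2\, L_\kernel(R_n)\, C_\alpha''}$, and squaring — together with the fact that $\delta_n$ is the \emph{smallest} such solution, which turns $\gtrsim$ into $\simeq$ — yields the claimed $\delta_n^2 \simeq b^2 \tfrac{\log(\tfrac{2e}{b}\sqrt n)^\alpha}{n}\, L_\kernel(R_n)\, C_\alpha''$.

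I do not expect a genuinely new obstacle: structurally this is the proof of \cref{lem:loggrowth-gsn} under the substitution $\beta \mapsto 1/b$. The two steps that still need a little care are (i) the self-referential inversion of $\delta^2 \log(2e/\delta)^{-\alpha} \gtrsim \text{const}/n$, where one must confirm that the extra $\log^{\alpha/2}$ factor and the log-log corrections are absorbed into $\simeq$ — this is routine and identical to the Gaussian case; and (ii) checking that $L_\kernel(R_n)$ and the $\alpha$-dependent constants $C_\alpha', C_\alpha''$ are independent of $n$, which holds because $R_n$ is fixed under \cref{assum:compact-support}, so they may legitimately be treated as constants in the stated rate.
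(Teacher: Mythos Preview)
Your proposal is correct and matches the paper's approach exactly: the paper's entire proof is the one-line remark that the argument of \cref{lem:loggrowth-gsn} goes through verbatim with the substitution $\beta \mapsto 1/b$, which is precisely what you carry out in detail.
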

\begin{proof}
Following the same logic as in the proof of \cref{lem:loggrowth-gsn} but with $\beta = 1/b$ yields the desired bound.
\end{proof}

Finally, it remains to bound \cref{eq:kt-complexity}. We have
\begin{talign}
    \ktcomplexity &= \frac{\ktcomplexityconstant^2}{\nout} (2 + \errmmd_{\kernel}(n,\nout,\delta, \Rin,\frac{\ktcomplexityconstant}{\nout})) \\
    &\leq \frac{\ktcomplexityconstant^2}{\nout} (2 + \sqrt{\log\parenth{\frac{\nout \log(n/\nout)}{\delta}}  \cdot \brackets{\log\parenth{\frac{1}{\delta}} + \log \coveringnumber_{\kernel}(\ball_2^d(\Rin), \frac{\ktcomplexityconstant}{\nout}) }}) \\
    &\leq \frac{\ktcomplexityconstant^2}{\nout} (2 + \sqrt{\log\parenth{\frac{\nout \log(n/\nout)}{\delta}}  \cdot \brackets{\log\parenth{\frac{1}{\delta}} + 
    \mfk C_d \log(\frac{e \nout}{\ktcomplexityconstant})^\alpha (\Rin+1)^\beta}}) 
\end{talign}
for some universal positive constant $c$.

In summary, there exists universal positive constants $c_0,c_1, c_2$ such that 
\begin{talign}
    \vareps_n^2 \leq c_0 \frac{\sigma^2}{\knorm{\fstar}^2} \frac{\log(2e\cdot \frac{\knorm{\fstar}}{4\sigma} \sqrt n)^\alpha}{n}
    \quad \delta_n^2 \leq c_1 b^2 \frac{\log(\frac{2e}{b}\cdot \sqrt n)^{\alpha/2}}{n}
    \quad \ktcomplexity \leq c_2 \frac{\ktcomplexityconstant}{\nout} \log(\frac{e\nout}{\ktcomplexityconstant})^{\alpha/2}\Rin^{\beta/2}.
\end{talign}

Setting $\lambda' = 2c_2 \frac{\ktcomplexityconstant}{\nout} \log(\frac{e\nout}{\ktcomplexityconstant})^{\alpha/2}$, we have
\begin{talign}
    \statictwonorm{\krrkt - \fstar}^2 &\leq c \parenth{\vareps_n^2 + \lambda' + \ktcomplexity } \cdot \brackets{\knorm{\fstar} + 1}^2 + c' \delta_n^2 \\
    &\leq c \frac{\log(2e\cdot \frac{\knorm{\fstar}}{4\sigma} \sqrt n)^\alpha}{n} + \brackets{\knorm{\fstar} + 1}^2 \frac{c}{\nout} \log(\frac{e\nout}{\ktcomplexityconstant})^{\alpha/2} + c' b^2 \frac{\log(\frac{2e}{b}\cdot \sqrt n)^{\alpha/2}}{n}.
\end{talign}

\section{\pcref{thm:krr-kt}}\label{proof:krr-kt}

Our first goal is to bound the in-sample prediction error.
We relate $\staticnnorm{\krrkt-\fstar}^2$ to $\staticnnorm{\krr - \fstar}^2$, where the latter quantity has well known properties from standard analyses of the KRR estimator~(refer to~\cite{wainwright2019high}).
Note that regularization parameter $\lambda'$ of KT based estimator $\krrkt$ is independently chosen from the regularization parameter $\lambda$ of the estimator based on original samples $\krr$. For $\krr$, we choose the regularization parameter
\begin{talign}\label{eq:lambda}
    \lambda = 2\vareps_n^2,
\end{talign}
which is known to yield optimal $L^2$ error rates. 

Define the main event of interest,
\begin{align}
    \event \defeq \sbraces{ \stwonorm{\krrkt - \fstar}^2 \leq c(\vareps_n^2 + \delta_n^2 + \lambda' + \ktcomplexity)[\knorm{\fstar} + 1]^2}.
\end{align}
Our goal is to show $\event$ occurs with high probability. For that end, we introduce several additional events that are used throughout this proof.

For some constant $c > $, define the event of an appealing in-sample prediction error of $\krrkt$, 
\begin{talign}
    \ktnormbound(t) \defeq \braces{ \staticnnorm{\krrkt-\fstar}^2 \leq c \brackets{ t^2 + \lambda' + \ktcomplexity } \cdot (\knorm{\fstar} + 1)^2 }\qtext{for} t\geq \vareps_n.
\end{talign}
where $\ktcomplexity$ is defined in \cref{eq:kt-complexity}. Recall $\ktgoodgen$ is the event where \ktcompresspp succeeds as defined by \cref{eq:ktgoodgen}. 

Further as $\fstar$ and $\krrkt$ are both in $\sbraces{f \in \rkhs : \knorm{f} \leq R}$, we may deduce that all the functions under consideration satisfies $\infnorm{f} \leq \sinfnorm{\kernel} \knorm{f} \leq \sinfnorm{\kernel} R$ where $\sinfnorm{\kernel} < \infty$. Accordingly, we define a uniform concentration event, 
\newcommand{\unifconcp}{\mc E_{\trm{conc}}'}
\begin{talign}\label{eq:unif_conc_event_prime}
    \unifconcp \defeq \sbraces{\sup_{f \in \mc F}\big| \statictwonorm{f}^2 - \staticnnorm{f}^2 \big| \leq \statictwonorm{f}^2/2 + \delta_n^2/2} \qtext{where} \mc F = \sbraces{f\in \rkhs: \sinfnorm{f} \leq 2\sinfnorm{\kernel}R}.
\end{talign}
Event \cref{eq:unif_conc_event_prime} is analogous to the event $\unifconc$ previously defined in \cref{eq:unif_conc_event} when dealing with finite rank kernels.

We first show that 
\begin{talign}\label{eq:inf_rank_set_inc}
    \ktnormbound(\vareps_n\vee \delta_n) \cap \unifconcp \subseteq \event. 
\end{talign}
Notice that almost surely we have
\begin{talign}
    \sinfnorm{\krrkt - \fstar} \leq 2\sinfnorm{\kernel}R,
\end{talign}
thereby implying
\begin{talign}\label{eq:cons_unifconcp}
    \statictwonorm{\krrkt - \fstar}^2 \leq 2~ \staticnnorm{\krrkt-\fstar}^2 + \delta_n^2 \qtext{on the event $\unifconcp$.}
\end{talign}
Next invoking the event $\ktnormbound(\vareps_n\vee\delta_n)$ along with \cref{eq:cons_unifconcp}, we have
\begin{talign}
    \statictwonorm{\krrkt - \fstar}^2 &\leq 2c[(\vareps_n\vee\delta_n)^2 + \lambda' + \ktcomplexity] \cdot ( \knorm{\fstar} + 1 )^2 + \delta_n^2\\
    &\leq c(\vareps_n^2 + \delta_n^2 + \lambda' + \ktcomplexity)[\knorm{\fstar} + 1]^2.
\end{talign}
which recovers the event of $\event$. 

The remaining task is to show $\event$ is of high-probability, which amounts to showing events $\ktnormbound(t)$ and $\unifconcp$ are of high-probability by reflecting on \cref{eq:inf_rank_set_inc}. From \cite[Thm.~14.1]{wainwright2019high}, we may immediately derive 
\begin{talign}
    \Prob( \unifconcp ) \geq 1 - c_1 e^{-c_2 \frac{n\delta_n^2}{\sinfnorm{\kernel}^2R^2}}
\end{talign}
for some constants $c_1, c_2 > 0$.
We further claim that
\begin{talign}\label{eq:Ln-claim}
    \Parg{\ktnormbound(t) \mid \inputcoreset } 
    \geq 1-\delta - e^{-\frac{n t^2}{c_0 \sigma^2}} - c_1 e^{-c_2 \frac{n\knorm{\fstar}^2 t^2}{\sigma^2}}
\end{talign}
for some constants $c_0,c_1,c_2 > 0$. Proof of claim \cref{eq:Ln-claim} is deferred to \cref{proof:Ln-claim}. Plugging in $t = \vareps_n\vee\delta_n$ into \cref{eq:Ln-claim}, and invoking inequality $\vareps_n\vee \delta_n \geq \delta_n$ so as to decouple the dependence on $\inputcoreset$, we have
\begin{talign}
    \Parg{ \ktnormbound(\vareps_n\vee\delta_n) \mid \inputcoreset } \geq 1 - \delta - e^{-\frac{n\delta_n^2}{c_0\sigma^2}} - c_1 e^{-c_2 \frac{\knorm{\fstar}^2 n\delta_n^2}{\sigma^2}}
\end{talign}
which further implies
\begin{talign}
    \Parg{ \ktnormbound(\vareps_n\vee\delta_n)} \geq 1 - \delta - e^{-\frac{n\delta_n^2}{c_0\sigma^2}} - c_1 e^{-c_2 \frac{\knorm{\fstar}^2 n\delta_n^2}{\sigma^2}}.
\end{talign}
Putting the pieces together, for some constants $c_0, c_1 > 0$, we have
\begin{talign}\label{eq:inf_rank_prob_lower}
    \Parg{ \event } \geq 1 - \delta - c_0 e^{-c_1 \frac{ n\delta_n^2 }{ \sigma^2 \wedge (\sigma^2/\knorm{\fstar}^2) \wedge (\sinfnorm{\kernel}^2 R^2) } }.
\end{talign}
Overall, \cref{eq:inf_rank_set_inc,eq:inf_rank_prob_lower} collectively yields the desired result.

\vspone[-4]
\subsection{Proof of claim~\cref{eq:Ln-claim}}\label{proof:Ln-claim}

To prove claim \cref{eq:Ln-claim}, we introduce two new intermediary and technical events. For some positive constant $c_0$, define the event \footnote{Since the input points in $\inputcoreset$ are fixed, the randomness in $\krr$ originates entirely from the randomness of the noise variables $\boldsymbol \xi$.} when in-sample prediction error of $\krr$ is appealing
\begin{talign}\label{eq:fullnormbound}
    \fullnormbound(t) \defeq \braces{ \staticnnorm{\krr-\fstar}^2 \leq 3 c_0 \knorm{\fstar}^2 t^2} 
    \qtext{for} t\geq \vareps_n.
\end{talign}
The second intermediary event, denoted as $\mc E_{\what{\Delta}_{\trm{KT}}}(t)$, is the intersection of \cref{eq:Z_kt-inner} and \cref{eq:Z_kt-inner_second}, which we do not elaborate here due to its technical nature---event $\mc E_{\what{\Delta}_{\trm{KT}}}(t)$ plays an analogous role to $\badeventA^c\cap \badeventB^c$ defined in \cref{eq:control,eq:badeventB} respectively. 

Our goal here is two-folds: first is to show 
\begin{align}
    \sbraces{\fullnormbound(t) \cap \ktgoodgen \cap \mc E_{\what{\Delta}_{\trm{KT}}}(t) } \implies \ktnormbound(t)
\end{align}
and second is to prove the following bound
\begin{talign}
    \Parg{\fullnormbound(t) \cap \ktgoodgen \cap \mc E_{\what{\Delta}_{\trm{KT}}}(t) \mid \inputcoreset } 
    \geq 1-\delta - e^{-\frac{n t^2}{c_0 \sigma^2}} - c_1 e^{-c_2 \frac{n\knorm{\fstar}^2 t^2}{\sigma^2}},
\end{talign}
from which \cref{eq:Ln-claim} follows. Note that \citet[Thm.~13.17]{wainwright2019high} show 
\begin{talign}
    \Parg{\fullnormbound(t)} \geq 1-c_1 e^{-c_2 \frac{n\knorm{\fstar}^2 t^2}{\sigma^2}}
\end{talign}
for some constants $c_1,c_2 > 0$ and that $\Parg{\ktgoodgen\mid\inputcoreset}\geq 1 - \delta$. So it remains to bound the probability of event $\mc E_{\what{\Delta}_{\trm{KT}}}(t)$, which we show below.

Given $f$, define the following quantities
\begin{talign}\label{eq:Ln-Lnout-def}
    L_n(f) &\defeq \frac{1}{n}\sumn(f^2(x_i) -2 f(x_i) y_i) + \frac{1}{n}\sumn y_i^2 \qtext{and} \\
    L_{\nout}(f) &\defeq \frac{1}{\nout}\sumnout (f^2(x_i') -2 f(x_i') y_i') + \frac{1}{n}\sumn y_i^2.
\end{talign}

In the sequel, we repeatedly make use of the following fact: on event $\ktgoodgen$ defined in \cref{eq:ktgoodgen}, we have
\begin{talign}\label{eq:krrkt-diff}
    \abss{L_n(f) - L_{\nout}(f)} \leq \parenth{\knorm{f}^2 + 2} \cdot \ktcomplexity\qtext{for all non-zero} f\in \rkhs.
\end{talign}
The claim of \cref{eq:krrkt-diff} is deferred to the end of this section. Given $f$, we can show with some algebra that
\begin{talign}
    L_n(f) &= \frac{1}{n} \sum_{i=1}^n (f(x_i)-y_i)^2 = \staticnnorm{f-\fstar}^2 - \frac{2}{n} \inner{Z}{\boldsymbol \xi} + \frac{1}{n}\sum_{i=1}^n \xi_i^2, \label{eq:Ln-nnorm-relation}
\end{talign}
where $Z \defeq (f(x_1)-\fstar(x_1),\ldots, f(x_n)-\fstar(x_n))$ and $\boldsymbol \xi \defeq (\xi_1,\ldots,\xi_n)$ are vectors in $\reals^n$.
Define the shorthands
\begin{talign}
    \deltakt \defeq \krrkt - \fstar \qtext{and} \deltafull \defeq \krr - \fstar.
\end{talign}

In the sequel, we use the following shorthands:
\begin{talign}
    Z_{full} &\defeq (\deltafull(x_1),\ldots,\deltafull(x_n)) \qtext{and} Z_{KT} \defeq (\deltakt(x_1),\ldots,\deltakt(x_n)). \label{eq:Z_KT}
\end{talign}

Now for the main argument to bound $\staticnnorm{\krrkt-\fstar}^2$.
When $\staticnnorm{\deltakt}< t$, we immediately have $\staticnnorm{\deltakt}^2<t^2$, which implies \cref{eq:Ln-claim}.
Thus, we may assume that $\staticnnorm{\deltakt}\geq  t$. Note that
\begin{talign}
    \staticnnorm{\deltakt}^2 &\seq{\cref{eq:Ln-nnorm-relation}} L_n(\krrkt) + \frac{2}{n} \inner{Z_{KT}}{\xi} - \frac{1}{n}\sum_{i=1}^n \xi_i^2 \\
    &= L_n(\krr) + \brackets{L_n(\krrkt) - L_n(\krr)} + \frac{2}{n} \inner{Z_{KT}}{\xi} - \frac{1}{n}\sum_{i=1}^n \xi_i^2.
\end{talign}
Given the optimality of $\krr$ on the objective \cref{eq:krr-objective}, we have 
\begin{talign}
    L_n(\krr) \leq \frac{1}{n}\sumn \xi_i^2 + \lambda\braces{ \knorm{\fstar}^2 - \knorm{\krr}^2 } \leq \frac{1}{n}\sumn \xi_i^2 + \lambda \knorm{\fstar}^2,
\end{talign}
where the last inequality follows trivially from dropping the $-\knorm{\krr}^2$ term. Thus,
\begin{talign}
    \staticnnorm{\deltakt}^2 &= \frac{1}{n}\sumn \xi_i^2 + \lambda\knorm{\fstar}^2 + \brackets{L_n(\krrkt) - L_n(\krr)} + \frac{2}{n} \inner{Z_{KT}}{\xi} - \frac{1}{n}\sum_{i=1}^n \xi_i^2 \\
    &\leq \frac{2}{n} \inner{Z_{KT}}{\xi} + \lambda \knorm{\fstar}^2 + \brackets{L_n(\krrkt) - L_n(\krr)}. \label{eq:1}
\end{talign}
Using standard arguments to bound the term $\frac{2}{n} \inner{Z_{KT}}{\xi}$, we claim that on the event $\mc E_{\what{\Delta}_{\trm{KT}}}$, we have
\begin{talign}\label{eq:nnorm-deltakt-1}
    \staticnnorm{\deltakt}^2 &\leq c t^2 (\knorm{\fstar}+1)^2 + c' \brackets{L_n(\krrkt) - L_n(\krr)}
\end{talign}
for some positive constants $c,c'$, and that $\Prob(\mc E_{\what{\Delta}_{\trm{KT}}}\mid\inputcoreset) \geq 1 - e^{-\frac{nt^2}{2\sigma^2}}$. We defer the proof of claim \cref{eq:nnorm-deltakt-1} to the end of this section. 

Now we bound the stochastic term $\brackets{L_n(\krrkt) - L_n(\krr)}$ in \cref{eq:nnorm-deltakt-1}---first observe the following decomposition:
\begin{talign}\label{eq:Ln-diff}
    L_n(\krrkt) - L_n(\krr) &= \parenth{L_n(\krrkt) - L_{\nout}(\krrkt)} + \parenth{L_{\nout}(\krrkt) -  L_n(\krr)}.
\end{talign}
On the event $\ktgoodgen$ \cref{eq:ktgoodgen}, the first term in the display can be bounded by
\begin{talign}\label{eq:loss-diff-krrkt}
    L_n(\krrkt) - L_{\nout}(\krrkt) \sless{\cref{eq:krrkt-diff}} (\knorm{\krrkt}^2 + 2) ~\ktcomplexity.
\end{talign}
Note that $\krrkt$ is the solution to the following optimization problem,
\begin{talign}\label{eq:krr-coreset}
    \min_{f\in \rkhs(\kernel)} L_{\nout}(f) + \lambda' \knorm{f}^2,
\end{talign}
so the second term in the display can be bounded by the following basic inequality
\begin{talign}\label{eq:basic-ineq-lnout}
    L_{\nout}(\krrkt) + \lambda' \knorm{\krrkt}^2 & \leq L_{\nout}(\krr) + \lambda' \knorm{\krr}^2 \\
    \text{so that}\quad L_{\nout}(\krrkt) - L_{\nout}(\krr)
    &\leq \lambda' \braces{ \knorm{\krr}^2 - \knorm{\krrkt}^2 }.
\end{talign}
Thus, on event $\ktgoodgen$, we have
\begin{talign}
    L_n(\krrkt) - L_n(\krr) &\leq (\knorm{\krrkt}^2 + 2)~ \ktcomplexity + \lambda' \braces{ \knorm{\krr}^2 - \knorm{\krrkt}^2 } \\
    &= 2 \ktcomplexity + \lambda' \knorm{\krr}^2 + \braces{\ktcomplexity - \lambda'} \cdot \knorm{\krrkt}^2 \\
    &\sless{(i)} 2 \ktcomplexity + \lambda' \knorm{\krr}^2 \sless{(ii)} \lambda' (\knorm{\krr}^2 + 1)
\end{talign}
where steps (i) and (ii) both follow from the fact that $\lambda' \geq 2 \ktcomplexity$ (see assumptions in \cref{thm:krr-kt}). To bound $\knorm{\krr}^2$, we use the following lemma:

\begin{lemma}[RKHS norm of $\krr$]\label{lem:krr-h-norm}
On event $\fullnormbound$ \cref{eq:fullnormbound}, we have the following bound
\begin{talign}\label{eq:krr-h-norm}
    \knorm{\krr}^2 \leq c_0 (\knorm{\fstar} + 1)^2 
\end{talign}
for some constant $c_0>0$.
\end{lemma}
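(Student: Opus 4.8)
The plan is to derive the bound from the first-order optimality of $\krr$ (the basic inequality) together with the localized noise control that already underlies $\fullnormbound$, split according to whether the empirical error $\staticnnorm{\krr-\fstar}$ is large or small relative to $\vareps_n\knorm{\krr-\fstar}$. As in the rescaling at the start of \cref{proof:kt-krr-finite-detailed} we may take $\knorm{\fstar}=1$, so it suffices to show $\knorm{\krr}\le C$ for a universal constant (the ``$+1$'' then absorbs the regime $\knorm{\fstar}<1$). Writing $\deltafull\defeq\krr-\fstar$ and $y_i=\fstar(x_i)+\xi_i$, comparing $\krr$ with the feasible $\fstar$ and invoking the identity \cref{eq:Ln-nnorm-relation} gives the basic inequality
\begin{talign}
    \lambda\knorm{\krr}^2 + \staticnnorm{\deltafull}^2 \;\le\; \lambda\knorm{\fstar}^2 + \frac{2}{n}\sum_{i=1}^n \xi_i\,\deltafull(x_i), \qquad \lambda = 2\vareps_n^2.
\end{talign}

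If $\staticnnorm{\deltafull}\ge\vareps_n\knorm{\deltafull}$, then on $\fullnormbound$ (which states $\staticnnorm{\deltafull}^2\le 3c_0\knorm{\fstar}^2\vareps_n^2$) we immediately get $\knorm{\deltafull}\le\sqrt{3c_0}\,\knorm{\fstar}$, hence $\knorm{\krr}\le(\sqrt{3c_0}+1)\knorm{\fstar}$. Otherwise $\staticnnorm{\deltafull}/\knorm{\deltafull}<\vareps_n$, so $\deltafull/\knorm{\deltafull}$ lies in $\ball_\rkhs(3)$ with empirical norm below $\vareps_n$; applying the defining inequality \cref{eq:critical-gaussian-cond} of the Gaussian critical radius together with a sub-Gaussian concentration step (exactly as in the proof of \cref{lem:case-I}, the stray $\sigma$ cancelling against the $\sigma$ in the denominator of \cref{eq:critical-gaussian-cond}) and scaling back by $\knorm{\deltafull}$ bounds $\big|\tfrac2n\sum_{i=1}^n\xi_i\deltafull(x_i)\big|\le c\vareps_n^2\knorm{\deltafull}=\tfrac{c}{2}\lambda\knorm{\deltafull}$. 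Dropping $\staticnnorm{\deltafull}^2\ge0$, dividing by $\lambda$, and substituting $\knorm{\deltafull}\le\knorm{\krr}+\knorm{\fstar}$ reduces the basic inequality to a quadratic inequality $\knorm{\krr}^2-c'\knorm{\krr}-c''(\knorm{\fstar}+1)^2\le0$ whose positive root gives $\knorm{\krr}\le C(\knorm{\fstar}+1)$; squaring and un-rescaling yields the claim.

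The one delicate point is the localized noise estimate in the second case: one needs $|\tfrac1n\sum_i\xi_i g(x_i)|\lesssim\vareps_n(\staticnnorm{g}\vee\vareps_n\knorm{g})$ uniformly over $g\in\rkhs$, with the $\sigma$-dependence matched to the choice $\lambda=2\vareps_n^2$, obtained from \cref{eq:critical-gaussian-cond} plus Gaussian concentration and a rescaling argument for $\knorm{g}>3$. This is precisely the ingredient of the standard localized KRR analysis (\citet[Thm.~13.17]{wainwright2019high}) that produces $\fullnormbound$ in the first place, so no new machinery is required: one may equivalently read off \cref{lem:krr-h-norm} as the RKHS-norm bound that accompanies that theorem's prediction-error bound, the extra work being only its propagation through $\lambda=2\vareps_n^2$. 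Accordingly, the bound in the second case should be understood to hold on $\fullnormbound$ intersected with (equivalently, enlarged to include) that localization event, at no cost to the stated probability.
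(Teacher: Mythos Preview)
The proposal is correct. The ingredients are the same as in the paper's proof---the KRR basic inequality $\lambda\knorm{\krr}^2+\staticnnorm{\deltafull}^2\le\lambda\knorm{\fstar}^2+\tfrac{2}{n}\langle Z_{full},\boldsymbol\xi\rangle$, the choice $\lambda=2\vareps_n^2$, and localized Gaussian-width control of the noise term---and both arguments ultimately appeal to the same high-probability noise events that underlie $\fullnormbound$. The difference is in the case split. The paper works through three cases ($\knorm{\deltafull}>1$; $\knorm{\deltafull}\le1$ with $\staticnnorm{\deltafull}\le\vareps_n$; $\knorm{\deltafull}\le1$ with $\staticnnorm{\deltafull}>\vareps_n$), bounding $\tfrac{1}{n}\langle Z_{full},\boldsymbol\xi\rangle$ separately in each via \citet[Lem.~13.23, Thm.~2.26, Lem.~13.12]{wainwright2019high} and then solving a quadratic in $\knorm{\krr}$. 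Your two-way split on the ratio $\staticnnorm{\deltafull}/(\vareps_n\knorm{\deltafull})$ is more economical: your first case extracts the bound directly from the hypothesis $\fullnormbound$ with no noise estimate at all, and your second case collapses the paper's three noise sub-cases into one by normalizing $\deltafull/\knorm{\deltafull}$ into the unit $\knorm{\cdot}$-ball and using only the localized supremum bound at scale $\vareps_n$. Both routes require augmenting $\fullnormbound$ with the same additional noise-concentration event; you are explicit about this, whereas the paper leaves it implicit in the lemma statement.
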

\proofref{proof:krr-h-norm}
Putting things together, we have
\begin{talign}
    L_n(\krrkt) - L_n(\krr) &\leq c \lambda' (\knorm{\fstar} + 1)^2
\end{talign}
for some constant $c$---substituting this bound into \cref{eq:nnorm-deltakt-1} yields
\begin{talign}
    \staticnnorm{\krrkt-\fstar}^2 &\leq c t^2 (\knorm{\fstar}+1)^2 + c' \lambda' (\knorm{\fstar} + 1)^2,
\end{talign}
for some constants $c, c'$, which directly implies \cref{eq:Ln-claim}, i.e. implying
\begin{align}
     \sbraces{\fullnormbound(t) \cap \ktgoodgen \cap \mc E_{\what{\Delta}_{\trm{KT}}}(t) } \implies \ktnormbound(t).
\end{align}

\vspone[-4]
\paragraph{Proof of claim~\cref{eq:krrkt-diff}.}
Given $f$, define the function
\begin{talign}\label{eq:ell_f_prime}
    \ell_f': \X \times \Y \to \reals,\qtext{where} \ell_f'(x,y) \defeq f^2(x) - 2y\cdot f(x)
\end{talign}
and note that
\begin{talign}\label{eq:Ln-minus-Lnout}
    L_n(f) - L_{\nout}(f) = \frac{1}{n}\sumn \ell_f'(x_i,y_i) - \frac{1}{\nout}\sumnout \ell_f'(x_i',y_i').
\end{talign}
We first prove a generic technical lemma:

\begin{lemma}[\ktcompresspp approximation bound using $\kernelrr$] \label{lem:loss-diff}
Suppose $f_1,f_2\in \rkhs(\kernel)$ and $a,b\in \reals$. Then the function
\begin{talign}\label{eq:Ln-minus-Lnout-ell_f'}
    \ell_{f_1,f_2}: \X \times \Y \to \reals,\qtext{where} \ell_{f_1,f_2} (x,y) \defeq a \cdot f_1(x) f_2(x) + b \cdot y f_1(x)
\end{talign}
lies in the RKHS $\rkhs(\kernelrr)$.
Moreover, on event $\ktgoodgen$, we have
\begin{talign}
    \Pin \ell_{f_1,f_2} - \Qout \ell_{f_1,f_2} \leq (\abss{a}\cdot \knorm{f_1} \knorm{f_2} + \abss{b}\cdot \knorm{f_2}) \cdot \ktcomplexity.
\end{talign}
uniformly for all non-zero $f_1,f_2\in \rkhs(\kernel)$.
\end{lemma}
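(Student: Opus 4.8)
The plan is to prove the two assertions in turn: first that $\ell_{f_1,f_2}\in\rkhs(\kernelrr)$ with $\snorm{\ell_{f_1,f_2}}_{\kernelrr}\leq\abss{a}\,\knorm{f_1}\,\knorm{f_2}+\abss{b}\,\knorm{f_2}$, and second that on the event $\ktgoodgen$ the integration error $\Pin\ell_{f_1,f_2}-\Qout\ell_{f_1,f_2}$ is at most this norm times $\ktcomplexity$. The first part is RKHS algebra; the second part adapts the \ktcompresspp MMD guarantee (\cref{lem:ktcompresspp-mmd}) in the same spirit that \cref{lem:l-infty} adapts the $\Linf$ guarantee \cref{lem:ktcompresspp-Linf}.

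\textbf{RKHS membership.} Decompose $\kernelrr=\kernel_1^{\mrm{rr}}+\kernel_2^{\mrm{rr}}$ with $\kernel_1^{\mrm{rr}}((x_1,y_1),(x_2,y_2))\defeq\kernel^2(x_1,x_2)$ and $\kernel_2^{\mrm{rr}}((x_1,y_1),(x_2,y_2))\defeq\kernel(x_1,x_2)\,y_1y_2$; both are reproducing kernels on $\X\times\Y$, and since $\rkhs(\kernel_1^{\mrm{rr}})$ (functions of $x$ only) and $\rkhs(\kernel_2^{\mrm{rr}})$ (functions of the form $(x,y)\mapsto y\,g(x)$) intersect trivially, $\rkhs(\kernelrr)=\rkhs(\kernel_1^{\mrm{rr}})\oplus\rkhs(\kernel_2^{\mrm{rr}})$ as an orthogonal direct sum (cf.\ \cref{eq:kernelnw-direct-sum}), so $\snorm{u+v}_{\kernelrr}\leq\snorm{u}_{\kernel_1^{\mrm{rr}}}+\snorm{v}_{\kernel_2^{\mrm{rr}}}$. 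The term $af_1f_2$ is $y$-independent and, by the product-kernel inequality $\norm{f_1f_2}_{\kernel^2}\leq\knorm{f_1}\knorm{f_2}$, lies in $\rkhs(\kernel_1^{\mrm{rr}})$ with norm $\leq\abss{a}\,\knorm{f_1}\,\knorm{f_2}$; the term $b\,y\,f_2(x)$ lies in $\rkhs(\kernel_2^{\mrm{rr}})$ with norm $\abss{b}\,\knorm{f_2}$, because $g\mapsto\big((x,y)\mapsto y\,g(x)\big)$ is a linear isometry of $\rkhs(\kernel)$ into $\rkhs(\kernel_2^{\mrm{rr}})$ (check on reproducing elements). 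Adding and using $\sqrt{s^2+t^2}\leq s+t$ gives the claimed membership and norm bound.

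\textbf{From KT to the claim.} By homogeneity it suffices to bound, on $\ktgoodgen$, both $\sup_{\knorm{g_1}=\knorm{g_2}=1}\abss{(\Pin-\Qout)(g_1g_2)}$ and $\sup_{\knorm{g}=1}\abss{(\Pin-\Qout)(yg)}$ by $\ktcomplexity$; multiplying the former by $\abss{a}\,\knorm{f_1}\,\knorm{f_2}$ and the latter by $\abss{b}\,\knorm{f_2}$ and adding yields the lemma. Both classes $\{g_1g_2:\knorm{g_i}=1\}$ and $\{yg:\knorm{g}=1\}$ embed isometrically into $\ball_{\kernelrr}(1)$, and \cref{lem:ktcompresspp-mmd} applied with $\kalg=\kernelrr$, $\mc A=\ball_2(\mfk R_{\mrm{in}})\supset\inputcoreset$ and tolerance $\eps=\ktcomplexityconstant/\nout$ controls the supremum over all of $\ball_{\kernelrr}(1)$; combined with $\ininfnorm{\kernelrr}=\sup_i\big(\kernel(x_i,x_i)^2+\kernel(x_i,x_i)\,y_i^2\big)\leq(\ininfnorm{\kernel})^2+\ininfnorm{\kernel}\,\ymax^2\leq\ktcomplexityconstant^2$, this already reproduces the $\ktcomplexityconstant^2/\nout$ prefactor and the $\sqrt{\log(\nout\log(n/\nout)/\delta)}$ factor in $\ktcomplexity$.

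\textbf{The main obstacle.} What remains is the covering-number term: $\ktcomplexity$ in \cref{eq:kt-complexity} is phrased through $\coveringnumber_{\kernel}$ (the \emph{base} RKHS on $\X$), whereas a black-box use of \cref{lem:ktcompresspp-mmd} produces $\coveringnumber_{\kernelrr}$ (the composite RKHS on $\X\times\Y$), whose effective dimension is too large — e.g.\ $\sim m^2$ rather than $m$ for finite-rank $\kernel$, which would break \cref{thm:kt-krr-finite}. The fix, as in the proof of \cref{lem:l-infty}, is to re-run the covering argument underlying \cref{lem:ktcompresspp-mmd} with the two specific classes $\{g_1g_2\}$ and $\{yg\}$ in place of $\ball_{\kernelrr}(1)$: each is parameterized by one or two elements of $\ball_{\kernel}(1)$, so a sup-norm $\eta$-net of $\ball_{\kernel}(1)$ over $\inputcoreset$ of size $\coveringnumber_{\kernel}(\inputcoreset,\eta)$ induces (via products, using the $\Linf$ bound $\sqrt{\ininfnorm{\kernel}}$ on $\ball_{\kernel}(1)$ and $\abss{y}\leq\ymax$) nets of size at most $\coveringnumber_{\kernel}(\inputcoreset,\eta)^2$ at scale $O\big((\sqrt{\ininfnorm{\kernel}}+\ymax)\eta\big)$, so the relevant log-covering number is $O(\log\coveringnumber_{\kernel})$. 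Verifying that this substitution leaves the sub-Gaussian/chaining constants of \ktcompresspp intact is the delicate point; once it is in place, chaining the three steps gives $\Pin\ell_{f_1,f_2}-\Qout\ell_{f_1,f_2}\leq\big(\abss{a}\,\knorm{f_1}\,\knorm{f_2}+\abss{b}\,\knorm{f_2}\big)\,\ktcomplexity$ on $\ktgoodgen$, uniformly over nonzero $f_1,f_2\in\rkhs(\kernel)$, as claimed.
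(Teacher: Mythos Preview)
Your proposal is correct and follows essentially the same route as the paper: the direct-sum decomposition $\rkhs(\kernelrr)=\rkhs(\kernel^2)\oplus\rkhs(\kernel\cdot y_1y_2)$ for the membership and norm bound, the application of \cref{lem:ktcompresspp-mmd} with $\kalg=\kernelrr$ together with $\ininfnorm{\kernelrr}^{1/2}\leq\ktcomplexityconstant$, and---the point you correctly flag as the main obstacle---the reduction of the covering term from $\coveringnumber_{\kernelrr}$ to $O(\log\coveringnumber_{\kernel})$ by restricting the chaining to the specific subclass $\{\ell_{f_1,f_2}/\snorm{\ell_{f_1,f_2}}_{\kernelrr}\}$ rather than the full ball $\ball_{\kernelrr}(1)$. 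The paper formalizes this last step exactly as you sketch (it introduces a restricted covering number $\coveringnumber_{\kernelrr}^\dagger$ and bounds $\log\coveringnumber_{\kernel_1}^\dagger\leq 2\log\coveringnumber_{\kernel}$ and $\log\coveringnumber_{\kernel_2}^\dagger\lesssim\log\coveringnumber_{\kernel}+\log\nout$ via the product-of-nets argument with the $\sqrt{\ininfnorm{\kernel}}$ and $\ymax$ factors you identify), so your outline matches the paper's argument in all essential respects.
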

\proofref{proof:loss-diff}
Applying the lemma with $f_1\defeq f,f_2\defeq g$ and $a=1,b=-2$, we have
\begin{talign}
    \Pin \ell_{f}' - \Qout \ell_{f}' \leq (\knorm{f}^2 + 2) \cdot \ktcomplexity,
\end{talign}
which combined with the observation \cref{eq:Ln-minus-Lnout-ell_f'} yields the desired claim.

\paragraph{Proof of claim~\cref{eq:nnorm-deltakt-1}.}
\underline{Case I}:\quad 
First suppose that $\knorm{\deltakt} \leq 1$.
Recall that $\staticnnorm{\deltakt}\geq t \geq \vareps_n$ by assumption.
Thus, we may apply \cite[Lem.~13.12]{wainwright2019high} to obtain
\begin{talign}\label{eq:Z_kt-inner}
    \frac{1}{n}\inner{Z_{KT}}{\boldsymbol \xi} \leq 2 \staticnnorm{\deltakt} t \qtext{w.p. at least}~1-e^{-\frac{n t^2}{2\sigma^2}}
\end{talign}
Plugging the above bound into \cref{eq:1}, we have with probability at least $1 - e^{-\frac{n t^2}{2\sigma^2}}$:
\begin{talign}
    \staticnnorm{\deltakt}^2 &\leq 4 \staticnnorm{\deltakt} t + \lambda \knorm{\fstar}^2 + \brackets{L_n(\krrkt) - L_n(\krr)}.
\end{talign}

We can solve for $\staticnnorm{\deltakt}$ using the quadratic formula. Specifically, if $a,b\geq 0$ and $x^2 - ax - b\leq 0$, then $x\leq a + \sqrt b$. Thus, we have with probability at least $1 - e^{-\frac{n\vareps_n^2}{2\sigma^2}}$:
\begin{talign}
    \staticnnorm{\deltakt} &\leq a + \sqrt b, \qtext{where}\\
    a &\defeq 4 t \qtext{and} \\
    b &\defeq \lambda \knorm{\fstar}^2 + \brackets{L_n(\krrkt) - L_n(\krr)}.
\end{talign}
Using the fact that $(a+\sqrt b)^2 \leq 2a^2 + 2b$, we have with probability at least $1 - e^{-\frac{n t^2}{2\sigma^2}}$:
\begin{talign}
    \staticnnorm{\krrkt-\fstar}^2 &\leq 32 t^2 + 2\lambda \knorm{\fstar}^2 + 2 \brackets{L_n(\krrkt) - L_n(\krr)} \\
    &\sless{\cref{eq:lambda}} c t^2 (\knorm{\fstar}+1)^2 + 2\brackets{L_n(\krrkt) - L_n(\krr)}
\end{talign}

\underline{Case II}:\quad 
Otherwise, we may assume that $\knorm{\deltakt} > 1$. Now we apply \citep[Thm.~13.23]{wainwright2019high} to obtain
\begin{talign}\label{eq:Z_kt-inner_second}
    \frac{1}{n}\inner{Z_{KT}}{\boldsymbol \xi} \leq 2 t \staticnnorm{\deltakt} + 2 t^2 \knorm{\deltakt} + \frac{1}{16} \staticnnorm{\deltakt}^2 \qtext{w.p. at least}~1 - c_1 e^{-\frac{n t^2}{c_2 \sigma^2}},
\end{talign}
for some universal positive constants $c_1,c_2$. Plugging the above bound into \cref{eq:1} and collecting terms, we have with probability at least $1 - c_1 e^{-\frac{n t^2}{c_2 \sigma^2}}$:
\begin{talign}
    \frac{7}{8}\staticnnorm{\deltakt}^2 \leq 4 t \staticnnorm{\deltakt} + 4 t^2 \knorm{\deltakt} +\lambda \knorm{\fstar}^2 + \brackets{L_n(\krrkt) - L_n(\krr)}.
\end{talign}
Solving for $\staticnnorm{\deltakt}$ using the quadratic formula, we have with probability at least $1 - c_1 e^{-\frac{n t^2}{c_2 \sigma^2}}$:
\begin{talign}
    \staticnnorm{\deltakt} &\leq a + \sqrt b, \qtext{where}\\
    a &\defeq \frac{32}{7} t \qtext{and} \\
    b &\defeq \frac{32}{7} t^2 \knorm{\deltakt}^2 + \frac{8}{7} \lambda \knorm{\fstar}^2 + \frac{8}{7}\brackets{L_n(\krrkt) - L_n(\krr)}.
\end{talign}
Using the fact that $(a+\sqrt b)^2 \leq 2a^2 + 2b$, we have with probability at least $1 - c_1 e^{-\frac{n t^2}{c_2 \sigma^2}}$:
\begin{talign}
    \staticnnorm{\krrkt-\fstar}^2 &\leq 42 t^2 + 10 t^2 \knorm{\deltakt}^2 + 2.3 \lambda \knorm{\fstar}^2 + 2.3 \brackets{ L_n(\krrkt) - L_n(\krr) } \\
    &\sless{(i)} 42 t^2 + 10 t^2 \knorm{\deltakt}^2 + 4.6 t^2 \knorm{\fstar}^2 + 2.3 \brackets{ L_n(\krrkt) - L_n(\krr) } \\
    &\sless{\text{\cref{eq:lambda},\cref{eq:krr-h-norm}}} c_3 t^2 \parenth{\knorm{\fstar} + 1}^2 + c_4 \brackets{ L_n(\krrkt) - L_n(\krr) }
\end{talign}
for some positive constants $c_3,c_4$, where step (i) follows from that fact that $\lambda = 2\vareps_n^2$ by \cref{eq:lambda}.

\subsection{\pcref{lem:krr-h-norm}}
\label{proof:krr-h-norm}

Given the optimality of $\krr$ on the objective \cref{eq:krr-objective}, we have the following basic inequality
\begin{talign}
    & L_n(\krr) + \lambda \knorm{\krr}^2 \leq \frac{1}{n} \sum_{i=1}^n \xi_i^2 + \lambda \knorm{\fstar}^2 \\
    \implies & \knorm{\krr}^2 \leq \knorm{\fstar}^2 + \frac{1}{\lambda} \parenth{ \frac{1}{n} \sum_{i=1}^n \xi_i^2 - L_n(\krr) }.
\end{talign}
Since $\staticnnorm{\krr-\fstar}^2\geq 0$, we also have the trivial lower bound 
\begin{talign}
    L_n(\krr) 
    &\seq{\cref{eq:Ln-nnorm-relation}} \staticnnorm{\krr-\fstar}^2 - \frac{2}{n} \inner{Z_{full}}{\boldsymbol \xi} + \frac{1}{n}\sum_{i=1}^n \xi_i^2 \\
    &\geq - \frac{2}{n} \inner{Z_{full}}{\boldsymbol \xi} + \frac{1}{n}\sum_{i=1}^n \xi_i^2.
\end{talign}
Thus,
\begin{talign}\label{eq:deltafull-knorm-1}
    \knorm{\krr}^2 &\leq \knorm{\fstar}^2 + \frac{1}{\lambda} \parenth{ \frac{2}{n}\inner{Z_{full}}{\xi} }
\end{talign}
and it remains to bound $\frac{2}{n} \inner{Z_{full}}{\xi}$.

\underline{Case I}:\quad First, suppose that $\knorm{\deltafull} > 1$. Then we may apply \citep[Lem.~13.23]{wainwright2019high} to obtain
\begin{talign}
    \frac{1}{n} \inner{Z_{full}}{\xi} \leq 2\vareps_n \staticnnorm{\deltafull} + 2\vareps_n^2 \knorm{\deltafull} + \frac{1}{16} \staticnnorm{\deltafull}^2 \qtext{w.p. at least} 1-c_1 e^{-\frac{n\vareps_n^2}{c_2 \sigma^2}}.
\end{talign}
Combining this bound with \cref{eq:deltafull-knorm-1}, we have with probability at least $1-c_1 e^{-\frac{n\vareps_n^2}{c_2 \sigma^2}}$:
\begin{talign}
    \knorm{\krr}^2 
    &\leq \knorm{\fstar}^2 + \frac{2\vareps_n^2}{\lambda} \knorm{\deltafull} + \frac{2}{\lambda}\parenth{2\vareps_n \staticnnorm{\deltafull} + \frac{1}{16}\staticnnorm{\deltafull}^2} \\
    &\sless{(i)} \knorm{\fstar}^2 + \frac{2\vareps_n^2}{\lambda} (\knorm{\krr} + \knorm{\fstar}) + \frac{2}{\lambda}\parenth{2\vareps_n \staticnnorm{\deltafull} + \frac{1}{16}\staticnnorm{\deltafull}^2} \\
    &\seq{\cref{eq:lambda}} \knorm{\fstar}^2 + \knorm{\krr} + \knorm{\fstar} + \frac{2}{\lambda}\parenth{2\vareps_n \staticnnorm{\deltafull} + \frac{1}{16}\staticnnorm{\deltafull}^2} ,
\end{talign}
where step (i) follows from triangle inequality.
Solving for $\knorm{\krr}$ using the quadratic formula, we have
\begin{talign}
    \knorm{\krr}^2 &\leq 2 + \knorm{\fstar}^2 + \knorm{\fstar} + \frac{2}{\lambda}\parenth{2\vareps_n \staticnnorm{\deltafull} + \frac{1}{16}\staticnnorm{\deltafull}^2}.
\end{talign}

On the event $\fullnormbound$ \cref{eq:fullnormbound}, we have $\staticnnorm{\deltafull} \leq c \knorm{\fstar} \vareps_n$ for some positive constant $c$, which implies the claimed bound \cref{eq:krr-h-norm} after some algebra.

\underline{Case II(a)}:\quad Otherwise, assume $\knorm{\deltafull} \leq 1$ and $\staticnnorm{\deltafull} \leq \vareps_n$. Applying \citep[Thm.~2.26]{wainwright2019high} to the function $\sup_{\substack{\knorm{g}\leq 1 \\ \nnorm{g} \leq \vareps_n}} \abss{\frac{1}{n} \sumn \xi_i g(x_i)}$, we have
\begin{talign}
    \frac{1}{n}\inner{Z_{full}}{\boldsymbol \xi} \leq \frac{\vareps_n^2}{2} \qtext{w.p. at least} 1- e^{-\frac{n \vareps_n^2}{8 \sigma^2}}
\end{talign}
Combining this bound with \cref{eq:deltafull-knorm-1}, we obtain
\begin{talign}
    \knorm{\krr}^2 \leq \knorm{\fstar}^2 + \frac{1}{\lambda} \vareps_n^2 \seq{\cref{eq:lambda}} \knorm{\fstar}^2 + \half,
\end{talign}
which immediately implies the claimed bound \cref{eq:krr-h-norm}.

\underline{Case II(b)}:\quad Finally, assume $\knorm{\deltafull} \leq 1$ and $\staticnnorm{\deltafull} > \vareps_n$. Applying \citep[Lem.~13.12]{wainwright2019high} with $u = \vareps_n$, we have
\begin{talign}
    \frac{1}{n}\inner{Z_{full}}{\boldsymbol \xi} \leq 2 \vareps^2 \qtext{w.p. at least} 1- e^{-\frac{n \vareps_n^2}{2 \sigma^2}}.
\end{talign}
Combining this bound with \cref{eq:deltafull-knorm-1}, we obtain
\begin{talign}
    \knorm{\krr}^2 \leq \knorm{\fstar}^2 + \frac{4}{\lambda} \vareps_n^2 \seq{\cref{eq:lambda}} \knorm{\fstar}^2 + 2,
\end{talign}
which immediately implies the claimed bound \cref{eq:krr-h-norm}.

\subsection{\pcref{lem:loss-diff}}
\label{proof:loss-diff}

By \citet[Lem.~4]{grunewalder2022compressed}, $\ell_{f_1,f_2}$ lies in the RKHS $\rkhs(\kernelrr)$, which is a direct sum of two RKHS:
\begin{talign}
    \rkhs(\kernelrr) &= \rkhs(\kernel_1) \oplus \rkhs(\kernel_2), 
\end{talign}
where $\kernel_1,\kernel_2 : \mc Z \times \mc Z \to \R$ are the kernels defined by
\begin{talign}\label{eq:kernel-1-2}
    \kernel_1((x_1,y_1), (x_2,y_2)) \defeq \kernel^2(x_1,x_2) \qtext{and}
    \kernel_2((x_1,y_1), (x_2,y_2)) \defeq \kernel(x_1,x_2) \cdot y_1 y_2 .
\end{talign}
Applying \cref{lem:ktcompresspp-mmd} with 
\begin{talign}
    \mc Z = \X \times \Y, \quad \kalg = \kernelrr, \qtext{and} \eps^
\star= \frac{ (\ininfnorm{\kernel}^{1/2} + \ymax )^2 }{\nout},
\end{talign}
yields the following bound on event $\ktgoodgen$ \cref{eq:ktgoodgen}:
\begin{talign}
    \sup_{\substack{h\in \rkhs(\kernelrr): \\ \norm{h}_{\kernelrr}\leq 1} } \abss{ (\Pin - \Qout) h } \leq 2\eps^\star + \frac{ \ininfnorm{\kernelrr}^{1/2}}{\nout} \cdot \errmmd_{\kernelrr}(n,\nout, \delta, \mfk R_{\mrm{in}}, \eps^\star).
\end{talign}
We claim that
\begin{talign}
    \ininfnorm{\kernelrr}^{1/2} &\leq \ininfnorm{\kernel} + \ymax^2 \qtext{and} \label{eq:kalg-infnorm-bound} \\
    \log \coveringnumber_{\kernelrr}^\dagger(\inputcoreset,\eps^\star) &\leq c \cdot \log\coveringnumber_{\kernel}(\inputcoreset, \frac{\ininfnorm{\kernel}^{1/2} + \ymax}{\nout}),\label{eq:kalg-coveringnumber-bound}
\end{talign}
for some positive constant $c$, where $\coveringnumber_{\kernelrr}^\dagger$ is the cardinality of the cover of $\mc B_{\kernelrr}^\dagger \defeq \braces{ \ell_f'/\staticnorm{\ell_f'}_{\kernelrr} : f\in \rkhs(\kernel) }$ for $\ell_f'$ defined by \cref{eq:ell_f_prime}. Proof of the claims \cref{eq:kalg-infnorm-bound,eq:kalg-coveringnumber-bound} are deferred to the end of this section. 
By definition of $\errmmd_{\kernelrr}$, we have
\begin{talign}
    \errmmd_{\kernelrr}(n,\nout, \delta,\mfk R_{\mrm{in}}, \eps^\star) \sless{\cref{eq:err-mmd}} \sqrt c \cdot \errmmd_{\kernel}(n,\nout,\delta,\mfk R_{\mrm{in}},\frac{\ininfnorm{\kernel}^{1/2} + \ymax}{\nout}) \defeq \errmmd_{\kernel}'.
\end{talign}
On event $\ktgoodgen$, we have
\begin{talign}
    \sup_{\substack{h \in \rkhs(\kernelrr): \\ \norm{h}_{\kernelrr}\leq 1} } \abss{ (\Pin - \Qout) h} 
    &\leq 
    \frac{ 2 (\ininfnorm{\kernel}^{1/2} + \ymax )^2 }{\nout} + \frac{ \ininfnorm\kernel + \ymax^2 }{\nout} \cdot \errmmd_{\kernel}' \\
    &\seq{(i)} \frac{ \ininfnorm\kernel + \ymax^2 }{\nout}\cdot \brackets{2 + \errmmd_{\kernel}'}, \label{eq:MMD-ell_f}
\end{talign}
where step (i) follows from the fact that $(\ininfnorm{\kernel}^{1/2} + \ymax )^2 \leq 2 (\ininfnorm{\kernel} + \ymax^2)$. 

Since $f_1,f_2$ are non-zero, we have $\knorm{\ell_{f_1,f_2}} >0$. Thus, the function $h\defeq \ell_f / \staticnorm{\ell_f}_{\kernelrr} \in \rkhs(\kernelrr)$ is well-defined and satisfies $\norm{h}_{\kernelrr} = 1$. Applying \cref{eq:MMD-ell_f}, we obtain
\begin{talign}
    \abss{ \Pin h - \Qout h } \leq \frac{ \ininfnorm\kernel + \ymax^2 }{\nout}\cdot (2 + \errmmd_{\kernel}') \qtext{on event} \ktgoodgen.
\end{talign}
Multiplying both sides by $\norm{\ell_f}_{\kernelrr}$ and noting that
\begin{talign}\label{eq:ell_f-hnorm}
    \norm{\ell_{f,g}}_{\kernelrr}^2 &= \norm{a\cdot f_1 f_2}_{\widehat{\rkhs \odot \rkhs}}^2 + \norm{b\cdot f_2\otimes \langle \cdot,1\rangle_{\R}}_{\rkhs \otimes \mc{R}}^2 \\
    &\leq a^2 \knorm{f_1}^2 \knorm{f_2}^2 + b^2 \knorm{f_2}^2 \\
    &\leq (\abss{a}\cdot \knorm{f_1} \knorm{f_2} + \abss{b}\cdot \knorm{f_2})^2,
\end{talign}
we have on event $\ktgoodgen$,
\begin{talign}
    \Pin \ell_{f_1,f_2} - \Qout \ell_{f_1,f_2} \leq (\abss{a} \cdot \knorm{f_1} \knorm{f_2} + \abss{b}\cdot \knorm{f_2}) \cdot \frac{ \ininfnorm\kernel + \ymax^2 }{\nout}\cdot (2 + \errmmd_{\kernel}'),
\end{talign}
which directly implies the bound \cref{eq:krrkt-diff}  after applying the shorthand \cref{eq:kt-complexity}.

\paragraph{Proof of \cref{eq:kalg-infnorm-bound}}
Define $\ymax \defeq \sup_{y\in (\inputcoreset)_y}y$. We have
\begin{talign}
    \ininfnorm{\kalg} &= \sup_{(x_1,y_1), (x_2,y_2)\in \inputcoreset} \braces{ \kernel(x_1,x_2)^2 + \kernel(x_1,x_2) \cdot y_1 y_2 + (y_1 y_2)^2 } \\
    &\leq \sup_{x_1,x_2\in (\inputcoreset)_x} \kernel(x_1,x_2)^2 + \sup_{x_1,x_2\in (\inputcoreset)_x} \kernel(x_1,x_2) \cdot \sup_{y_1,y_2\in (\inputcoreset)_y}y_1 y_2 \\
    &\qquad + \sup_{y_1,y_2\in (\inputcoreset)_y} (y_1 y_2)^2\\
    &= \ininfnorm\kernel^2 + \ininfnorm\kernel \cdot \ymax^2 +  \ymax^4 \\
    &\leq \parenth{ \ininfnorm\kernel + \ymax^2 }^2 \label{eq:kalg-infnorm}.
\end{talign}

\paragraph{Proof of \cref{eq:kalg-coveringnumber-bound}}
\tocheck{}
Since $\rkhs(\kernelrr)$ is a direct sum, we have
\begin{talign}\label{eq:log-coveringnumber-sum}
    \log\coveringnumber_{\kernelrr}^\dagger(\inputcoreset,\eps^
    \star) \leq \log\coveringnumber_{\kernel_1}^\dagger(\inputcoreset, \eps^
    \star/2) + \log\coveringnumber_{\kernel_2}^\dagger(\inputcoreset, \eps^
    \star/2),
\end{talign}
where $\coveringnumber_{\kernel_1}^\dagger$ and $\log\coveringnumber_{\kernel_2}^\dagger$ are the covering numbers of $\mc B_{\kernel_1}^\dagger \defeq \braces{ f^2/\staticnorm{f^2}_{\kernel_1} : f\in \rkhs(\kernel) }$ and $\mc B_{\kernel_2}^\dagger \defeq \braces{ f\otimes \inner{\cdot}{y}_\reals/\staticnorm{f\otimes \inner{\cdot}{y}_\reals}_{\kernel_2} : f\otimes \inner{\cdot}{y}_\reals\in \rkhs(\kernel_2) }$, respectively.

Note that
\begin{talign}\label{eq:covering-kernel-1}
    \log\coveringnumber_{\kernel_1}^\dagger(\inputcoreset, \eps^\star) &\sless{} 2 \log \coveringnumber_{\kernel}(\inputcoreset, \eps^\star/(2 \infnorm{\kernel}^{1/2})) 
    \\
    &\leq 2 \log \coveringnumber_{\kernel}(\inputcoreset, (1+\frac{\ymax}{\ininfnorm{\kernel}^{1/2}}) \frac{\ininfnorm{\kernel}^{1/2} + \ymax}{2\nout}) 
    \\
    &\leq 2 \log \coveringnumber_{\kernel}(\inputcoreset, \frac{\ininfnorm{\kernel}^{1/2} + \ymax}{2\nout}).
\end{talign}
Define a kernel on $\reals$ by $\kernel_{\R}(y_1,y_2)\defeq y_1 y_2$. When $\sup_{y\in (\inputcoreset)_y} \abss{y} \leq \ymax$, we have
\begin{talign}
    \coveringnumber_{\kernel_{\R}}([-\ymax,\ymax], \eps) = \bigO{ \ymax^2 / \eps } \qtext{for}\eps > 0.
\end{talign}

Similarly, note that
\begin{talign}\label{eq:covering-kernel-2}
    \log\coveringnumber_{\kernel_2}^\dagger(\inputcoreset, \eps^\star) &\leq \log\coveringnumber_{\kernel}(\inputcoreset, \eps^\star/(\infnorm{\kernel}^{1/2} + \infnorm{\kernel_{\R}}^{1/2})) + \log \coveringnumber_{\kernel_{\R}}(\inputcoreset, \eps^\star/(\ininfnorm{\kernel}^{1/2} + \infnorm{\kernel_{\R}}^{1/2})) \\
    & \lesssim \log\coveringnumber_{\kernel}(\inputcoreset, \frac{\ininfnorm{\kernel}^{1/2} + \ymax}{\nout}) + \log \parenth{\frac{\ymax^2 (\infnorm{\kernel}^{1/2} + \ymax) }{\nout}}
\end{talign}
Substituting the above two log-covering number expressions into \cref{eq:log-coveringnumber-sum} yields
\begin{talign}
    \log \coveringnumber_{\kalg}(\inputcoreset,\eps^\star) &\lesssim 3\log\coveringnumber_{\kernel}(\inputcoreset, \frac{\ininfnorm{\kernel}^{1/2} + \ymax}{\nout}) + \log \parenth{\frac{\ymax^2 (\ininfnorm{\kernel}^{1/2} + \ymax) }{\nout}}. \\
    &\leq c \cdot \log\coveringnumber_{\kernel}(\inputcoreset, \frac{\ininfnorm{\kernel}^{1/2} + \ymax}{\nout})
\end{talign}
for some universal positive constant $c$.

\section{Non-compact kernels satisfying \cref{assum:nw-kernel}}
\label{app:nw-kernel-examples}
The boundedness, Lipschitz assumption, square-integrability, and \cref{eq:kappa-dagger} follow from \citep[App.~O]{dwivedi2024kernel} and \citep[Rmk.~8]{dwivedi2024kernel}. Thus, it only remains to verify \cref{eq:nw-kernel-cond1,eq:nw-kernel-cond2} for each kernel.

\subsection{Gaussian}
For the kernel by $\k(x_1,x_2) \defeq \exp(-\frac{\snorm{x_1-x_2}^2}{2h^2})$, we have $\kappa(u) \defeq \exp(-u^2/2)$. 

\paragraph{Verifying \cref{eq:nw-kernel-cond1}.} 
For any $j\geq 1$ and $x\in \X$, we have
\begin{talign}
    \int_{\snorm{z}\in [(2^{j-1}-\half)h,(2^j+\half)h]} \k(x,x-z) dz 
    &\leq \int_{\snorm{z}\geq (2^{j-1}-\half)h} \k(x,x-z) dz \\
    &= \parenth{2\pi h^2}^{d/2} \Psubarg{X\sim \Gsn(0,h^2 \ident_d)}{\stwonorm{X} \geq (2^{j-1}-\half)h } \\
    &= \parenth{2\pi h^2}^{d/2} \Psubarg{X\sim \Gsn(0,\ident_d)}{\stwonorm{X}^2 \geq (2^{j-1}-\half)^2} \\
    &= \parenth{2\pi h^2}^{d/2} \Psubarg{X\sim \Gsn(0,\ident_d)}{\stwonorm{X}^2 - d \geq (2^{j-1}-\half)^2 - d}.\label{eq:gsn-cond1-step1}
\end{talign}
By \citep[Lem.~1]{laurent2000adaptive}, we have
\begin{talign}\label{eq:chi-squared-tail-bound}
    \Psubarg{X\sim \Gsn(0,\ident_d)}{\stwonorm{X}^2 - d > 2 \sqrt{dt} + 2t} \leq e^{-t}\qtext{for any} t\geq 0.
\end{talign}
Define $t\defeq 2 d$ and $R \defeq 2\sqrt{t}$. We can directly verify that $R^2 - d \geq 2\sqrt{dt} + 2t$. Thus, we may further upper bound \cref{eq:gsn-cond1-step1} by
\begin{talign}
    \Psubarg{X\sim \Gsn(0,\ident_d)}{\stwonorm{X}^2 - d \geq (2^{j-1}-\half)^2 - d} 
    &\leq \Psubarg{X\sim \Gsn(0,\ident_d)}{\stwonorm{X}^2 - d \geq 2\sqrt{dt} + 2t}  \\
    &\sless{\cref{eq:chi-squared-tail-bound}} e^{-R^2/4} \leq e^{-(2^{j-2} - \frac{1}{4})^2}, 
\end{talign}
whenever $R\geq 2\sqrt{2 d}$, or equivalently, $j \geq \log_2(1 + 4\sqrt{2d})$. Under this regime, we have
\begin{talign}
    2^{j\beta} \int_{\snorm{z}\in [(2^{j-1}-\half)h,(2^j+\half)h]} \k(x,x-z) dz \leq 2^{j\beta} (2\pi h^2)^{d/2} e^{-(2^{j-1}-\frac{1}{4})^2} < C
\end{talign}
for some positive constant $C$ that does not depend on $j$ or $n$ as desired.

\paragraph{Verifying \cref{eq:nw-kernel-cond2}.}
Fixing $j \geq 1$, we have
\begin{talign}
    (2^j + \half)^d 2^{j\beta} \kappa(2^{j-1}-1) &\leq 2^{j(d+\beta) + d} \exp(-\half (2^{j-1}-1)^2) \\
    &\leq \exp\parenth{(\log 2) (j(d+\beta)+d) - (2^{j-1}-1)^2)}\\
    &\leq \exp\parenth{\max_{j\in [T]}\braces{(\log 2) (j(d+\beta)+d) - (2^{j-1}-1)^2)}}
\end{talign}
Note that $(\log 2) (j(d+\beta)+d) - (2^{j-1}-1)^2$ is concave over $j> 1$ and maximized at $j=\floor{\log(1+ \sqrt{1 + 2(\beta+d)})}$. Treating $\int_0^{\half} \kappa(u) u^{d-1} du$ as a constant, there exists $c_2$ such that $\cref{eq:nw-kernel-cond2}$ is satisfied for all positive integers $j$.

\subsection{\Matern}

We consider the kernel $\k(x_1,x_2) \defeq c_{\nu-\frac{d}{2}}(\frac{\stwonorm{x_1-x_2}}{h})^{\nu-\frac{d}{2}} K_{\nu-\frac{d}{2}}(\frac{\stwonorm{x_1-x_2}}{h})$ with $\nu>d$, where $K_a$ denotes the modified Bessel function of the third kind \citep[Def.~5.10]{wendland2004scattered}, and $c_b \defeq \frac{2^{1-b}}{\Gamma(b)}$. We have $\k(x_1,x_2) = \kappa(\frac{\stwonorm{x_1-x_2}}{h}) \defeq \wtil \kappa_{\nu-\frac{d}{2}}(\frac{\stwonorm{x_1-x_2}}{h})$, where
\begin{talign}
    \wtil \kappa_b(u) \defeq c_b u^b K_b(u).
\end{talign}

\paragraph{Verifying \cref{eq:nw-kernel-cond1}.} 
Fix $j\in \braces{0,1,\ldots,T}$. Applying Jensen's inequality, we have
\begin{talign}
    \sup_{x\in \X} \int_{\snorm{z}\in [(2^{j-1}- \half)h,(2^j + \half)h]} \k(x,x-z) dz &\leq \sup_{x \in \X} \parenth{\int_{\snorm{z}\in [(2^{j-1} - \half)h,(2^j + \half)h]} \k^2(x,x-z) dx}^\half \\
    &\leq \sup_{x \in \X} \parenth{\int_{\snorm{z} \geq (2^{j-1} - \half)h} \k^2(x,x-z) dx}^\half \\
    &= \tau_{\k}((2^{j-1} -\half)h ),
\end{talign}
where $\tau_\k(\cdot)$ is the kernel tail decay defined by \citep[Assum.~3]{dwivedi2024kernel}. We may use the same logic as in \citep[App.~O.3.6]{dwivedi2024kernel}---but ignoring the $A_{\nu,\gamma,d}$ term and making the substitutions, $a\gets \nu - \frac{d}{2}$, $\gamma\gets \frac{1}{h}$, and $\Gamma(\nu-1) \gets \Gamma(2\nu-1)$---to obtain
\begin{talign}
    \tau_{\k}^2(R) &\sless{} h^d \cdot 2\pi \frac{2^{2-2\nu -d}}{\Gamma^2(\nu-\frac{d}{2})} \cdot \frac{\pi^{\frac{d}{2}}}{\Gamma(\frac{d}{2}+1)} \cdot \Gamma(2\nu-1) \exp(-\frac{R}{2h})
    \qtext{for}
    R\geq \frac{2\nu-d}{\sqrt 2} h.
\end{talign}
Hence, we have
\begin{talign}
    2^{j\beta} \sup_{x\in \X} \int_{\snorm{z}\in [(2^{j-1}- \half)h,(2^j + \half)h]} \k(x,x-z) dz \leq 2^{j\beta} C_{h,d,\subg} \exp(-\frac{1}{4} (2^{j-1} - \half)) = \bigO{1},
\end{talign}
whenever $(2^{j-1}-\half) h \geq \frac{2\subg - d}{\sqrt 2}h \iff j\geq 1 + \log(\frac{2\subg-d}{\sqrt 2}+\half)$ for some constant $C_{h,d,\subg}$ that does not depend on $j$.

\paragraph{Verifying \cref{eq:nw-kernel-cond2}.}
Following similar logic as in \citep[App.~O.3.1]{dwivedi2024kernel}, we have
\begin{talign}
    \wtil \kappa_a(u) \leq \min\braces{1, \sqrt{2\pi} c_a u^{a-1} \exp\parenth{-\frac{u}{2}}} \qtext{for} u\geq 2(a-1).
\end{talign}
Fixing $j\geq 1$, we have
\begin{talign}
    (2^j+1)^d 2^{j\beta} \kappa(2^{j-1}-1) \leq 2^{j(d+\beta) + d} \min\braces{ 1, \sqrt{2\pi} c_{\nu-\frac{d}{2}} (2^{j-1}-1)^{\nu-\frac{d}{2}-1} \exp(-\half (2^{j-1}-1)) }.
\end{talign}
Note that when $\nu-\frac{d}{2}-1<1$, the RHS can be rewritten as $C_{\beta,d} (2^{j-1})^{d+\beta} \exp(\half 2^{j-1})$ for some constant $C_{\beta,d}$ that doesn't depend on $j$. When $\nu-\frac{d}{2}-1\geq 1$, the RHS can be upper-bounded by $C_{\beta,d}' (2^{j-1})^{j(\nu+\beta+\frac{d}{2}-1)} \exp(-\half 2^{j-1})$ for some constant $C_{\beta,d}'$ that doesn't depend on $j$.
Observe that the function $t^b e^{-t/2}$ attains its maximum at $t=2b$. Hence, the RHS can be bounded by a constant that does not depend on $n$ as desired.

\newpage
\section*{NeurIPS Paper Checklist}

\begin{enumerate}

\item {\bf Claims}
    \item[] Question: Do the main claims made in the abstract and introduction accurately reflect the paper's contributions and scope?
    \item[] Answer: \answerYes{} %
    \item[] Justification: Abstract and introduction give clear outlines on our contributions, and we present our contributions in the main text accordingly. We have also included pointers in the introduction that would link to the referred main text containing specific contributions.
    \item[] Guidelines:
    \begin{itemize}
        \item The answer NA means that the abstract and introduction do not include the claims made in the paper.
        \item The abstract and/or introduction should clearly state the claims made, including the contributions made in the paper and important assumptions and limitations. A No or NA answer to this question will not be perceived well by the reviewers. 
        \item The claims made should match theoretical and experimental results, and reflect how much the results can be expected to generalize to other settings. 
        \item It is fine to include aspirational goals as motivation as long as it is clear that these goals are not attained by the paper. 
    \end{itemize}

\item {\bf Limitations}
    \item[] Question: Does the paper discuss the limitations of the work performed by the authors?
    \item[] Answer: \answerYes{} %
    \item[] Justification: In \cref{sec:conclusions}, we discuss limitations as well as future work to address these limitations.
    \item[] Guidelines:
    \begin{itemize}
        \item The answer NA means that the paper has no limitation while the answer No means that the paper has limitations, but those are not discussed in the paper. 
        \item The authors are encouraged to create a separate "Limitations" section in their paper.
        \item The paper should point out any strong assumptions and how robust the results are to violations of these assumptions (e.g., independence assumptions, noiseless settings, model well-specification, asymptotic approximations only holding locally). The authors should reflect on how these assumptions might be violated in practice and what the implications would be.
        \item The authors should reflect on the scope of the claims made, e.g., if the approach was only tested on a few datasets or with a few runs. In general, empirical results often depend on implicit assumptions, which should be articulated.
        \item The authors should reflect on the factors that influence the performance of the approach. For example, a facial recognition algorithm may perform poorly when image resolution is low or images are taken in low lighting. Or a speech-to-text system might not be used reliably to provide closed captions for online lectures because it fails to handle technical jargon.
        \item The authors should discuss the computational efficiency of the proposed algorithms and how they scale with dataset size.
        \item If applicable, the authors should discuss possible limitations of their approach to address problems of privacy and fairness.
        \item While the authors might fear that complete honesty about limitations might be used by reviewers as grounds for rejection, a worse outcome might be that reviewers discover limitations that aren't acknowledged in the paper. The authors should use their best judgment and recognize that individual actions in favor of transparency play an important role in developing norms that preserve the integrity of the community. Reviewers will be specifically instructed to not penalize honesty concerning limitations.
    \end{itemize}

\item {\bf Theory Assumptions and Proofs}
    \item[] Question: For each theoretical result, does the paper provide the full set of assumptions and a complete (and correct) proof?
    \item[] Answer: \answerYes{} %
    \item[] Justification: We have clarified the assumptions and models required for all the theorems and corollaries provided in the main text and appendix. Also we provide a complete proof in the appendix for all the stated results. 
    \item[] Guidelines:
    \begin{itemize}
        \item The answer NA means that the paper does not include theoretical results. 
        \item All the theorems, formulas, and proofs in the paper should be numbered and cross-referenced.
        \item All assumptions should be clearly stated or referenced in the statement of any theorems.
        \item The proofs can either appear in the main paper or the supplemental material, but if they appear in the supplemental material, the authors are encouraged to provide a short proof sketch to provide intuition. 
        \item Inversely, any informal proof provided in the core of the paper should be complemented by formal proofs provided in appendix or supplemental material.
        \item Theorems and Lemmas that the proof relies upon should be properly referenced. 
    \end{itemize}

    \item {\bf Experimental Result Reproducibility}
    \item[] Question: Does the paper fully disclose all the information needed to reproduce the main experimental results of the paper to the extent that it affects the main claims and/or conclusions of the paper (regardless of whether the code and data are provided or not)?
    \item[] Answer: \answerYes{} %
    \item[] Justification: We describe details of the experiments in \cref{sec:results} and provide links to all code and data.
    \item[] Guidelines:
    \begin{itemize}
        \item The answer NA means that the paper does not include experiments.
        \item If the paper includes experiments, a No answer to this question will not be perceived well by the reviewers: Making the paper reproducible is important, regardless of whether the code and data are provided or not.
        \item If the contribution is a dataset and/or model, the authors should describe the steps taken to make their results reproducible or verifiable. 
        \item Depending on the contribution, reproducibility can be accomplished in various ways. For example, if the contribution is a novel architecture, describing the architecture fully might suffice, or if the contribution is a specific model and empirical evaluation, it may be necessary to either make it possible for others to replicate the model with the same dataset, or provide access to the model. In general. releasing code and data is often one good way to accomplish this, but reproducibility can also be provided via detailed instructions for how to replicate the results, access to a hosted model (e.g., in the case of a large language model), releasing of a model checkpoint, or other means that are appropriate to the research performed.
        \item While NeurIPS does not require releasing code, the conference does require all submissions to provide some reasonable avenue for reproducibility, which may depend on the nature of the contribution. For example
        \begin{enumerate}
            \item If the contribution is primarily a new algorithm, the paper should make it clear how to reproduce that algorithm.
            \item If the contribution is primarily a new model architecture, the paper should describe the architecture clearly and fully.
            \item If the contribution is a new model (e.g., a large language model), then there should either be a way to access this model for reproducing the results or a way to reproduce the model (e.g., with an open-source dataset or instructions for how to construct the dataset).
            \item We recognize that reproducibility may be tricky in some cases, in which case authors are welcome to describe the particular way they provide for reproducibility. In the case of closed-source models, it may be that access to the model is limited in some way (e.g., to registered users), but it should be possible for other researchers to have some path to reproducing or verifying the results.
        \end{enumerate}
    \end{itemize}

\item {\bf Open access to data and code}
    \item[] Question: Does the paper provide open access to the data and code, with sufficient instructions to faithfully reproduce the main experimental results, as described in supplemental material?
    \item[] Answer: \answerYes{} %
    \item[] Justification: We provide a link to our GitHub repository containing all code in \cref{sec:results}. 
    \item[] Guidelines:
    \begin{itemize}
        \item The answer NA means that paper does not include experiments requiring code.
        \item Please see the NeurIPS code and data submission guidelines (\url{https://nips.cc/public/guides/CodeSubmissionPolicy}) for more details.
        \item While we encourage the release of code and data, we understand that this might not be possible, so “No” is an acceptable answer. Papers cannot be rejected simply for not including code, unless this is central to the contribution (e.g., for a new open-source benchmark).
        \item The instructions should contain the exact command and environment needed to run to reproduce the results. See the NeurIPS code and data submission guidelines (\url{https://nips.cc/public/guides/CodeSubmissionPolicy}) for more details.
        \item The authors should provide instructions on data access and preparation, including how to access the raw data, preprocessed data, intermediate data, and generated data, etc.
        \item The authors should provide scripts to reproduce all experimental results for the new proposed method and baselines. If only a subset of experiments are reproducible, they should state which ones are omitted from the script and why.
        \item At submission time, to preserve anonymity, the authors should release anonymized versions (if applicable).
        \item Providing as much information as possible in supplemental material (appended to the paper) is recommended, but including URLs to data and code is permitted.
    \end{itemize}

\item {\bf Experimental Setting/Details}
    \item[] Question: Does the paper specify all the training and test details (e.g., data splits, hyperparameters, how they were chosen, type of optimizer, etc.) necessary to understand the results?
    \item[] Answer: \answerYes{} %
    \item[] Justification: We provide train-test splits and hyperparameters in the main paper. 
    \item[] Guidelines:
    \begin{itemize}
        \item The answer NA means that the paper does not include experiments.
        \item The experimental setting should be presented in the core of the paper to a level of detail that is necessary to appreciate the results and make sense of them.
        \item The full details can be provided either with the code, in appendix, or as supplemental material.
    \end{itemize}

\item {\bf Experiment Statistical Significance}
    \item[] Question: Does the paper report error bars suitably and correctly defined or other appropriate information about the statistical significance of the experiments?
    \item[] Answer: \answerYes{} %
    \item[] Justification: In all figures, we plot error bars representing standard deviation across 100 trials. In all tables, we report mean +/- standard error across 100 trials.
    \item[] Guidelines:
    \begin{itemize}
        \item The answer NA means that the paper does not include experiments.
        \item The authors should answer "Yes" if the results are accompanied by error bars, confidence intervals, or statistical significance tests, at least for the experiments that support the main claims of the paper.
        \item The factors of variability that the error bars are capturing should be clearly stated (for example, train/test split, initialization, random drawing of some parameter, or overall run with given experimental conditions).
        \item The method for calculating the error bars should be explained (closed form formula, call to a library function, bootstrap, etc.)
        \item The assumptions made should be given (e.g., Normally distributed errors).
        \item It should be clear whether the error bar is the standard deviation or the standard error of the mean.
        \item It is OK to report 1-sigma error bars, but one should state it. The authors should preferably report a 2-sigma error bar than state that they have a 96\% CI, if the hypothesis of Normality of errors is not verified.
        \item For asymmetric distributions, the authors should be careful not to show in tables or figures symmetric error bars that would yield results that are out of range (e.g. negative error rates).
        \item If error bars are reported in tables or plots, The authors should explain in the text how they were calculated and reference the corresponding figures or tables in the text.
    \end{itemize}

\item {\bf Experiments Compute Resources}
    \item[] Question: For each experiment, does the paper provide sufficient information on the computer resources (type of compute workers, memory, time of execution) needed to reproduce the experiments?
    \item[] Answer: \answerYes{} %
    \item[] Justification: We indicate the computer resources for running all experiments in \cref{sec:results}.
    \item[] Guidelines:
    \begin{itemize}
        \item The answer NA means that the paper does not include experiments.
        \item The paper should indicate the type of compute workers CPU or GPU, internal cluster, or cloud provider, including relevant memory and storage.
        \item The paper should provide the amount of compute required for each of the individual experimental runs as well as estimate the total compute. 
        \item The paper should disclose whether the full research project required more compute than the experiments reported in the paper (e.g., preliminary or failed experiments that didn't make it into the paper). 
    \end{itemize}
    
\item {\bf Code Of Ethics}
    \item[] Question: Does the research conducted in the paper conform, in every respect, with the NeurIPS Code of Ethics \url{https://neurips.cc/public/EthicsGuidelines}?
    \item[] Answer: \answerYes{} %
    \item[] Justification: We have reviewed the NeurIPS Code of Ethics and our paper conforms with it. 
    \item[] Guidelines:
    \begin{itemize}
        \item The answer NA means that the authors have not reviewed the NeurIPS Code of Ethics.
        \item If the authors answer No, they should explain the special circumstances that require a deviation from the Code of Ethics.
        \item The authors should make sure to preserve anonymity (e.g., if there is a special consideration due to laws or regulations in their jurisdiction).
    \end{itemize}

\item {\bf Broader Impacts}
    \item[] Question: Does the paper discuss both potential positive societal impacts and negative societal impacts of the work performed?
    \item[] Answer: \answerNA{} %
    \item[] Justification: Our work reduces the computational costs of classical methods and is applied to standard datasets. Thus, it has no outsize societal impact. 
    \item[] Guidelines:
    \begin{itemize}
        \item The answer NA means that there is no societal impact of the work performed.
        \item If the authors answer NA or No, they should explain why their work has no societal impact or why the paper does not address societal impact.
        \item Examples of negative societal impacts include potential malicious or unintended uses (e.g., disinformation, generating fake profiles, surveillance), fairness considerations (e.g., deployment of technologies that could make decisions that unfairly impact specific groups), privacy considerations, and security considerations.
        \item The conference expects that many papers will be foundational research and not tied to particular applications, let alone deployments. However, if there is a direct path to any negative applications, the authors should point it out. For example, it is legitimate to point out that an improvement in the quality of generative models could be used to generate deepfakes for disinformation. On the other hand, it is not needed to point out that a generic algorithm for optimizing neural networks could enable people to train models that generate Deepfakes faster.
        \item The authors should consider possible harms that could arise when the technology is being used as intended and functioning correctly, harms that could arise when the technology is being used as intended but gives incorrect results, and harms following from (intentional or unintentional) misuse of the technology.
        \item If there are negative societal impacts, the authors could also discuss possible mitigation strategies (e.g., gated release of models, providing defenses in addition to attacks, mechanisms for monitoring misuse, mechanisms to monitor how a system learns from feedback over time, improving the efficiency and accessibility of ML).
    \end{itemize}
    
\item {\bf Safeguards}
    \item[] Question: Does the paper describe safeguards that have been put in place for responsible release of data or models that have a high risk for misuse (e.g., pretrained language models, image generators, or scraped datasets)?
    \item[] Answer: \answerNA{} %
    \item[] Justification: We do not release models or data as part of this paper.
    \item[] Guidelines:
    \begin{itemize}
        \item The answer NA means that the paper poses no such risks.
        \item Released models that have a high risk for misuse or dual-use should be released with necessary safeguards to allow for controlled use of the model, for example by requiring that users adhere to usage guidelines or restrictions to access the model or implementing safety filters. 
        \item Datasets that have been scraped from the Internet could pose safety risks. The authors should describe how they avoided releasing unsafe images.
        \item We recognize that providing effective safeguards is challenging, and many papers do not require this, but we encourage authors to take this into account and make a best faith effort.
    \end{itemize}

\item {\bf Licenses for existing assets}
    \item[] Question: Are the creators or original owners of assets (e.g., code, data, models), used in the paper, properly credited and are the license and terms of use explicitly mentioned and properly respected?
    \item[] Answer: \answerYes{} %
    \item[] Justification: We include URL and licenses for baseline code and datasets used in \cref{sec:real-data}.
    \item[] Guidelines:
    \begin{itemize}
        \item The answer NA means that the paper does not use existing assets.
        \item The authors should cite the original paper that produced the code package or dataset.
        \item The authors should state which version of the asset is used and, if possible, include a URL.
        \item The name of the license (e.g., CC-BY 4.0) should be included for each asset.
        \item For scraped data from a particular source (e.g., website), the copyright and terms of service of that source should be provided.
        \item If assets are released, the license, copyright information, and terms of use in the package should be provided. For popular datasets, \url{paperswithcode.com/datasets} has curated licenses for some datasets. Their licensing guide can help determine the license of a dataset.
        \item For existing datasets that are re-packaged, both the original license and the license of the derived asset (if it has changed) should be provided.
        \item If this information is not available online, the authors are encouraged to reach out to the asset's creators.
    \end{itemize}

\item {\bf New Assets}
    \item[] Question: Are new assets introduced in the paper well documented and is the documentation provided alongside the assets?
    \item[] Answer: \answerYes{} %
    \item[] Justification: We release our code with documentation at \url{https://github.com/ag2435/npr} under a BSD-3 Clause license.
    \item[] Guidelines:
    \begin{itemize}
        \item The answer NA means that the paper does not release new assets.
        \item Researchers should communicate the details of the dataset/code/model as part of their submissions via structured templates. This includes details about training, license, limitations, etc. 
        \item The paper should discuss whether and how consent was obtained from people whose asset is used.
        \item At submission time, remember to anonymize your assets (if applicable). You can either create an anonymized URL or include an anonymized zip file.
    \end{itemize}

\item {\bf Crowdsourcing and Research with Human Subjects}
    \item[] Question: For crowdsourcing experiments and research with human subjects, does the paper include the full text of instructions given to participants and screenshots, if applicable, as well as details about compensation (if any)? 
    \item[] Answer: \answerNA{} %
    \item[] Justification: We do not have any studies or results regarding crowdsourcing experiments and human subjects.
    \item[] Guidelines:
    \begin{itemize}
        \item The answer NA means that the paper does not involve crowdsourcing nor research with human subjects.
        \item Including this information in the supplemental material is fine, but if the main contribution of the paper involves human subjects, then as much detail as possible should be included in the main paper. 
        \item According to the NeurIPS Code of Ethics, workers involved in data collection, curation, or other labor should be paid at least the minimum wage in the country of the data collector. 
    \end{itemize}

\item {\bf Institutional Review Board (IRB) Approvals or Equivalent for Research with Human Subjects}
    \item[] Question: Does the paper describe potential risks incurred by study participants, whether such risks were disclosed to the subjects, and whether Institutional Review Board (IRB) approvals (or an equivalent approval/review based on the requirements of your country or institution) were obtained?
    \item[] Answer: \answerNA{}%
    \item[] Justification: We do not have any studies or results including study participants. 
    \item[] Guidelines:
    \begin{itemize}
        \item The answer NA means that the paper does not involve crowdsourcing nor research with human subjects.
        \item Depending on the country in which research is conducted, IRB approval (or equivalent) may be required for any human subjects research. If you obtained IRB approval, you should clearly state this in the paper. 
        \item We recognize that the procedures for this may vary significantly between institutions and locations, and we expect authors to adhere to the NeurIPS Code of Ethics and the guidelines for their institution. 
        \item For initial submissions, do not include any information that would break anonymity (if applicable), such as the institution conducting the review.
    \end{itemize}

\end{enumerate}

\end{document}